\documentclass{article}
\usepackage[utf8]{inputenc}
\usepackage[T1]{fontenc}
\usepackage{nicefrac}
\usepackage{algorithmic}
\usepackage[ruled,vlined]{algorithm2e}
\usepackage{amsmath,amsfonts,amssymb}
\usepackage{amsthm}
\usepackage{booktabs}
\usepackage{array}
\usepackage{graphicx}
\usepackage{xcolor}
\usepackage{tikz}
\usepackage{hyperref}
\usepackage{microtype}
\usepackage{natbib}
\usepackage{subcaption}
\usepackage{placeins}
\usepackage{float}
\usepackage{enumitem}
\usepackage{bm}
\newcolumntype{C}{>{\centering\arraybackslash}m{0.24\linewidth}}
\newcolumntype{E}{>{\centering\arraybackslash}m{0.33\linewidth}}

\newcommand{\axisspineplot}[2][]{%
	\begin{tikzpicture}
		\node[anchor=south west, inner sep=0pt, outer sep=0pt] (img) at (0,0) {\includegraphics[#1]{#2}};
		\begin{scope}[x={(img.south east)}, y={(img.north west)}]
			\draw[line width=0.35pt, black!35] (0.075,0.129) rectangle (0.986,0.976);
			\draw[line width=0.9pt, black] (0.075,0.129) -- (0.075,0.976);
			\draw[line width=0.9pt, black] (0.075,0.129) -- (0.986,0.129);
		\end{scope}
	\end{tikzpicture}%
}
\newcommand{\bfsqualpanel}[1]{%
	\IfFileExists{#1}{\includegraphics[width=\linewidth]{#1}}{%
		\fbox{\begin{minipage}[c][0.045\textheight][c]{0.94\linewidth}
				\centering\scriptsize Pending panel
	\end{minipage}}}%
}
\newcommand{\bfsqualheader}{%
	\multicolumn{1}{c}{\small\textbf{Ground truth}} & \multicolumn{1}{c}{\small\textbf{Base}} & \multicolumn{1}{c}{\small\textbf{Chebyshev}} & \multicolumn{1}{c}{\small\textbf{GLEAM}}\\[0.15em]
}
\newcommand{\bfsqualblock}[3]{%
	\bfsqualpanel{#2_ground_truth.png} & \bfsqualpanel{#2_base.png} & \bfsqualpanel{#2_chebyshev.png} & \bfsqualpanel{#2_gleam.png}\\[-0.35em]
	{\small\textbf{\(#1\)}} & \bfsqualpanel{#2_base_error.png} & \bfsqualpanel{#2_chebyshev_error.png} & \bfsqualpanel{#2_gleam_error.png}\\[#3]
}
\newcommand{\eaglequalheader}{%
	\multicolumn{1}{c}{\small\textbf{Ground truth}} & \multicolumn{1}{c}{\small\textbf{Base}} & \multicolumn{1}{c}{\small\textbf{Chebyshev}}\\[0.15em]
}
\newcommand{\eaglequalblock}[3]{%
	\includegraphics[width=\linewidth,keepaspectratio]{#2_ground_truth.png} & \includegraphics[width=\linewidth,keepaspectratio]{#2_base.png} & \includegraphics[width=\linewidth,keepaspectratio]{#2_chebyshev.png}\\[0.08em]
	{\small\textbf{\(#1\)}} & \includegraphics[width=\linewidth,keepaspectratio]{#2_base_error.png} & \includegraphics[width=\linewidth,keepaspectratio]{#2_chebyshev_error.png}\\[#3]
}

\usepackage{PRIMEarxiv}

\newcommand{\R}{\mathbb{R}}
\newcommand{\norm}[1]{\left\lVert #1 \right\rVert}
\newcommand{\paren}[1]{\left( #1 \right)}
\newcommand{\diag}{\operatorname{diag}}
\newtheorem{proposition}{Proposition}

\newcommand{\del}[1]{}
\newcommand{\sep}{\enspace\textbullet\enspace}

\newcommand{\PrintCredit}{}
\newenvironment{coi}{}{}
\newenvironment{ack}{}{}

\title{Scale-Aware Learning of Chaotic Dynamics on Unstructured Meshes via Binned Spectral Losses}
\author{
	Kanad Sen$^{1}$, Romit Maulik$^{1}$\\[0.5em]
	\normalfont $^{1}$ School of Mechanical Engineering, Purdue University, West Lafayette, USA
}
\date{}

\begin{document}
	\raggedbottom
	
	\maketitle
	
	\begin{abstract}
		Surrogate modeling for high-dimensional nonlinear dynamical systems that exhibit chaos requires mechanisms that preserve not only pointwise accuracy but also the scale-dependent structure of physical fields. Bandwise spectral power losses, such as the binned spectral loss function, provide such supervision on structured grids, where Fourier modes define a standard frequency decomposition. On irregular meshes, however, no canonical Fourier basis exists, and spectral representations must be constructed from graph operators induced by mesh connectivity and geometry. In this study, we extend the binned spectral power loss for application to unstructured-mesh surrogate modeling of nonlinear dynamical systems. This is obtained by replacing Fourier bands with graph-Laplacian frequency bands, and we provide scalable Chebyshev and multilevel approximations for improving long-horizon rollout fidelity. In its full-spectrum form, our approach uses graph Laplacian eigenspaces to provide a graph analogue of Fourier band-power matching, but incurs the high cost of spectral decomposition. As a scalable approximation, we replace exact band projectors with sparse Chebyshev polynomial graph filters, avoiding explicit eigendecomposition. When utilizing multilevel graph architectures, we introduce Graph Laplacian Energy Alignment for Meshes (GLEAM), which applies retained-subspace scale-aware supervision across graph hierarchies so that coarse and fine representations are regularized during autoregressive rollout. Our results show that the proposed spectral losses improve long-horizon rollout fidelity and preserve statistical invariants for the forecasting of turbulent flows on unstructured meshes, compared to deterministic baselines.
	\end{abstract}
	
		
		
		
	
	\keywords{Surrogate modeling\sep  Chaotic systems\sep  Deep learning}
	
	\section{Introduction}\label{Xsec1-1}
	
	High-fidelity numerical simulation remains one of the central tools for studying complex nonlinear dynamical systems governed by partial differential equations. In realistic computational fluid dynamics (CFD), however, resolving complex geometry, boundary layers, multiscale vortical structures, and long-time transients makes repeated simulation expensive for design, control, uncertainty quantification, and inverse problems. Surrogate modeling addresses this bottleneck by learning an emulator of either the numerical solver or the underlying solution map, so that new parameter settings or future states can be evaluated at a fraction of the original cost. Modern PDE surrogates follow several routes: neural operators learn field-to-field maps that approximate solution operators, while graph- and mesh-based simulators learn updates directly on discretized physical states \citep{li2020gno,li2021fno,kovachki2023neuraloperator,pfaff2020meshgraphnets}. This paper focuses on the graph-based setting, where the forecast is applied repeatedly on mesh-resolved states. The central difficulty for time-dependent flows is that low one-step error is not sufficient: when the model is used autoregressively, small local mistakes can change the distribution of energy across scales and grow into long-horizon drift of invariant statistics.
	
	This issue is especially important on unstructured CFD meshes. Production CFD commonly relies on unstructured or hybrid meshes to represent curved boundaries, boundary-layer refinement, complex topologies, local adaptivity, and moving or sample-dependent geometries \citep{mavriplis1997unstructured}. A mesh can be interpreted as a graph whose nodes carry physical variables and whose edges encode adjacency, stencil connectivity, or geometric neighborhoods. This makes graph neural networks a well-suited modeling choice: message passing respects mesh topology, node counts may vary across samples, and local interactions can be learned without embedding the field onto a global Cartesian tensor product grid. Recent surveys now identify unstructured-grid learning as a growing area across computational physics, with graph and mesh-based models becoming common for CFD, mechanics, and geophysical applications \citep{cheng2025unstructuredReview}.
	
	Classical stability analysis on unstructured meshes is already a mature topic for numerical solvers. Finite-element stabilization, finite-volume reconstruction analysis, discontinuous Galerkin entropy or energy arguments, CFL restrictions, and matrix or eigenvalue diagnostics have all been used to understand when discretizations remain stable on irregular grids \citep{brooks1982supg,zangeneh2019reconstruction}. Similar questions arise in geophysical models, where the stability and accuracy of unstructured horizontal discretizations are studied directly for ocean and climate solvers \citep{lapolli2024accuracy}. These works address the stability of the discretization, reconstruction, and time integrator. Learned surrogates introduce a different gap: even if the data-generating solver is stable, a neural autoregressive model can leave the training distribution, amplify its own errors, or dissipate physically important scales during rollout.
	
	Several strategies have been proposed to improve learned rollout stability. MeshGraphNets use noise-augmented training to make graph simulators robust to off-manifold inputs \citep{pfaff2020meshgraphnets}; message-passing neural PDE solvers use temporal bundling and pushforward-style training to expose the model to its own rolled-out states \citep{brandstetter2022message}; adversarial noise injection has been explored for learned turbulence rollouts \citep{su2022adversarial}; and recurrent neural operators explicitly train neural operators in closed loop to reduce the teacher-forcing mismatch between training and inference \citep{ye2025rno}. Other approaches modify the operator architecture or its time-stepping structure: stabilized neural operators reduce autoregressive error growth through architecture-level constraints \citep{mccabe2023stability}, spherical Fourier neural operators exploit the correct spectral geometry for global atmospheric dynamics \citep{bonev2023sfno}, and recent spectral-generator neural operators constrain spectral amplification in Fourier space for long-horizon PDE rollouts \citep{li2026sgno}. Dual-scale neural operators target long-term fluid forecasting by separating local detail evolution from global trend propagation \citep{dong2026dso}. These methods are valuable, but most are tied to a particular architecture, a structured or special spectral geometry, a closed-loop training protocol, or Euclidean Fourier-domain assumptions. Some graph methods do operate on irregular meshes, including mesh simulators for fluids, ocean surrogates, and mesh transformers for turbulent flow \citep{shi2022gnnsurrogate,janny2023eagle}; however, to our knowledge, they do not use bandwise graph-spectral energy agreement as a modular supervision objective for the unstructured CFD settings considered here.
	
	A key reason such a loss is needed is spectral bias. Standard neural networks often learn smooth, low-frequency components more readily than high-frequency components \citep{rahaman2019spectralbias}. In fluid prediction, large coherent structures dominate pointwise losses such as mean squared error, while small-scale fluctuations, shear-layer instabilities, and high-wavenumber content can be underweighted despite being crucial for long-horizon evolution. The problem is amplified on graphs: many graph convolution and message-passing mechanisms have a low-pass or smoothing interpretation, and repeated propagation can suppress high graph-frequency content or lead to oversmoothing \citep{li2018deeper,nt2019lowpass,balcilar2021spectral}. Thus, graph surrogates on CFD meshes face a double scale problem: the physics is multiscale, and the graph-learning machinery itself may favor smoother graph signals unless training explicitly penalizes scale-wise energy drift.
	
	Structured-grid methods have a standard way to expose this failure mode: Fourier analysis. For periodic or regularly sampled domains, Fourier modes separate a field into wavenumber components, making it possible to compare the energy carried by coarse and fine spatial scales. Binned spectral power (BSP) loss uses this idea by matching predicted and target spectral power across Fourier bands, improving chaotic-system prediction by correcting scale-wise energy imbalances that pointwise losses do not explicitly measure \citep{chakraborty2026bsp}. Its limitation is that the original formulation assumes a canonical Fourier basis and a meaningful Euclidean wavenumber grid. On an unstructured CFD mesh, neither assumption is generally available: nodes are irregularly distributed, connectivity depends on the discretization, samples may have different meshes, and there is no global translation symmetry from which Fourier modes arise. Although there exists  methods like PyNUFFT \citep{lin2017pynufft} , which accelerates non-uniform fast Fourier transforms for unstructured data , it still leverages interpolation onto structured grids thereby potentially inducing artifacts during the spectral-domain loss calculation.
	
	Our contribution is to move this band-power idea from Euclidean Fourier space to graph spectral space. In spectral graph theory and graph signal processing, graph Laplacian eigenvectors form an orthonormal basis for graph signals, and the corresponding eigenvalues order modes by graph Dirichlet energy \citep{chung1997spectral,shuman2013signal}. Low eigenvalues represent slowly varying graph signals, while larger eigenvalues represent increasingly oscillatory variation over the mesh. This provides a principled graph analogue of Fourier scale separation. Graph wavelets and polynomial graph filters build on the same view by defining filters as functions of the Laplacian, allowing frequency-selective operations without relying on a Cartesian grid \citep{hammond2011wavelets,defferrard2016convolutional,shuman2018chebyshev}.
	
	This paper develops a graph-spectral loss framework for unstructured-mesh surrogate modeling. We extend binned spectral power supervision to unstructured meshes by replacing Fourier bands with graph-Laplacian frequency bands and comparing the predicted and target band energies. The exact graph formulation provides one mathematical analogue of Fourier band-power matching, but requires spectral decomposition and dense graph Fourier projection. We therefore organize the method as a cost--fidelity hierarchy: exact Graph BSP is the reference objective which serves for theoretical purposes only, Chebyshev BSP approximates band projectors using sparse polynomial Laplacian filters, and Graph Laplacian Energy Alignment for Meshes (GLEAM) extends retained-subspace scale-aware supervision to multilevel graph representations. The advantage over existing rollout-stability methods is that the proposed objectives are designed to reduce spectral drift and improve scale-resolved fidelity on unstructured meshes where no regular Fourier grid exists. Chebyshev BSP can be added without changing the main forecasting backbone. GLEAM preserves the same fine-level prediction interface, while adding auxiliary multilevel normalization and prediction heads so that spectral supervision can be applied on the graph hierarchy during training. In both cases, the forecasting model still learns the dynamics, while the graph-spectral objective encourages mesh-resolved scales to remain accurate during long rollouts.
	
	\textbf{Contributions :}
	\begin{itemize}
		\item We generalize structured-grid band-power matching to unstructured meshes by defining graph-spectral band-power losses based on Laplacian eigenspaces and graph-frequency binning.
		\item We organize the proposed losses into a cost--fidelity hierarchy: exact Graph BSP as the full-spectrum reference, Chebyshev BSP as a scalable sparse approximation, and GLEAM as a hierarchy-aware graph-spectral supervision strategy.
		\item We formulate Chebyshev BSP as an auxiliary loss that leaves the forecasting backbone unchanged, and GLEAM as a hierarchy-aware variant that preserves the fine-level rollout interface while adding auxiliary multilevel heads for training-time supervision.
		\item We evaluate the resulting scale-aware supervision on unstructured-mesh flow datasets and show that graph-spectral energy alignment improves long-horizon rollout fidelity, distributional fidelity, and preservation of coherent flow structure.
	\end{itemize}
	\section{Mathematical Formulation of Spectral Losses for Unstructured Grids}
	The original binned spectral power formulation is defined on structured grids, where translation invariance permits a canonical Fourier decomposition and a standard notion of spectral power as a function of wavenumber \citep{chakraborty2026bsp}. On an unstructured mesh, however, there is no global coordinate system in which Fourier modes are canonically defined. We therefore replace the Fourier basis by a graph-spectral basis induced by the mesh itself. In spectral graph theory and graph signal processing, Laplacian eigenvectors play the role of generalized harmonics on irregular domains, while the associated eigenvalues provide an ordering from smooth, large-scale modes to oscillatory, small-scale modes \citep{chung1997spectral,shuman2013signal}. Related ideas have already been used in graph-spectral objectives and regularizers, including Laplacian smoothness penalties, graph Laplacian embedding losses, spectral clustering losses, and sparse spectral-domain constraints \citep{tong2020graphspectral,cheng2018structured,zhang2021spectral,humbert2021learning}. Our contribution is to adapt that viewpoint to a band-power matching objective tailored to unstructured CFD meshes.
	
	The formulation below begins with a graph-spectral extension of binned spectral power loss, so that the objective remains defined in terms of \emph{bandwise spectral power} rather than pointwise field mismatch \citep{chakraborty2026bsp}. We first define exact Graph BSP, then introduce two scalable forms: Chebyshev BSP for eigenfree band-energy matching on the fine graph, and GLEAM for hierarchy-aware supervision in multilevel graph backbones.
	
	{
		\paragraph{Notation convention.}
		Throughout the paper, a hat denotes a model prediction, as in \(\widehat{x}\), \(\widehat{\bm u}\), and \(\widehat{\bm F}_p\). Graph Fourier coefficients are denoted by a superscript \(\mathcal{F}\). We use \(\mathcal{E}\) for graph edge sets, while \(E_{\cdot}\) is reserved for modal or bandwise spectral-energy quantities.
		\begin{center}
			\begin{tabular}{@{}p{0.28\linewidth}p{0.66\linewidth}@{}}
				\toprule
				Symbol & Meaning \\
				\midrule
				\(G=(V,\mathcal{E},W)\) & weighted mesh graph with node set \(V\), edge set \(\mathcal{E}\), and weights \(W\) \\
				\(\bm u,\bm v\) & generic prediction and target fields in the graph-spectral loss formulation \\
				\(\widehat{\bm u},\widehat{x},\widehat{\bm F}_p\) & predicted rollout fields or diagnostics; hats indicate model predictions \\
				\(\bm u^{\mathcal{F}},\bm v^{\mathcal{F}}\) & graph Fourier coefficient matrices \(\Phi^\top\bm u\) and \(\Phi^\top\bm v\) \\
				\(E_u(k,c), E_u^{\mathrm{bin}}(m,c)\) & modal and bandwise spectral-energy quantities, not graph edge sets \\
				\bottomrule
			\end{tabular}
		\end{center}
	}
	
	\subsection{Graph representation of an unstructured mesh}
	Consider one mesh sample with node set \(V=\{1,\dots,N\}\), edge set \(\mathcal{E}\subseteq V\times V\), and node coordinates \(\bm{p}_i\in\R^d\). We denote the corresponding weighted undirected graph by
	\begin{equation}
		G=(V,\mathcal{E},W), \qquad |V|=N,
	\end{equation}
	where \(W=[w_{ij}]\) is a symmetric edge-weight matrix. Each node corresponds to a mesh point, control-volume center, finite-element degree of freedom, or any equivalent spatial discretization unit. Let
	\begin{equation}
		\bm{u}\in\R^{N\times C},
		\qquad
		\bm{v}\in\R^{N\times C},
	\end{equation}
	denote the predicted and target fields, respectively, with \(C\) channels.

	The graph connectivity may be inherited directly from the computational mesh, from a cell-neighbor stencil, or from a geometric \(k\)-nearest-neighbor construction when only point clouds are available. We assume throughout that the graph is connected; if multiple connected components arise, the formulation can be applied componentwise. A practical choice of edge weights is
	\begin{equation}
		w_{ij}=
		\begin{cases}
			\dfrac{1}{\norm{\bm{p}_i-\bm{p}_j}_2+\varepsilon_w}, & (i,j)\in \mathcal{E},\\[0.8em]
			0, & (i,j)\notin \mathcal{E},
		\end{cases}
	\end{equation}
	where \(\varepsilon_w>0\) prevents singular weights for nearly coincident nodes. More generally, any positive symmetric kernel that decays with distance or reflects mesh adjacency may be used; the key requirement is that the resulting Laplacian encode the geometry and local coupling structure of the mesh.
	
	Let \(A\in\R^{N\times N}\) denote the weighted adjacency matrix and let the degree matrix \(D\) be defined by
	\begin{equation}
		D_{ii}=\sum_{j=1}^N A_{ij}.
	\end{equation}
	We use the symmetric normalized Laplacian
	\begin{equation}
		L=I-D^{-1/2}AD^{-1/2}.
		\label{eq:laplacian}
	\end{equation}
	This choice is convenient because its spectrum is real, nonnegative, and bounded, while its eigenvectors remain orthonormal in the Euclidean inner product. The eigendecomposition of \(L\) is
	\begin{equation}
		L=\Phi\Lambda\Phi^\top,
		\qquad
		\Lambda=\diag(\lambda_1,\dots,\lambda_N),
		\label{eq:eig}
	\end{equation}
	with
	\[
	0=\lambda_1\le \lambda_2\le \cdots\le \lambda_N,
	\qquad
	\Phi=[\phi_1,\dots,\phi_N],
	\qquad
	\Phi^\top\Phi=I.
	\]
	The eigenvectors \(\phi_k\) are the graph analogue of Fourier modes, while the eigenvalues \(\lambda_k\) act as graph frequencies in the graph-signal-processing sense \citep{chung1997spectral,shuman2013signal}. Low eigenvalues correspond to smooth variations over the mesh, whereas larger eigenvalues correspond to increasingly oscillatory structure localized on shorter graph scales.
	
	\subsection{Graph Fourier representation and spectral energy}
	For each channel \(c\in\{1,\dots,C\}\), define the graph Fourier coefficients of the prediction and target using the superscript-\(\mathcal{F}\) notation
	\begin{equation}
		u^{\mathcal{F}}_{k,c}=\phi_k^\top\bm{u}_{:,c},
		\qquad
		v^{\mathcal{F}}_{k,c}=\phi_k^\top\bm{v}_{:,c}.
		\label{eq:spectral_coeffs}
	\end{equation}
	Equivalently, if \(\bm{u}^{\mathcal{F}}=\Phi^\top\bm{u}\) and \(\bm{v}^{\mathcal{F}}=\Phi^\top\bm{v}\), then the \(k\)-th row of these transformed fields collects the coefficients associated with the \(k\)-th graph mode across all channels. Because \(\Phi\) is orthonormal, Parseval's identity holds:
	\begin{equation}
		\norm{\bm{u}}_F^2=\norm{\bm{u}^{\mathcal{F}}}_F^2,
		\qquad
		\norm{\bm{v}}_F^2=\norm{\bm{v}^{\mathcal{F}}}_F^2.
		\label{eq:parseval_graph}
	\end{equation}
	This identity ensures that energy measured in physical space can be decomposed consistently into graph-spectral contributions.
	
	For each mode and channel, we define the modal energy
	\begin{equation}
		E_u(k,c)=\frac{1}{2}\left(u^{\mathcal{F}}_{k,c}\right)^2,
		\qquad
		E_v(k,c)=\frac{1}{2}\left(v^{\mathcal{F}}_{k,c}\right)^2.
		\label{eq:modal_energy}
	\end{equation}
	The factor \(1/2\) is inherited from standard power-spectrum conventions and is immaterial up to a constant scaling, but it makes the graph construction analogous to the structured-grid binned spectral power formulation.
	
	\subsection{Bandwise graph-spectral decomposition}
	Exact matching of every individual graph mode is usually neither necessary nor desirable. For graph Laplacians with repeated or clustered eigenvalues, the individual eigenvectors are not uniquely determined; under perturbations, the stable object is the associated invariant subspace rather than a particular eigenvector basis \citep{davis1970rotation,stewart1990matrix}. This issue is especially relevant on irregular meshes, where mild perturbations of node locations or edge weights can rotate eigenvectors within nearly degenerate eigenspaces even when the underlying scale content is essentially unchanged \citep{meyer2012perturbation,huang2024stability}. Following the central philosophy of binned spectral power, we therefore compare the aggregate spectral energy within graph-frequency bands rather than matching individual graph-Fourier coefficients, consistent with graph-signal-processing constructions based on spectral filters or windows \citep{shuman2013signal,hammond2011wavelets}.
	
	Let the nontrivial graph frequencies be partitioned into \(M\) disjoint bins
	\begin{equation}
		\mathcal{B}_1,\dots,\mathcal{B}_M,
		\qquad
		\bigcup_{m=1}^M\mathcal{B}_m \subseteq \{2,\dots,N\},
		\qquad
		\mathcal{B}_m\cap\mathcal{B}_{m'}=\varnothing \ \text{for } m\neq m'.
	\end{equation}
	The trivial mode \(k=1\) is typically excluded because it corresponds to the constant graph signal and does not encode spatial variation. We bin in the Laplacian eigenvalue coordinate because \(\lambda_k\) is the intrinsic scalar graph-frequency variable: for an eigenvector \(\phi_k\), the Rayleigh quotient gives \(\phi_k^\top L\phi_k=\lambda_k\), so \(\lambda_k\) measures graph Dirichlet energy and orders modes by spatial variation on the graph \citep{chung1997spectral,shuman2013signal}. This is also the coordinate in which graph spectral filters and wavelets are defined, through functions \(g(L)=\Phi g(\Lambda)\Phi^\top\) or kernels \(g(t\lambda)\) \citep{hammond2011wavelets,shuman2013signal,defferrard2016convolutional}. Radial Fourier shells are standard on structured grids because each mode has a Euclidean wave vector; on an unstructured graph there is no canonical wave-vector direction or radius beyond the Laplacian spectrum itself. We therefore use linear eigenvalue bins as the default operator-defined choice: they partition the normalized graph-frequency range uniformly, match the interval structure used by polynomial graph filters, and keep the exact, low-rank, and Chebyshev-filtered constructions on the same spectral intervals.
	\begin{equation}
		0=\eta_0<\eta_1<\dots<\eta_M=\lambda_N,
		\qquad
		\eta_m=\frac{m}{M}\lambda_N,
		\label{eq:exact_bin_edges}
	\end{equation}
	The graph-spectral bins are then defined by
	\begin{equation}
		\mathcal{B}_m=\{k\in\{2,\dots,N\}\,:\,\eta_{m-1}\le \lambda_k<\eta_m\},
		\quad m<M,
		\qquad
		\mathcal{B}_M=\{k\in\{2,\dots,N\}\,:\,\eta_{M-1}\le \lambda_k\le\eta_M\}.
		\label{eq:exact_bin_sets}
	\end{equation}
	Empty bins are omitted from the corresponding averages in practice.
	
	The orthogonal projector onto band \(m\) is
	\begin{equation}
		\Pi_m=\sum_{k\in\mathcal{B}_m}\phi_k\phi_k^\top.
		\label{eq:band_projector}
	\end{equation}
	Then the bandwise averaged energies can be written either modewise or projectorwise:
	\begin{equation}
		E_u^{\mathrm{bin}}(m,c)=\frac{1}{|\mathcal{B}_m|}\sum_{k\in\mathcal{B}_m} E_u(k,c)
		=\frac{1}{2|\mathcal{B}_m|}\norm{\Pi_m\bm{u}_{:,c}}_2^2,
		\label{eq:bin_energy_u}
	\end{equation}
	\begin{equation}
		E_v^{\mathrm{bin}}(m,c)=\frac{1}{|\mathcal{B}_m|}\sum_{k\in\mathcal{B}_m} E_v(k,c)
		=\frac{1}{2|\mathcal{B}_m|}\norm{\Pi_m\bm{v}_{:,c}}_2^2.
		\label{eq:bin_energy_v}
	\end{equation}
	These quantities measure how much of the channel energy is allocated to a prescribed graph-frequency range.
	
	
	\subsection{Exact graph analogue of band-power loss}
	We now define the full-rank graph-spectral loss. The graph analogue of the ratio-based band-power objective is
	\begin{equation}
		\mathcal{L}_{\mathrm{band}}^{\mathrm{rel}}(\bm{u},\bm{v})
		=\frac{1}{MC}
		\sum_{m=1}^M\sum_{c=1}^C
		\left(
		1-\frac{E_u^{\mathrm{bin}}(m,c)+\varepsilon_s}{E_v^{\mathrm{bin}}(m,c)+\varepsilon_s}
		\right)^2,
		\label{eq:bsp_rel}
	\end{equation}
	where \(\varepsilon_s>0\) is a small stabilization constant. This objective penalizes relative under- or over-allocation of energy in each graph-frequency band.
	
	\section{Scalable Approximations to Graph Band-Power Loss}
	The exact graph-spectral loss defined in \eqref{eq:bsp_rel} is the full-spectrum analogue of the structured-grid band-power objective, but its computational cost becomes prohibitive once the mesh is large or the supervision is applied repeatedly within a multiscale graph backbone. If the eigensystem is not already available, exact Graph BSP requires an eigendecomposition of the graph Laplacian, with leading cost \(\mathcal{O}(N^3)\) for dense solvers, followed by graph Fourier projection and band aggregation with cost \(\mathcal{O}(N^2C)\). Even when the eigensystem is precomputed on a fixed graph, the online projection cost and \(\mathcal{O}(N^2)\) storage of the full basis can become significant; across a hierarchy, the analogous exact cost scales like \(\sum_{\ell=1}^{S}\mathcal{O}(N_\ell^3+N_\ell^2C)\) if eigensystems are computed at each level.
	
	These costs make the exact formulation most useful as a reference objective, and motivate approximations that preserve its band-power interpretation without requiring full graph Fourier transforms at every rollout step. We use two such approximations. The Chebyshev variant replaces explicit spectral projectors with polynomial graph filters, so each band is approximated by repeated sparse applications of the Laplacian rather than by storing and multiplying the full eigenbasis \citep{shuman2018chebyshev}. This keeps the supervision tied to graph-frequency bands while making it compatible with large sparse meshes. The GLEAM variant addresses a complementary scaling issue: instead of approximating every band over the full spectrum on the finest graph, it applies reduced spectral supervision across the graph hierarchy, emphasizing dominant coarse modes and their geometric organization. In short, Chebyshev BSP is an eigenfree surrogate for bandwise graph filtering, while GLEAM is a hierarchy-aware spectral supervision strategy for multilevel settings. A compact cost comparison is given in Section~\ref{sec:cost-comparison}, with full complexity details in Appendix~\ref{app:complexity}.
	
	\subsection{Chebyshev filter-bank approximation}
	\label{sec:chebyshev-bsp}
	Exact eigendecomposition is often prohibitive for large meshes. A scalable alternative is to replace explicit spectral projection by graph filter banks implemented as matrix polynomials. The exact Graph BSP loss uses the projector \(\Pi_m\) in \eqref{eq:band_projector} to isolate the eigenvectors whose eigenvalues lie in band \(m\). Equivalently, if
	\[
	b_m(\lambda)=\mathbf{1}\{\eta_{m-1}\le \lambda < \eta_m\},
	\]
	then the exact band operator can be written as \(\Pi_m=b_m(L)\). The Chebyshev variant keeps this same band-energy idea, but replaces the discontinuous indicator \(b_m\) and the eigendecomposition of \(L\) with a smooth spectral window and a polynomial graph filter.
	
	For the normalized graph Laplacian in \eqref{eq:laplacian}, the spectrum lies in \([0,2]\); the proof is given in Appendix~\ref{app:laplacian-bound}.
	
	Following the standard shifted-Chebyshev construction for graph filters \citep{shuman2013signal,shuman2018chebyshev}, we map the spectrum to \([-1,1]\) by
	\begin{equation}
		\tilde{L}=\frac{2}{\lambda_{\max}}L-I,
		\qquad \lambda_{\max}=2,
		\qquad \tilde{L}=L-I.
		\label{eq:cheb_rescaled_laplacian}
	\end{equation}
	This fixed choice of \(\lambda_{\max}=2\) avoids an additional graph-specific eigenvalue estimate while still mapping the full possible spectrum of \(L\) into the Chebyshev interval \([-1,1]\).
	
	To mimic the same BSP-style bands indexed by \(m=1,\dots,M\) in \eqref{eq:exact_bin_edges}--\eqref{eq:exact_bin_sets}, we define \(M\) target spectral windows \(\varphi_m(\lambda)\) over \([0,\lambda_{\max}]\). The role of \(\varphi_m\) is to act as a smooth analogue of the hard bin indicator \(b_m\). In the implementation used here, these are triangular, or Bartlett, windows, which have also been used as spectral-domain windows in vertex-frequency graph signal decompositions \citep{stankovic2021vertexfrequency}. They are centered at linearly spaced locations,
	\begin{equation}
		\mu_m=\frac{m-1}{M-1}\lambda_{\max},
		\qquad
		\Delta \lambda = \frac{\lambda_{\max}}{M-1},
	\end{equation}
	\begin{equation}
		\varphi_m(\lambda)=\max\paren{1-\frac{|\lambda-\mu_m|}{\Delta \lambda},\,0}.
		\label{eq:cheb_triangular_window}
	\end{equation}
	Thus, the Chebyshev filter \(H_m^{(K,Q)}\) is the scalable approximation to the exact projector \(\Pi_m\):
	\begin{equation}
		H_m^{(K,Q)}
		\approx
		\varphi_m(L)
		\approx
		\frac{1}{2}a_{m,0}I+\sum_{j=1}^{K}a_{m,j}T_j(\tilde{L}).
		\label{eq:cheb}
	\end{equation}
	Here \(K\) is the polynomial order, \(Q\) is the number of quadrature points used to estimate the coefficients, and \(T_j\) is the \(j\)-th Chebyshev polynomial. The coefficients \(a_{m,j}\) are computed from Chebyshev quadrature rather than chosen heuristically. With quadrature nodes
	\begin{equation}
		\vartheta_q=\frac{\pi(q+1/2)}{Q},
		\qquad
		x_q=\cos(\vartheta_q),
		\qquad
		\lambda_q=\frac{\lambda_{\max}}{2}(x_q+1),
	\end{equation}
	for \(q=0,\dots,Q-1\), we use
	\begin{equation}
		a_{m,0}=\frac{2}{Q}\sum_{q=0}^{Q-1} \varphi_m(\lambda_q),
		\qquad
		a_{m,j}=\frac{2}{Q}\sum_{q=0}^{Q-1} \varphi_m(\lambda_q)\cos(j\vartheta_q),
		\quad j\ge 1.
		\label{eq:cheb_coeffs}
	\end{equation}
	The filtered responses are evaluated by the standard recurrence
	\begin{equation}
		T_0(\tilde{L})\bm{x}=\bm{x},
		\qquad
		T_1(\tilde{L})\bm{x}=\tilde{L}\bm{x},
	\end{equation}
	\begin{equation}
		T_j(\tilde{L})\bm{x}=2\tilde{L}T_{j-1}(\tilde{L})\bm{x}-T_{j-2}(\tilde{L})\bm{x},
		\qquad j\ge 2.
		\label{eq:cheb_recurrence}
	\end{equation}
	
	For channel \(c\), define the filtered responses
	\begin{equation}
		\bm{z}_{m,c}^u=H_m^{(K,Q)}\bm{u}_{:,c},
		\qquad
		\bm{z}_{m,c}^v=H_m^{(K,Q)}\bm{v}_{:,c}.
	\end{equation}
	These responses are the Chebyshev analogues of \(\Pi_m\bm{u}_{:,c}\) and \(\Pi_m\bm{v}_{:,c}\) in the exact Graph BSP loss. The corresponding band-energy estimates are
	\begin{equation}
		E_u^{\mathrm{cheb}}(m,c)=\frac{1}{2d_m^{\mathrm{cheb}}}\norm{\bm{z}_{m,c}^u}_2^2,
		\qquad
		E_v^{\mathrm{cheb}}(m,c)=\frac{1}{2d_m^{\mathrm{cheb}}}\norm{\bm{z}_{m,c}^v}_2^2,
		\label{eq:filter_energy}
	\end{equation}
	where \(d_m^{\mathrm{cheb}}\) is a band normalizer. One choice is \(d_m^{\mathrm{cheb}}=\mathrm{tr}\!\left((H_m^{(K,Q)})^\top H_m^{(K,Q)}\right)\), which reduces to \(|\mathcal{B}_m|\) when \(H_m^{(K,Q)}\) is the exact projector \(\Pi_m\). In practice, \(d_m^{\mathrm{cheb}}\) can be estimated once for a fixed graph or replaced by \(N\) when only relative comparisons across prediction and target are needed.
	
	The Chebyshev BSP loss is then the same relative band-energy comparison as \eqref{eq:bsp_rel}, with \(E^{\mathrm{bin}}\) replaced by \(E^{\mathrm{cheb}}\):
	\begin{equation}
		\mathcal{L}_{\mathrm{cheb}}
		=\frac{1}{MC}
		\sum_{m=1}^M\sum_{c=1}^C
		\left(
		1-\frac{E_u^{\mathrm{cheb}}(m,c)+\varepsilon_s}{E_v^{\mathrm{cheb}}(m,c)+\varepsilon_s}
		\right)^2.
		\label{eq:cheb_loss}
	\end{equation}
	This makes the approximation path explicit:
	\[
	\Pi_m\bm{u}_{:,c}
	\quad\longrightarrow\quad
	H_m^{(K,Q)}\bm{u}_{:,c},
	\qquad
	E_u^{\mathrm{bin}}(m,c)
	\quad\longrightarrow\quad
	E_u^{\mathrm{cheb}}(m,c).
	\]
	Thus, Chebyshev BSP preserves the bandwise comparison principle of Graph BSP without requiring an explicit eigenbasis. Compared with a heuristic cosine filter bank, the quadrature-derived coefficients in \eqref{eq:cheb_coeffs} explicitly approximate chosen graph-frequency windows and therefore more closely follow the exact band-power construction.
	
	\subsection{GLEAM: low-rank hierarchy-aware spectral geometry}
	The motivation for GLEAM comes from two related observations. First, auxiliary supervision at intermediate representations can improve optimization by shortening gradient paths and regularizing hidden features \citep{lee2015deeply,elinas2022deep}. Second, multilevel and multiscale graph simulators use coarse graph levels to propagate long-range interactions more efficiently than flat message passing, a principle shared with classical multigrid methods and recent mesh-based physical-simulation GNNs \citep{brandt1977multigrid,briggs2000multigrid,fortunato2022multiscale,cao2023efficient}. GLEAM is designed for this setting. Applying a full Chebyshev filter bank at every hierarchy level would provide direct band control, but it repeats polynomial graph filtering over all supervised levels and bands. GLEAM instead supervises the retained low-frequency geometry that carries long-range organization through the hierarchy, using low-rank band-energy matching and Fiedler-guided pairwise contrast matching. This makes GLEAM a more natural and lower-cost route for multilevel spectral supervision, while Chebyshev BSP remains the more direct choice when detailed fine-level band agreement is required. The design is motivated by a common autoregressive failure mode in unstructured-flow surrogates: small local errors can accumulate into drift of coherent transport, recirculation, shear-layer organization, or pressure-loading patterns that remain visible in low- and mid-frequency graph components.
	
	Let
	\begin{equation}
		L\psi_k=\lambda_k\psi_k,
		\qquad k=2,\dots,r+1,
	\end{equation}
	denote the first \(r\) nontrivial eigenpairs of \(L\). Following low-rank spectral embeddings and effective-resistance approximations \citep{wang2020graspel}, define
	\begin{equation}
		U=
		\left[
		\frac{\psi_2}{\sqrt{\lambda_2+\tau}},
		\dots,
		\frac{\psi_{r+1}}{\sqrt{\lambda_{r+1}+\tau}}
		\right]\in\R^{N\times r},
		\label{eq:embedding}
	\end{equation}
	where \(\tau>0\) prevents pathological amplification when an eigenvalue is very small. The row \(U_i\in\R^r\) is the retained spectral coordinate of node \(i\), and it captures the coarse graph geometry used by GLEAM.
	This construction is chosen to match the graph-resistance embedding induced by the Laplacian pseudoinverse. In particular, squared distances between two rows of \(U\) take the form
	\[
	\norm{U_i-U_j}_2^2
	=
	\sum_{k=2}^{r+1}
	\frac{\paren{\psi_k(i)-\psi_k(j)}^2}{\lambda_k+\tau},
	\]
	which is a truncated and regularized version of the effective-resistance distance on the graph. This makes the retained coordinates sensitive to global connectivity and coherent low- to mid-frequency geometry, rather than only to local Euclidean proximity. For multilevel flow backbones, this is useful because the coarse graph levels are responsible for transporting long-range information such as recirculation structure, shear-layer organization, and pressure-loading patterns. GLEAM therefore uses the same spectral variables that summarize this graph-resistance geometry, and then matches predicted and target fields in that retained coordinate system. A detailed derivation of this connection, and its relation to the lifted coarse-projector view of GLEAM, is given in Appendix~\ref{app:gleam-resistance}.
	
	For each channel, project the prediction and target onto the retained basis,
	\begin{equation}
		\tilde{\bm{u}}_c=U^\top\bm{u}_{:,c},
		\qquad
		\tilde{\bm{v}}_c=U^\top\bm{v}_{:,c}.
	\end{equation}
	We reuse the eigenvalue-space binning rule from \eqref{eq:exact_bin_edges}--\eqref{eq:exact_bin_sets}, but restrict it to the retained modes. Thus \(\lambda_N\) is replaced by \(\lambda_{r+1}\), and writing \(\bar{\lambda}_k=\lambda_{k+1}\) for \(k=1,\dots,r\) gives retained bands \(\mathcal{B}^{(r)}_m\subseteq\{1,\dots,r\}\). The retained-band energies are
	\begin{equation}
		E_u^{\mathrm{low}}(m,c)=\sum_{k\in\mathcal{B}^{(r)}_m}\frac{1}{2}\tilde{u}_{k,c}^2,
		\qquad
		E_v^{\mathrm{low}}(m,c)=\sum_{k\in\mathcal{B}^{(r)}_m}\frac{1}{2}\tilde{v}_{k,c}^2.
		\label{eq:lr_band_energy}
	\end{equation}
	With band weights \(\omega_m\ge 0\) and \(\sum_{m=1}^{M}\omega_m=1\), the GLEAM band-energy term is
	\begin{equation}
		\mathcal{L}_{\mathrm{low}}
		=\sum_{m=1}^{M}\omega_m \frac{1}{C}\sum_{c=1}^{C}
		\left(
		1-\frac{E_u^{\mathrm{low}}(m,c)+\varepsilon_s}{E_v^{\mathrm{low}}(m,c)+\varepsilon_s}
		\right)^2.
		\label{eq:low}
	\end{equation}
	Uniform weights recover the retained-spectrum analogue of BSP. Since modes beyond rank \(r\) are not supervised, GLEAM is not a full-spectrum replacement for exact Graph BSP; it is a hierarchy-aware approximation that emphasizes coarse and mid-scale organization when the retained hierarchy captures the spectral subspaces relevant to the forecast.
	
	Retained energy alone does not determine how structures are arranged spatially. GLEAM therefore uses the Fiedler vector \(\psi_F=\psi_2\) to sample graph-separated pairs \(\mathcal{Q}_F\subseteq V\times V\). For each pair, define
	\begin{equation}
		z_{pq}^{u}=\norm{\bm{u}_{p,:}-\bm{u}_{q,:}}_2^2,
		\qquad
		z_{pq}^{v}=\norm{\bm{v}_{p,:}-\bm{v}_{q,:}}_2^2,
		\qquad (p,q)\in\mathcal{Q}_F.
	\end{equation}
	Each pair is weighted by its retained-embedding separation,
	\begin{equation}
		w_{pq}
		=
		\frac{\norm{U_p-U_q}_2^2}
		{\frac{1}{|\mathcal{Q}_F|}\sum_{(a,b)\in\mathcal{Q}_F}\norm{U_a-U_b}_2^2+\varepsilon_s},
		\qquad (p,q)\in\mathcal{Q}_F.
		\label{eq:fiedler_pair_weight}
	\end{equation}
	When only the Fiedler coordinate is used, one may instead take \(w_{pq}\propto |\psi_F(p)-\psi_F(q)|^2\). The pairwise contrast term is
	\begin{equation}
		\mathcal{L}_{\mathrm{pair}}
		=
		\frac{1}{|\mathcal{Q}_F|}
		\sum_{(p,q)\in\mathcal{Q}_F}
		w_{pq}
		\left(z_{pq}^{u}-z_{pq}^{v}\right)^2.
		\label{eq:fiedler_pair_loss}
	\end{equation}
	The complete GLEAM objective is
	\begin{equation}
		\mathcal{L}_{\mathrm{GLEAM}}
		=(1-\rho)\mathcal{L}_{\mathrm{low}}+\rho\mathcal{L}_{\mathrm{pair}},
		\qquad \rho\in[0,1].
		\label{eq:gleam_objective}
	\end{equation}
	\(\rho=0\) gives pure retained-spectrum matching, while \(\rho>0\) additionally preserves target field contrasts across spectrally separated graph regions. The next subsection applies the same objective across a graph hierarchy.
	
	\subsection{GLEAM multilevel hierarchy with coarse-to-fine acceleration}
	\label{app:gleam-multilevel}
	
	Many graph-based surrogate models operate on a hierarchy of coarsened graphs. Let the graph hierarchy be indexed by
	\[
	\ell = 1,\dots,S,
	\]
	where \(\ell=1\) denotes the finest level and \(\ell=S\) denotes the coarsest level. Let
	\[
	\bm{u}^{(\ell)},\bm{v}^{(\ell)}
	\]
	be the predicted and target fields transferred to level \(\ell\) using the prescribed hierarchy maps, with distance-weighted restriction or interpolation when a fine node contributes to multiple coarse nodes.
	
	At each supervised level, the GLEAM objective may be evaluated, yielding
	\[
	\mathcal{L}_{\mathrm{GLEAM}}^{(\ell)}.
	\]
	The corresponding multilevel objective is then defined as
	\begin{equation}
		\mathcal{L}_{\mathrm{multi}}
		=
		\sum_{\ell=1}^{S}\alpha_\ell \mathcal{L}_{\mathrm{GLEAM}}^{(\ell)},
		\qquad
		\alpha_\ell \ge 0,
		\qquad
		\sum_{\ell=1}^{S}\alpha_\ell = 1.
		\label{eq:gleam_multilevel}
	\end{equation}
	This formulation encourages spectral consistency not only at the finest mesh resolution, but also across the graph hierarchy used by the model. 
	
	\paragraph{Coarse-to-fine spectral acceleration.}
	\label{app:coarse-to-fine}
	Although the multilevel formulation is well matched to hierarchical graph backbones, repeatedly solving eigensystems at every level may still be computationally expensive. To reduce this cost, one may adopt a coarse-to-fine strategy in which the spectral basis is computed first on the coarsest graph and then transferred to finer levels.
	
	\paragraph{Coarsest-level eigensolve.}
	Let \(L^{(S)}\) denote the Laplacian at the coarsest level. Instead of performing separate eigendecompositions at all levels, one solves
	\begin{equation}
		L^{(S)} \Psi^{(S)} = \Psi^{(S)} \Lambda^{(S)}
		\label{eq:coarsest_eig}
	\end{equation}
	only once at the coarsest resolution. The resulting eigenvectors define a coarse spectral embedding
	\[
	U^{(S)}.
	\]
	
	\paragraph{Prolongation to finer levels.}
	The coarse embedding is then prolongated to finer levels with distance-weighted interpolation over the available parent-child or index mappings. Denoting the distance-weighted prolongation operator from level \(\ell+1\) to level \(\ell\) by \(T_{\mathrm{dw}}^{(\ell)}\), one may write
	\begin{equation}
		U^{(\ell)} = T_{\mathrm{dw}}^{(\ell)} U^{(\ell+1)},
		\qquad \ell = S-1,\dots,1.
		\label{eq:prolongation}
	\end{equation}
	For example, the interpolation weights may be normalized inverse-distance or radial weights over the coarse parents associated with each fine node.
	
	\paragraph{Neighbor-mean refinement.}
	After distance-weighted prolongation, the embedding at each finer level may be refined by an unweighted neighbor-mean smoothing step. Let \(\mathrm{NbrMean}(U)\) denote the matrix obtained by replacing each node embedding with the arithmetic mean of its neighboring embeddings. Then one refinement step is
	\begin{equation}
		U \leftarrow (1-\beta)U + \beta\,\mathrm{NbrMean}(U),
		\label{eq:smoothing}
	\end{equation}
	where \(\beta \in [0,1]\) controls the amount of smoothing. Repeating this operation for a prescribed number of refinement steps improves coherence of the prolongated embedding with the local graph geometry.
	
	\paragraph{Corrector-based refinement with exact-level anchoring.}
	A stronger refinement can be introduced through a local correction residual
	\begin{equation}
		R = \mathrm{NbrMean}(U) - U.
		\label{eq:corrector_residual}
	\end{equation}
	The local refinement step is written as
	\begin{equation}
		U_i^{+} = U_i + \Gamma_i(U,R)\,R_i,
		\label{eq:general_corrector}
	\end{equation}
	where \(U_i^{+}\) denotes the corrected embedding at node \(i\), \(R_i\) is the local correction residual, and \(\Gamma_i(U,R)\) is a correction coefficient that determines how strongly the residual is applied. Different corrector strategies correspond to different choices of \(\Gamma_i\):
	\begin{align}
		\text{No correction:}\qquad
		&\Gamma_i(U,R) = 0,
		\label{eq:no_corrector}
		\\[4pt]
		\text{Scalar correction:}\qquad
		&\Gamma_i(U,R) = \sigma(\alpha),
		\label{eq:scalar_corrector_clean}
	\end{align}
	Here, \(\alpha\) is a learnable scalar parameter and \(\sigma(\cdot)\) denotes the sigmoid function.
	
	When an exact eigensolve is available at some reference level \(\ell_\star\), the corrected prolongated embedding at that same level may be anchored to the exact embedding rather than introducing a separate finest-level penalty. Let \(\tilde U^{(\ell_\star)}\) denote the prolongated-and-corrected embedding at level \(\ell_\star\), and let \(\tilde U_{\mathrm{exact}}^{(\ell_\star)}\) denote the exact level-\(\ell_\star\) embedding after column-wise sign alignment. The anchor term is
	\begin{equation}
		\mathcal{L}_{\mathrm{anchor}}
		=
		\gamma \norm{\tilde U^{(\ell_\star)}-\tilde U_{\mathrm{exact}}^{(\ell_\star)}}_F^2,
		\label{eq:anchor_loss}
	\end{equation}
	where \(\gamma \ge 0\) controls the strength of the anchor. Thus the refinement and anchor act on the same active level: the corrector adjusts the prolongated basis locally, and the anchor keeps that corrected basis close to the exact reference embedding available at level \(\ell_\star\). In particular, \(\ell_\star\) need not be the finest graph; when only a lower-resolution exact solve is used, the anchored corrected embedding is subsequently prolongated further to the remaining finer levels.
	
	\section{Experiments}\label{sec:experiments}
	
	We evaluate the proposed graph-spectral losses on unstructured-mesh forecasting benchmarks with long-horizon autoregressive dynamics. The experiments are designed to test whether scale-resolved supervision improves rollout fidelity beyond a pointwise objective, and whether the resulting predictions better preserve coherent structures whose errors accumulate over time.
	
	The benchmark suite includes EAGLE \citep{janny2023eagle}, the backward-facing step case \citep{kim2025generalizable}, and \texttt{pOnWing} from the DGN4CFD benchmark suite \citep{li2020gno,lino2025dgn}. These problems span full-field and surface-only prediction, two- and three-dimensional geometries, fixed and varying meshes, and both short-context and long-horizon autoregressive evaluation. This setting allows us to compare exact graph-spectral diagnostics, Chebyshev band-power supervision, and GLEAM's retained-hierarchy supervision across different mesh topologies, engineering geometries, and forecasting backbones.
	
	\subsection{EAGLE}
	\label{sec:eagle-results}
	EAGLE \citep{janny2023eagle} provides an unsteady two-dimensional flow benchmark on graph-structured meshes with moving flow sources and varying scene geometries. The dataset contains approximately \(1.1\) million simulated meshes across \(600\) scenes and reports pressure and velocity fields, making it a useful test of graph-spectral supervision beyond a single fixed CFD mesh. Detailed dataset and training settings are provided in Appendix~\ref{app:training-details}.
	
	The benchmark is relevant because the original EAGLE study reports a long-horizon failure mode in difficult scenes: spatially extended turbulent flow develops over time, while autoregressive neural predictors accumulate error and smooth the airflow as the forecast horizon grows \citep{janny2023eagle}. Several test samples also show increasingly chaotic dynamics at later rollout steps, where vortical structures influence the subsequent flow state.
	
	We use EAGLE to test whether a pointwise rollout objective can match the dominant velocity magnitude while still losing the distribution of turbulent content. Chebyshev BSP is expected to penalize this loss of scale-resolved energy, so the evaluation considers both average rollout RMSE and graph-frequency allocation associated with vortices, shear layers, and turbulent filaments.
	
	To validate the Chebyshev BSP formulation, we first perform an ablation study on EAGLE by varying the number of graph-frequency bands \(M\), the Chebyshev polynomial order \(K\), and the quadrature count \(Q\) used in \(H_m^{(K,Q)}\). Tables~\ref{tab:eagle_chebyshev_ablation} and~\ref{tab:eagle_spectral_energy_rel_l1} report rollout RMSE and state spectral-energy relative \(L_1\) error at selected autoregressive horizons, allowing us to test whether the spectral approximation improves both pointwise accuracy and long-horizon spectral drift.
	
	\begin{table}[!htbp]
		\centering
		\caption{Ablation study of the Chebyshev BSP approximation parameters on the EAGLE dataset for rollout RMSE, reported as mean \(\pm\) standard deviation over the \(118\) test simulations.}
		\label{tab:eagle_chebyshev_ablation}
		\scriptsize
		\setlength{\tabcolsep}{2.5pt}
		\resizebox{\linewidth}{!}{%
			\begin{tabular}{ccccccccc}
				\toprule
				\(K\) & \(M\) & \(Q\) & \(t=1\) & \(t=50\) & \(t=100\) & \(t=200\) & \(t=300\) & \(t=400\) \\
				\midrule
				\multicolumn{3}{c}{Base} & \(1.665 \pm 0.294\) & \(7.218 \pm 2.359\) & \(7.906 \pm 2.672\) & \(8.950 \pm 3.297\) & \(8.727 \pm 3.393\) & \(8.928 \pm 3.524\) \\
				16 & 32 & 256 & \(1.612 \pm 0.273\) & \(5.690 \pm 2.142\) & \(6.415 \pm 2.653\) & \(7.066 \pm 3.092\) & \(7.078 \pm 3.394\) & \(7.203 \pm 3.343\) \\
				16 & 32 & 128 & \(\mathbf{1.595 \pm 0.288}\) & \(\mathbf{5.556 \pm 1.901}\) & \(6.356 \pm 2.632\) & \(7.079 \pm 3.223\) & \(6.919 \pm 3.321\) & \(7.322 \pm 3.840\) \\
				16 & 64 & 256 & \(1.606 \pm 0.286\) & \(5.843 \pm 2.408\) & \(6.617 \pm 2.817\) & \(7.669 \pm 3.817\) & \(7.759 \pm 3.831\) & \(8.546 \pm 5.440\) \\
				12 & 24 & 256 & \(1.605 \pm 0.274\) & \(5.802 \pm 2.214\) & \(\mathbf{6.271 \pm 2.767}\) & \(7.005 \pm 3.218\) & \(7.034 \pm 3.327\) & \(7.257 \pm 3.596\) \\
				8 & 16 & 128 & \(1.608 \pm 0.288\) & \(5.831 \pm 2.125\) & \(6.595 \pm 2.913\) & \(7.451 \pm 3.684\) & \(8.094 \pm 4.346\) & \(8.555 \pm 4.546\) \\
				6 & 16 & 128 & \(1.627 \pm 0.280\) & \(5.684 \pm 2.015\) & \(6.306 \pm 2.543\) & \(\mathbf{6.762 \pm 3.038}\) & \(\mathbf{6.851 \pm 3.247}\) & \(\mathbf{7.012 \pm 3.281}\) \\
				\bottomrule
			\end{tabular}
		}
	\end{table}
	
	\begin{table}[!htbp]
		\centering
		\caption{Ablation study of the Chebyshev BSP approximation parameters on the EAGLE dataset for state spectral-energy relative \(L_1\) error, reported as mean \(\pm\) standard deviation over the \(118\) test simulations.}
		\label{tab:eagle_spectral_energy_rel_l1}
		\scriptsize
		\setlength{\tabcolsep}{2.5pt}
		\resizebox{\linewidth}{!}{%
			\begin{tabular}{ccccccccc}
				\toprule
				\(K\) & \(M\) & \(Q\) & \(t=1\) & \(t=50\) & \(t=100\) & \(t=200\) & \(t=300\) & \(t=400\) \\
				\midrule
				\multicolumn{3}{c}{Base} & \(0.0312 \pm 0.0116\) & \(0.1623 \pm 0.0919\) & \(0.1768 \pm 0.1356\) & \(0.2323 \pm 0.1931\) & \(0.2453 \pm 0.1617\) & \(0.2454 \pm 0.1629\) \\
				16 & 32 & 256 & \(\mathbf{0.0302 \pm 0.0113}\) & \(0.1319 \pm 0.1043\) & \(0.1452 \pm 0.0920\) & \(0.1661 \pm 0.1105\) & \(\mathbf{0.1730 \pm 0.1358}\) & \(\mathbf{0.1871 \pm 0.1911}\) \\
				16 & 32 & 128 & \(0.0302 \pm 0.0118\) & \(\mathbf{0.1155 \pm 0.0621}\) & \(0.1520 \pm 0.1161\) & \(0.1913 \pm 0.1403\) & \(0.1892 \pm 0.1395\) & \(0.2074 \pm 0.1523\) \\
				16 & 64 & 256 & \(0.0318 \pm 0.0116\) & \(0.1276 \pm 0.1063\) & \(0.1614 \pm 0.1282\) & \(0.2453 \pm 0.2797\) & \(0.2461 \pm 0.2885\) & \(0.3482 \pm 0.7757\) \\
				12 & 24 & 256 & \(0.0318 \pm 0.0111\) & \(0.1507 \pm 0.1167\) & \(0.1624 \pm 0.1288\) & \(0.2068 \pm 0.1717\) & \(0.2198 \pm 0.1768\) & \(0.2306 \pm 0.1790\) \\
				8 & 16 & 128 & \(0.0332 \pm 0.0112\) & \(0.1368 \pm 0.0835\) & \(0.1573 \pm 0.1174\) & \(0.2180 \pm 0.2898\) & \(0.2797 \pm 0.4294\) & \(0.3124 \pm 0.5292\) \\
				6 & 16 & 128 & \(0.0361 \pm 0.0127\) & \(0.1249 \pm 0.0785\) & \(\mathbf{0.1409 \pm 0.0986}\) & \(\mathbf{0.1654 \pm 0.1318}\) & \(0.2012 \pm 0.2070\) & \(0.2149 \pm 0.1926\) \\
				\bottomrule
			\end{tabular}
		}
	\end{table}
	
	Tables~\ref{tab:eagle_chebyshev_ablation} and~\ref{tab:eagle_spectral_energy_rel_l1} show that adding a BSP loss improves rollout RMSE over the Base model for every tested Chebyshev variant. The state spectral-energy relative \(L_1\) metric measures the relative discrepancy between predicted and target spectral-energy distributions over the state channels, so lower values indicate better preservation of scale-resolved energy. The strongest Chebyshev settings also improve this spectral metric, indicating that the auxiliary constraint can improve both the targeted diagnostic and physical-space prediction under autoregressive rollout. In Table~\ref{tab:eagle_chebyshev_ablation}, the best short-horizon RMSE is obtained with \(K=16\), \(M=32\), \(Q=128\), while the lowest RMSE values at \(t=200\), \(t=300\), and \(t=400\) are obtained by the coarser \(K=6\), \(M=16\), \(Q=128\) setting. Table~\ref{tab:eagle_spectral_energy_rel_l1} shows the complementary spectral trend: \(K=6\), \(M=16\), \(Q=128\) gives the best mid-horizon spectral agreement at \(t=100\) and \(t=200\), whereas \(K=16\), \(M=32\), \(Q=256\) gives the lowest spectral error at the longest horizons.
	
	We therefore use \(K=16\), \(M=32\), and \(Q=256\) as the representative EAGLE Chebyshev BSP setting in the following diagnostics, because it gives the strongest late-horizon spectral preservation while retaining a clear RMSE improvement over the Base model at every reported horizon. The \(K=16\), \(M=64\), \(Q=256\) row shows that increasing the spectral resolution does not automatically improve the result: although it still improves RMSE over Base, it is not competitive with the best Chebyshev RMSE rows, and its late-horizon spectral relative \(L_1\) error and variance are worse than the selected \(M=32\) setting. A likely reason is that increasing \(M\) at fixed polynomial order creates narrower spectral windows. As defined in Eq.~\eqref{eq:cheb}, the Chebyshev graph filter is represented by a degree-\(K\) polynomial in the rescaled Laplacian. Smooth and broad spectral windows can be represented with modest \(K\), whereas sharper windows require higher polynomial order to approximate their transitions accurately; otherwise the approximation can smear, oscillate, or leak energy between nearby bands. On EAGLE, excessively fine partitioning therefore appears to make the loss more sensitive to approximation error and sample-to-sample spectral variability, while \(M=32\) provides a better balance between resolution, robustness, and cost.
	
	Figure~\ref{fig:eagle_vorticity_rmse_pdf_l1} illustrates the same behavior in vorticity space for the \(K=16,M=32,Q=256\) Chebyshev setting. The RMSE panel, shown on a logarithmic vertical scale, captures localized vorticity errors that grow during rollout. The Base curve rises more rapidly and has a wider min--max envelope, consistent with some samples entering the turbulent failure regime described by \citet{janny2023eagle}. The PDF \(L_1\) panel gives the complementary distributional view: Chebyshev BSP keeps the vorticity distribution closer to the target over long horizons, indicating that the model preserves more of the statistical spread of vortical content rather than only reducing pointwise error.
	
	\begin{figure}[!htbp]
		\centering
		\includegraphics[width=\linewidth]{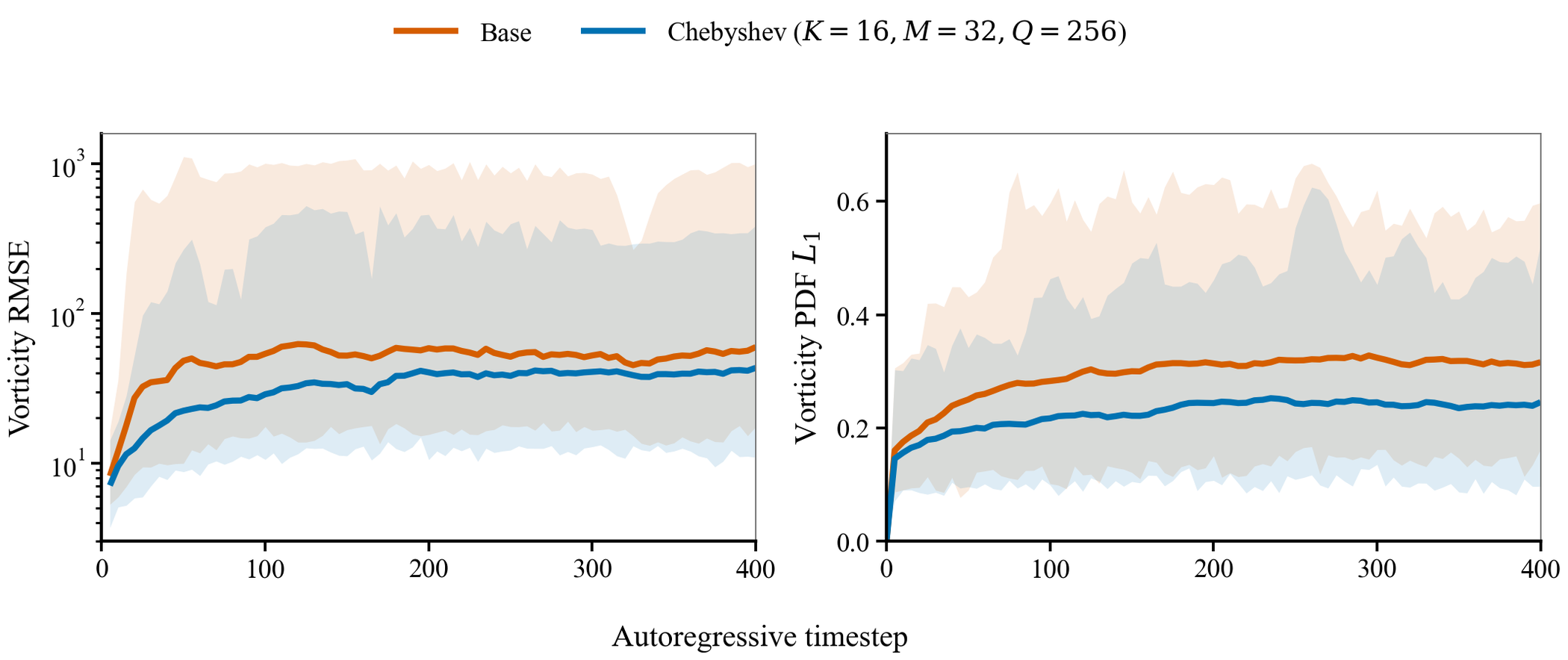}
		\caption{Vorticity rollout diagnostics for the Base model and Chebyshev BSP with \(K=16\), \(M=32\), and \(Q=256\) for Eagle dataset. The panels report vorticity RMSE and vorticity-PDF \(L_1\) error over autoregressive rollout; solid curves denote the sample mean, and shaded envelopes denote the min--max range across test samples.}
		\label{fig:eagle_vorticity_rmse_pdf_l1}
	\end{figure}
	
	Figure~\ref{fig:eagle_qualitative} shows two representative EAGLE velocity-magnitude cases at different rollout times, while Appendix Fig.~\ref{fig:eagle_appendix_qualitative} follows one additional sample over time. In the \(t=140\) case, both models recover the dominant large-scale motion, but the Base error is already concentrated around the shear layer and emerging vortices. In the separate \(t=310\) case, the flow is more turbulent and the forecast has accumulated more autoregressive error; the Base prediction is visibly smoother and loses fine spatial variation. The appendix sequence shows the same trend within a single sample across \(t=5,50,100,180,\) and \(260\): an initially organized curved shear layer develops into a richer vortical region with stronger small- and mid-scale variation. Chebyshev BSP better preserves the coherent vortical pattern and the distribution of high-velocity structures, while the Base model increasingly damps them. The Chebyshev model still diffuses some late-time details and does not eliminate local error, but its errors remain more spatially localized and the predicted flow retains more of the turbulent organization. This qualitative behavior is consistent with the quantitative diagnostics: graph-spectral supervision helps preserve scale-resolved statistics and vortical content that standard autoregressive training tends to smooth at late rollout times. We do not report GLEAM on EAGLE because the EAGLE forecasting backbone used here is not a multilevel graph model and does not expose the coarsened graph states and hierarchy maps required for retained-hierarchy supervision; applying GLEAM would therefore require changing the backbone rather than adding the same hierarchy-aware loss to an existing multilevel architecture.
	
	\begin{figure}[!htbp]
		\centering
		\setlength{\tabcolsep}{1pt}
		\renewcommand{\arraystretch}{1.0}
		\scriptsize
		\begin{tabular}{@{}EEE@{}}
			\eaglequalheader
			\eaglequalblock{t=140}{figures/t140}{1.2em}
			\eaglequalblock{t=310}{figures/t310}{0pt}
		\end{tabular}
		\vspace{0.2em}
		
		\includegraphics[width=0.78\linewidth]{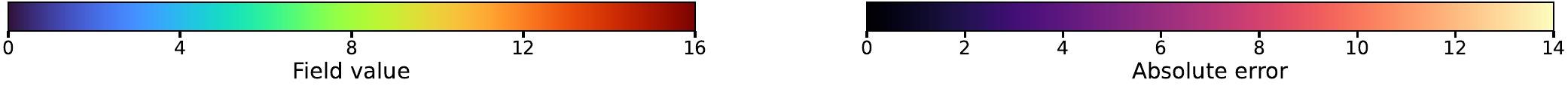}
		\caption{Qualitative EAGLE velocity-magnitude comparison at \(t=140\) and \(t=310\). The horizontal colorbars report the field-magnitude and absolute-error scales.}
		\label{fig:eagle_qualitative}
	\end{figure}
	
	\subsection{2D Turbulence over a Backward-Facing Step}
	\label{sec:bfs-results}
	We use the two-dimensional backward-facing step (BFS) case of \citet{kim2025generalizable} as a long-horizon rollout test for separated flow. After discarding the initial transient, the sequence is split contiguously into training, validation, and test segments. The models are trained with a short autoregressive curriculum that increases from one to four rollout steps and are evaluated autoregressively on the held-out test segment; dataset and training details are given in Appendix~\ref{app:training-details}. Here we compare the pointwise Base model, final-level Chebyshev spectral supervision, and hierarchical GLEAM supervision.
	
	Figure~\ref{fig:bfs2d_results} shows that both spectral variants improve the long-horizon BFS rollout relative to the Base model. The Chebyshev model receives spectral supervision only at the final graph level; among the tested weights, \(\lambda_{\mathrm{Cheb}}=0.1\) gives the best long-horizon rollout accuracy, while \(\lambda_{\mathrm{Cheb}}=1.0\) over-regularizes the forecast and worsens the final RMSE. GLEAM applies hierarchical supervision through level \(4\) together with a scalar corrector, giving a multilevel constraint on the forecast dynamics insofar as the coarsened hierarchy retains the relevant low- and mid-frequency error subspaces. In the saved hyperparameter sweep, the strongest GLEAM rollout uses \(\lambda_{\mathrm{GLEAM}}=10^{-4}\). Larger GLEAM weights are worse than this setting; although they remain below the Base model at late horizons, \(\lambda_{\mathrm{GLEAM}}=10^{-2}\) is worse than Base over part of the early-to-mid rollout. Comparison with methods that improve autoregressive forecasting is introduced in Appendix~\ref{app:bfs-similar-methods}, and a detailed study of the GLEAM hyperparameters is presented in Appendix~\ref{app:bfs-gleam-ablation}.
	
	\begin{figure}[!htbp]
		\centering
		\begin{subfigure}[t]{0.49\linewidth}
			\centering
			\axisspineplot[width=\linewidth]{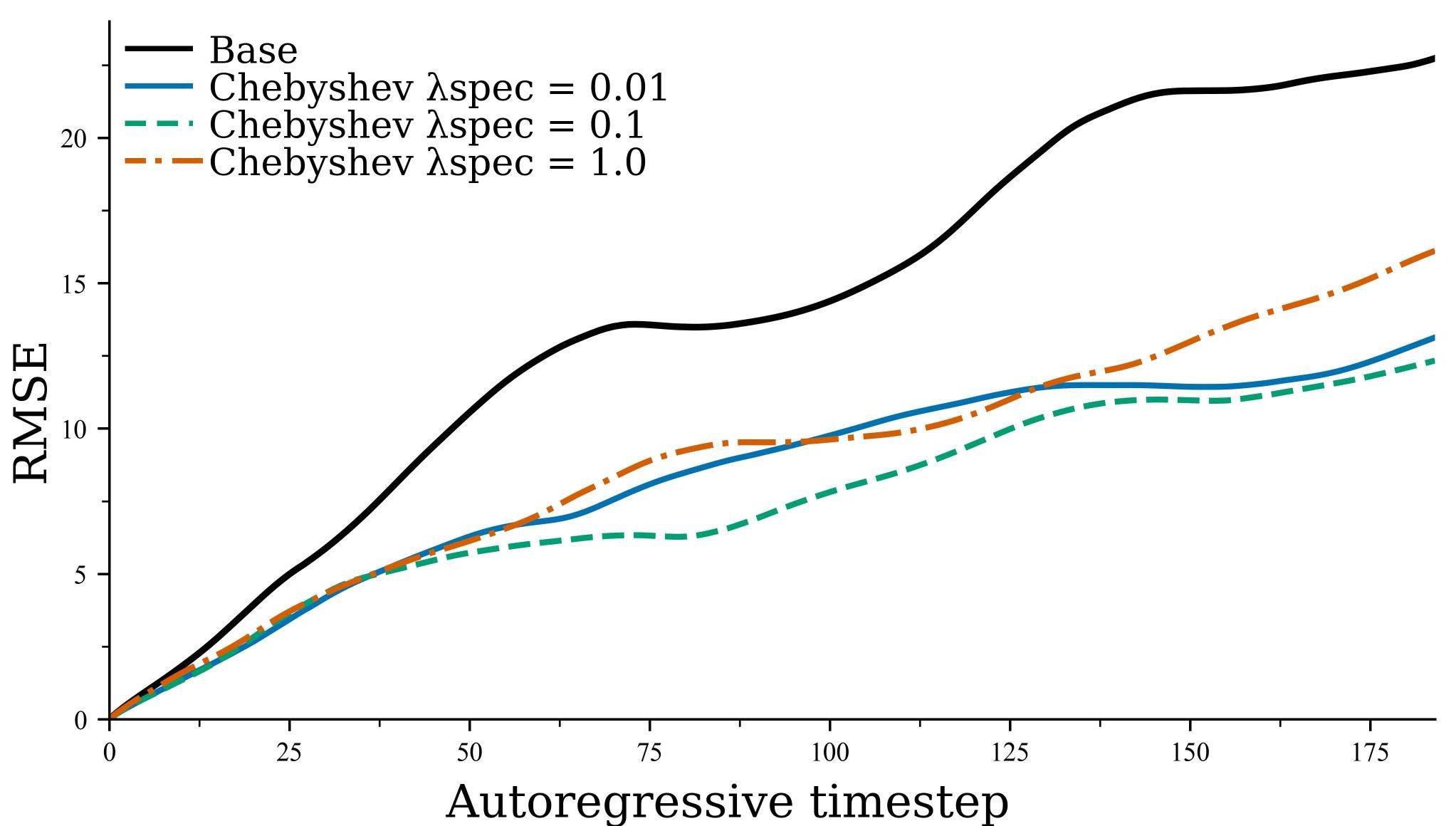}
			\caption{Chebyshev graph-spectral supervision.}
			\label{fig:bfs2d_chebyshev_results}
		\end{subfigure}
		\hfill
		\begin{subfigure}[t]{0.49\linewidth}
			\centering
			\axisspineplot[width=\linewidth]{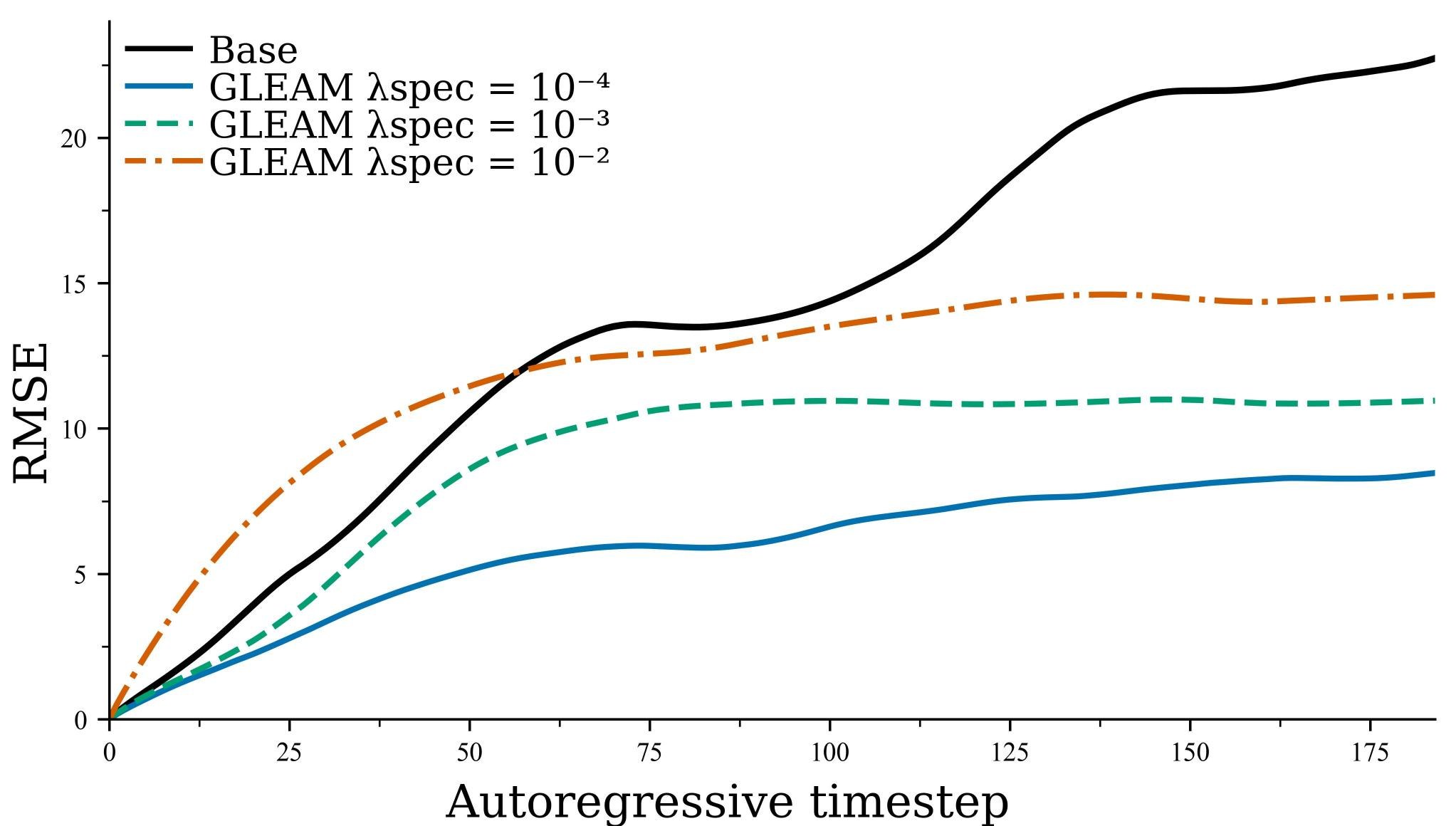}
			\caption{GLEAM low-rank graph-spectral supervision.}
			\label{fig:bfs2d_gleam_results}
		\end{subfigure}
		\caption{Autoregressive RMSE on the two-dimensional backward-facing step dataset for (a) Chebyshev final-level spectral supervision and (b) GLEAM hierarchical spectral supervision.}
		\label{fig:bfs2d_results}
	\end{figure}
	
	The RMSE curves in Fig.~\ref{fig:bfs2d_results} show an improvement in long-horizon rollout accuracy, but they do not by themselves identify whether the improvement occurs in the flow structures that drive autoregressive drift. For BFS, the relevant failure mode is not only global pointwise error growth: small errors near the separated shear layer and reverse-flow region can change the downstream recirculation pattern and shift energy in low graph-spectral bands. We therefore compute the mechanism-oriented diagnostics in Table~\ref{tab:bfs_mechanism_diagnostics} to test whether the trained-model rollouts are consistent with this transport-drift interpretation. These metrics measure spatially localized velocity error, reverse-flow extent and centroid drift, and exact low-band \(U_y\) graph-spectral drift; their definitions and calculation details are given in Appendix~\ref{app:bfs-mechanism-diagnostics}.
	
	\begin{table}[!htbp]
		\centering
		\caption{Mechanism-oriented BFS diagnostics computed from fresh autoregressive rollouts of the trained models. Metric definitions and calculation details are given in Appendix~\ref{app:bfs-mechanism-diagnostics}.}
		\label{tab:bfs_mechanism_diagnostics}
		\scriptsize
		\setlength{\tabcolsep}{3.5pt}
		\begin{tabular}{lcccccc}
			\toprule
			Model & Global RMSE & Reverse RMSE & High-gradient RMSE & Reverse-area err. & Centroid drift & Low-band RMSE \\
			\midrule
			Base & 11.36 & 18.26 & 16.40 & 0.1005 & 0.0577 & 6.624 \\
			Chebyshev, \(\lambda_{\mathrm{Cheb}}=0.1\) & 4.852 & 12.59 & 8.348 & 0.0088 & 0.0392 & 0.903 \\
			GLEAM, \(\lambda_{\mathrm{GLEAM}}=10^{-4}\) & 3.101 & 10.06 & 6.557 & 0.0050 & 0.0174 & 0.652 \\
			\bottomrule
		\end{tabular}
	\end{table}
	
	These BFS diagnostics are best viewed within the transport-centered behavior shared by the turbulent-flow benchmarks. Across EAGLE, BFS, and \texttt{pOnWing}, the measured errors are associated with vortical, shear, recirculation, and pressure-loading structures whose graph-scale energy changes during autoregressive rollout \citep{janny2023eagle,kim2025generalizable,lino2025dgn}. The graph-spectral losses target this failure mode by penalizing drift in the frequency bands where these transported structures remain represented. This interpretation is reflected across the experiments: EAGLE shows improved preservation of vortical content and vorticity statistics, Table~\ref{tab:bfs_mechanism_diagnostics} shows reduced recirculation-area, centroid, shear-region, and low-band \(U_y\) errors, and \texttt{pOnWing} shows smaller pressure-force, distributional, and graph-spectral deviations. In this setting, GLEAM's role is not fine-level spectral control alone, but stabilization of transported flow content across the retained graph scales.
	
	These trends also clarify the role of the spectral weight. Too little spectral supervision leaves the rollout close to the Base model, while too much supervision can force spectral agreement at the expense of local pointwise correction. The useful settings therefore occupy an intermediate regime: they regularize the energy distribution enough to reduce long-horizon drift without overwhelming the physical-space forecast loss. This tradeoff is especially visible in BFS because the separated flow repeatedly converts small near-wall errors into larger downstream structures.
	
	The rollout fields in Fig.~\ref{fig:bfs2d_qualitative_rollout} track the same BFS comparison at \(t=1,5,10,20,\) and \(50\). At early times all models remain close to the target, so the improvement is not due to a different initial state. As the separated shear layer develops, the Base model accumulates broader error near the step, lower wall, and downstream recirculation region. Chebyshev BSP and GLEAM keep the error more localized, with GLEAM better preserving the large-scale organization of the recirculation zone in this experiment. The visual comparison is consistent with the RMSE curves: autoregressive rollout fidelity improves when the loss constrains both pointwise values and graph-resolved energy across scales, provided the supervised hierarchy retains the relevant spectral subspaces. The additional BFS diagnostics in Appendix~\ref{app:bfs-uy-rollout-snapshots} support the same interpretation for the \(U_y\) component.
	
	\begin{figure}[!htbp]
		\centering
		\includegraphics[width=0.96\linewidth]{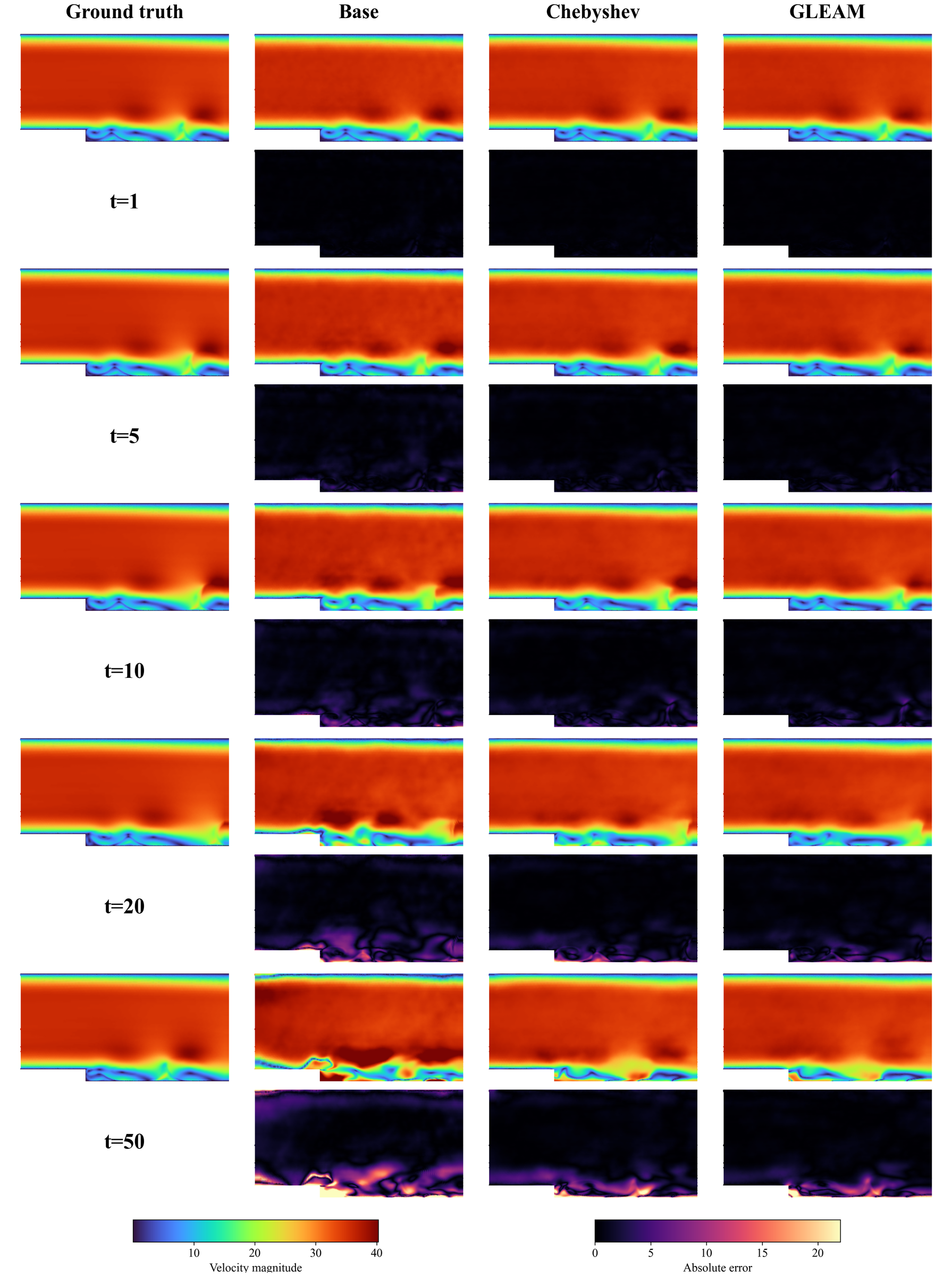}
		\caption{Backward-facing step rollout fields at \(t=1,5,10,20,\) and \(50\). Each block shows predicted velocity fields and absolute-error maps for the Base, Chebyshev BSP, and GLEAM models.}
		\label{fig:bfs2d_qualitative_rollout}
	\end{figure}
	\subsection{\texttt{pOnWing}}
	\label{sec:ponwing-results}
	We also evaluate \texttt{pOnWing}, a three-dimensional turbulent wing-flow case from the DGN4CFD benchmark suite \citep{li2020gno,lino2025dgn}. \texttt{pOnWing} extends the evaluation to pressure forecasting on a complex aerodynamic surface, where long-rollout drift affects both local pressure fields and integrated loading. We test the spectral objectives on both deterministic and probabilistic graph forecasting backbones; dataset geometry, rollout protocol, and hierarchy details are given in Appendix~\ref{app:training-details}.
	
	Figure~\ref{fig:ponwing_surface_centered_sim5} shows a representative surface-centered pressure rollout for \texttt{pOnWing}. The Base model begins close to the truth but loses the centered pressure pattern more strongly by \(t=50\), while Chebyshev BSP and GLEAM retain more of the spatial structure and keep the absolute error more localized. Additional surface-centered examples and pressure-PDF snapshots are provided in Appendix~\ref{app:ponwing-benchmark}. We also evaluate a force-level diagnostic that integrates the predicted pressure over the wing surface. At each autoregressive step, the unscaled pressure field is converted into a pressure-force vector \( \bm{F}_p(t) \), and the rollout is scored by the stabilized relative error \( \|\widehat{\bm{F}}_p(t)-\bm{F}_p(t)\|_2/\max(\|\bm{F}_p(t)\|_2,10^{-12}) \). This complements the field and PDF diagnostics: the snapshots test spatial localization, the PDFs test whether the distribution of pressure values is preserved, and the integrated force error tests whether the forecast preserves the net surface loading induced by those pressures.
	
	\begin{figure}[!htbp]
		\centering
		\setlength{\tabcolsep}{1pt}
		\renewcommand{\arraystretch}{0.72}
		\scriptsize
		\begin{tabular}{@{}>{\raggedleft\arraybackslash}p{0.035\linewidth}cccc@{}}
			& \textbf{\(t=1\)} & \textbf{\(t=5\)} & \textbf{\(t=10\)} & \textbf{\(t=50\)}\\[-0.1em]
			\textbf{(a)} &
			\includegraphics[width=0.23\linewidth]{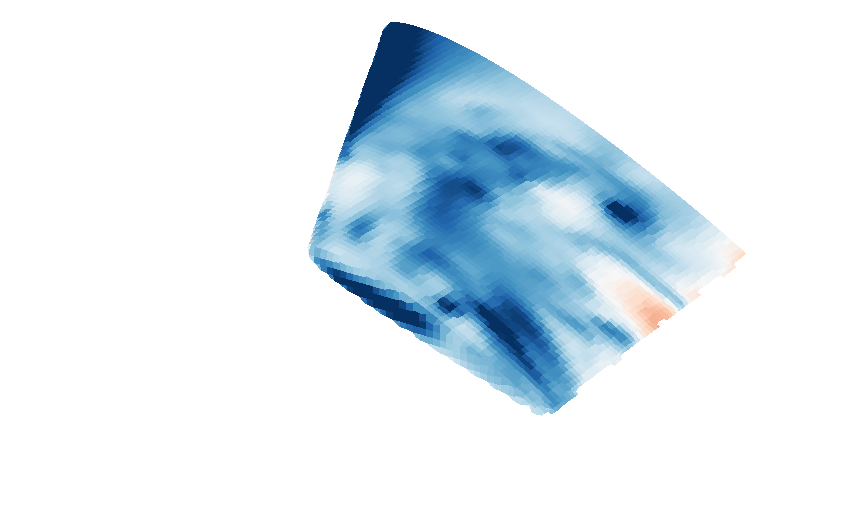} &
			\includegraphics[width=0.23\linewidth]{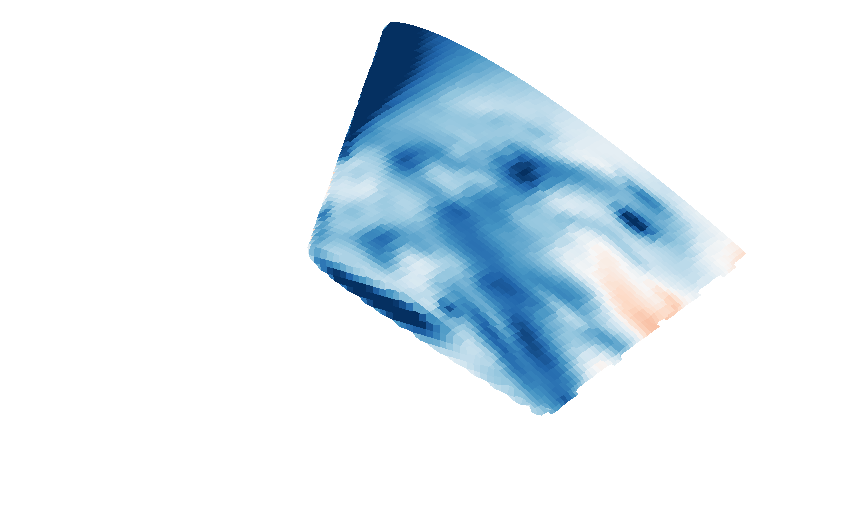} &
			\includegraphics[width=0.23\linewidth]{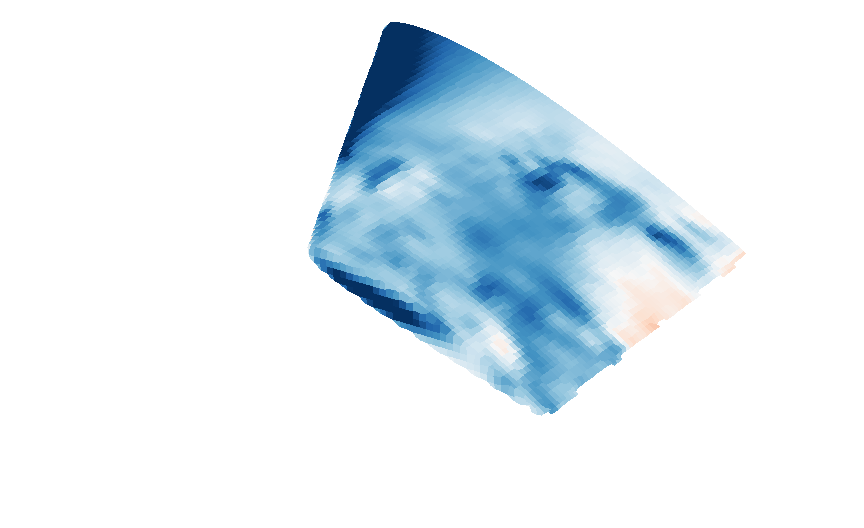} &
			\includegraphics[width=0.23\linewidth]{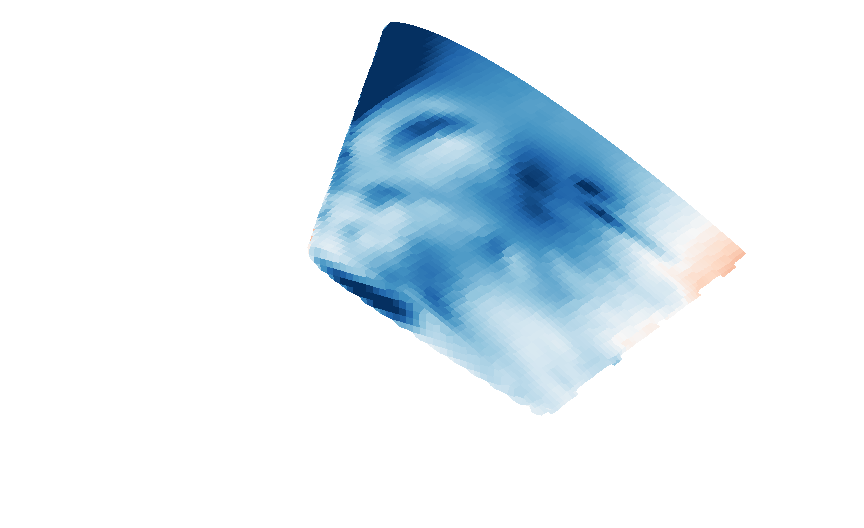}\\[-0.2em]
			\textbf{(b)} &
			\includegraphics[width=0.23\linewidth]{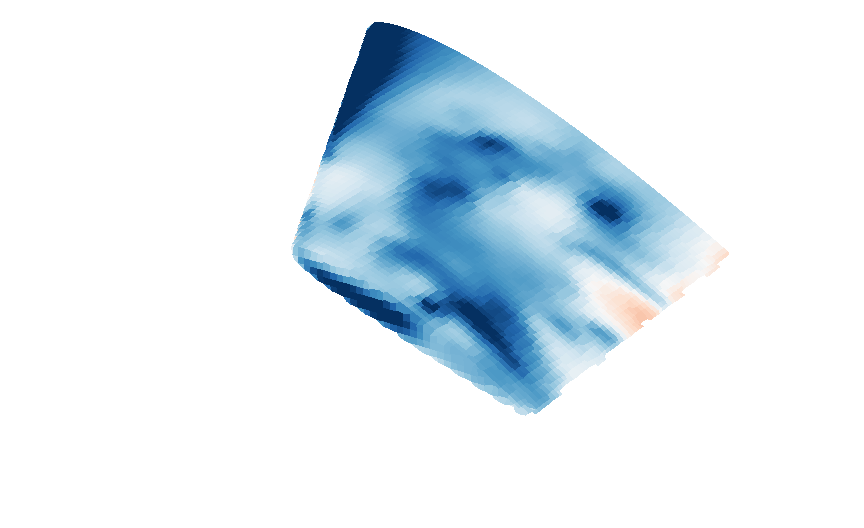} &
			\includegraphics[width=0.23\linewidth]{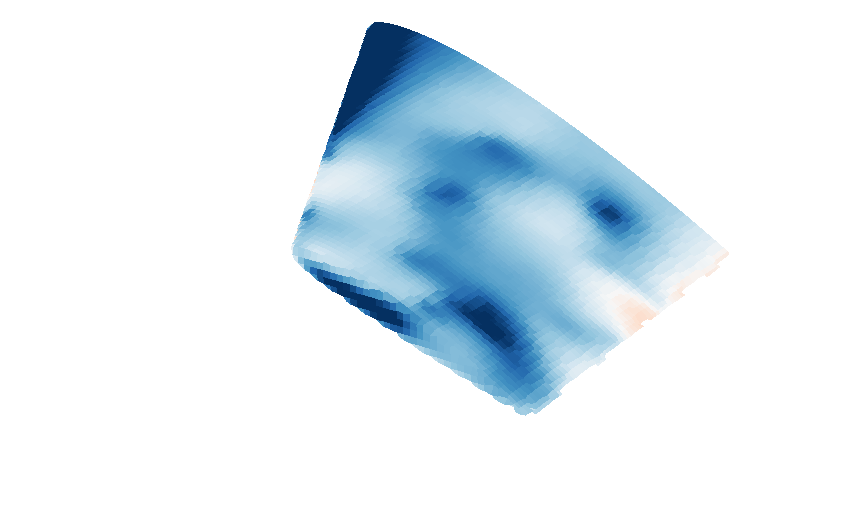} &
			\includegraphics[width=0.23\linewidth]{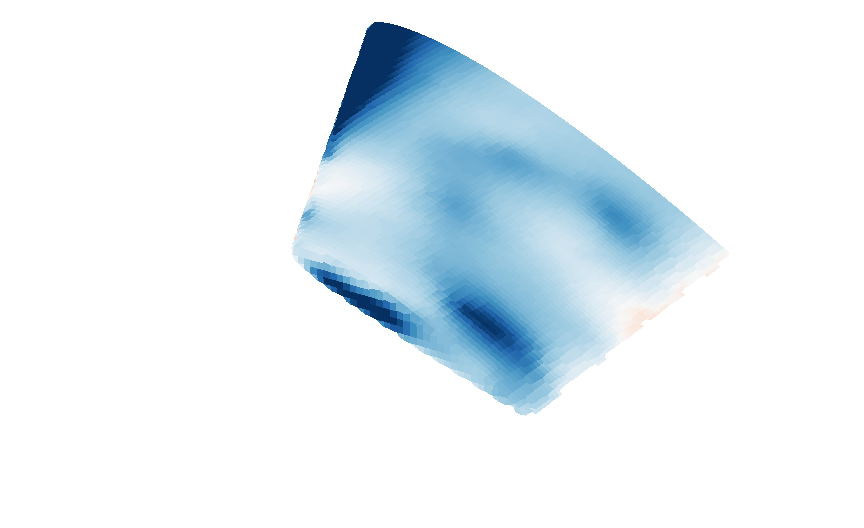} &
			\includegraphics[width=0.23\linewidth]{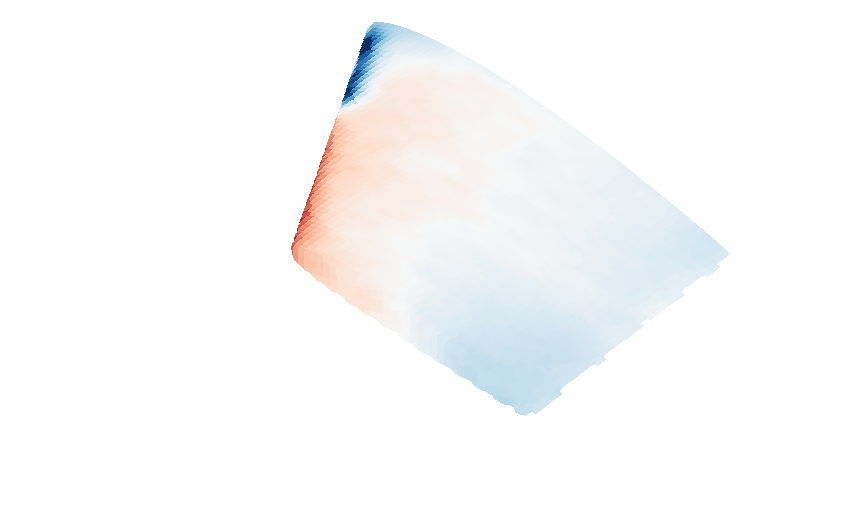}\\[-0.2em]
			\textbf{(c)} &
			\includegraphics[width=0.23\linewidth]{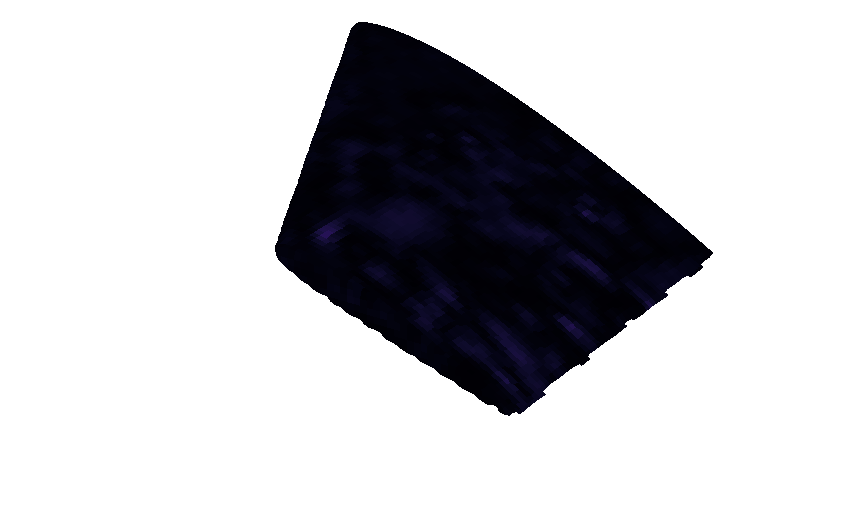} &
			\includegraphics[width=0.23\linewidth]{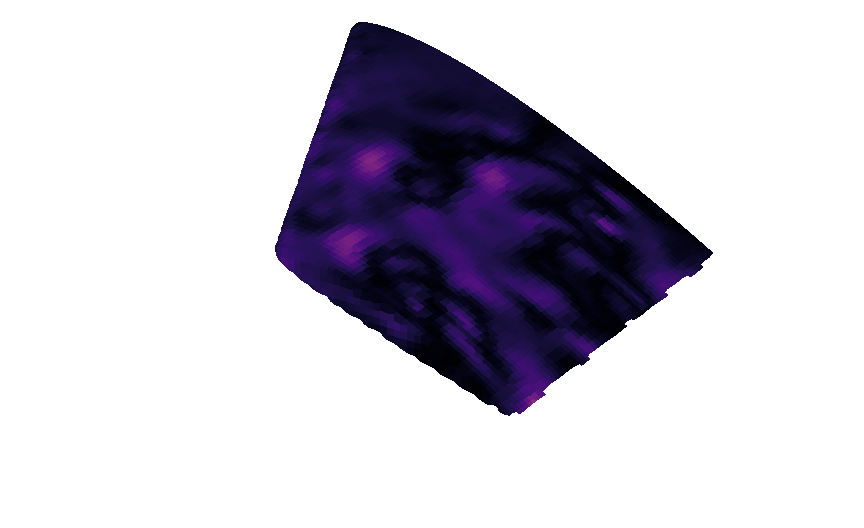} &
			\includegraphics[width=0.23\linewidth]{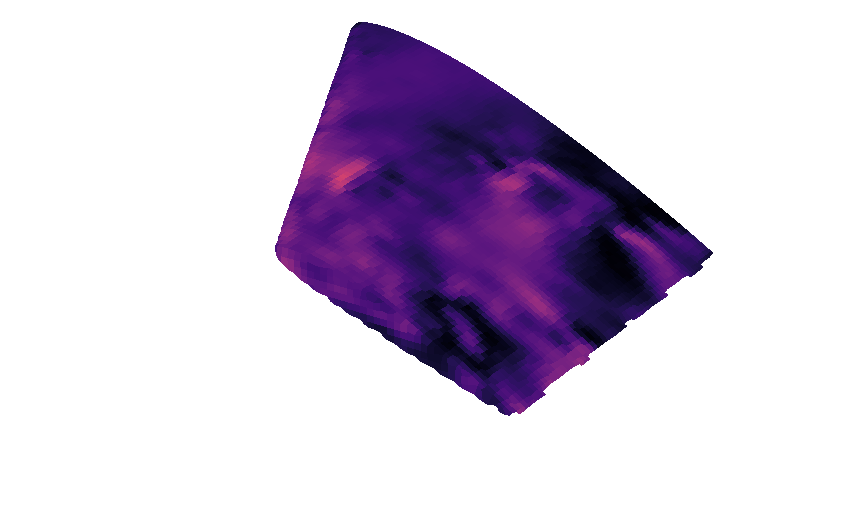} &
			\includegraphics[width=0.23\linewidth]{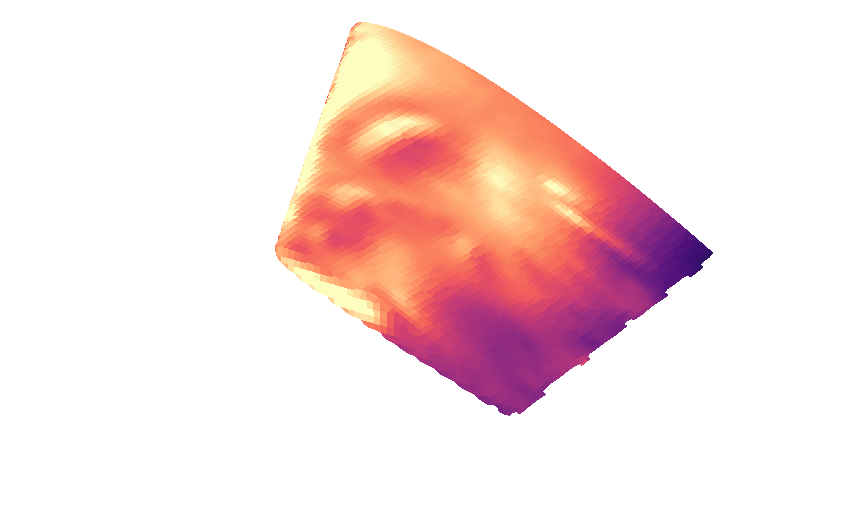}\\[-0.2em]
			\textbf{(d)} &
			\includegraphics[width=0.23\linewidth]{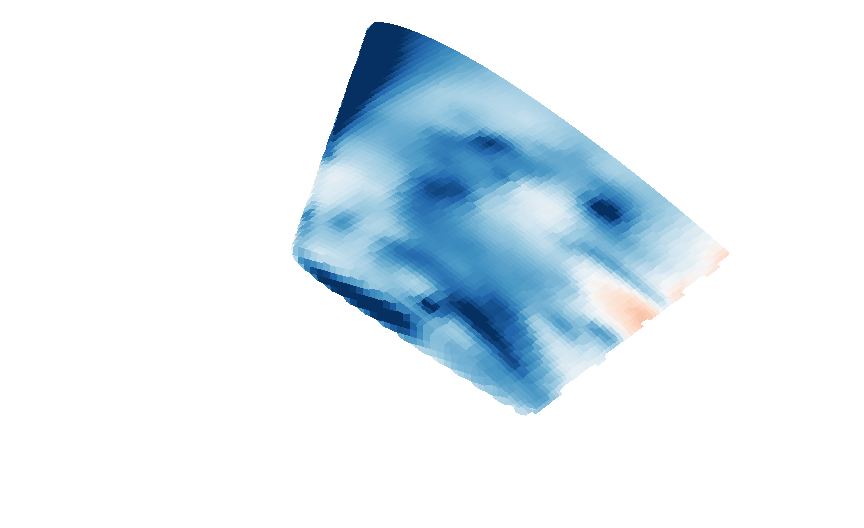} &
			\includegraphics[width=0.23\linewidth]{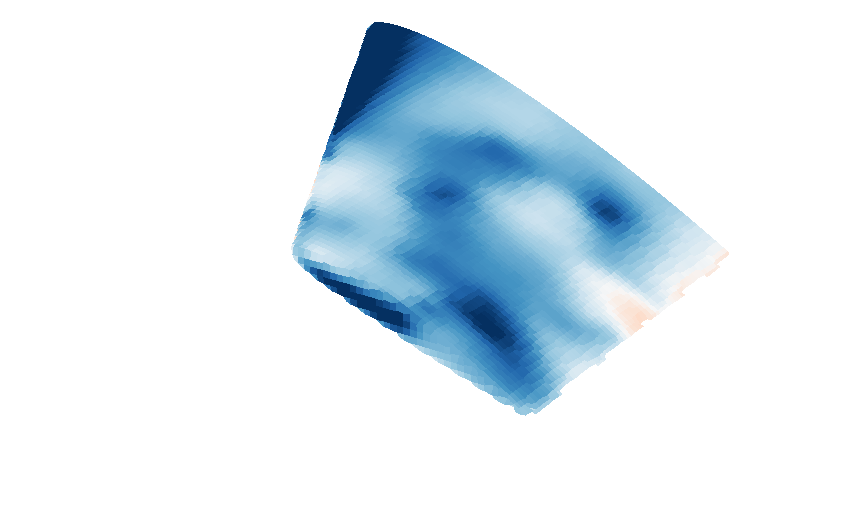} &
			\includegraphics[width=0.23\linewidth]{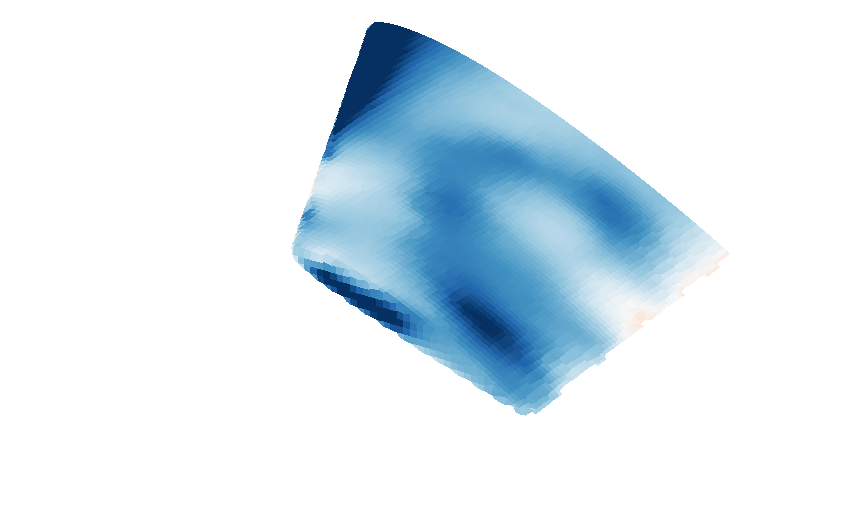} &
			\includegraphics[width=0.23\linewidth]{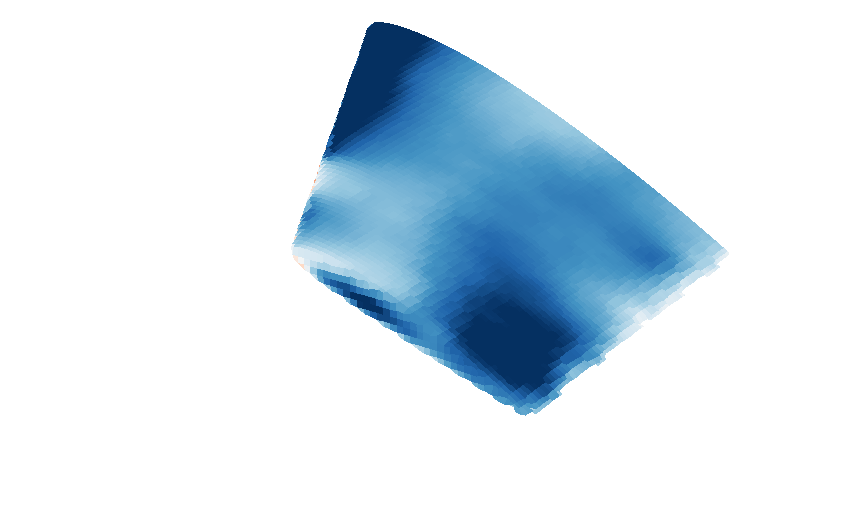}\\[-0.2em]
			\textbf{(e)} &
			\includegraphics[width=0.23\linewidth]{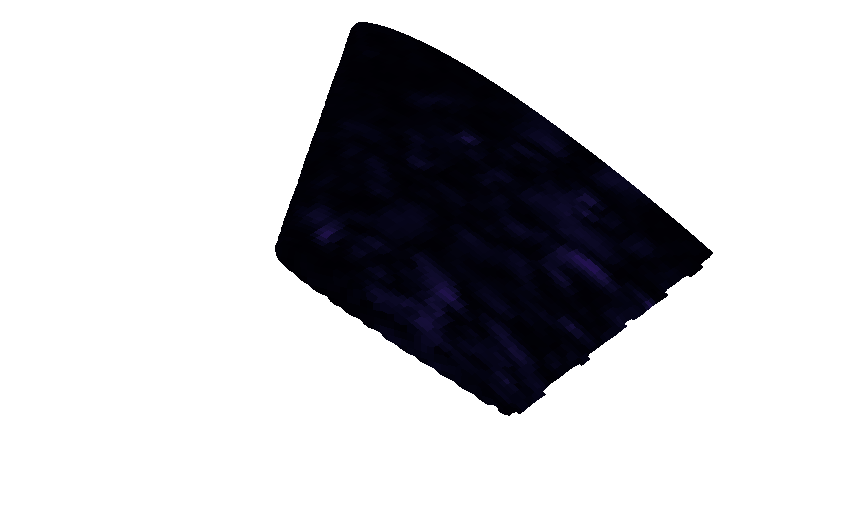} &
			\includegraphics[width=0.23\linewidth]{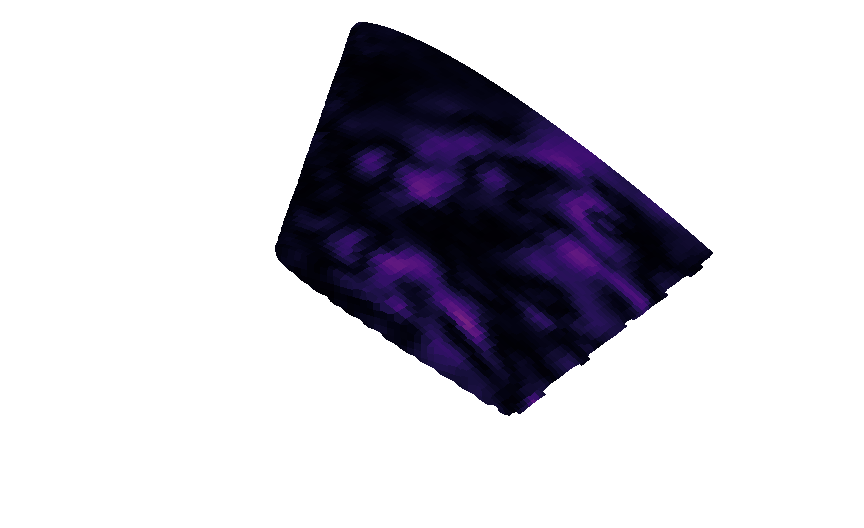} &
			\includegraphics[width=0.23\linewidth]{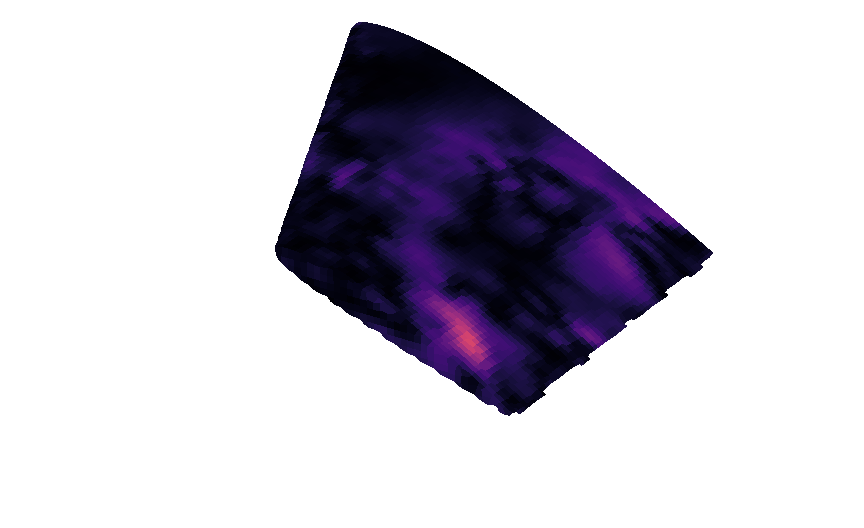} &
			\includegraphics[width=0.23\linewidth]{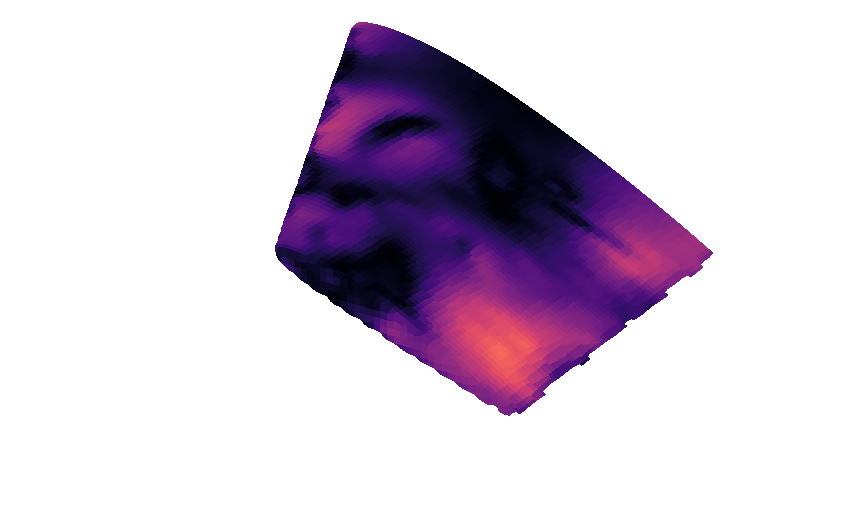}\\[-0.2em]
			\textbf{(f)} &
			\includegraphics[width=0.23\linewidth]{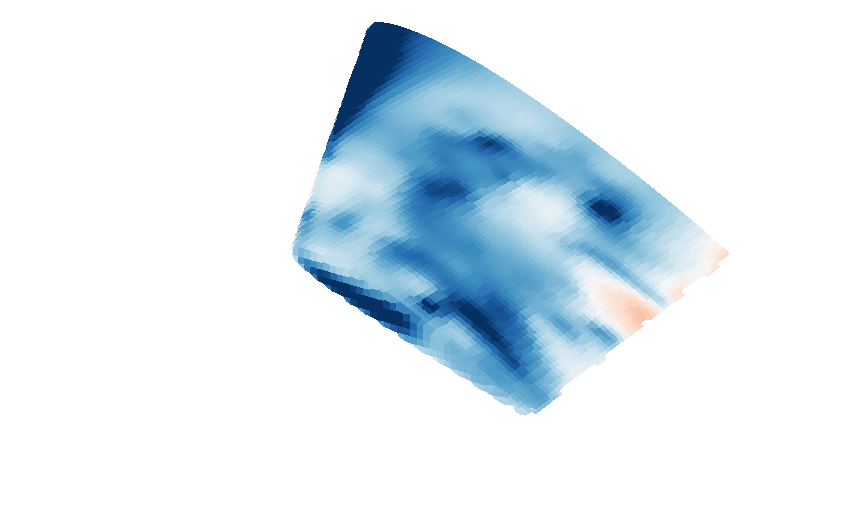} &
			\includegraphics[width=0.23\linewidth]{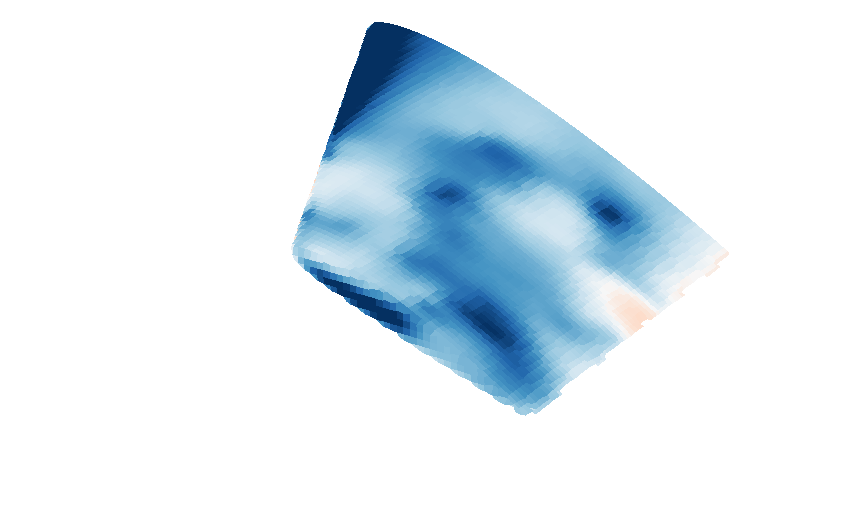} &
			\includegraphics[width=0.23\linewidth]{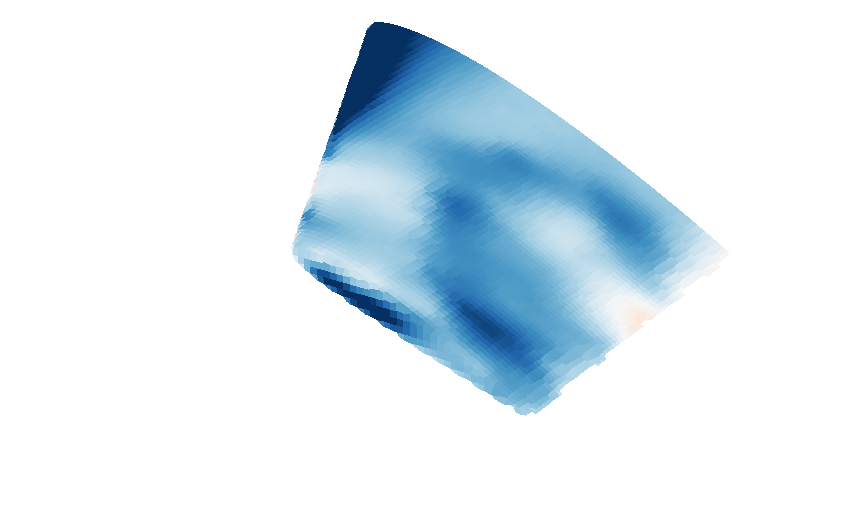} &
			\includegraphics[width=0.23\linewidth]{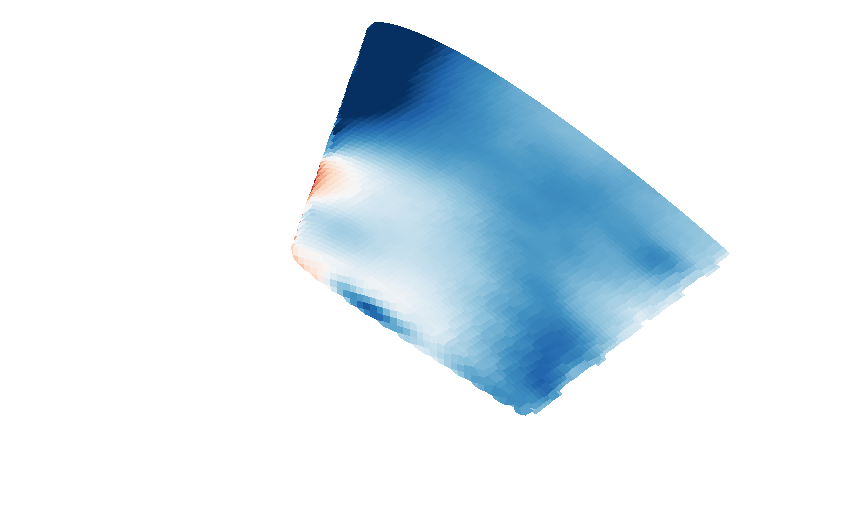}\\[-0.2em]
			\textbf{(g)} &
			\includegraphics[width=0.23\linewidth]{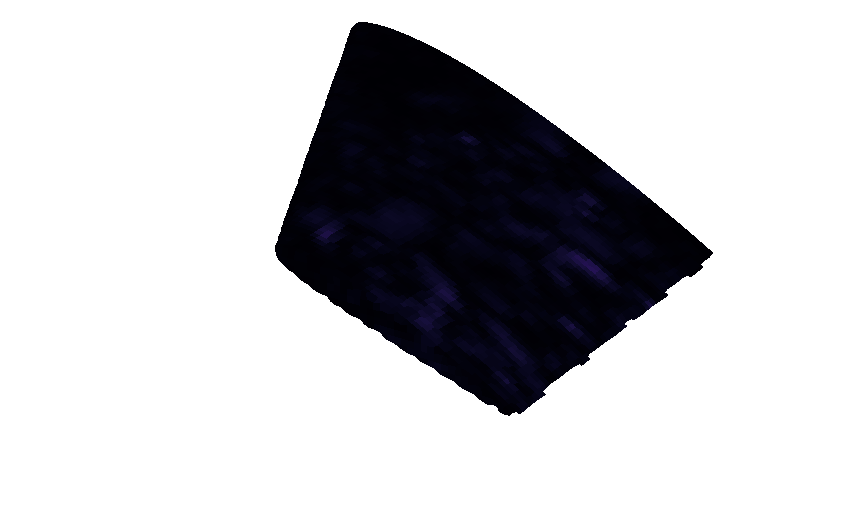} &
			\includegraphics[width=0.23\linewidth]{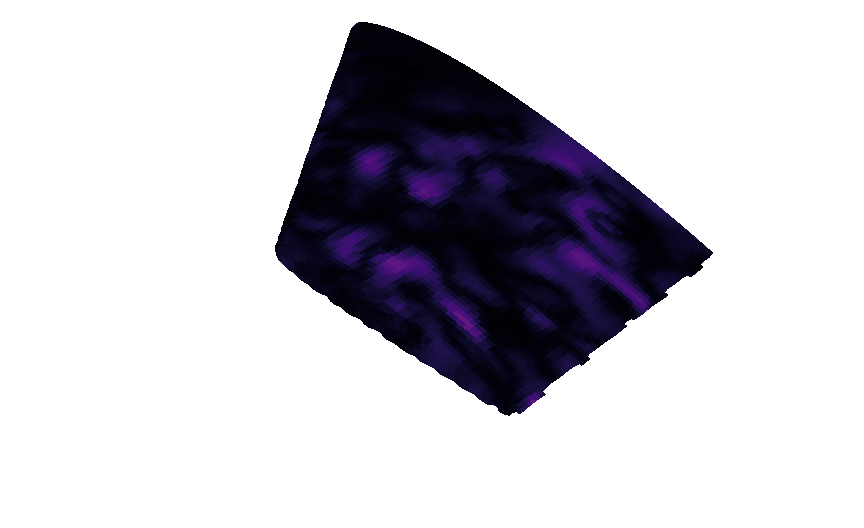} &
			\includegraphics[width=0.23\linewidth]{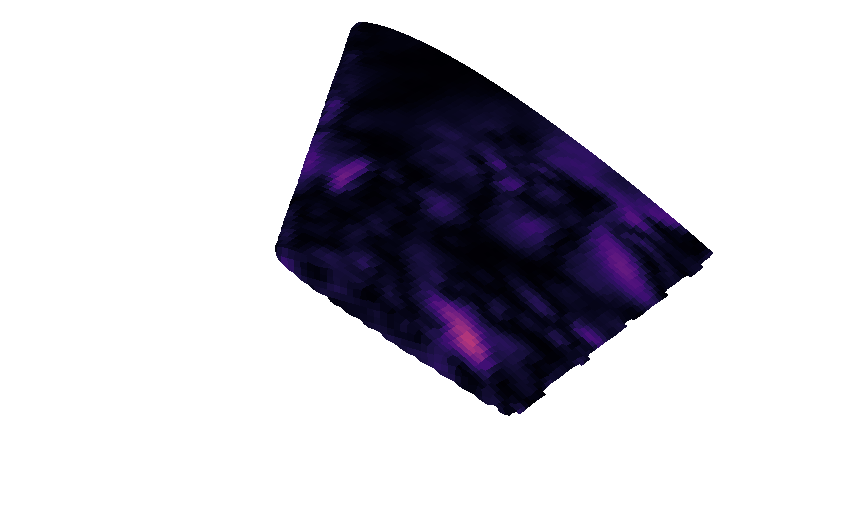} &
			\includegraphics[width=0.23\linewidth]{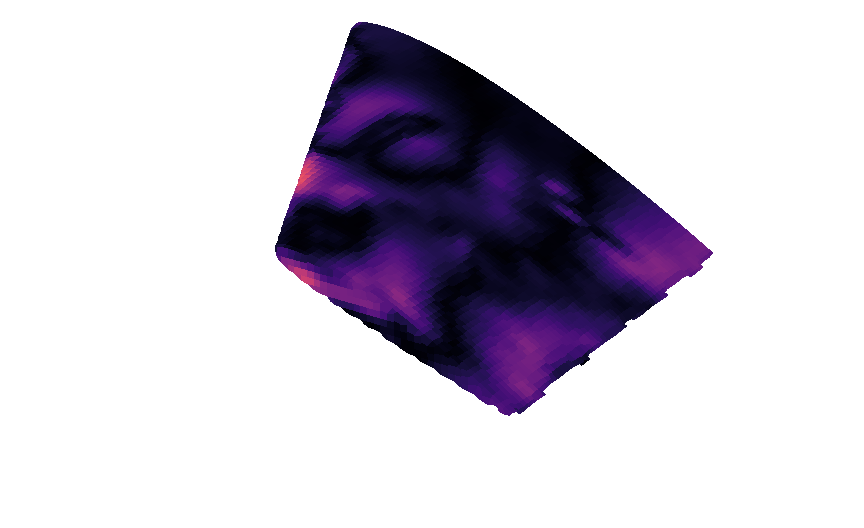}
		\end{tabular}
		\vspace{0.1em}
		\includegraphics[width=0.96\linewidth]{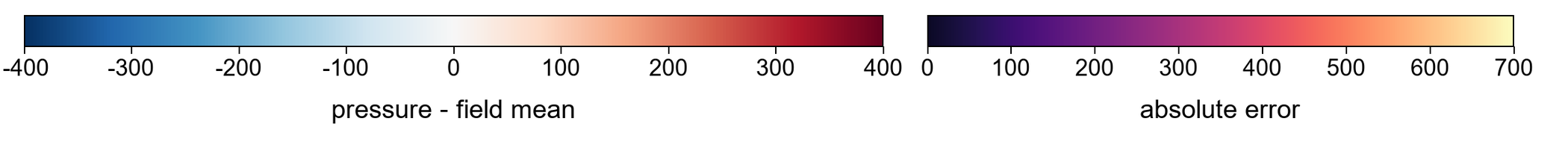}
		\caption{\texttt{pOnWing} surface-centered pressure comparison for simulation \(5\). Columns are rollout steps \(t=1,5,10,50\); rows show (a) centered ground truth, (b) Base prediction, (c) Base absolute error, (d) Chebyshev BSP prediction, (e) Chebyshev BSP absolute error, (f) GLEAM prediction, and (g) GLEAM absolute error.}
		\label{fig:ponwing_surface_centered_sim5}
	\end{figure}
	
	Figure~\ref{fig:ponwing_pressure_force_error} reports this force-level diagnostic over a \(1000\)-step autoregressive rollout on \(16\) held-out \texttt{pOnWing} simulations. The Base mean curve rapidly drifts to a relative pressure-force error near \(0.5\), whereas the GLEAM and Chebyshev BSP mean curves remain in the much lower \(0.12\)--\(0.15\) range over most of the rollout. The area-normalized diagnostic shows the same ordering, so the improvement is not only a consequence of the instantaneous force magnitude in the denominator. Chebyshev BSP is slightly better at the final horizon, while GLEAM is competitive and is lowest around the middle of the rollout. This supports the pressure-PDF evidence in Appendix~\ref{app:ponwing-benchmark}: spectral supervision improves not only the distribution of predicted pressure values but also the integrated loading induced by those pressures.
	
	\begin{figure}[!htbp]
		\centering
		\includegraphics[width=0.98\linewidth]{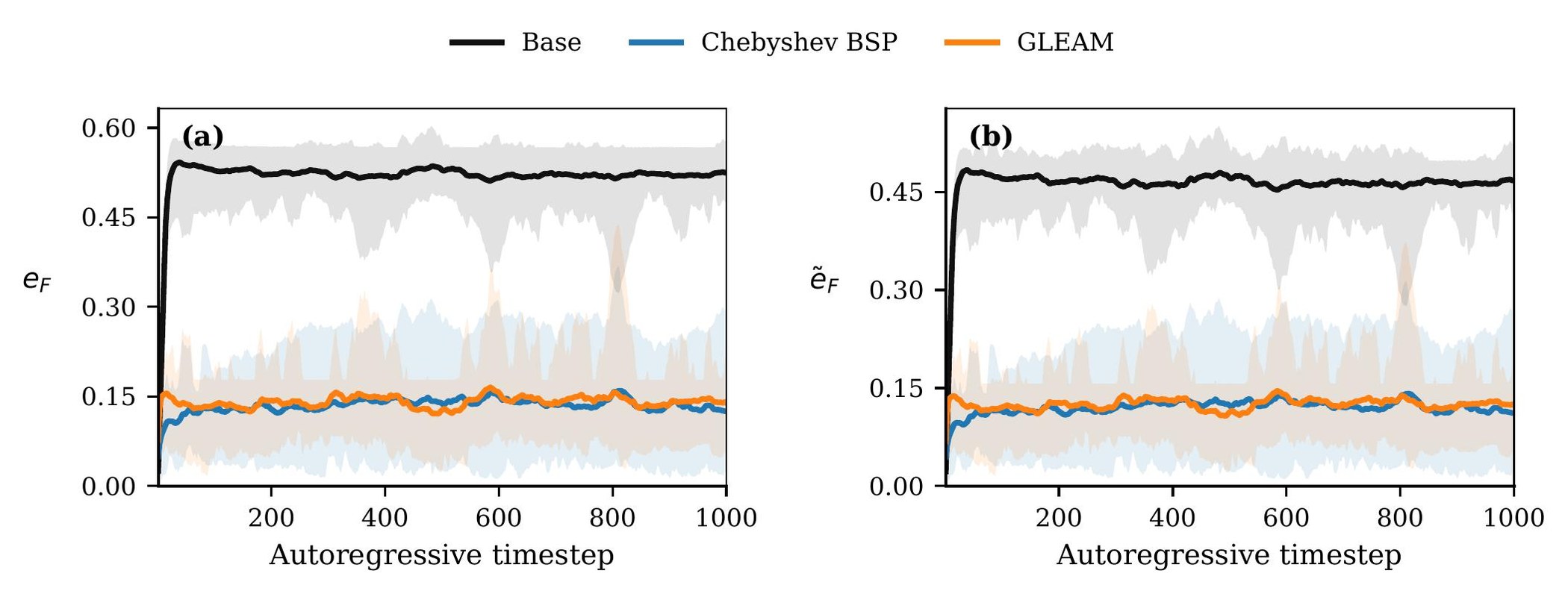}
		\caption{\texttt{pOnWing} pressure-force error over a \(1000\)-step autoregressive rollout for the trained Base, GLEAM, and Chebyshev BSP models. Curves show the mean over simulations \(0\)--\(15\), and shaded regions show the corresponding min--max range. Panel (a) reports \(e_F=\|\widehat{\bm{F}}_p(t)-\bm{F}_p(t)\|_2/\max(\|\bm{F}_p(t)\|_2,10^{-12})\); panel (b) reports \(\tilde e_F=\|\widehat{\bm{F}}_p(t)-\bm{F}_p(t)\|_2/\max(\sum_i |p_i(t)|A_i,10^{-12})\).}
		\label{fig:ponwing_pressure_force_error}
	\end{figure}
	
	Figure~\ref{fig:ponwing_fmgn_rmse_spectral_error} reports the corresponding FMGN aggregate over \(100\) autoregressive steps and \(16\) simulations. The physical-space RMSE and exact graph-spectral error show the same ordering: the Base model accumulates rapid rollout drift, Chebyshev BSP reduces both errors, and GLEAM gives the lowest mean curve across the evaluated horizon. The agreement between physical-space RMSE and exact graph-spectral error shows that the improvement appears not only in distributional pressure diagnostics but also in a graph-Laplacian band-energy metric on the FMGN backbone.
	
	\begin{figure}[!htbp]
		\centering
		\includegraphics[width=\linewidth]{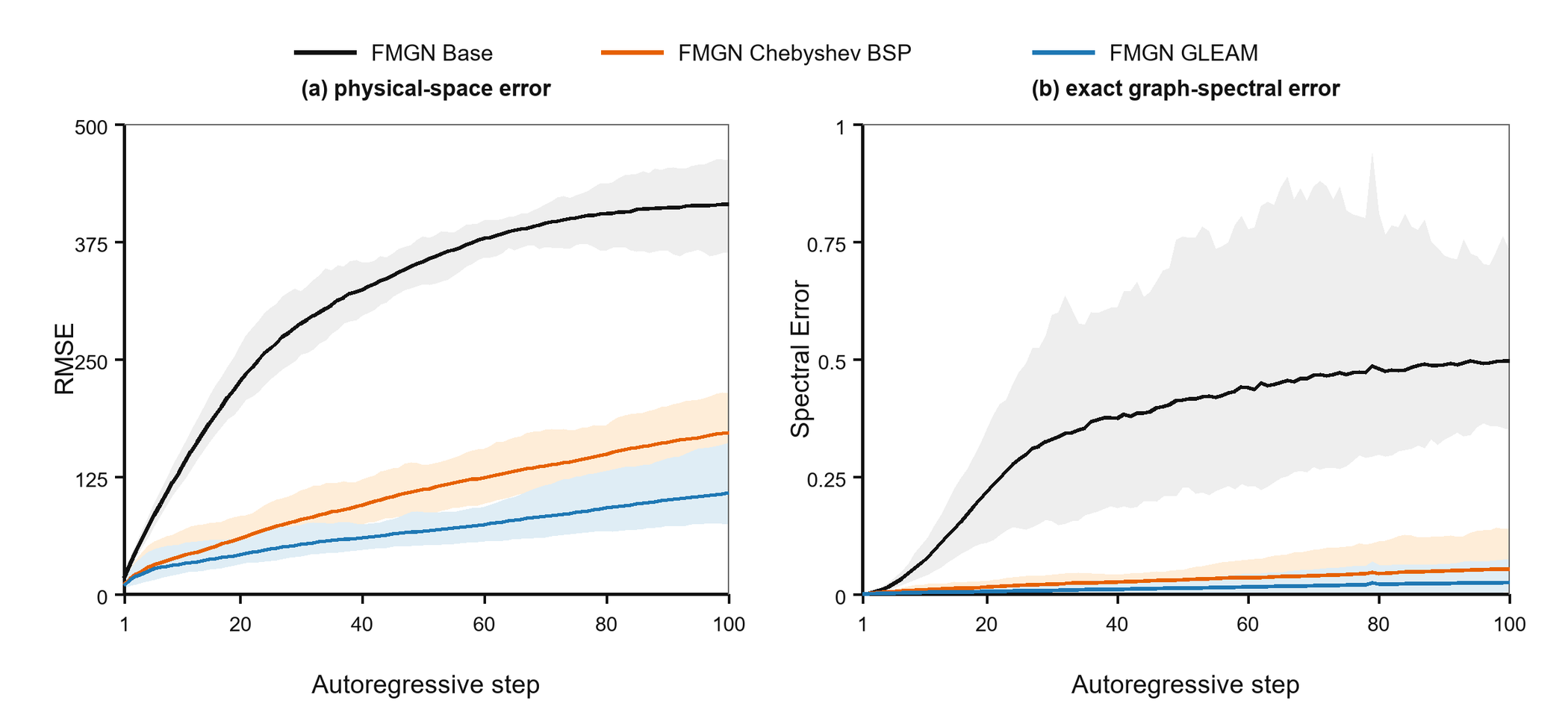}
		\caption{\texttt{pOnWing} FMGN aggregate rollout diagnostics over \(100\) autoregressive steps. Curves show the mean over \(16\) simulations, and shaded regions show the min--max range. Panel (a) reports physical-space RMSE, and panel (b) reports the non-log exact normalized-Laplacian linear BSP spectral error.}
		\label{fig:ponwing_fmgn_rmse_spectral_error}
	\end{figure}
	
	Additional VGAE exact graph-spectral and full-field qualitative checks are provided in Appendix~\ref{app:ponwing-benchmark}, Fig.~\ref{fig:ponwing_vgae_appendix_diagnostics}(a) and (b).
	
	\subsection{Graph-Spectral Rollout Diagnostics}
	\label{sec:spectral-diagnostics}
	The EAGLE results in Section~\ref{sec:eagle-results} already provide one form of long-horizon rollout-fidelity evidence. There, the vorticity RMSE and vorticity PDF \(L_1\) curves in Fig.~\ref{fig:eagle_vorticity_rmse_pdf_l1} show that Chebyshev BSP reduces the growth of localized vortical errors and better preserves the distribution of vorticity values, while Fig.~\ref{fig:eagle_qualitative} shows the same effect qualitatively as the later \(t=310\) field becomes more turbulent. Thus, EAGLE supports the claim that spectral supervision helps reduce autoregressive smoothing and loss of small- to mid-scale flow content.
	
	The BFS analysis below provides a more explicit graph-band view of the same spectral-drift mechanism. Instead of measuring vorticity statistics, it measures how much predicted \(U_y\) energy is misplaced across graph-Laplacian bands during rollout.
	
	For the BFS rollout, the RMSE curves in Fig.~\ref{fig:bfs2d_results} quantify long-horizon pointwise error growth, while Fig.~\ref{fig:bfs_uy_band_mismatch_density} checks whether the same models preserve graph-band energy. We compute the absolute \(U_y\) band-energy mismatch using dense normalized graph-Laplacian eigenpairs and \(24\) spectral bands. The log-mismatch density shows the distribution of band errors at representative rollout steps \(t=30\) and \(t=82\), with the full band-energy RMSE trajectory and corresponding \(U_y\) field snapshots reported together in Fig.~\ref{fig:bfs_uy_rollout_appendix}(a) and (b).
	
	\begin{figure}[!htbp]
		\centering
		\includegraphics[width=\linewidth]{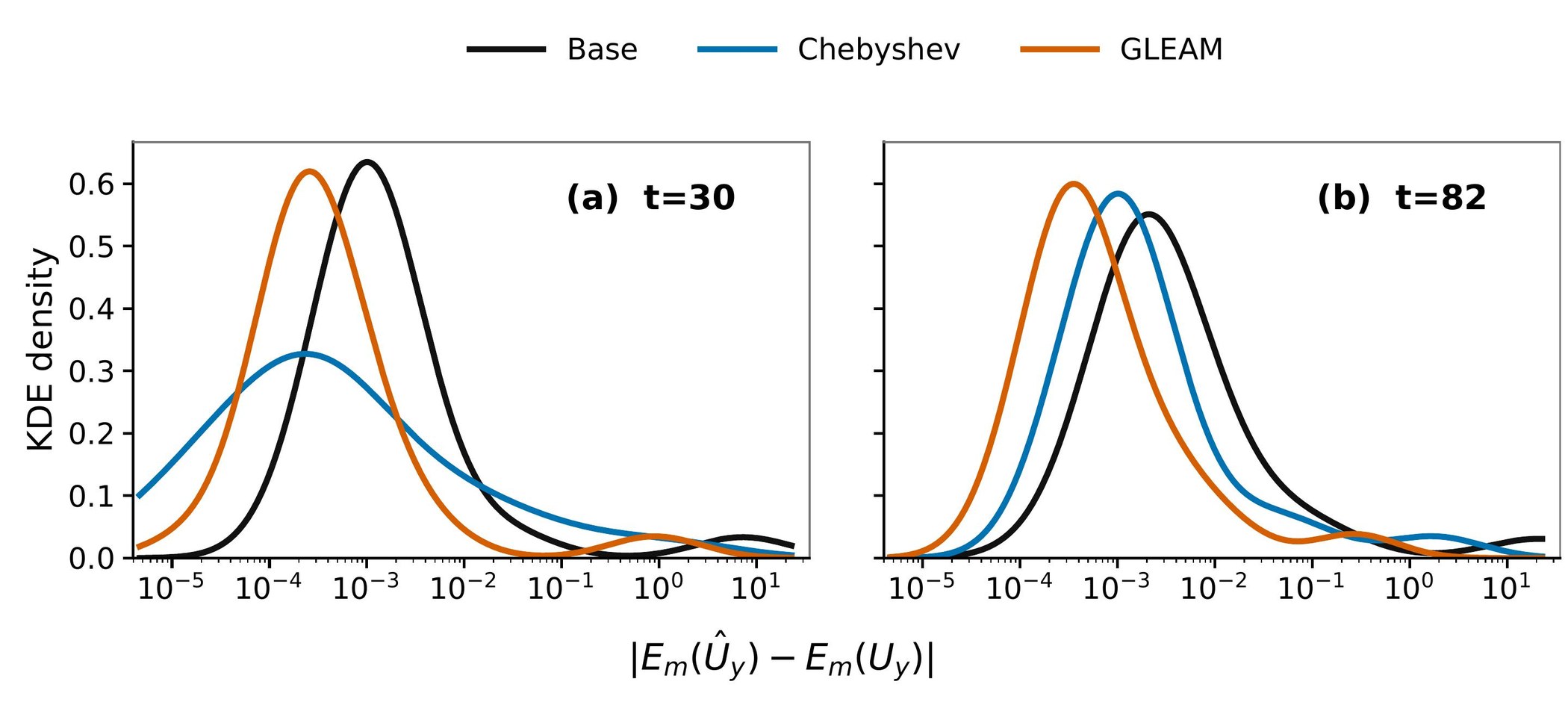}
		\caption{\(U_y\) graph-band energy mismatch distributions on the backward-facing step rollout at \(t=30\) and \(t=82\). Each curve is a smoothed KDE density over \(\log_{10}|E_m(\hat{U}_y)-E_m(U_y)|\) across \(24\) graph-Laplacian bands, plotted against the original mismatch values on a logarithmic axis.}
		\label{fig:bfs_uy_band_mismatch_density}
	\end{figure}
	
	This diagnostic is complementary to pointwise RMSE because it asks a different question: whether the prediction places the correct amount of kinetic structure at each graph-resolved spatial scale. A model can have a visually plausible field while still allocating too much energy to large graph-scale modes or suppressing the bands associated with the separated shear layer. Such spectral redistribution is especially important in autoregressive rollout, where small scale-allocation errors at early steps can feed back into the next predicted state and gradually alter the coherent structures near the step and downstream recirculation region.
	
	The two selected times probe different regimes of the same rollout. At \(t=30\), the models are still close enough to the initial condition that the mismatch mostly reflects how each loss shapes the early redistribution of band energy. At \(t=82\), accumulated forecast error has had more time to interact with the separated flow, so the tails of the mismatch distribution reveal whether a method reduces the tendency for a few graph bands to become strongly over- or under-energized. The lower GLEAM error in Fig.~\ref{fig:bfs_uy_rollout_appendix}(a) and the consistent leftward shift in Fig.~\ref{fig:bfs_uy_band_mismatch_density} suggest that, for this hierarchy, hierarchical spectral supervision improves rollout fidelity by regularizing retained scale allocation rather than only reducing pointwise error.
	
	\subsection{\texorpdfstring{Computational Cost Comparison}{Computational Cost Comparison}}
	\label{sec:cost-comparison}
	The three graph-spectral objectives differ primarily in their computational bottlenecks. Exact Graph BSP is the reference formulation, but it requires either a dense eigensystem or repeated sparse partial eigensolves; it is therefore best suited for analysis, small graphs, or fixed graphs where the eigenbasis can be cached. Chebyshev BSP removes eigendecomposition and replaces each band projector with sparse Laplacian recurrences. For \(M\) bands, polynomial order \(K\), \(C\) channels, and \(|\mathcal{E}|\) graph edges, a direct single-level implementation scales as \(\mathcal{O}(MKC|\mathcal{E}|)\) per loss evaluation after the Chebyshev coefficients have been formed. If the same full Chebyshev BSP loss is applied across a graph hierarchy, this becomes \(\sum_{\ell=1}^{S}\mathcal{O}(MKC|\mathcal{E}_\ell|)\), or \(\sum_{\ell=1}^{S}\mathcal{O}(KC|\mathcal{E}_\ell|+MKCN_\ell)\) when Chebyshev recurrences are shared across bands at each level. Thus, full hierarchical Chebyshev supervision remains eigenfree but can still be expensive because sparse graph filtering is repeated on every supervised level.
	
	GLEAM has a different computational profile. It computes or reuses a rank-\(r\) spectral embedding on selected graph levels and then evaluates low-rank band energies and pairwise geometric terms on the same hierarchy used by the multiscale backbone. Once the embedding is cached, the online cost over supervised levels is approximately \(\sum_{\ell\in\mathcal{H}_{\mathrm{sup}}}\mathcal{O}(N_\ell r C+rC+P_\ell C)\). Thus, Chebyshev BSP gives more fine-band control on the supervised graph, while GLEAM is lower-cost and better aligned with supervision across pooled coarse graphs when those graphs retain the relevant low- and mid-frequency subspaces. This is the practical distinction used in the experiments: Chebyshev BSP is the full-band approximation used for fine-level band control, whereas GLEAM is the hierarchy-aware route used for multilevel supervision. The full setup and per-step complexity expressions are provided in Appendix~\ref{app:complexity}.
	
	\begin{table}[!htbp]
		\centering
		\begin{minipage}[t]{0.56\linewidth}
			\vspace{0pt}
			\centering
			\caption{Dominant computational cost of the graph-spectral losses.}
			\label{tab:loss_cost_summary}
			\scriptsize
			\setlength{\tabcolsep}{2pt}
			\resizebox{\linewidth}{!}{%
				\begin{tabular}{@{}ll@{}}
					\toprule
					Loss & Dominant cost \\
					\midrule
					Exact Graph BSP & \(\mathcal{O}(N^3)+\mathcal{O}(N^2C)\) \\
					Chebyshev BSP & \(\mathcal{O}(MKC|\mathcal{E}|)\) \\
					Hierarchical Chebyshev BSP & \(\sum\nolimits_{\ell=1}^{S}\mathcal{O}(MKC|\mathcal{E}_\ell|)\) \\
					GLEAM & \(\sum\nolimits_{\ell\in\mathcal{H}_{\mathrm{sup}}}\mathcal{O}(N_\ell r C+rC+P_\ell C)\) \\
					\bottomrule
				\end{tabular}
			}
		\end{minipage}\hfill
		\begin{minipage}[t]{0.42\linewidth}
			\vspace{0pt}
			\centering
			\caption{Empirical training cost for the VGAE backbone.}
			\label{tab:vgae_training_cost}
			\scriptsize
			\setlength{\tabcolsep}{3pt}
			\begin{tabular*}{\linewidth}{@{\extracolsep{\fill}}lc@{}}
				\toprule
				Spectral variant & s/it \\
				\midrule
				Chebyshev BSP & \(82\) \\
				GLEAM no corrector & \(64.5\) \\
				GLEAM scalar & \(70.8\) \\
				\bottomrule
			\end{tabular*}
		\end{minipage}
	\end{table}
	
	Here the hierarchy has \(S\) levels; \(G_\ell\) has \(N_\ell\) nodes and \(|\mathcal{E}_\ell|\) edges, and \(P_\ell\) is the number of sampled pairwise terms at level \(\ell\). The set \(\mathcal{H}_{\mathrm{sup}}\) denotes the levels where GLEAM supervision is evaluated. Table~\ref{tab:vgae_training_cost} gives a wall-clock snapshot for the Graph VAE backbone; these measurements illustrate the practical effect of replacing full Chebyshev filtering with low-rank GLEAM variants under the same training setup.
	
	\section{Limitations and Future Work}
	The proposed losses introduce a multi-objective training problem: the forecasting backbone is optimized for local field accuracy while the auxiliary graph-spectral term constrains the distribution of energy across mesh-resolved scales. These objectives are complementary but not identical, and their relative weight can affect the balance between pointwise error reduction and long-horizon spectral fidelity. In the present experiments this balance is handled with fixed spectral weights and ablations over the main approximation parameters. A useful next step is to make this balancing more adaptive, for example by using continuation schedules or gradient-aware weighting so that spectral regularization is introduced in proportion to rollout drift rather than as a fixed penalty throughout training.
	
	The two scalable approximations also have different operating regimes. Chebyshev BSP gives eigenfree control of graph-frequency bands on the supervised graph, but its accuracy depends on the polynomial order, quadrature resolution, and sharpness of the target windows. GLEAM is intentionally different: it supervises a retained low-rank graph-resistance geometry across the hierarchy, making it well suited for coherent structures and long-range organization carried by multilevel graph backbones. Its limitation is therefore not computational but spectral: fine-scale discrepancies outside the retained subspace are less explicitly constrained than in full-band Chebyshev BSP. This is why GLEAM should be viewed as a hierarchy-aware complement to Chebyshev BSP rather than a universal replacement for fine-level band filtering.
	
	A useful future extension is to combine these strengths by pairing GLEAM with a small targeted final-level high-frequency Chebyshev correction. In that hybrid view, GLEAM would continue to regularize coherent coarse organization through the model hierarchy, while a small number of finest-graph bands would constrain unresolved high-frequency content. This would retain much of the computational advantage of GLEAM relative to full hierarchical Chebyshev BSP, but would address cases where late-time rollout errors are dominated by small vortical structures, sharp pressure variations, or other fine-scale features. Further physical diagnostics, such as intermittency measures, QR plots, or force/invariant-based statistics, can then be used to test which flow regimes benefit most from the coarse-plus-fine decomposition.
	
	\section{Conclusion}\label{sec:conclusion}
	This work extends binned spectral-power supervision from structured Fourier grids to unstructured-mesh surrogate modeling by using the graph Laplacian as the intrinsic frequency operator. The resulting Graph BSP objective compares predicted and target energy over graph-frequency bands, providing a way to penalize spectral drift on irregular meshes where Euclidean Fourier modes are not available. Exact Graph BSP provides the reference formulation, Chebyshev BSP replaces explicit eigendecomposition with sparse polynomial graph filters, and GLEAM adapts the same scale-aware principle to multilevel graph backbones through low-rank graph-resistance geometry and hierarchy-aware pairwise constraints. Together, these variants form a cost--fidelity hierarchy rather than a single architecture-specific modification.
	
	The experiments show that this graph-spectral supervision improves behavior that pointwise losses alone do not reliably control. On EAGLE, Chebyshev BSP improves both rollout RMSE and spectral-energy agreement while better preserving vortical structure in increasingly turbulent scenes. On the backward-facing step case, Chebyshev supervision reduces final-level spectral drift and GLEAM further benefits from applying the constraint across the graph hierarchy used by the forecasting backbone. On \texttt{pOnWing}, the spectral variants improve pressure-field structure, pressure-distribution diagnostics, and integrated pressure-force errors on a complex three-dimensional aerodynamic surface. These results support the central claim that matching scale-resolved graph energy is a practical mechanism for improving autoregressive mesh-based forecasting in the benchmarks studied here. The method does not replace physical modeling choices or careful architecture design; instead, it supplies a modular loss-level constraint that can be attached to existing graph surrogates to preserve mesh-resolved structure across long rollouts.
	
	\appendix
	\section*{Appendix}
	
	\section{Normalized Laplacian Spectral Bound}
	\label{app:laplacian-bound}
	
	\begin{proposition}[Normalized Laplacian spectral bound]
		\label{prop:cheb-normalized-laplacian-spectrum}
		Let \(G=(V,\mathcal{E},W)\) be an undirected graph with nonnegative symmetric weights and positive degrees \(d_i=\sum_j A_{ij}\). For the symmetric normalized Laplacian
		\[
		L=I-D^{-1/2}AD^{-1/2},
		\]
		all eigenvalues satisfy
		\[
		0\le \lambda_k\le 2.
		\]
		This is a standard property of the normalized graph Laplacian \citep{chung1997spectral,shuman2013signal}.
	\end{proposition}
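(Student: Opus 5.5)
The plan is to use the Rayleigh quotient together with two explicit quadratic-form identities. First, \(L\) is real symmetric, since \(A\) is symmetric and \(D^{-1/2}\) is diagonal with positive entries (the degrees are assumed positive). Its eigenvalues are therefore real, and each equals a Rayleigh quotient \(\lambda_k=\phi_k^\top L\phi_k\) for a unit eigenvector \(\phi_k\). It is then enough to show that \(0\le x^\top Lx\le 2\,x^\top x\) for every \(x\in\R^N\).

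To obtain both bounds at once, I substitute \(y=D^{-1/2}x\), so that \(x_i=\sqrt{d_i}\,y_i\). Then
\[
x^\top x=\sum_i d_i y_i^2,\qquad x^\top D^{-1/2}AD^{-1/2}x=\sum_{i,j}A_{ij}y_iy_j .
\]
Using \(d_i=\sum_j A_{ij}\) and the symmetry of \(A\), I rewrite \(\sum_i d_iy_i^2=\tfrac12\sum_{i,j}A_{ij}(y_i^2+y_j^2)\). This gives the two identities
\[
x^\top Lx=\tfrac12\sum_{i,j}A_{ij}(y_i-y_j)^2,\qquad 2x^\top x-x^\top Lx=x^\top x+x^\top D^{-1/2}AD^{-1/2}x=\tfrac12\sum_{i,j}A_{ij}(y_i+y_j)^2 .
\]
Because \(A_{ij}\ge 0\), both right-hand sides are nonnegative. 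The first identity gives \(\lambda_k\ge 0\) and the second gives \(\lambda_k\le 2\).

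The only step that needs care is the symmetrization \(\sum_i d_iy_i^2=\tfrac12\sum_{i,j}A_{ij}(y_i^2+y_j^2)\). It depends on \(A\) being symmetric and on the degree being defined as a row sum, including any self-loop weight \(A_{ii}\), which is harmless because it contributes \((y_i-y_i)^2=0\) or \((2y_i)^2\ge 0\). The positivity assumption on the degrees is needed only to make \(D^{-1/2}\) well defined and the change of variables \(x\leftrightarrow y\) a bijection.

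As a cross-check I would also note that \(\lambda_1=0\) is attained by \(D^{1/2}\mathbf 1\), which is consistent with the ordering stated earlier in \eqref{eq:eig}. This observation is not needed for the bound itself.
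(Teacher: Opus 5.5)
Your proposal is correct and follows the same Rayleigh-quotient argument as the paper. With \(y_i=x_i/\sqrt{d_i}\), your first identity is exactly the paper's quadratic-form expression for \(\lambda_k\ge 0\). For the upper bound, the paper applies \((a-b)^2\le 2a^2+2b^2\) term by term, while you write the slack exactly as \(\tfrac12\sum_{i,j}A_{ij}(y_i+y_j)^2\); since \((a-b)^2+(a+b)^2=2a^2+2b^2\), this is the same computation, and your version additionally shows when \(\lambda=2\) can be attained.
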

	
	\begin{proof}
		For any vector \(x\in\R^N\), the quadratic form of \(L\) can be written as
		\[
		x^\top Lx
		=
		\frac{1}{2}\sum_{i,j}A_{ij}
		\left(
		\frac{x_i}{\sqrt{d_i}}-\frac{x_j}{\sqrt{d_j}}
		\right)^2.
		\]
		Hence \(x^\top Lx\ge 0\), so \(L\) is positive semidefinite and every eigenvalue is nonnegative. For the upper bound, use \((a-b)^2\le 2a^2+2b^2\) in the expression above:
		\[
		x^\top Lx
		\le
		\frac{1}{2}\sum_{i,j}A_{ij}
		\left(
		2\frac{x_i^2}{d_i}+2\frac{x_j^2}{d_j}
		\right)
		=
		2\sum_i x_i^2
		=2x^\top x.
		\]
		Therefore, for every nonzero \(x\),
		\[
		0\le \frac{x^\top Lx}{x^\top x}\le 2.
		\]
		The eigenvalues of the real symmetric matrix \(L\) are extrema of this Rayleigh quotient, so \(0\le \lambda_k\le 2\) for all \(k\).
	\end{proof}
	
	\section{Dataset and Training Details}
	\label{app:training-details}
	
	This appendix records the training setup used for the experiments without repeating the dataset descriptions in Section~\ref{sec:experiments}. The common comparison protocol is to keep the data split, rollout horizon, predicted variables, and fine-level prediction interface fixed within each benchmark. The Base model uses the pointwise training loss alone, the Chebyshev model adds a final-level band-power penalty without changing the main forecasting backbone, and GLEAM adds hierarchical spectral supervision on the graph levels used by the multiscale propagator. For GLEAM, auxiliary level-normalization and prediction-head modules are used to expose intermediate hierarchy-level fields for the training loss, while the autoregressive fine-level input--output interface remains the same at evaluation time. Unless stated otherwise, validation and test metrics are computed from autoregressive rollouts rather than isolated one-step predictions, since the main goal is long-horizon rollout fidelity.
	
	All computations were performed on the Aurora cluster. In the compute accounting used here, each Aurora node provides \(12\) GPU units, and one training rank is assigned to each GPU unit unless otherwise stated. Therefore, the dataset-specific training descriptions below report the number of Aurora nodes rather than repeatedly listing the per-node GPU count.

	\subsection{EAGLE}
	EAGLE is generated from two-dimensional unsteady airflow simulations of a moving UAV interacting with procedurally generated floor geometries \citep{janny2023eagle}. The original benchmark contains \(1{,}184\) simulations, each with \(990\) time steps corresponding to \(33\) seconds at \(30\) fps. The scenes are grouped into Step, Triangular, and Spline floor profiles, with \(197\), \(199\), and \(196\) geometries respectively; each geometry produces two simulations depending on whether the UAV crosses the scene to the left or to the right. We use the published split of \(948\) training simulations, \(118\) validation simulations, and \(118\) test simulations.
	
	The simulations were performed in Ansys Fluent using the Reynolds-averaged Navier--Stokes equations with a Reynolds-stress turbulence model \citep{janny2023eagle}. The raw simulations contain approximately \(162{,}760\) control points per mesh and about \(3.9\) TB of raw data; these are downsampled to an irregular triangular graph with \(3{,}388\) nodes on average and compressed to about \(270\) GB. Because the UAV moves through the scene, the mesh is dynamically adapted over time, but the future mesh geometry is provided to the forecasting model rather than being predicted. For reproducibility, all our experiments are recorded by separating the rollout protocol, optimizer, distributed batch construction, and spectral-loss settings. The backbone and data protocol are held fixed across the Base and Chebyshev BSP ablations in Tables~\ref{tab:eagle_chebyshev_ablation} and~\ref{tab:eagle_spectral_energy_rel_l1}; only the auxiliary graph-spectral objective and its approximation parameters are varied.
	
	\begin{table}[!htbp]
		\centering
		\caption{Training configuration summary for the datasets used in the experiments.}
		\label{tab:eagle-training-details}
		\scriptsize
		\setlength{\tabcolsep}{3pt}
		\begin{tabular}{@{}p{0.20\linewidth}p{0.24\linewidth}p{0.25\linewidth}p{0.21\linewidth}@{}}
			\toprule
			\textbf{Setting} &
			\begin{tabular}[c]{@{}l@{}}\textbf{EAGLE}\\[-0.2ex]{\scriptsize \citep{janny2023eagle}}\end{tabular} &
			\begin{tabular}[c]{@{}l@{}}\textbf{Backward-facing step}\\[-0.2ex]{\scriptsize \citep{kim2025generalizable}}\end{tabular} &
			\begin{tabular}[c]{@{}l@{}}\textbf{\texttt{pOnWing}}\\[-0.2ex]{\scriptsize \citep{lino2025dgn}}\end{tabular} \\
			\midrule
			Predicted fields & pressure and velocity & \(U_x,U_y\) & pressure \(p\) \\
			Model & EAGLE mesh transformer & U-Net-style graph neural network with GATv2 & VGAE; FMGN \\
			Epochs & \(1000\) & \(200\) & VGAE: \(600\); FMGN: \(1000\) \\
			Optimizer & Adam & AdamW & Adam \\
			Learning rate & \(10^{-4}\) & \(5\times 10^{-4}\) & VGAE: \(10^{-4}\); FMGN: \(10^{-4}\) \\
			Train horizon & \(6\) & \(1\)--\(4\)-step rollout curriculum & prefix length \(T=250\) \\
			Validation horizon & \(25\) & held-out contiguous rollout & test simulations to \(2501\) steps \\
			Compute layout & \(4\) Aurora nodes & \(1\) Aurora node & \(1\) Aurora node \\
			Per-rank batch size & \(1\) & \(4\) & VGAE: \(6\); FMGN: \(6\) \\
			Effective global batch & \(48\) & \(48\) & \(72\) \\
			Spectral weight & \(\lambda_{\mathrm{Cheb}}=0.01\) & \(\lambda_{\mathrm{Cheb}}=0.1\); \(\lambda_{\mathrm{GLEAM}}=10^{-4}\) & \(\lambda_{\mathrm{Cheb}}\): VGAE \(10^{-2}\), FMGN \(10^{-4}\); \(\lambda_{\mathrm{GLEAM}}\): VGAE \(10^{-3}\), FMGN \(10^{-4}\) \\
			Spectral fields & all channels & \(U_x,U_y\) & \(p\) \\
			\(\varepsilon_s\) & \(10^{-6}\) & \(10^{-6}\) & \(10^{-6}\) \\
			\bottomrule
		\end{tabular}
	\end{table}
	
	The EAGLE column in Table~\ref{tab:eagle-training-details} corresponds to the graph-filtered band-energy objective \(H_m^{(K,Q)}\) used in the EAGLE ablation tables. The run uses the EAGLE mesh-transformer backbone with Adam for \(1000\) epochs, learning rate \(10^{-4}\), train horizon \(6\), validation horizon \(25\), \(20\) clusters, \(4\) Aurora nodes, and an effective global batch size of \(48\). The Chebyshev spectral penalty is applied to all predicted state channels with \(\lambda_{\mathrm{Cheb}}=0.01\), so the reported spectral-energy relative \(L_1\) errors measure the effect of band-energy matching under a fixed rollout and distributed-training protocol.
	
	The ablation varies the approximation parameters in \(H_m^{(K,Q)}\) rather than changing the optimizer or rollout schedule. Thus \(K\) controls the Chebyshev filter order, \(M\) controls the number of band-power bins, and \(Q\) controls the quadrature resolution used to approximate the band energies. This convention makes Tables~\ref{tab:eagle_chebyshev_ablation} and~\ref{tab:eagle_spectral_energy_rel_l1} directly comparable: differences between rows reflect the spectral approximation and not a change in training budget.
	
	\subsection{Backward-Facing Step}
	For the backward-facing step experiments, we train graph neural forecasters on the cropped velocity field. The graph contains \(11{,}930\) spatial nodes and \(70{,}272\) directed edge entries, with the two velocity components \(U_x\) and \(U_y\) stored at each node. The first \(100\) snapshots are discarded as transient, leaving \(924\) usable snapshots from the original \(1024\)-step sequence. After accounting for the four-step target horizon, the temporal samples are split contiguously into \(644\) training, \(92\) validation, and \(184\) test samples, corresponding to a \(70/10/20\) split of the post-transient sequence.
	
	All BFS spectral variants use the same forecasting backbone: a U-Net-style graph neural network with \(256\) hidden channels, Gaussian Fourier positional features, GATv2 message passing with four attention heads, three graph coarsening scales with voxel sizes \(0.015\), \(0.03\), and \(0.06\), and direct prediction of the next normalized velocity field rather than a residual update. The model is trained as a one-step forecaster but optimized with a short autoregressive curriculum. The rollout horizon is increased from one to four steps during training: one-step rollouts for epochs \(1\)--\(80\), two-step rollouts for epochs \(81\)--\(120\), three-step rollouts for epochs \(121\)--\(160\), and four-step rollouts thereafter. This follows a pushforward-style strategy for exposing the forecaster to its own predicted states during training \citep{brandstetter2022message}. Predictions are detached between rollout steps, so each predicted step contributes to the loss without backpropagating through the full autoregressive chain.
	
	The Chebyshev BSP run optimizes
	\[
	\mathcal{L}_{\mathrm{cheb}}
	=
	\mathrm{MSE}(\widehat{x}_{t+1},x_{t+1})
	+\lambda_{\mathrm{Cheb}}\mathcal{L}_{\mathrm{BSP}}^{\mathrm{Cheb}},
	\]
	where the band-power term approximates graph spectral band energies using Chebyshev polynomial filtering rather than an explicit eigendecomposition. The final Chebyshev configuration uses \(M=24\) spectral bands, Chebyshev order \(K=12\), \(Q=128\) quadrature points, and spectral weight \(\lambda_{\mathrm{Cheb}}=0.1\). It is trained for \(200\) epochs with AdamW, learning rate \(5\times 10^{-4}\), weight decay \(10^{-4}\), cosine annealing to one percent of the initial learning rate after a five-epoch warmup, gradient clipping at norm \(1.0\), bfloat16 mixed precision, and exponential moving average of the model weights after the first \(10\) epochs.
	
	The GLEAM run uses the same pointwise forecasting objective but replaces the Chebyshev filter-bank penalty with hierarchical spectral supervision,
	\[
	\mathcal{L}_{\mathrm{GLEAM}}
	=
	\mathrm{MSE}(\widehat{x}_{t+1},x_{t+1})
	+\lambda_{\mathrm{GLEAM}}\mathcal{L}_{\mathrm{spec}}^{\mathrm{GLEAM}}.
	\]
	GLEAM evaluates \(\mathcal{L}_{\mathrm{spec}}^{\mathrm{GLEAM}}\) on the same graph hierarchy used by the multiscale forecasting backbone. Let \(\mathcal{G}^{(\ell)}=(\mathcal{V}^{(\ell)},\mathcal{E}^{(\ell)})\) denote hierarchy level \(\ell\), and let \(P^{(\ell)}\) be the pooling map from the fine graph to that level. For predicted and target velocity fields \(\widehat{\bm{u}},\bm{u}\in\mathbb{R}^{N\times C}\), the supervised fields at level \(\ell\) are
	\[
	\widehat{\bm{u}}^{(\ell)}=P^{(\ell)}\widehat{\bm{u}},
	\qquad
	\bm{u}^{(\ell)}=P^{(\ell)}\bm{u}.
	\]
	The loss is therefore applied on the same pooled nodes and coarse graphs that participate in message passing. If \(U_r^{(\ell)}\in\mathbb{R}^{|\mathcal{V}^{(\ell)}|\times r}\) denotes the regularized retained embedding formed as in \eqref{eq:embedding} on \(\mathcal{G}^{(\ell)}\), GLEAM projects the level fields as
	\[
	\widehat{\bm{a}}^{(\ell)}=(U_r^{(\ell)})^\top\widehat{\bm{u}}^{(\ell)},
	\qquad
	\bm{a}^{(\ell)}=(U_r^{(\ell)})^\top\bm{u}^{(\ell)}.
	\]
	For band \(\mathcal{B}_m\), the low-rank band energies are
	\[
	E_{\widehat{u}}^{(\ell)}(m,c)
	=
	\frac{1}{2}\sum_{k\in\mathcal{B}_m}\left(\widehat{a}_{k,c}^{(\ell)}\right)^2,
	\qquad
	E_{u}^{(\ell)}(m,c)
	=
	\frac{1}{2}\sum_{k\in\mathcal{B}_m}\left(a_{k,c}^{(\ell)}\right)^2.
	\]
	The band term compares these energies using the same stabilized relative mismatch as the main Graph BSP objective, while the Fiedler-pair term compares weighted field contrasts across selected node pairs \(\mathcal{P}^{(\ell)}\) from the coarse spectral embedding. The weights \(w_{ij}^{(\ell)}\) are the levelwise analogue of \eqref{eq:fiedler_pair_weight}:
	\[
	\mathcal{L}_{\mathrm{pair}}^{(\ell)}
	=
	\frac{1}{|\mathcal{P}^{(\ell)}|}
	\sum_{(i,j)\in\mathcal{P}^{(\ell)}}
	w_{ij}^{(\ell)}
	\left[
	\sum_{c=1}^{C}
	\left(\widehat{u}_{i,c}^{(\ell)}-\widehat{u}_{j,c}^{(\ell)}\right)^2
	-
	\sum_{c=1}^{C}
	\left(u_{i,c}^{(\ell)}-u_{j,c}^{(\ell)}\right)^2
	\right]^2 .
	\]
	Thus the hierarchy-level penalty has the form
	\[
	\mathcal{L}_{\mathrm{spec}}^{\mathrm{GLEAM}}
	=
	\sum_{\ell\in\mathcal{H}_{\mathrm{sup}}}\alpha_{\ell}
	\left(
	\left(1-\rho\right)\mathcal{L}_{\mathrm{band}}^{(\ell)}
	+\rho\,\mathcal{L}_{\mathrm{pair}}^{(\ell)}
	\right),
	\]
	so the auxiliary supervision regularizes both low-rank band energy and long-range coarse geometric variation along the same coarse-to-fine pathway used by the backbone.
	
	The final BFS GLEAM configuration uses a rank-\(32\) spectral embedding, \(\rho=0.2\), \(32\) Fiedler pairs, and \(8\) spectral bands. Band energies are compared with stabilized relative mismatch, and coarse embeddings are transferred across levels using distance-weighted prolongation followed by two unweighted neighbor-mean smoothing refinement steps with \(\beta=0.15\) and a scalar correction. The finest exact eigensolve is disabled in this run, so the loss remains a hierarchy-aware low-rank approximation rather than a full-spectrum penalty on the finest graph. The selected comparison uses the \(\lambda_{\mathrm{GLEAM}}=10^{-4}\) run, trained for \(200\) epochs with the same AdamW optimizer, learning-rate schedule, gradient clipping, bfloat16 mixed precision, rollout curriculum, and EMA-weighting scheme as the Chebyshev model.
	
	At test time, the trained models are evaluated autoregressively on the held-out contiguous test segment. The model starts from the first normalized test state and repeatedly feeds its own prediction back as the next input. Rollout outputs are denormalized before saving metrics and fields. Qualitative figures report both field values and absolute errors, while the graph-spectral rollout diagnostic uses the \(U_y\) channel with \(24\) normalized graph-Laplacian bands. Thus, RMSE measures pointwise forecast growth, whereas band-energy mismatch measures whether the rollout preserves the distribution of energy across graph-resolved spatial scales.
	
	\subsubsection{BFS mechanism diagnostics}
	\label{app:bfs-mechanism-diagnostics}
	{
		Table~\ref{tab:bfs_mechanism_diagnostics} uses mechanism-oriented diagnostics computed from the denormalized autoregressive BFS fields. Let \(\bm u_t\) and \(\widehat{\bm u}_t\) denote the ground-truth and predicted two-component velocity fields at rollout step \(t\), let \(\bm e_t=\widehat{\bm u}_t-\bm u_t\), and let \(a_i\) be the nodal area weight obtained by assigning one third of each triangular cell area to each of its vertices. For a node set \(D\), the area-weighted velocity RMSE is
		\[
		\mathrm{RMSE}_{D}(t)
		=
		\left[
		\frac{\sum_{i\in D}a_i\norm{\bm e_{t,i}}_2^2}
		{2\sum_{i\in D}a_i}
		\right]^{1/2},
		\]
		where the factor \(2\) is the number of velocity components. Global RMSE uses all nodes. Reverse RMSE restricts this same error to the reverse-flow node set
		\[
		D_{\mathrm{rev}}
		=
		\left\{i:\frac{1}{T}\sum_{t=1}^{T}\mathbf{1}\{u_{x,t,i}<0\}\ge 0.05\right\},
		\]
		which marks nodes that participate in recirculation during at least \(5\%\) of the rollout. High-gradient RMSE uses the top \(15\%\) of nodes by mean incident-edge velocity-gradient score,
		\[
		g_i=\operatorname{mean}_{(i,j)\in\mathcal{E}}
		\frac{1}{T}\sum_{t=1}^{T}
		\frac{\norm{\bm u_{t,i}-\bm u_{t,j}}_2}{\norm{\bm p_i-\bm p_j}_2}.
		\]
		The reverse-area error compares the predicted and target reverse-flow area,
		\[
		\frac{\left|\widehat A_t^{-}-A_t^{-}\right|}{A_\Omega},
		\qquad
		A_t^{-}=\sum_i a_i\mathbf{1}\{u_{x,t,i}<0\},
		\quad
		\widehat A_t^{-}=\sum_i a_i\mathbf{1}\{\widehat u_{x,t,i}<0\},
		\]
		where \(A_\Omega=\sum_i a_i\) is the total mesh area. The centroid drift is the Euclidean distance between predicted and target reverse-flow centroids,
		\[
		\norm{\widehat{\bm z}_t^{-}-\bm z_t^{-}}_2,
		\qquad
		\bm z_t^{-}
		=
		\frac{\sum_i a_i[-u_{x,t,i}]_{+}\bm p_i}
		{\sum_i a_i[-u_{x,t,i}]_{+}},
		\]
		with the predicted centroid \(\widehat{\bm z}_t^{-}\) defined analogously. Low-band RMSE is computed using the exact normalized-Laplacian eigensystem \(L=\Phi\Lambda\Phi^\top\) for evaluation only. For the \(U_y\) channel and spectral band \(B_m\),
		\[
		E_{m,t}^{U_y}
		=
		\frac{1}{2N}\sum_{k\in B_m}
		\left(\phi_k^\top \bm u_{y,t}\right)^2,
		\]
		and the reported low-band value is the rollout mean, excluding the initial step, of the RMSE over bands \(m=0,\ldots,7\) from the \(24\)-band partition.
		
		These calculations connect the BFS rollout to the broader transport hypothesis tested across the turbulent-flow benchmarks. Reverse-area error tests whether the model preserves the size of the separated recirculation region, while centroid drift tests whether that region is transported to the correct location. Reverse-region and high-gradient RMSE test whether improvements occur in the physically relevant regions where recirculation and shear-layer errors are expected to seed later autoregressive drift. Low-band \(U_y\) RMSE tests whether the same rollout preserves the coarse graph-spectral energy associated with large-scale vertical transport. The reductions in Table~\ref{tab:bfs_mechanism_diagnostics} therefore support the interpretation that GLEAM is not merely lowering a global pointwise score; it is also reducing the reverse-flow extent error, recirculation displacement, shear-region error, and low-frequency spectral drift predicted by the same transport-drift mechanism that motivates the EAGLE and \texttt{pOnWing} diagnostics. Because these are post-hoc diagnostics of trained models, they should be read as evidence consistent with the mechanism rather than as a causal proof of which hierarchy level produced the improvement.
	}
	
	\subsubsection{\texorpdfstring{BFS GLEAM Hyperparameter Ablation}{BFS GLEAM Hyperparameter Ablation}}
	\label{app:bfs-gleam-ablation}
	Tables~\ref{tab:bfs_gleam_hyperparameter_ablation} and~\ref{tab:bfs_gleam_hyperparameter_spectral_ablation} summarize the GLEAM hyperparameter ablation on the backward-facing step benchmark. Both tables include the base GLEAM run as a reference configuration, with \(M=8\) GLEAM bands, retained rank \(r=32\), \(P=32\) Fiedler-guided contrast pairs, pairwise weight \(\rho=0.2\), and a scalar coarse-to-fine correction. Table~\ref{tab:bfs_gleam_hyperparameter_ablation} reports held-out autoregressive velocity RMSE at local rollout horizons \(t=1,20,50,100,150,\) and \(180\). Table~\ref{tab:bfs_gleam_hyperparameter_spectral_ablation} reports the same autoregressive horizons using the exact normalized-Laplacian spectral-energy relative \(L_1\) diagnostic used in the BFS trained-model comparison, computed over \(24\) exact graph-frequency bands and both velocity channels.
	
	\begin{table}[!htbp]
		\centering
		\caption{BFS GLEAM hyperparameter ablation using trained models. The rollout columns report autoregressive velocity RMSE.}
		\label{tab:bfs_gleam_hyperparameter_ablation}
		\scriptsize
		\setlength{\tabcolsep}{2.2pt}
		\resizebox{\linewidth}{!}{%
			\begin{tabular}{llcccccc}
				\toprule
				Aspect tested & Setting & \(t=1\) & \(t=20\) & \(t=50\) & \(t=100\) & \(t=150\) & \(t=180\) \\
				\midrule
				Reference & Base GLEAM: \(M=8,r=32,P=32,\rho=0.2\), scalar corr. & 0.168 & 2.245 & 5.133 & \(\mathbf{6.624}\) & \(\mathbf{8.057}\) & 8.362 \\
				\midrule
				Band resolution & \(M=4\) & 0.157 & 2.295 & 6.303 & 10.812 & 12.924 & 13.868 \\
				Band resolution & \(M=16\) & \(\mathbf{0.154}\) & \(\mathbf{2.114}\) & 6.097 & 8.754 & 9.064 & 10.441 \\
				\midrule
				Retained spectral rank & \(r=16\) & 0.176 & 2.133 & 5.762 & 10.152 & 12.039 & 13.285 \\
				Retained spectral rank & \(r=64\) & 0.174 & 2.193 & \(\mathbf{5.106}\) & 7.127 & 8.254 & \(\mathbf{8.354}\) \\
				\midrule
				Fiedler-pair sampling & \(P=16\) & 0.159 & 2.274 & 5.977 & 11.013 & 12.836 & 13.081 \\
				Fiedler-pair sampling & \(P=64\) & 0.155 & 2.373 & 6.250 & 10.295 & 12.060 & 12.854 \\
				\midrule
				Pairwise contrast weight & \(\rho=0.1\) & 0.159 & 2.646 & 6.067 & 10.260 & 12.118 & 12.390 \\
				Pairwise contrast weight & \(\rho=0.5\) & 0.156 & 2.456 & 5.962 & 10.455 & 11.526 & 12.402 \\
				\midrule
				Corrector design & GNN corrector & 0.165 & 2.289 & 6.131 & 8.436 & 8.695 & 9.077 \\
				\bottomrule
			\end{tabular}
		}
	\end{table}
	
	\begin{table}[!htbp]
		\centering
		\caption{BFS GLEAM hyperparameter ablation measured by exact normalized-Laplacian spectral-energy relative \(L_1\). Values are computed from autoregressive rollouts using trained models, the full BFS normalized-Laplacian eigensystem, \(24\) graph-frequency bands, and both velocity channels.}
		\label{tab:bfs_gleam_hyperparameter_spectral_ablation}
		\scriptsize
		\setlength{\tabcolsep}{3.0pt}
		\resizebox{\linewidth}{!}{%
			\begin{tabular}{llcccccc}
				\toprule
				Aspect tested & Setting & \(t=1\) & \(t=20\) & \(t=50\) & \(t=100\) & \(t=150\) & \(t=180\) \\
				\midrule
				Reference & Base GLEAM: \(M=8,r=32,P=32,\rho=0.2\), scalar corr. & 0.0063 & 0.0511 & 0.1258 & 0.1829 & 0.1805 & 0.1701 \\
				\midrule
				Band resolution & \(M=4\) & 0.0051 & 0.0756 & 0.1952 & 0.3756 & 0.4777 & 0.4686 \\
				Band resolution & \(M=16\) & \(\mathbf{0.0019}\) & \(\mathbf{0.0278}\) & 0.1332 & \(\mathbf{0.1066}\) & \(\mathbf{0.0353}\) & \(\mathbf{0.0537}\) \\
				\midrule
				Retained spectral rank & \(r=16\) & 0.0045 & 0.0495 & \(\mathbf{0.1234}\) & 0.2383 & 0.2796 & 0.3168 \\
				Retained spectral rank & \(r=64\) & 0.0082 & 0.0637 & 0.1276 & 0.2171 & 0.2529 & 0.2714 \\
				\midrule
				Fiedler-pair sampling & \(P=16\) & 0.0031 & 0.0496 & 0.1403 & 0.3469 & 0.4653 & 0.4622 \\
				Fiedler-pair sampling & \(P=64\) & 0.0020 & 0.0389 & 0.1471 & 0.2910 & 0.3608 & 0.3463 \\
				\midrule
				Pairwise contrast weight & \(\rho=0.1\) & 0.0024 & 0.0484 & 0.1434 & 0.2745 & 0.3237 & 0.4054 \\
				Pairwise contrast weight & \(\rho=0.5\) & 0.0031 & 0.0431 & 0.1390 & 0.2857 & 0.3218 & 0.4138 \\
				\midrule
				Corrector design & GNN corrector & 0.0024 & 0.0372 & 0.1388 & 0.1438 & 0.1004 & 0.0570 \\
				\bottomrule
			\end{tabular}
		}
	\end{table}
	The RMSE ablation shows that the base GLEAM configuration is a strong and balanced reference rather than a pointwise optimum at every horizon. The base run gives the lowest RMSE at \(t=100\) and \(t=150\), while the \(r=64\) variant is best at \(t=50\) and is marginally best at \(t=180\). The \(M=16\) run improves RMSE relative to \(M=4\), but it does not dominate the mid- and late-rollout columns, indicating that increasing the number of retained GLEAM bands alone is not the determining factor for long-horizon physical-space accuracy. Increasing the retained rank from \(r=16\) to \(r=64\) gives the clearest late-rollout improvement among the one-parameter ablations, consistent with the need for a richer retained spectral subspace in BFS transport. Varying the number of Fiedler-guided contrast pairs produces smaller changes. The contrast-weight ablation is not monotone: \(\rho=0.5\) improves over \(\rho=0.1\) at \(t=150\), but the two settings are comparable at \(t=180\), and the base \(\rho=0.2\) setting remains more reliable in the middle of the rollout. The GNN corrector reduces late-rollout RMSE relative to several scalar-corrector ablations, but it does not outperform the base or \(r=64\) settings at the latest horizons.
	
	The exact-Laplacian spectral-energy diagnostic gives a complementary view of the same ablation family. Unlike the RMSE table, the spectral table strongly favors the \(M=16\) band-resolution run, which gives the lowest relative \(L_1\) error at every selected horizon except \(t=50\), where the \(r=16\) run is marginally lower. This suggests that increasing the number of GLEAM bands improves agreement with the full-eigensystem energy distribution more directly than it improves pointwise velocity RMSE. The GNN corrector is also spectrally effective at late horizons and is close to \(M=16\) at \(t=180\), although it remains noticeably higher than \(M=16\) at \(t=150\). In contrast, the \(r=64\) variant is most competitive for late RMSE while remaining noticeably worse in exact spectral relative \(L_1\), showing that improved physical-space rollout accuracy and improved all-band spectral-energy matching are related but distinct evaluation criteria. Overall, the balanced base configuration and the \(r=64\) retained-rank variant are the strongest RMSE choices in this ablation family, while \(M=16\) gives the clearest reduction in exact graph-spectral drift.
	
	\subsubsection{Additional BFS Rollout Spectral Snapshots}
	\label{app:bfs-uy-rollout-snapshots}
	Figure~\ref{fig:bfs_uy_rollout_appendix} gives the full \(U_y\) band-energy RMSE trajectory over the BFS rollout together with the spatial \(U_y\) fields and absolute-error maps corresponding to the graph-band mismatch diagnostics in Section~\ref{sec:spectral-diagnostics}. These snapshots are kept in the appendix because they support the interpretation of the spectral rollout diagnostics but are not needed for the main quantitative argument.
	
	\begin{figure}[!htbp]
		\centering
		\begin{subfigure}[c]{0.34\linewidth}
			\centering
			\includegraphics[width=\linewidth]{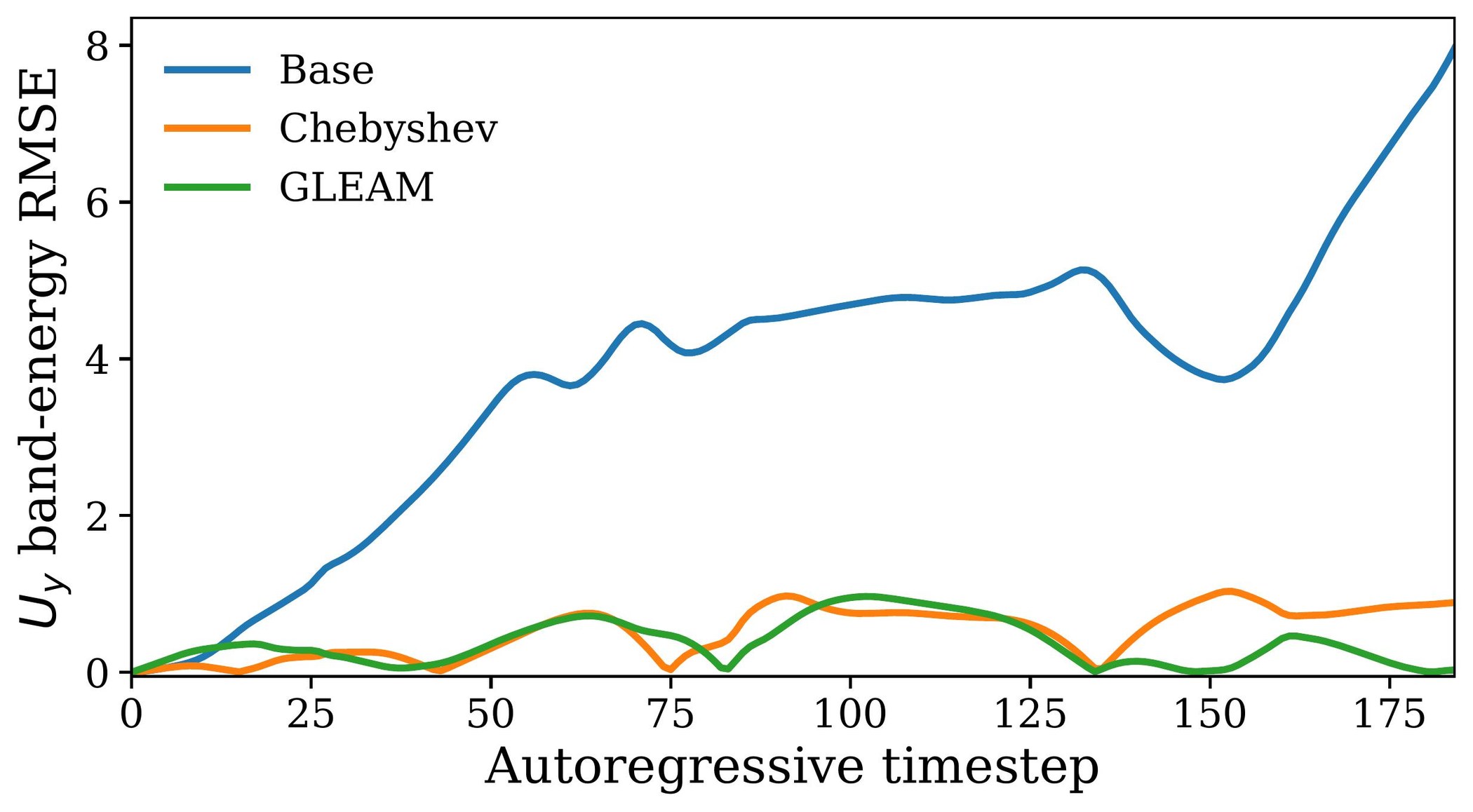}
			\caption{Band-energy RMSE.}
			\label{fig:bfs_uy_band_energy_rmse_rollout}
		\end{subfigure}\hfill
		\begin{subfigure}[c]{0.64\linewidth}
			\centering
			\setlength{\tabcolsep}{1pt}
			\renewcommand{\arraystretch}{0.70}
			\tiny
			\begin{tabular}{@{}CCCC@{}}
				\bfsqualheader
				\bfsqualblock{t=30}{figures/t30}{0.08em}
				\bfsqualblock{t=82}{figures/t82}{0pt}
			\end{tabular}
			\vspace{0.1em}
			
			\includegraphics[width=0.86\linewidth]{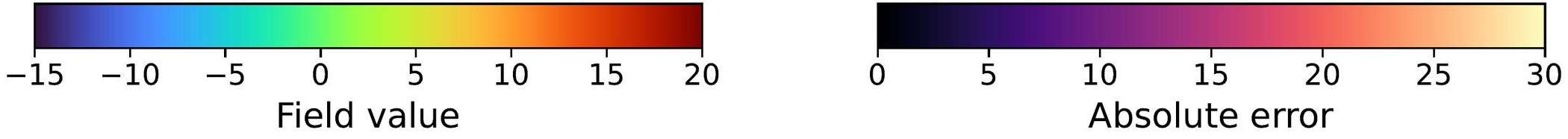}
			\caption{Field and error snapshots.}
			\label{fig:bfs_uy_qualitative_rollout_app}
		\end{subfigure}
		\caption{Additional \(U_y\) rollout diagnostics for the backward-facing step. Panel (a) reports band-energy RMSE across \(24\) graph-Laplacian bands, and panel (b) shows representative \(U_y\) fields and absolute-error maps at \(t=30\) and \(t=82\).}
		\label{fig:bfs_uy_rollout_appendix}
	\end{figure}
	
	\subsection{\texttt{pOnWing} Benchmark}
	\label{app:ponwing-benchmark}
	For \texttt{pOnWing}, the training protocol uses the same pressure-prediction target, temporal prefix, and graph hierarchy across the Base, Chebyshev, and GLEAM variants. The benchmark comes from the WING task in DGN4CFD, which predicts pressure on the surface of a three-dimensional wing in turbulent flow \citep{lino2025dgn}. The wing sections are NACA \(24XX\) airfoils, and the geometry varies across simulations through the relative thickness, taper ratio, sweep angle, and twist angle. The DGN4CFD simulations were generated with the PISO solver in OpenFOAM using a Spalart--Allmaras delayed-detached-eddy simulation model; the meshes were produced with \texttt{snappyHexMesh} \citep{lino2025dgn}.
	
	DGN4CFD reports the WING case as \(Re\sim 2\times 10^6\). This value is the chord-based Reynolds number computed from the free-stream velocity, reference chord, and kinematic viscosity,
	\[
	Re_c=\frac{U_\infty c_{\mathrm{root}}}{\nu}.
	\]
	Using the reported \(U_\infty=100\,\mathrm{km/h}=27.78\,\mathrm{m/s}\), \(c_{\mathrm{root}}=1\,\mathrm{m}\), and \(\nu=1.5\times10^{-5}\,\mathrm{m^2/s}\), we obtain
	\[
	Re_c=\frac{27.78\times 1}{1.5\times10^{-5}}
	\approx 1.85\times10^6,
	\]
	which explains the rounded value \(Re\sim 2\times10^6\). This Reynolds number is therefore a fixed simulation-condition scale for the WING benchmark rather than a varying input parameter in our \texttt{pOnWing} runs. In the public graph representation, each node stores its surface position and unit outer normal, edges encode relative positions and free-stream projections along local edge-aligned axes, and the prediction target is the nodal pressure \(p_i(t)\).
	
	The public DGN4CFD setup provides \(251\)-step training simulations and evaluates generalization on longer \(2501\)-step trajectories. During training, a prefix of length \(T=250\) is used so that the model sees the short-trajectory regime but is still evaluated under long autoregressive rollout. The graph hierarchy contains \(6\) levels with relative-position scalings \([0.015,0.03,0.06,0.12,0.2,0.4]\). Because GLEAM applies auxiliary supervision on this same multiscale structure, its spectral penalty is aligned with the hierarchy used by the propagator rather than with an unrelated post-processing graph. The \texttt{pOnWing} training data are taken from \texttt{pOnWingTrain.h5}, while autoregressive evaluation uses \texttt{pOnWingInDist.h5}.
	
	Two \texttt{pOnWing} backbone families are used in the final diagnostics: Graph VAE (VGAE) and FMGN. The VGAE variants are trained for \(600\) epochs and the FMGN variants for \(1000\) epochs with the Adam optimizer, teacher forcing, and input-noise augmentation with standard deviation \(0.02\), following prior noise-augmented graph-simulator training practice \citep{pfaff2020meshgraphnets}. Both backbone families use batch size \(6\), learning rate \(10^{-4}\), one Aurora node, and a common rollout horizon \(h=6\) for the Base, hierarchical Chebyshev BSP, and hierarchical GLEAM variants. The VGAE Chebyshev run uses \(\lambda_{\mathrm{Cheb}}=10^{-2}\), \(M=32\), \(K=16\), \(Q=256\), no logarithmic transform, and graph-mean centering; the corresponding GLEAM run uses \(\lambda_{\mathrm{GLEAM}}=10^{-3}\), the log form, \(r=16\), \(\rho=0.2\), \(32\) Fiedler pairs, \(8\) GLEAM bands, and scalar correction. The FMGN Chebyshev run uses the same hierarchical Chebyshev parameters with \(\lambda_{\mathrm{Cheb}}=10^{-4}\), while the FMGN GLEAM run uses \(\lambda_{\mathrm{GLEAM}}=10^{-4}\), the log form, \(r=64\), \(\rho=0.2\), \(\tau=10^{-6}\), \(64\) Fiedler pairs, \(32\) GLEAM bands, and scalar correction.
	
	For both backbones, Chebyshev BSP is evaluated hierarchically on the graph levels used by the backbone. GLEAM also uses the model hierarchy, but compares low-rank banded spectral energies and transfers coarse embeddings through distance-weighted prolongation followed by two unweighted neighbor-mean smoothing refinement steps with \(\beta=0.15\). The final GLEAM configuration uses the scalar corrector, and the finest exact eigensolve is disabled. The no-corrector Graph VAE curve in Fig.~\ref{fig:ponwing_vgae_appendix_diagnostics}(a) uses the same GLEAM configuration described above, but disables the scalar correction as an ablation.
	
	The \texttt{pOnWing} force diagnostic is computed after unscaling pressure back to physical units. For a surface node \(i\), let \(p_i(t)\) and \(\widehat{p}_i(t)\) denote the true and predicted physical pressures. Nodal areas are obtained by distributing each surface-cell area equally to its vertices. For a polygonal cell \(C\) with centroid \(\bm{c}_C\), vertices \(\bm{x}_j\), and cyclic indexing,
	\[
	A_C=\sum_{j\in C}\frac{1}{2}\left\|(\bm{x}_j-\bm{c}_C)\times(\bm{x}_{j+1}-\bm{c}_C)\right\|_2,
	\qquad
	A_i \leftarrow A_i+\frac{A_C}{|C|}.
	\]
	The node normal is approximated from the stored graph location vector,
	\[
	\bm{n}_i=\frac{\bm{\ell}_i}{\|\bm{\ell}_i\|_2},
	\]
	where \(\bm{\ell}_i\) is the node entry in \texttt{graph.loc}. The true and predicted pressure-force vectors are then
	\[
	\bm{F}_p(t)=-\sum_i p_i(t)\bm{n}_i A_i,
	\qquad
	\widehat{\bm{F}}_p(t)=-\sum_i \widehat{p}_i(t)\bm{n}_i A_i .
	\]
	The main reported row in the CSV uses
	\[
	e_{\mathrm{rel}}(t)=
	\frac{\|\widehat{\bm{F}}_p(t)-\bm{F}_p(t)\|_2}{\max(\|\bm{F}_p(t)\|_2,10^{-12})},
	\]
	and the area-normalized variant uses
	\[
	e_{\mathrm{area}}(t)=
	\frac{\|\widehat{\bm{F}}_p(t)-\bm{F}_p(t)\|_2}{\max(\sum_i |p_i(t)|A_i,10^{-12})}.
	\]
	For each rollout step, the tabulated statistics are the mean, minimum, maximum, standard deviation, and number of simulations over the held-out test set.
	
	\begin{figure}[!htbp]
		\centering
		\begin{subfigure}[c]{0.40\linewidth}
			\centering
			\includegraphics[width=\linewidth]{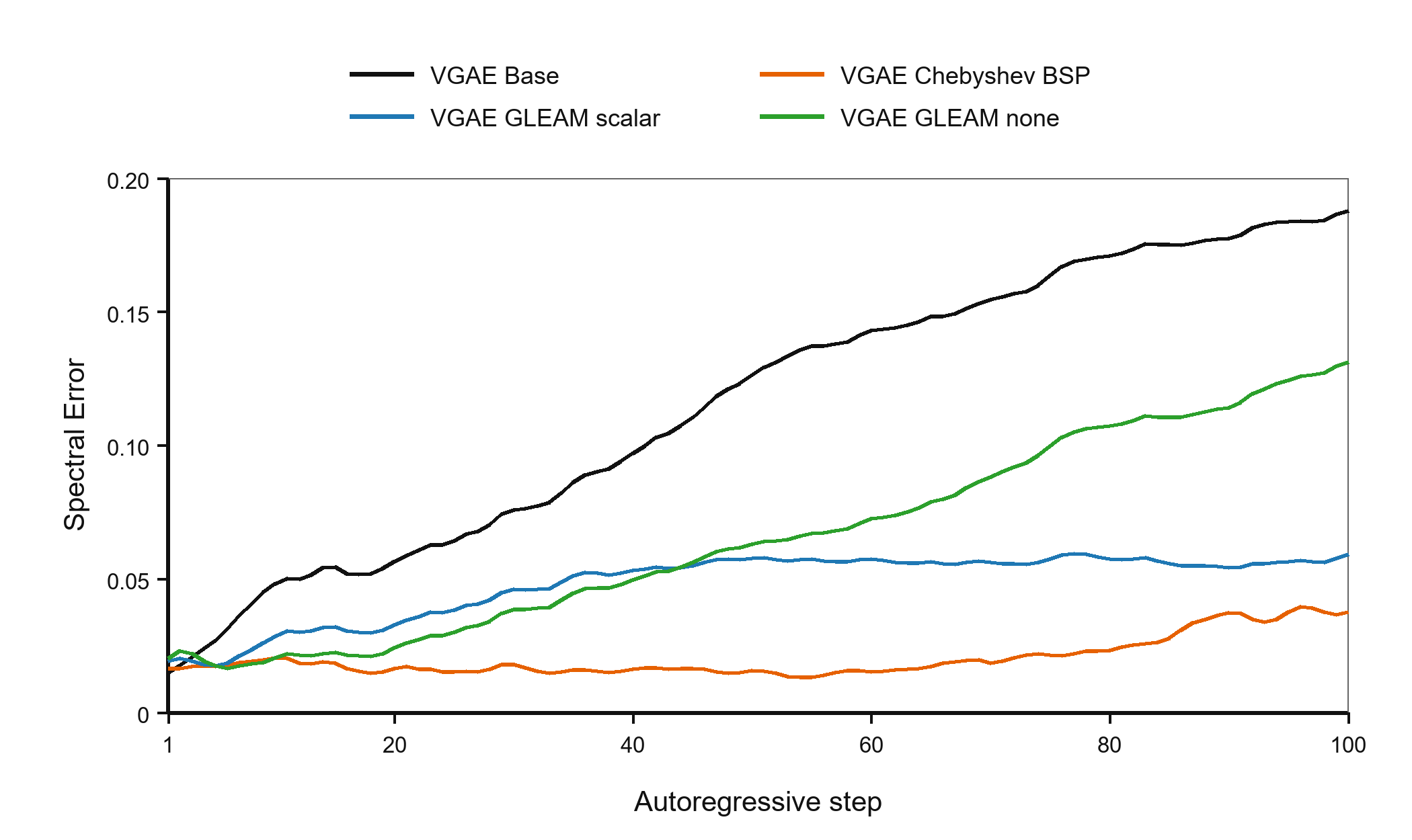}
			\caption{Spectral-error diagnostic.}
			\label{fig:ponwing_exact_laplacian_spectral_error}
		\end{subfigure}\hfill
		\begin{subfigure}[c]{0.58\linewidth}
			\centering
			\setlength{\tabcolsep}{1pt}
			\renewcommand{\arraystretch}{0.68}
			\tiny
			\begin{tabular}{@{}CCCC@{}}
				\bfsqualheader
				\bfsqualblock{\mathrm{sim}\ 5,\ t=50}{figures/sim05_t50}{0.08em}
				\bfsqualblock{\mathrm{sim}\ 11,\ t=50}{figures/sim11_t50}{0pt}
			\end{tabular}
			\vspace{0.1em}
			\includegraphics[width=0.88\linewidth]{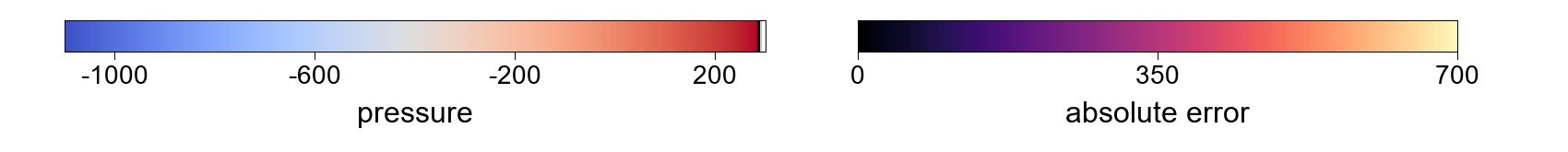}
			\caption{Full-field pressure comparison.}
			\label{fig:ponwing_vgae_full_field_t50}
		\end{subfigure}
		\caption{\texttt{pOnWing} VGAE diagnostics. Panel (a) reports the exact normalized-Laplacian spectral-error diagnostic over a representative \(100\)-step autoregressive rollout, including the GLEAM scalar and no-corrector variants. Panel (b) compares full-field pressure predictions at \(t=50\) for simulations \(5\) and \(11\), with shared field and absolute-error colorbars.}
		\label{fig:ponwing_vgae_appendix_diagnostics}
	\end{figure}
	
	The diagnostic in Fig.~\ref{fig:ponwing_vgae_appendix_diagnostics}(a) is computed with the full normalized-Laplacian eigensystem only for evaluation, not as the training loss. It therefore acts as an independent check on whether the cheaper spectral objectives reduce true graph-band drift. In this representative rollout, the Base model accumulates the largest spectral error, the scalar-corrected GLEAM variant remains lower, and Chebyshev BSP gives the smallest spectral drift over most of the horizon. The late-horizon gap between the scalar-corrected and no-corrector GLEAM curves suggests that the scalar correction helps stabilize the low-rank hierarchy after coarse-to-fine transfer.
	
	\begin{figure}[!htbp]
		\centering
		\setlength{\tabcolsep}{1pt}
		\renewcommand{\arraystretch}{0.72}
		\scriptsize
		\begin{tabular}{@{}>{\raggedleft\arraybackslash}p{0.035\linewidth}cccc@{}}
			& \textbf{\(t=1\)} & \textbf{\(t=5\)} & \textbf{\(t=10\)} & \textbf{\(t=50\)}\\[-0.1em]
			\textbf{(a)} &
			\includegraphics[width=0.23\linewidth]{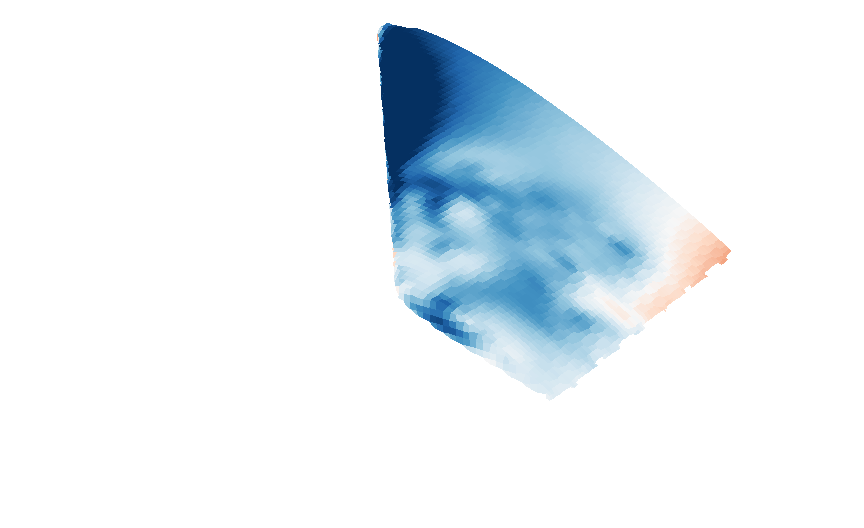} &
			\includegraphics[width=0.23\linewidth]{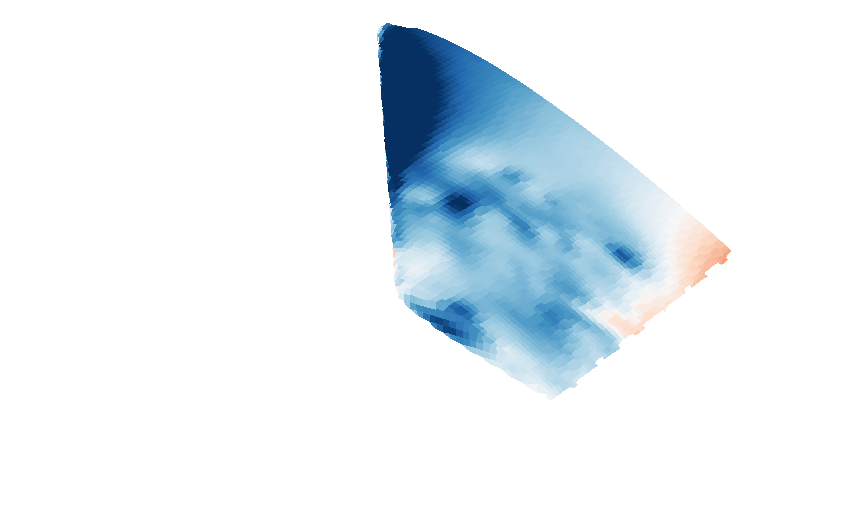} &
			\includegraphics[width=0.23\linewidth]{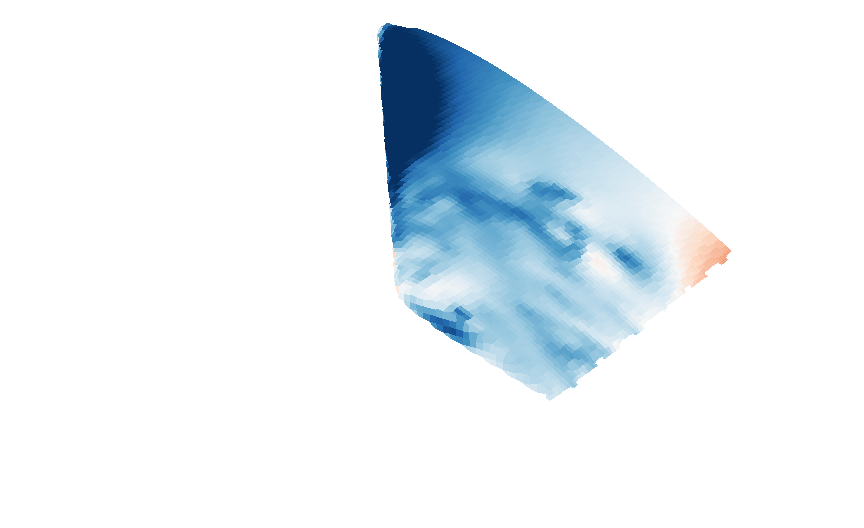} &
			\includegraphics[width=0.23\linewidth]{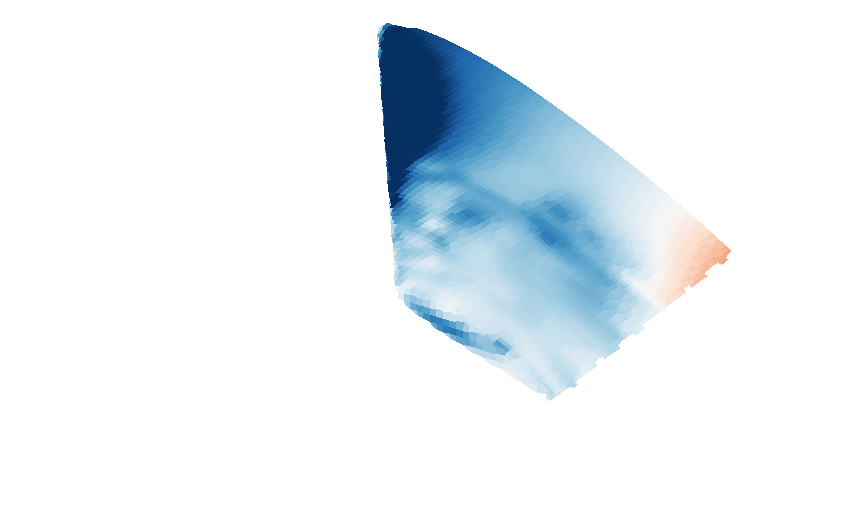}\\[-0.2em]
			\textbf{(b)} &
			\includegraphics[width=0.23\linewidth]{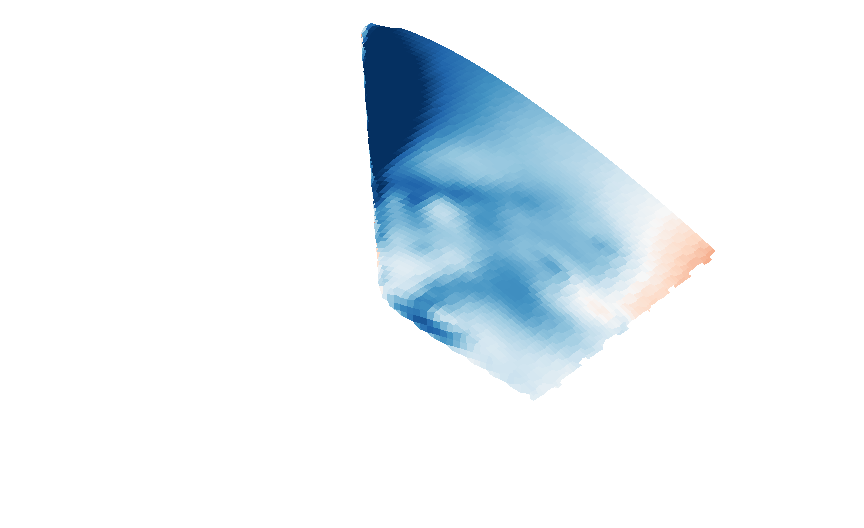} &
			\includegraphics[width=0.23\linewidth]{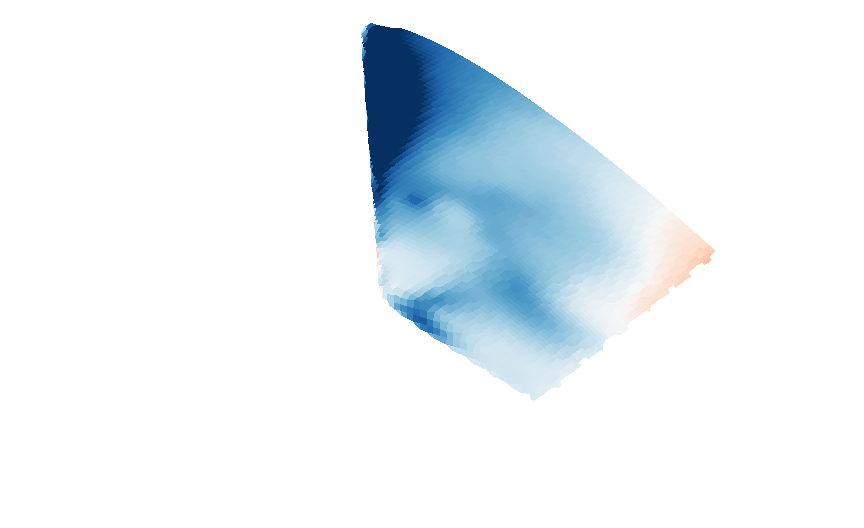} &
			\includegraphics[width=0.23\linewidth]{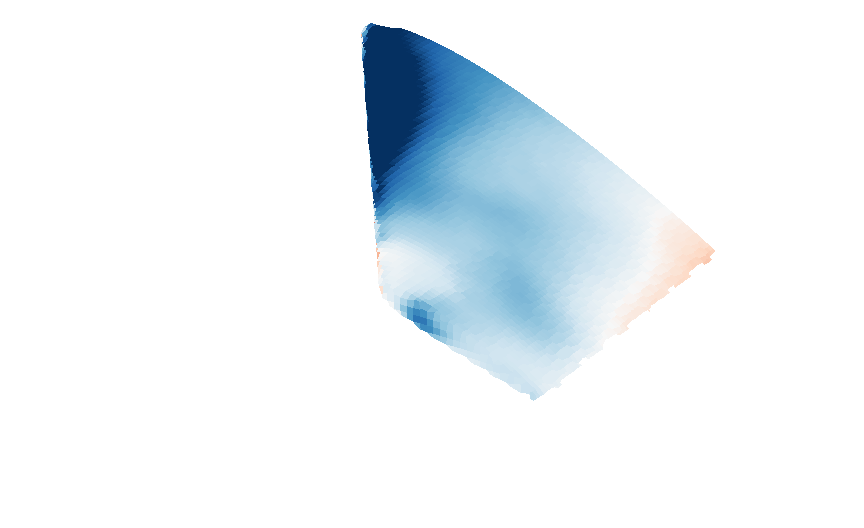} &
			\includegraphics[width=0.23\linewidth]{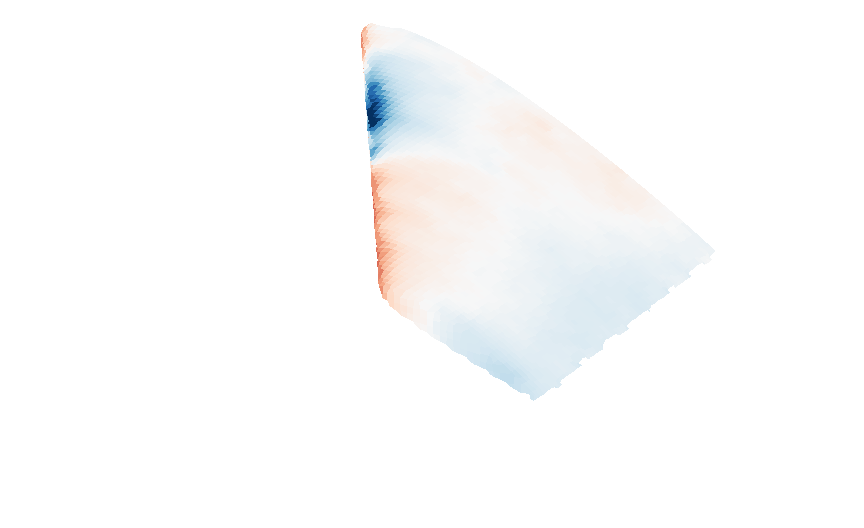}\\[-0.2em]
			\textbf{(c)} &
			\includegraphics[width=0.23\linewidth]{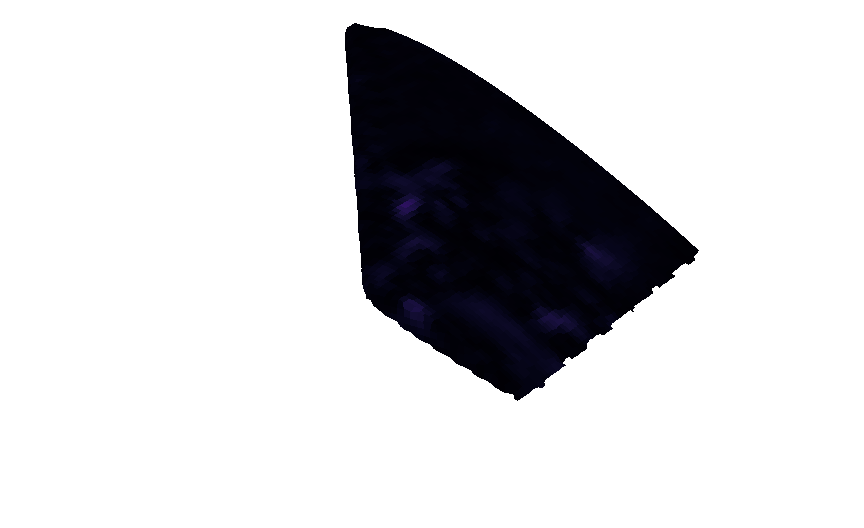} &
			\includegraphics[width=0.23\linewidth]{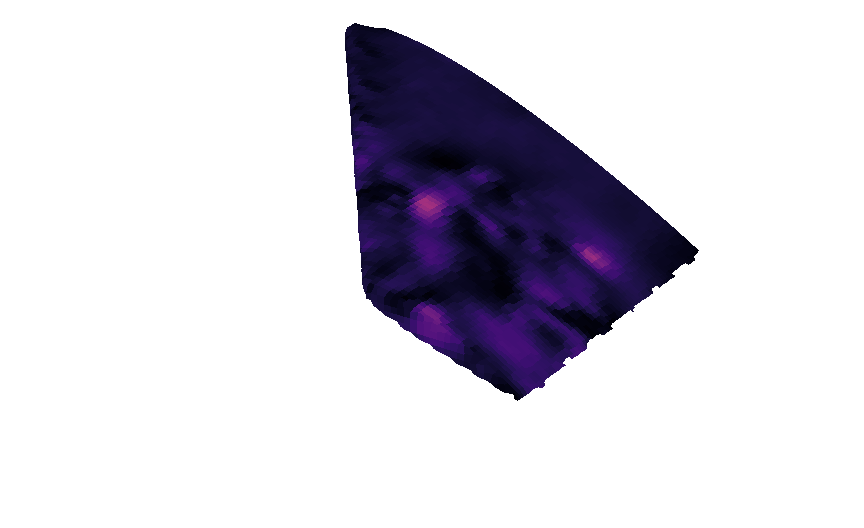} &
			\includegraphics[width=0.23\linewidth]{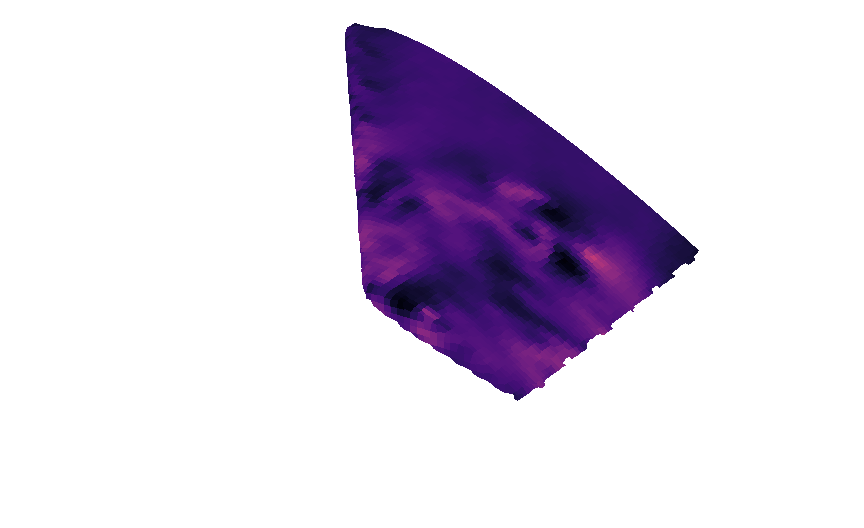} &
			\includegraphics[width=0.23\linewidth]{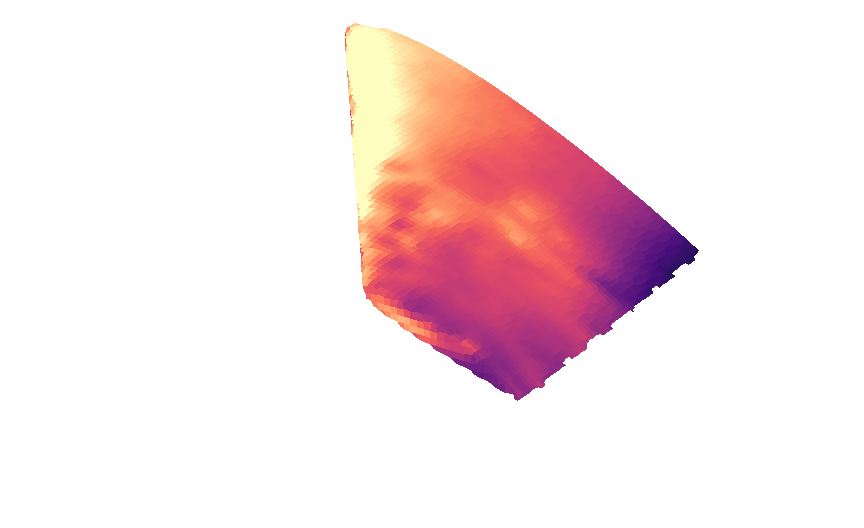}\\[-0.2em]
			\textbf{(d)} &
			\includegraphics[width=0.23\linewidth]{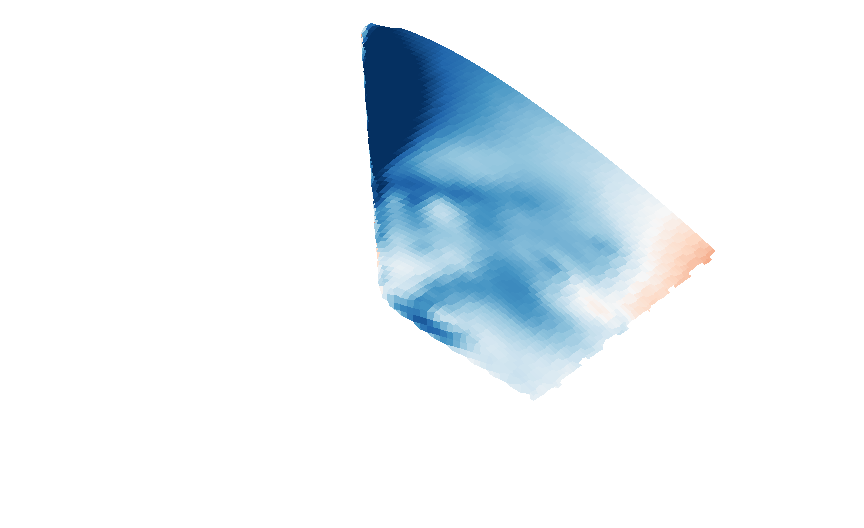} &
			\includegraphics[width=0.23\linewidth]{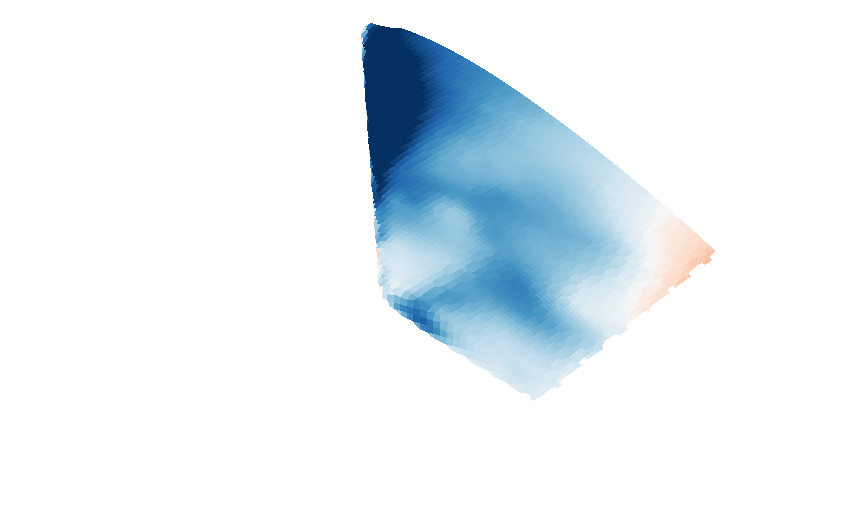} &
			\includegraphics[width=0.23\linewidth]{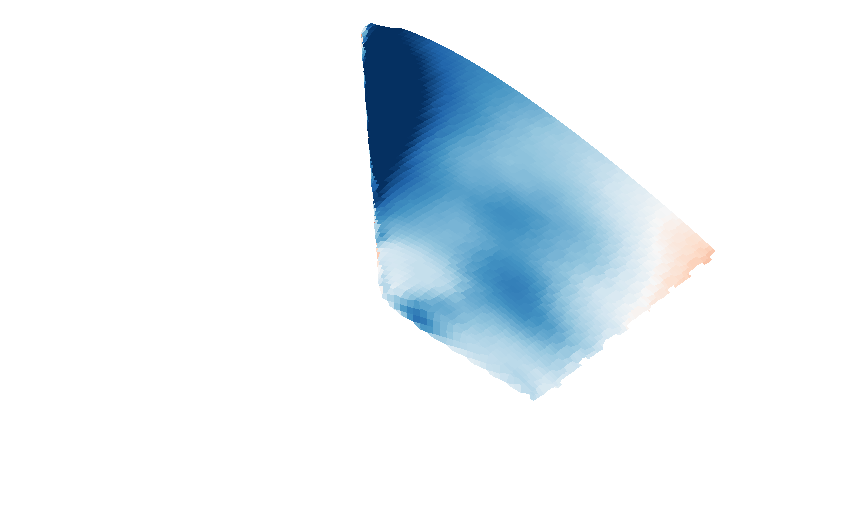} &
			\includegraphics[width=0.23\linewidth]{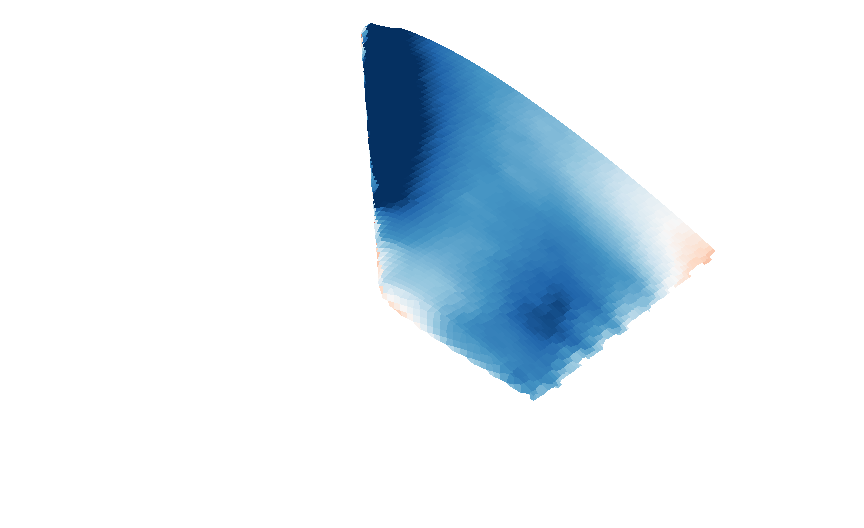}\\[-0.2em]
			\textbf{(e)} &
			\includegraphics[width=0.23\linewidth]{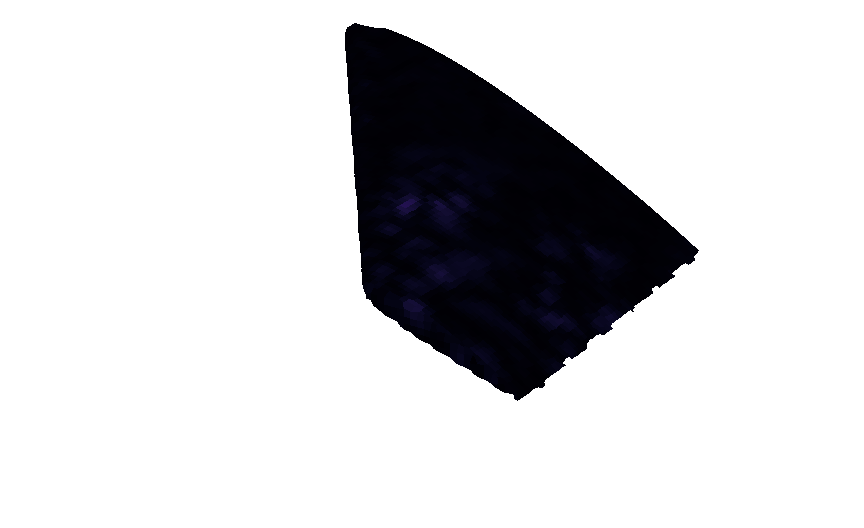} &
			\includegraphics[width=0.23\linewidth]{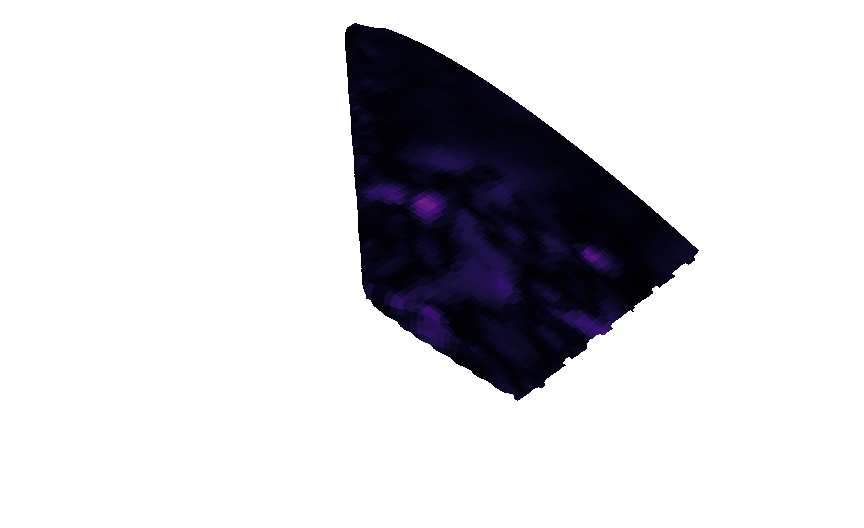} &
			\includegraphics[width=0.23\linewidth]{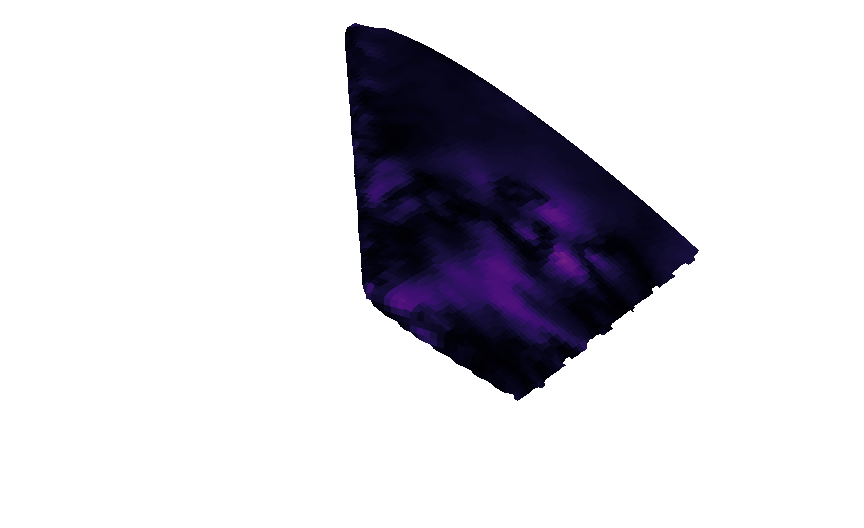} &
			\includegraphics[width=0.23\linewidth]{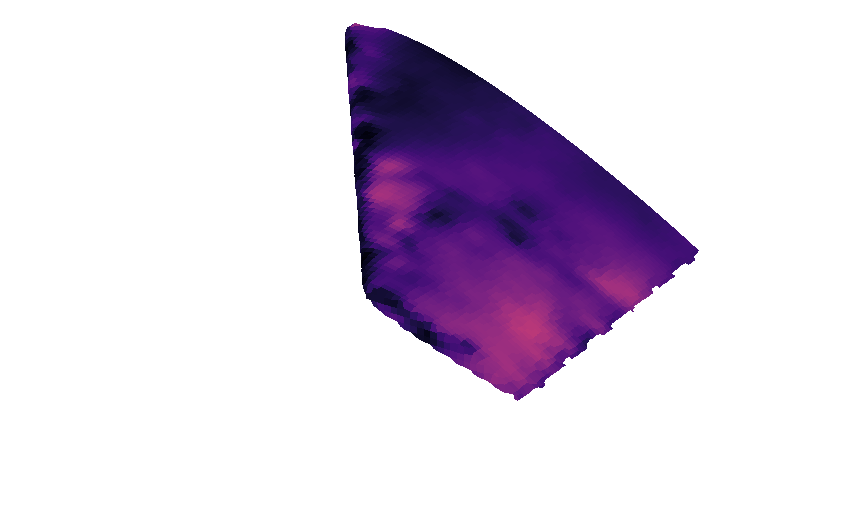}\\[-0.2em]
			\textbf{(f)} &
			\includegraphics[width=0.23\linewidth]{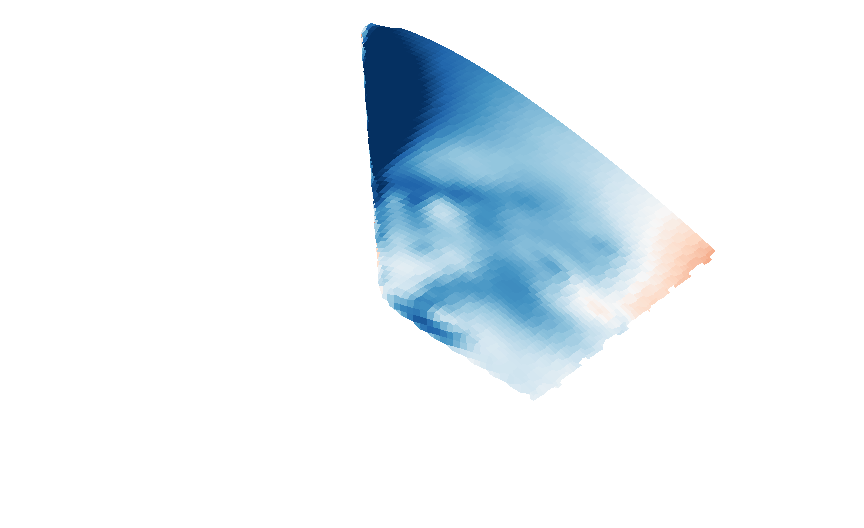} &
			\includegraphics[width=0.23\linewidth]{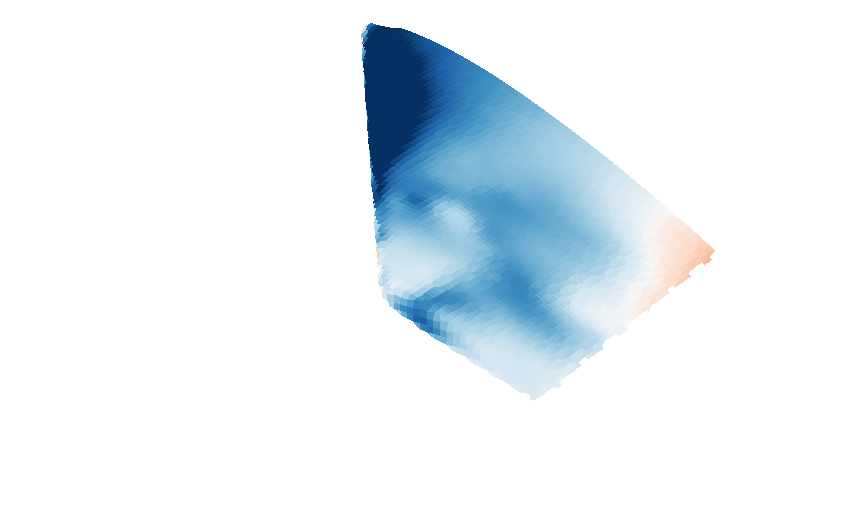} &
			\includegraphics[width=0.23\linewidth]{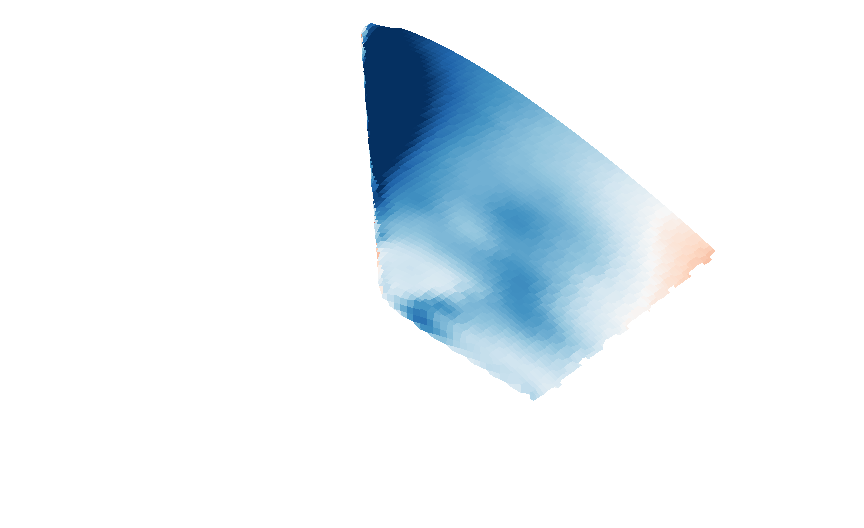} &
			\includegraphics[width=0.23\linewidth]{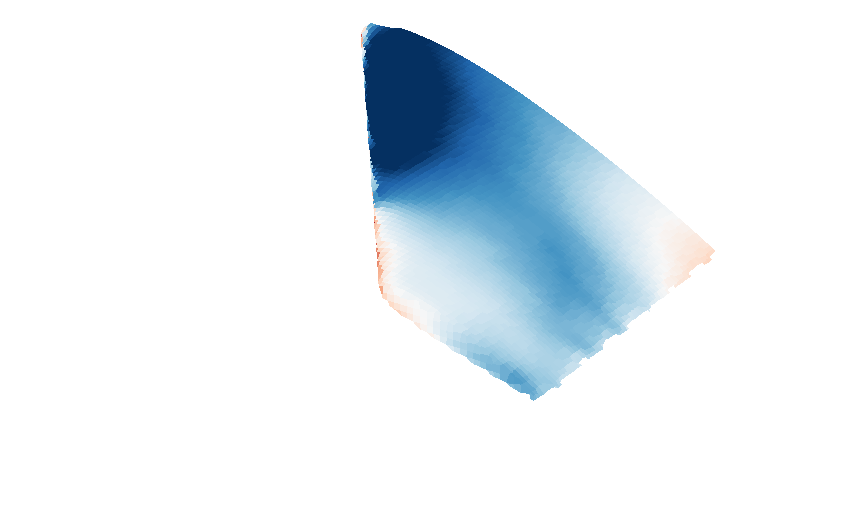}\\[-0.2em]
			\textbf{(g)} &
			\includegraphics[width=0.23\linewidth]{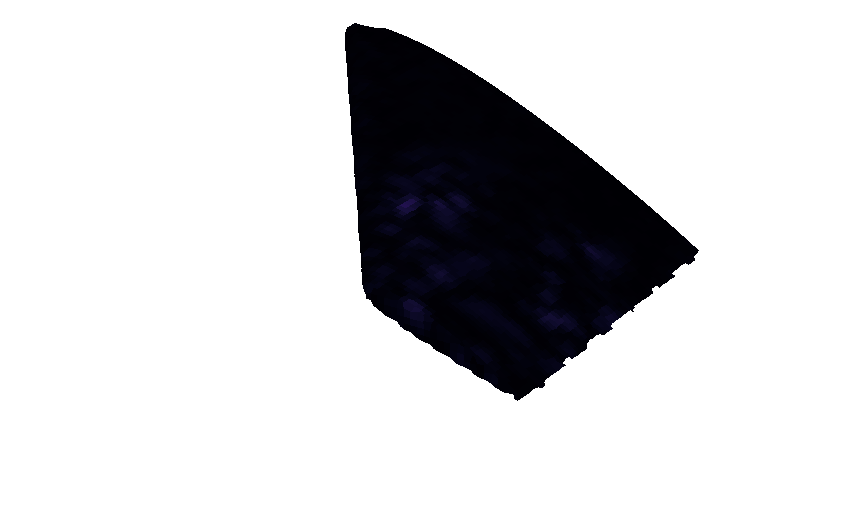} &
			\includegraphics[width=0.23\linewidth]{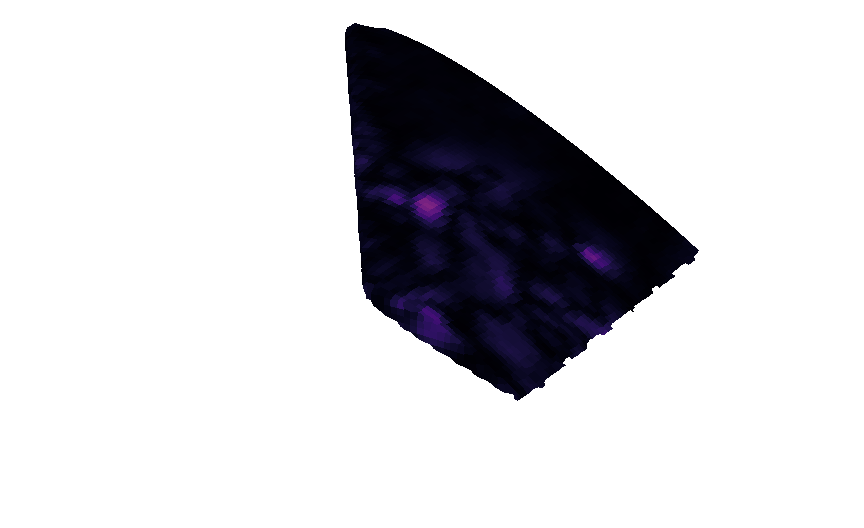} &
			\includegraphics[width=0.23\linewidth]{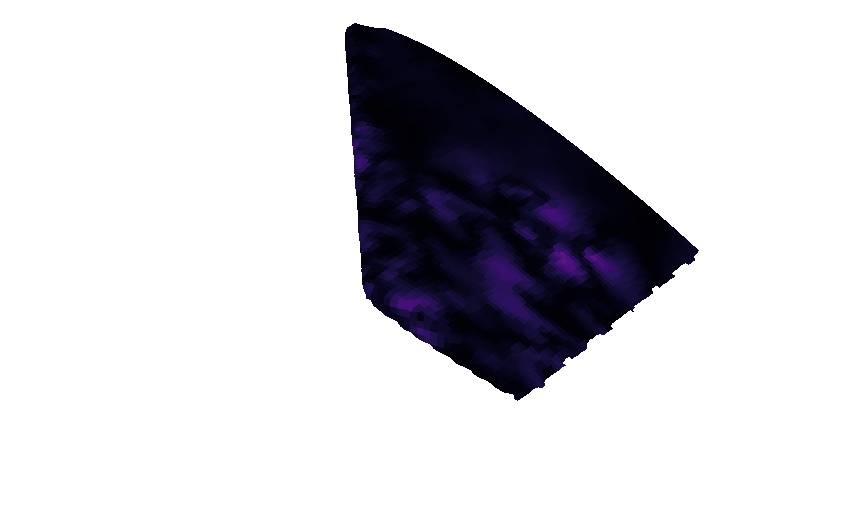} &
			\includegraphics[width=0.23\linewidth]{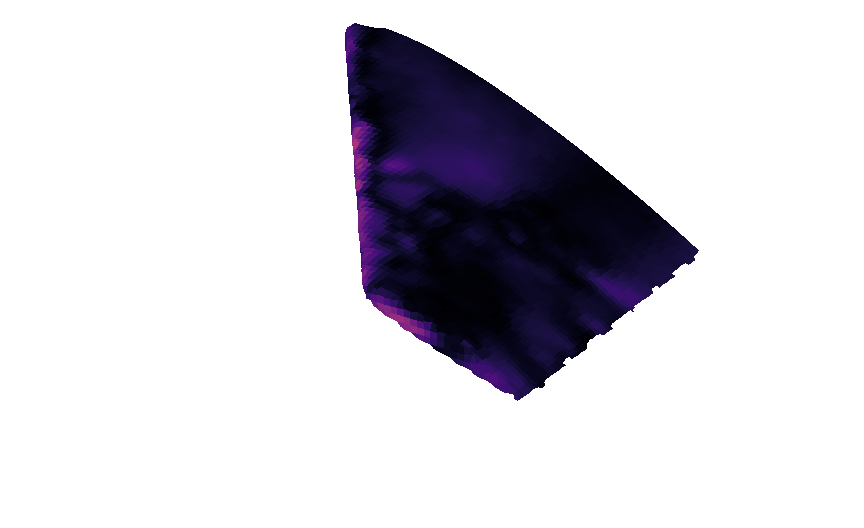}
		\end{tabular}
		\vspace{0.1em}
		\includegraphics[width=0.96\linewidth]{figures/ponwing_surface_centered_colorbars.png}
		\caption{\texttt{pOnWing} surface-centered pressure comparison for simulation \(11\). Columns are rollout steps \(t=1,5,10,50\); rows show (a) centered ground truth, (b) Base prediction, (c) Base absolute error, (d) Chebyshev BSP prediction, (e) Chebyshev BSP absolute error, (f) GLEAM prediction, and (g) GLEAM absolute error.}
		\label{fig:ponwing_surface_centered_sim11}
	\end{figure}
	
	\begin{figure}[!htbp]
		\centering
		\setlength{\tabcolsep}{1pt}
		\renewcommand{\arraystretch}{0.72}
		\scriptsize
		\begin{tabular}{@{}>{\raggedleft\arraybackslash}p{0.035\linewidth}cccc@{}}
			& \textbf{\(t=1\)} & \textbf{\(t=5\)} & \textbf{\(t=10\)} & \textbf{\(t=50\)}\\[-0.1em]
			\textbf{(a)} &
			\includegraphics[width=0.23\linewidth]{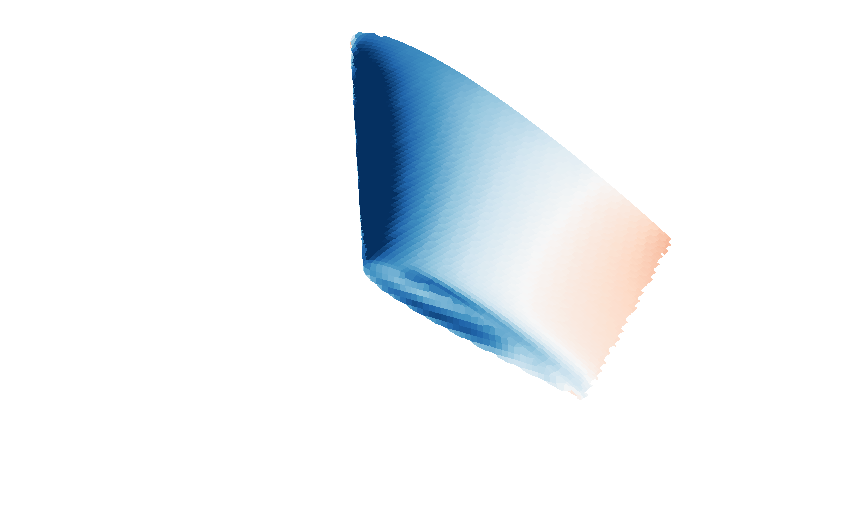} &
			\includegraphics[width=0.23\linewidth]{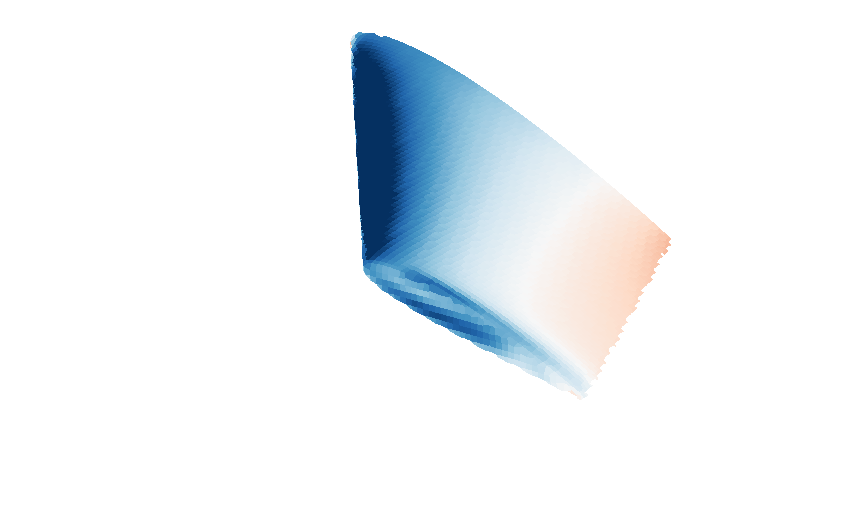} &
			\includegraphics[width=0.23\linewidth]{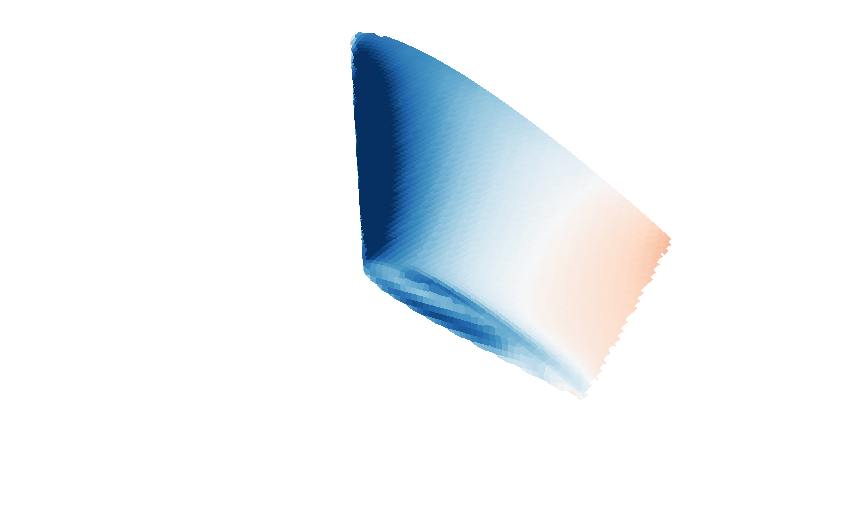} &
			\includegraphics[width=0.23\linewidth]{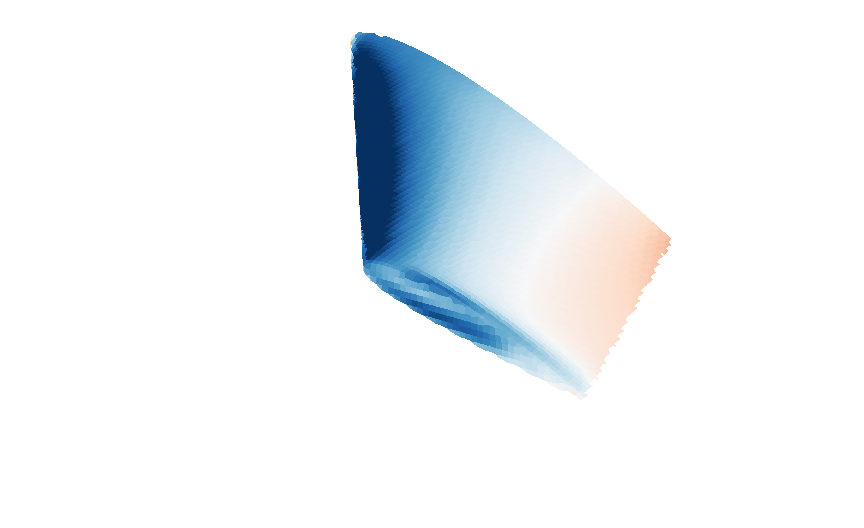}\\[-0.2em]
			\textbf{(b)} &
			\includegraphics[width=0.23\linewidth]{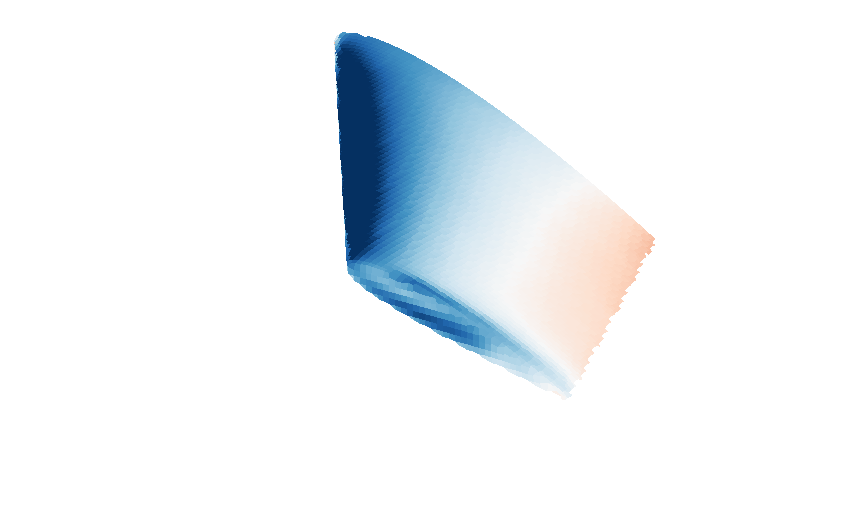} &
			\includegraphics[width=0.23\linewidth]{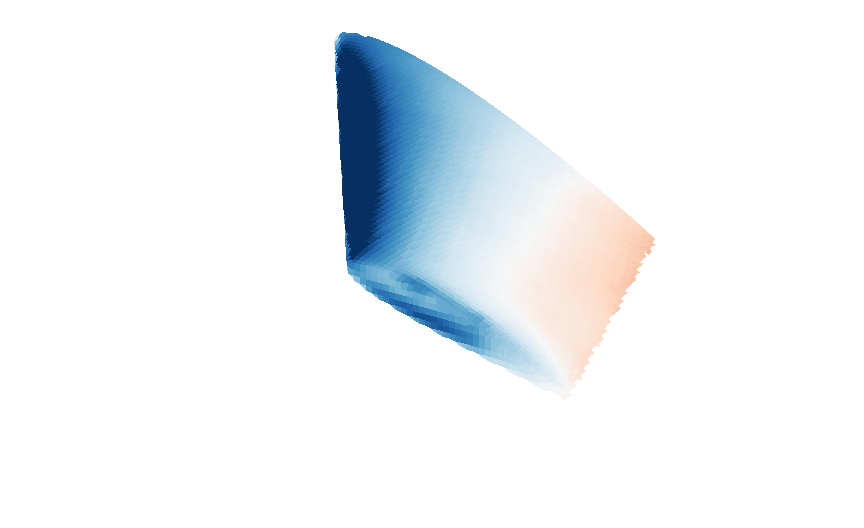} &
			\includegraphics[width=0.23\linewidth]{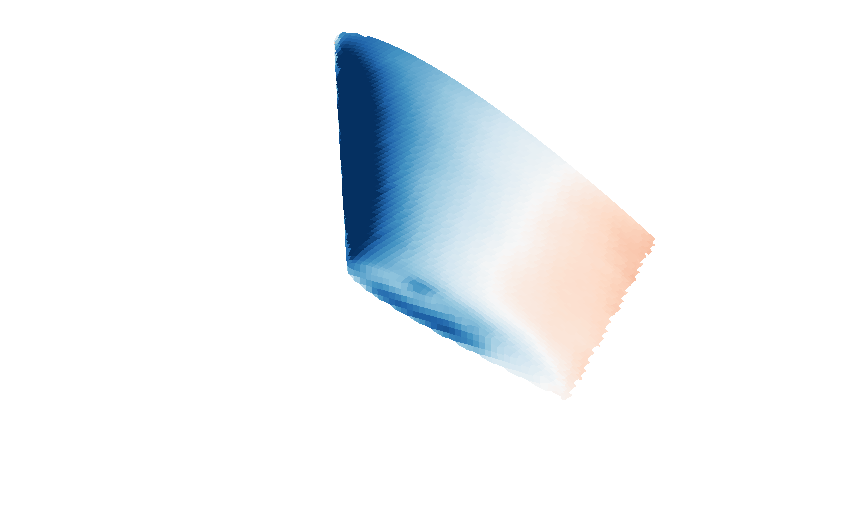} &
			\includegraphics[width=0.23\linewidth]{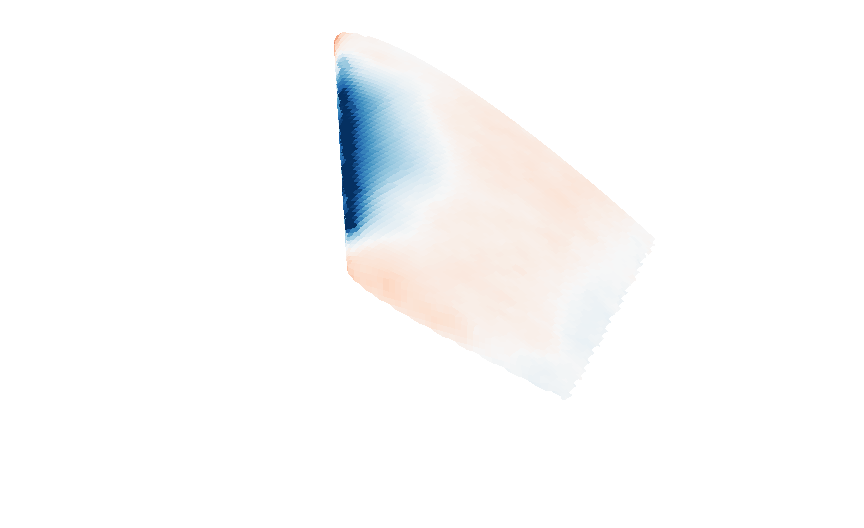}\\[-0.2em]
			\textbf{(c)} &
			\includegraphics[width=0.23\linewidth]{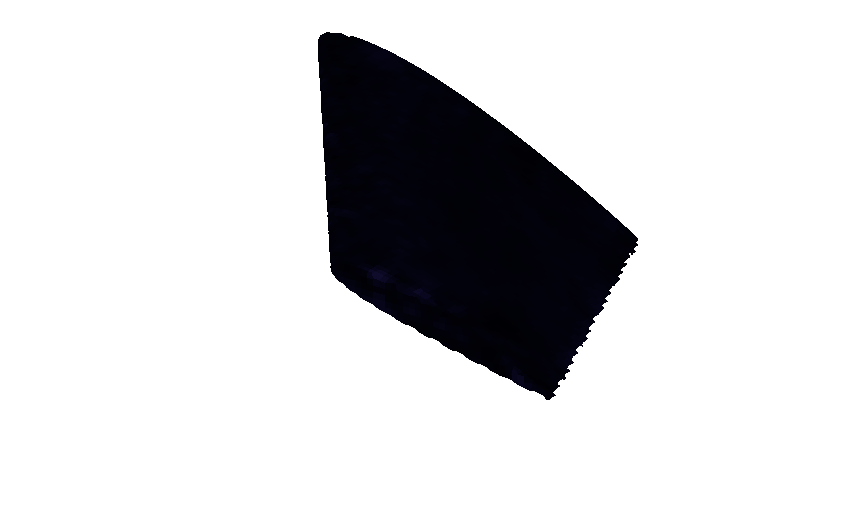} &
			\includegraphics[width=0.23\linewidth]{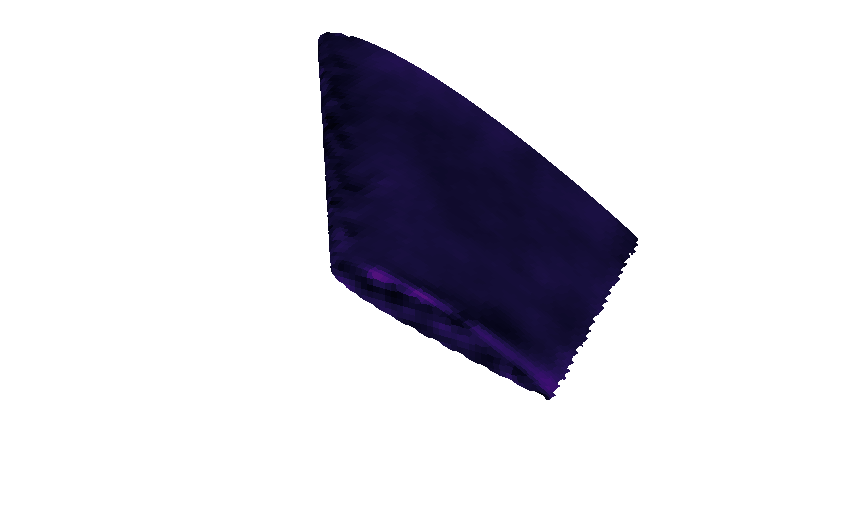} &
			\includegraphics[width=0.23\linewidth]{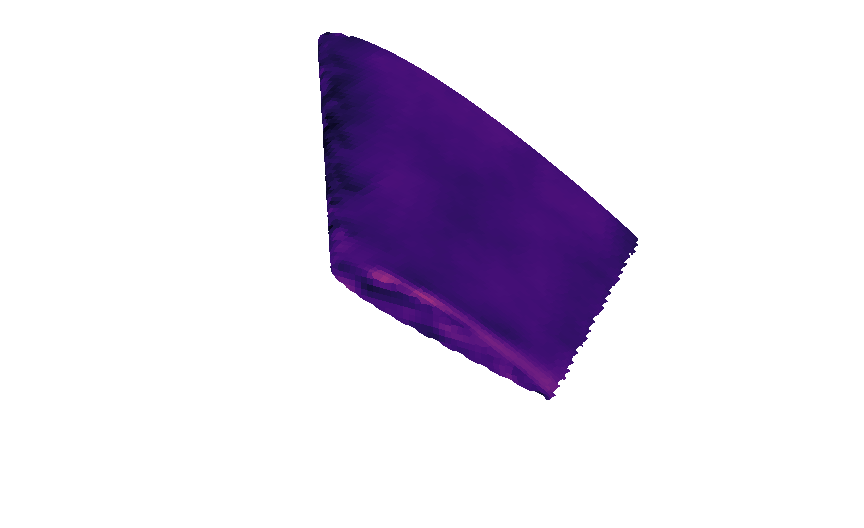} &
			\includegraphics[width=0.23\linewidth]{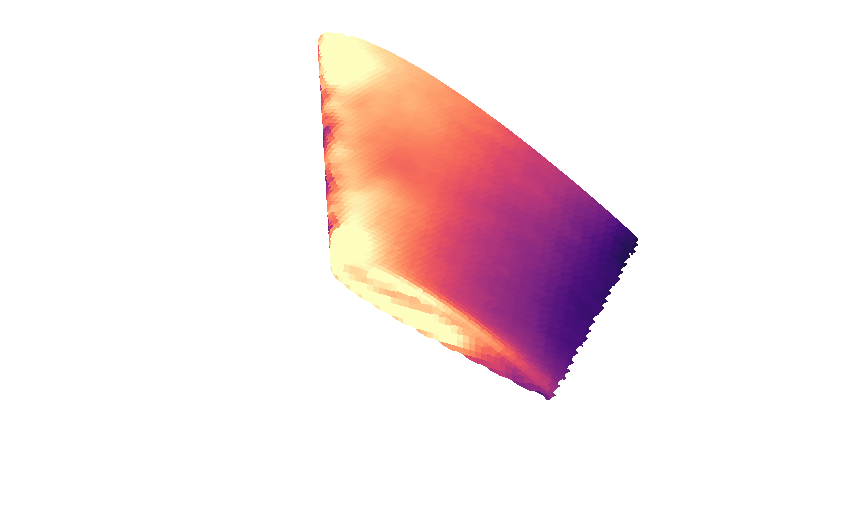}\\[-0.2em]
			\textbf{(d)} &
			\includegraphics[width=0.23\linewidth]{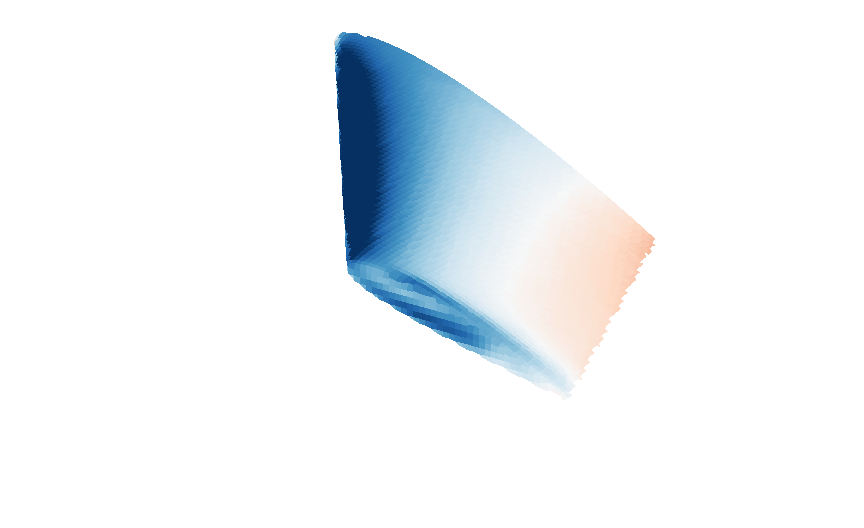} &
			\includegraphics[width=0.23\linewidth]{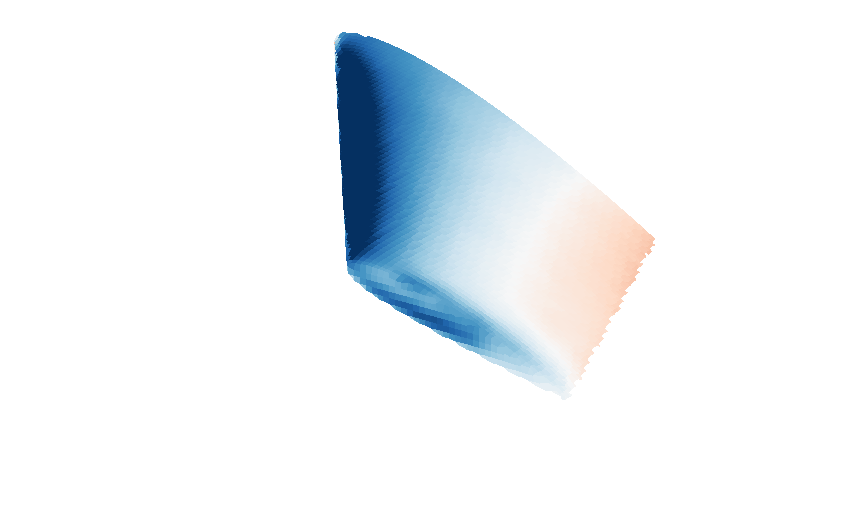} &
			\includegraphics[width=0.23\linewidth]{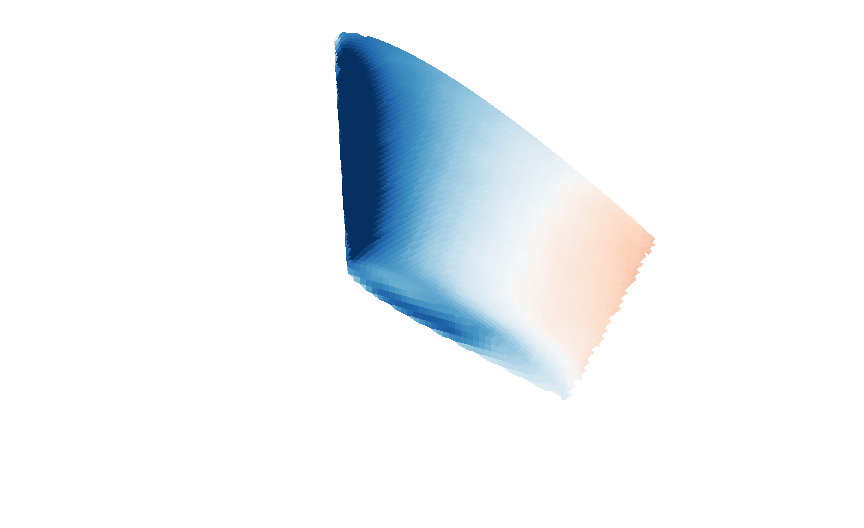} &
			\includegraphics[width=0.23\linewidth]{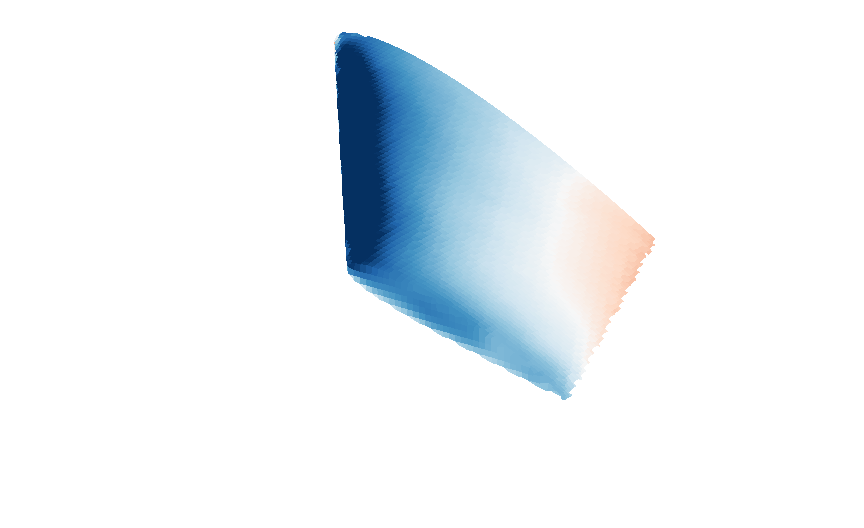}\\[-0.2em]
			\textbf{(e)} &
			\includegraphics[width=0.23\linewidth]{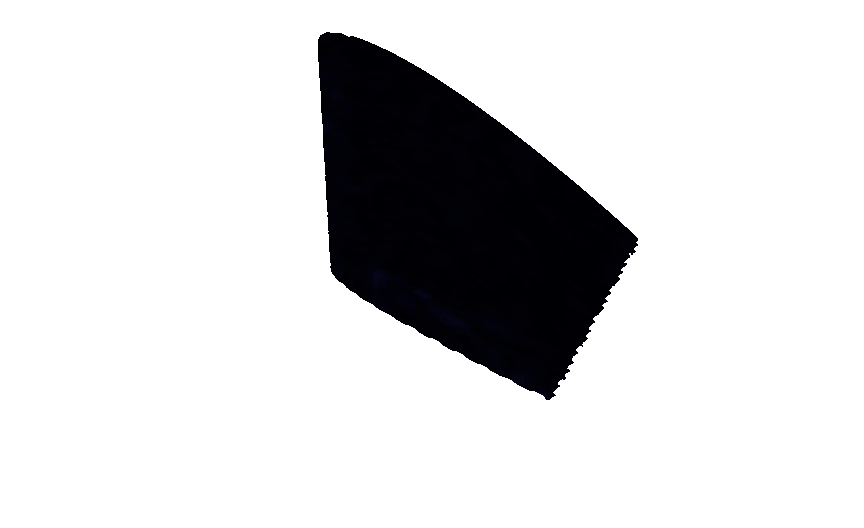} &
			\includegraphics[width=0.23\linewidth]{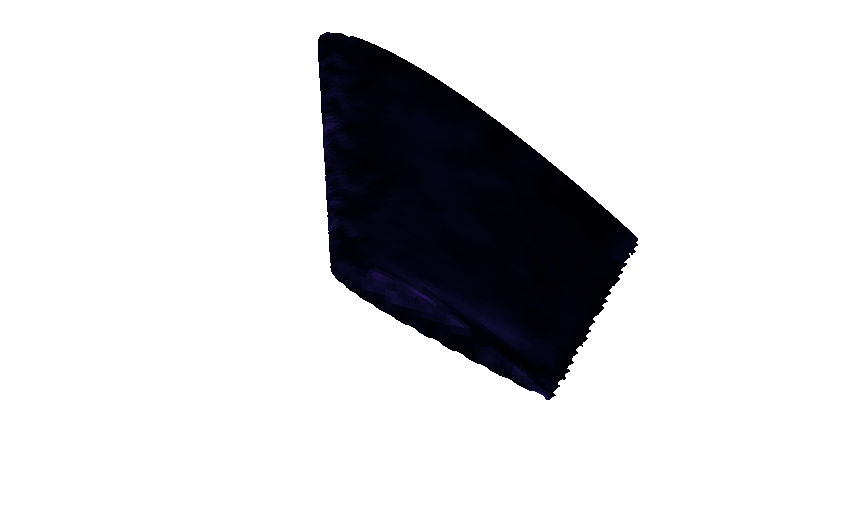} &
			\includegraphics[width=0.23\linewidth]{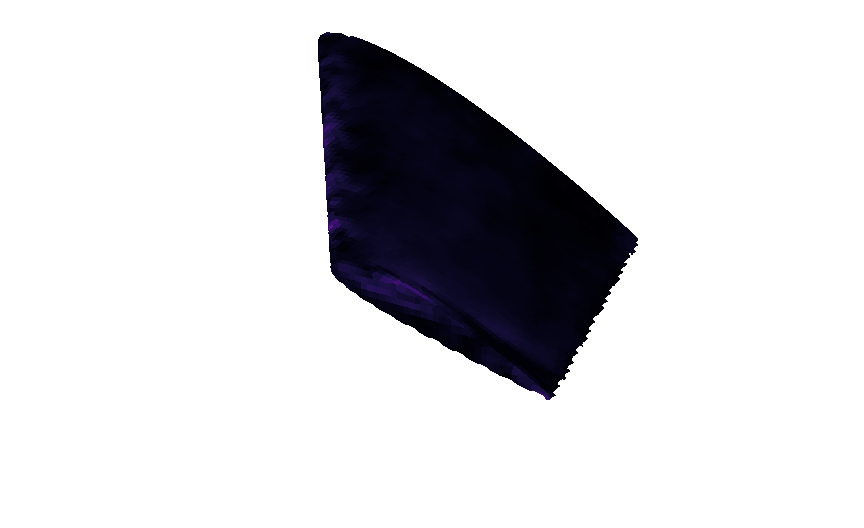} &
			\includegraphics[width=0.23\linewidth]{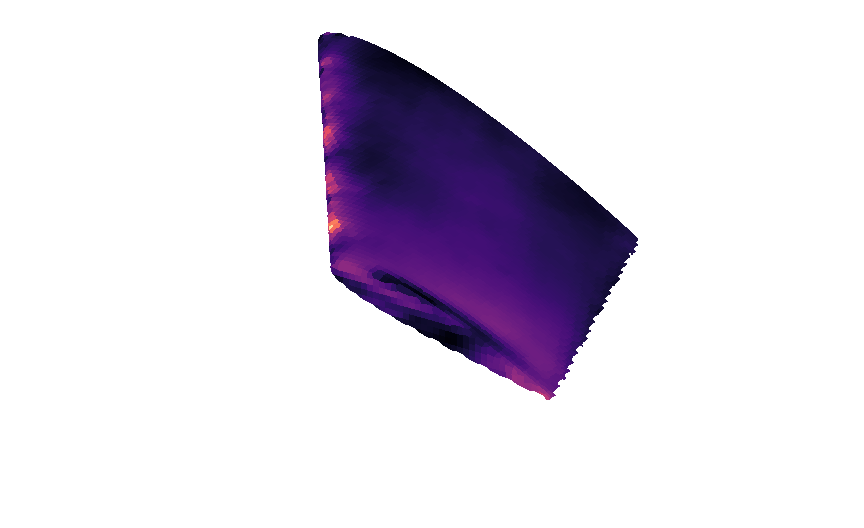}\\[-0.2em]
			\textbf{(f)} &
			\includegraphics[width=0.23\linewidth]{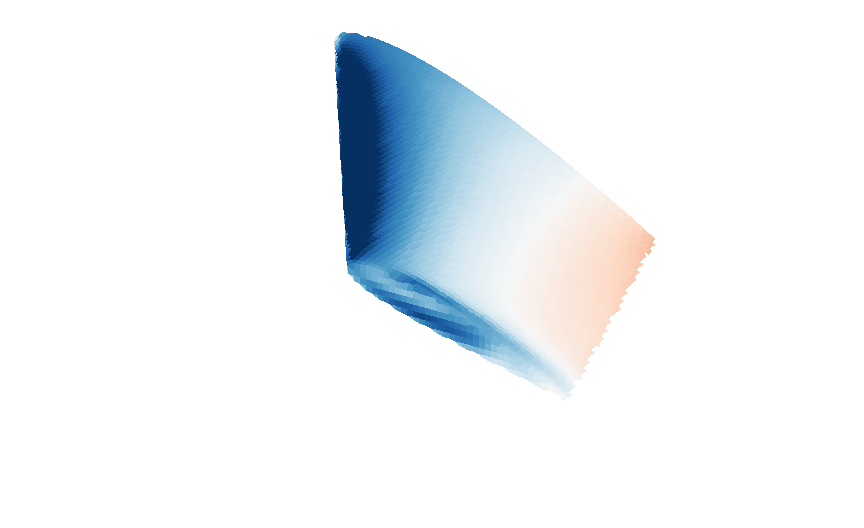} &
			\includegraphics[width=0.23\linewidth]{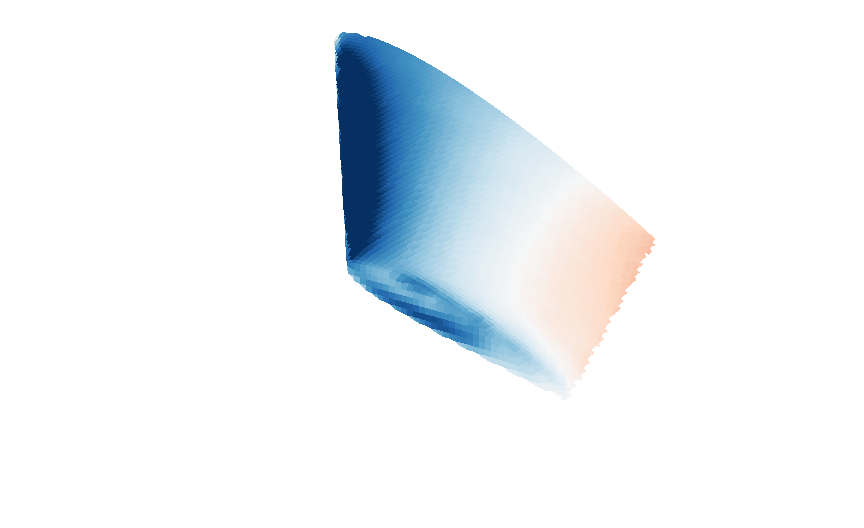} &
			\includegraphics[width=0.23\linewidth]{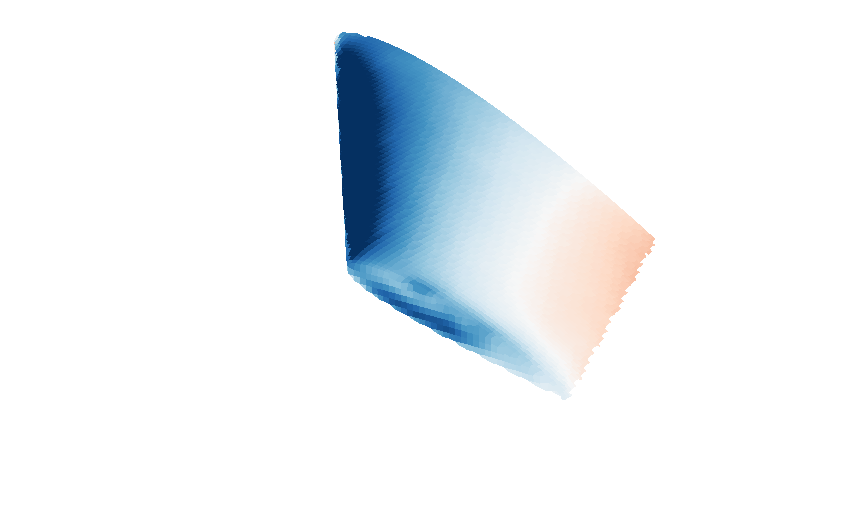} &
			\includegraphics[width=0.23\linewidth]{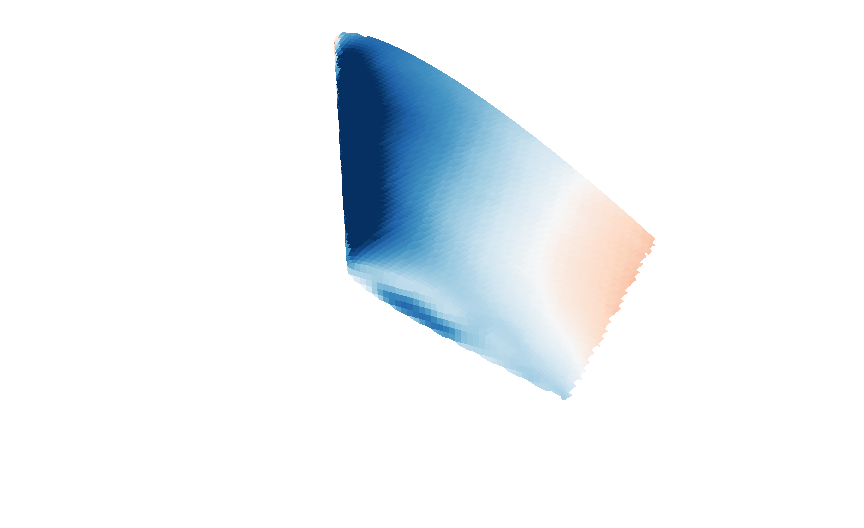}\\[-0.2em]
			\textbf{(g)} &
			\includegraphics[width=0.23\linewidth]{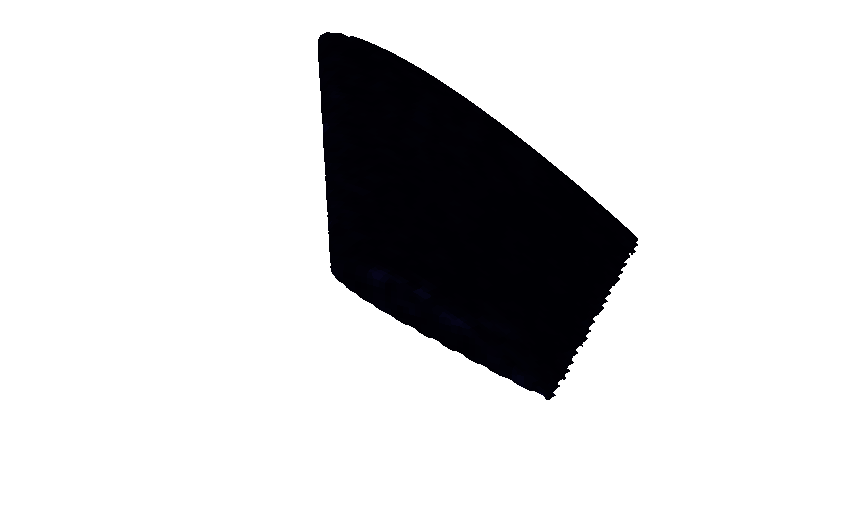} &
			\includegraphics[width=0.23\linewidth]{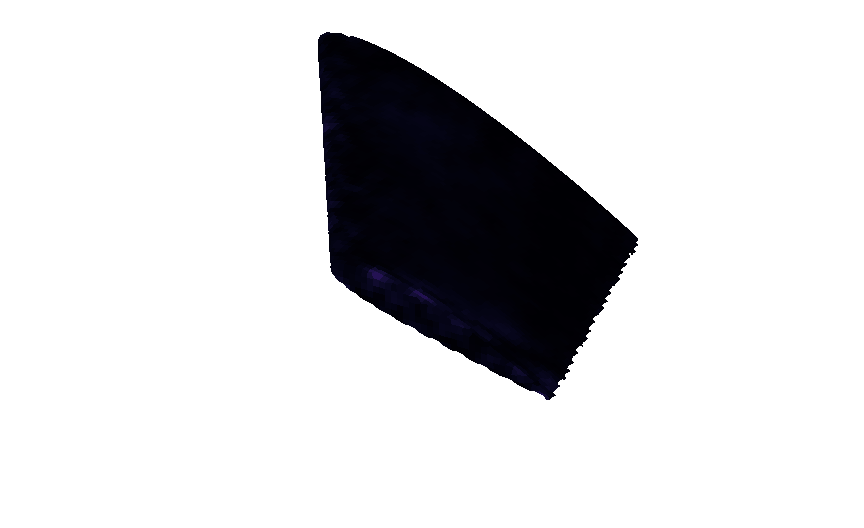} &
			\includegraphics[width=0.23\linewidth]{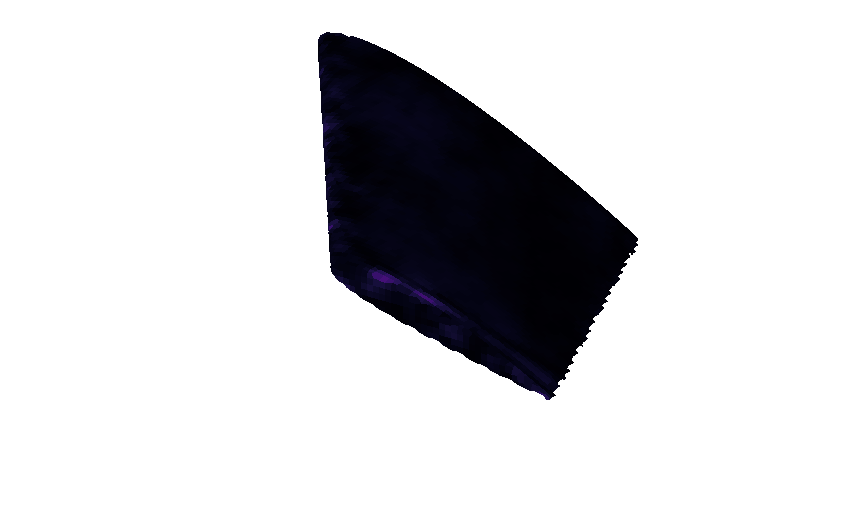} &
			\includegraphics[width=0.23\linewidth]{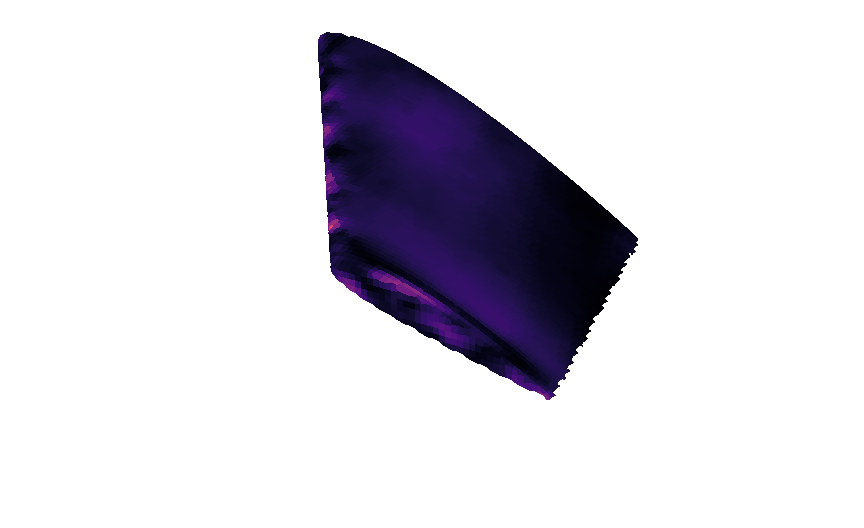}
		\end{tabular}
		\vspace{0.1em}
		\includegraphics[width=0.96\linewidth]{figures/ponwing_surface_centered_colorbars.png}
		\caption{\texttt{pOnWing} surface-centered pressure comparison for simulation \(14\). Columns are rollout steps \(t=1,5,10,50\); rows show (a) centered ground truth, (b) Base prediction, (c) Base absolute error, (d) Chebyshev BSP prediction, (e) Chebyshev BSP absolute error, (f) GLEAM prediction, and (g) GLEAM absolute error.}
		\label{fig:ponwing_surface_centered_sim14}
	\end{figure}
	
	\begin{figure}[!htbp]
		\centering
		\includegraphics[width=0.96\linewidth]{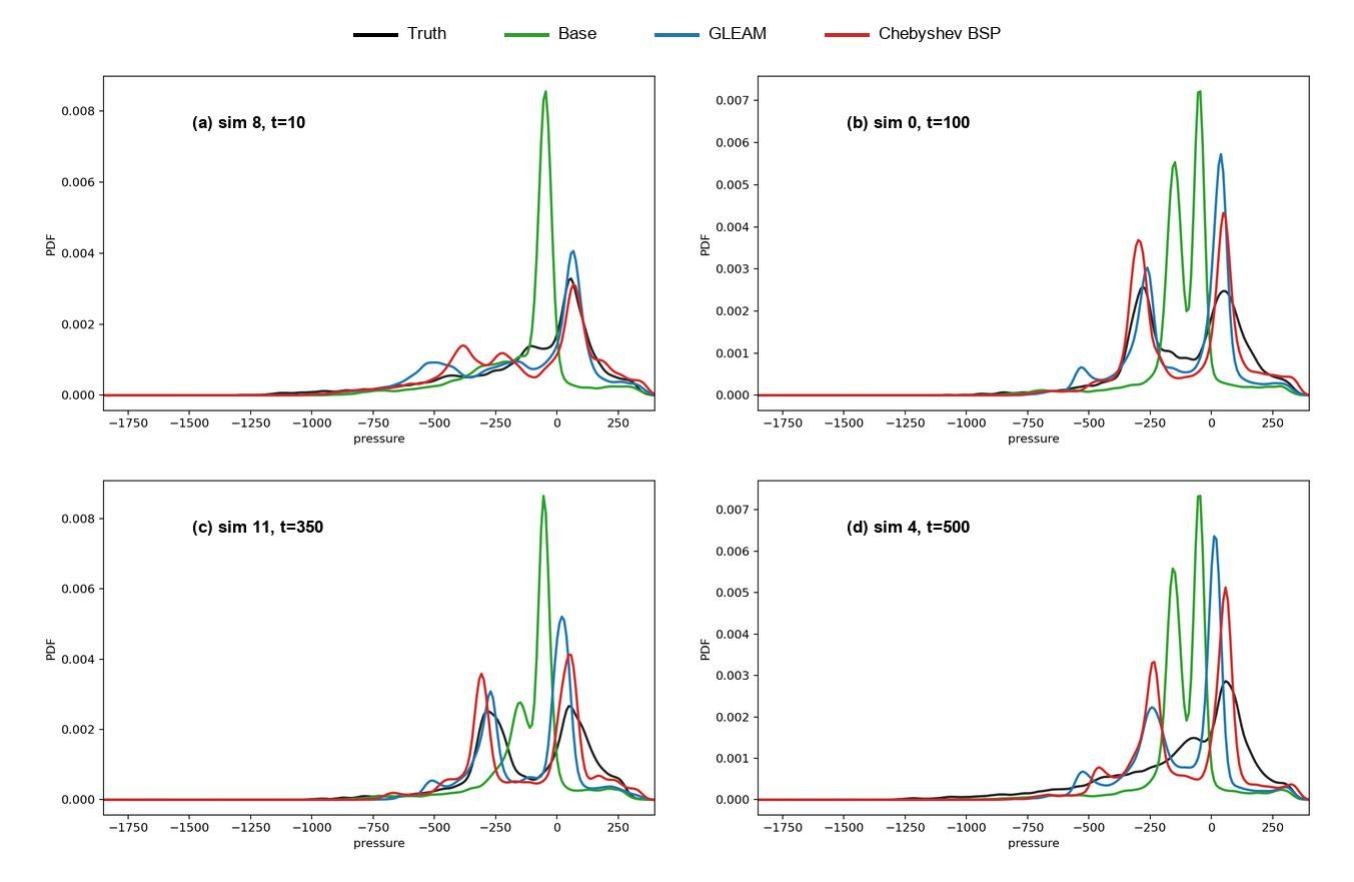}
		\caption{\texttt{pOnWing} pressure-PDF diagnostics at representative autoregressive rollout horizons. The panels compare the pressure distribution of the ground truth, Base model, GLEAM model, and Chebyshev BSP model at \(t=10,100,350,\) and \(500\) using different test simulations. The spectral variants better preserve the broad pressure distribution, while the Base model develops sharper displaced peaks.}
		\label{fig:ponwing_pressure_pdf_app}
	\end{figure}
	
	\section{\texorpdfstring{Comparison with Existing Stabilization and Spectral Baselines}{Comparison with Existing Stabilization and Spectral Baselines}}
	\label{app:bfs-similar-methods}
	To place the BFS spectral results against existing alternatives, we compare the Base model, PyNUFFT, Pushforward, Chebyshev BSP, and GLEAM on the same held-out BFS test sequence. The comparison keeps the autoregressive evaluation protocol fixed and separates the role of the auxiliary training signal. PyNUFFT applies nonuniform Fourier-style spectral supervision on the irregular point set using the PyNUFFT setting \citep{lin2017pynufft}, testing whether an off-grid Euclidean Fourier representation is sufficient for this unstructured BFS rollout. Pushforward provides the rollout-exposure reference following \citet{brandstetter2022message}, without adding graph-spectral energy alignment. Chebyshev BSP provides the final-level graph-spectral baseline, while GLEAM provides the hierarchy-aware graph-spectral variant. Because the BFS spectral runs already use a short rollout curriculum, the comparison should be read as a comparison among related trained models and auxiliary objectives, rather than as a strict exposure-versus-no-exposure ablation. We report complementary diagnostics: rollout RMSE, exact graph-spectral errors, velocity-magnitude PDFs, and late-horizon spatial error maps. All curves are computed from autoregressive rollouts using EMA weights. We also include a late-rollout EAGLE field comparison between Chebyshev BSP and Pushforward training to show how the rollout-exposure and graph-spectral objectives differ in full-field velocity and pressure variables.
	
	\begin{table}[!htbp]
		\centering
		\caption{Comparison of different methods wrt to the spectral methods to improve autoregressive forecast. RMSE is computed from the denormalized autoregressive velocity rollout. Spectral RMSE and spectral relative \(L_1\) are computed from \(24\) exact normalized-Laplacian graph bands and are used only as evaluation diagnostics.}
		\label{tab:bfs_similar_methods}
		\scriptsize
		\setlength{\tabcolsep}{3.2pt}
		\resizebox{\linewidth}{!}{%
			\begin{tabular}{lccccc}
				\toprule
				Method & Mean RMSE & Final RMSE & Mean spectral RMSE & Final spectral RMSE & Mean spectral rel. \(L_1\) \\
				\midrule
				Base & 14.061 & 22.729 & 20.497 & 31.789 & 0.512 \\
				PyNUFFT \(\lambda=0.01\) & 13.335 & 24.840 & 7.797 & \(\mathbf{3.410}\) & 0.195 \\
				Pushforward & 6.619 & 8.643 & 7.906 & 10.078 & 0.177 \\
				Chebyshev BSP \(\lambda=0.1\) & 7.468 & 12.333 & 10.010 & 20.562 & 0.226 \\
				GLEAM \(\lambda=10^{-4}\) & \(\mathbf{5.928}\) & \(\mathbf{8.469}\) & \(\mathbf{6.201}\) & 7.427 & \(\mathbf{0.145}\) \\
				\bottomrule
			\end{tabular}
		}
	\end{table}
	
	\begin{figure}[!htbp]
		\centering
		\begin{subfigure}[t]{0.49\linewidth}
			\centering
			\includegraphics[width=\linewidth]{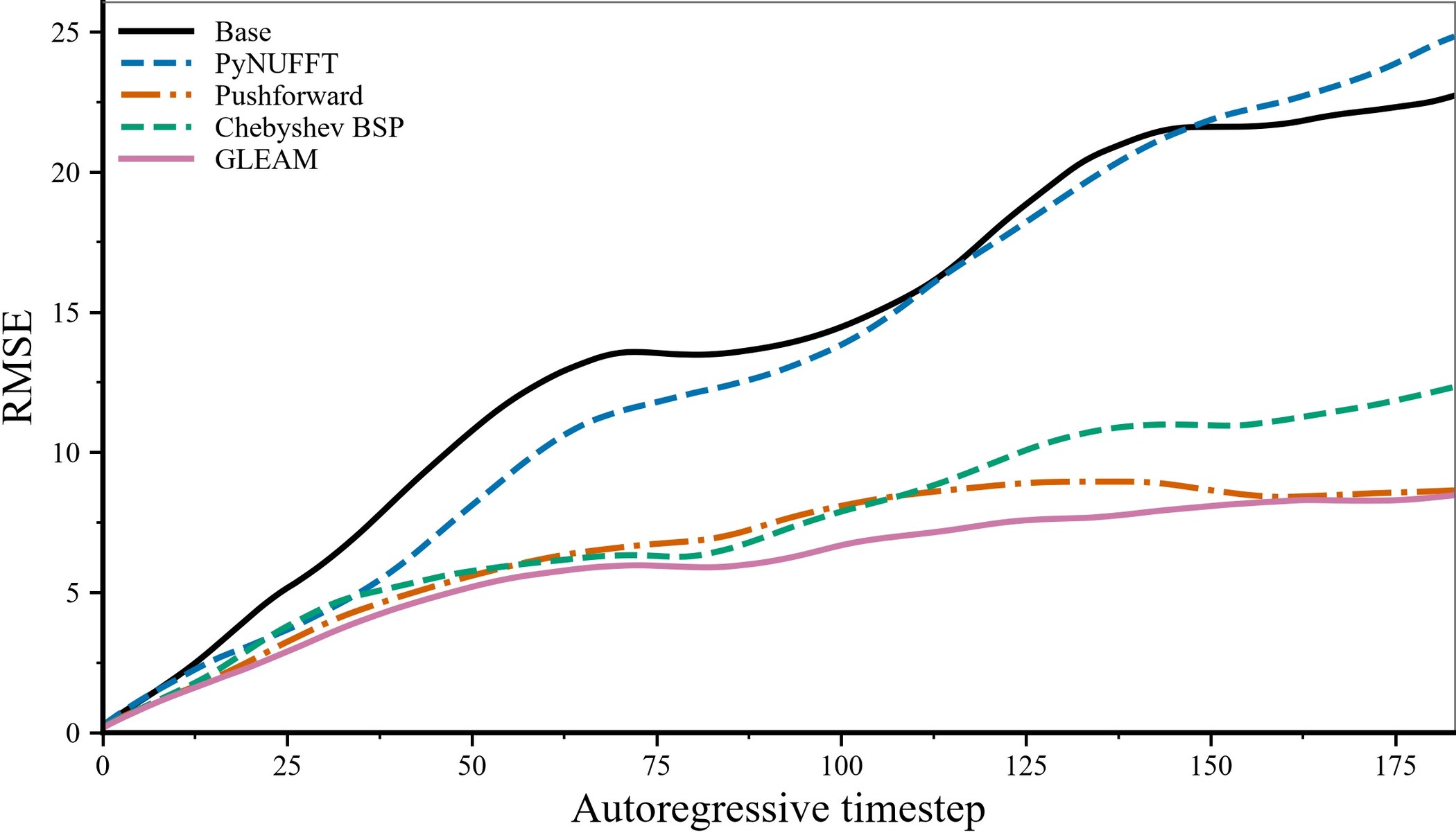}
			\caption{Velocity RMSE.}
			\label{fig:bfs_similar_methods_rmse}
		\end{subfigure}\hfill
		\begin{subfigure}[t]{0.49\linewidth}
			\centering
			\includegraphics[width=\linewidth]{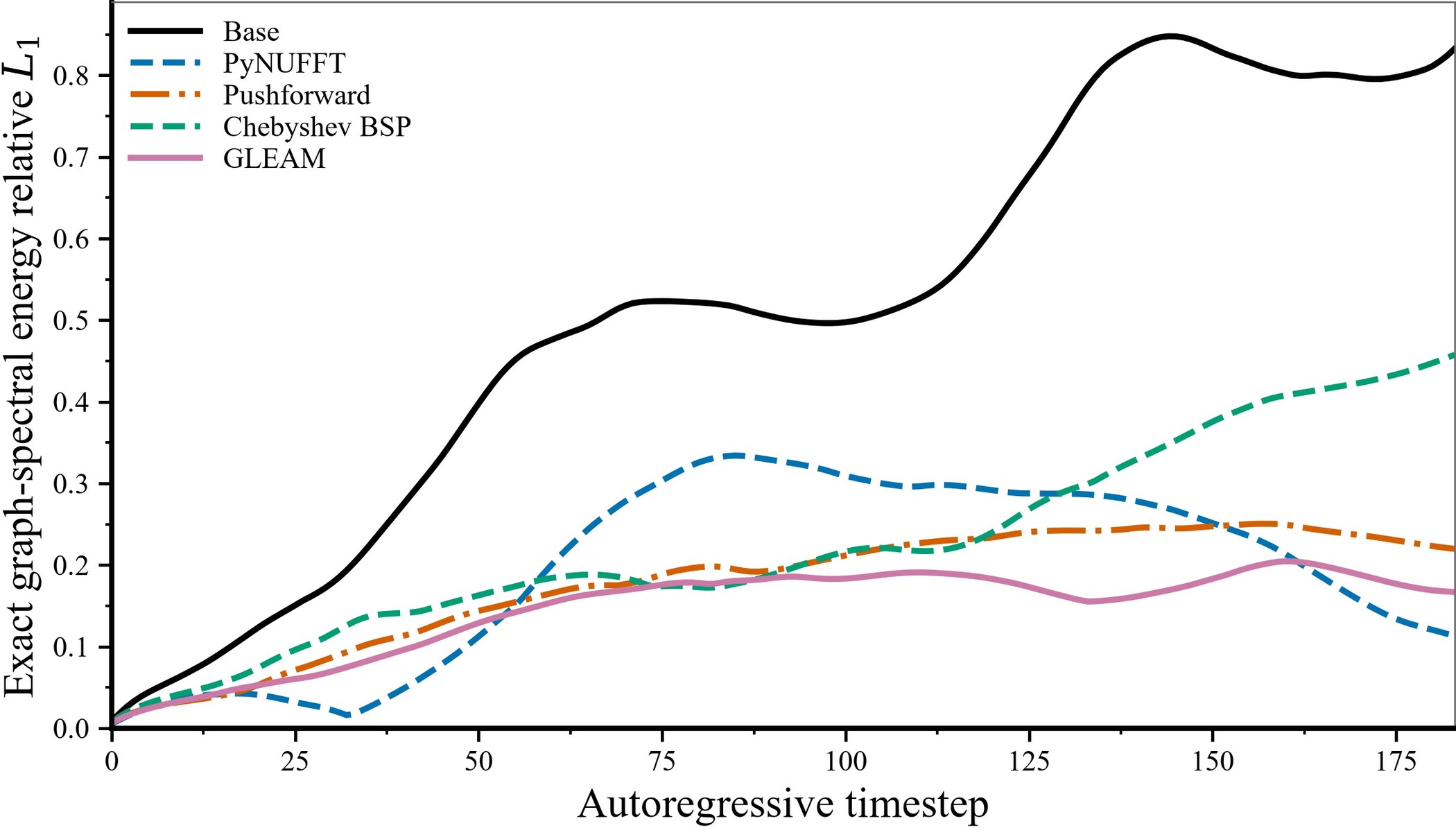}
			\caption{Graph-spectral relative \(L_1\).}
			\label{fig:bfs_similar_methods_spectral}
		\end{subfigure}
		\caption{BFS comparison with related trained baselines. Panel (a) reports autoregressive velocity RMSE, while panel (b) reports exact normalized-Laplacian spectral-energy relative \(L_1\) over the same rollout. Lower is better in both panels.}
		\label{fig:bfs_similar_methods_curves}
	\end{figure}
	
	\begin{figure}[!htbp]
		\centering
		\includegraphics[width=\linewidth]{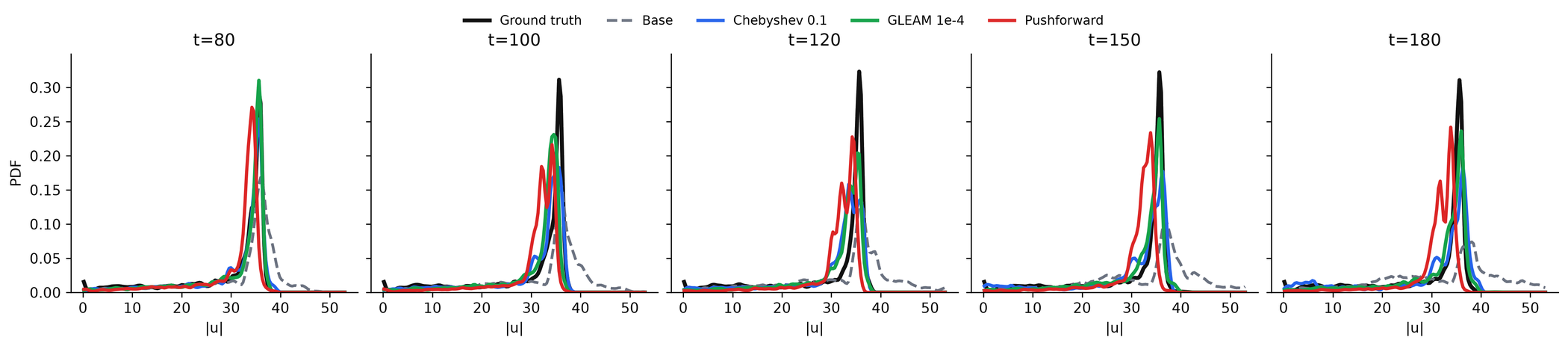}
		\caption{BFS velocity-magnitude probability-density comparison for the Base, Pushforward, Chebyshev BSP, and GLEAM rollouts at late autoregressive horizons. The panels compare the distribution of \(|\bm u|\) for the ground truth, Base model, Pushforward baseline, Chebyshev BSP with \(\lambda_{\mathrm{Cheb}}=0.1\), and GLEAM with \(\lambda_{\mathrm{GLEAM}}=10^{-4}\). The PDF diagnostic complements RMSE by showing whether each method preserves the distribution of speed values during rollout.}
		\label{fig:bfs_similar_methods_velocity_pdf}
	\end{figure}
	
	\begin{figure}[!htbp]
		\centering
		\includegraphics[width=0.96\linewidth]{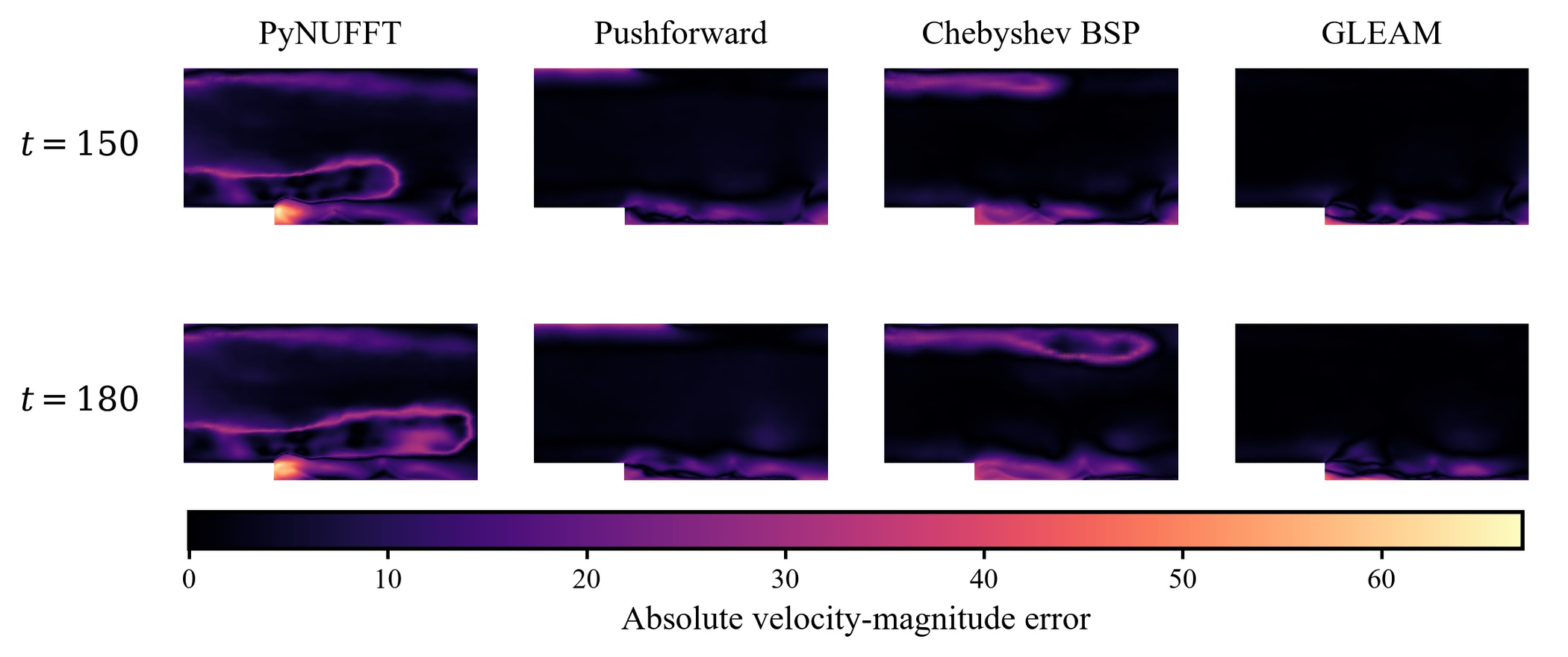}
		\caption{BFS absolute velocity-magnitude error maps at autoregressive horizons \(t=150\) and \(t=180\). Columns compare PyNUFFT, Pushforward, Chebyshev BSP, and GLEAM, with a shared error color scale across all panels.}
		\label{fig:bfs_similar_methods_error_maps}
	\end{figure}
	
	The BFS diagnostics above isolate the behavior of the trained models on one held-out sequence, while the EAGLE panels in Fig.~\ref{fig:eagle_high_t_chebyshev_pushforward} give a complementary field-level view of the same Chebyshev BSP versus Pushforward comparison. These late-time examples include both velocity magnitude and static pressure so that the comparison is not restricted to scalar rollout curves or velocity-only distributional diagnostics.
	
	\begin{figure}[!htbp]
		\centering
		\setlength{\tabcolsep}{1pt}
		\renewcommand{\arraystretch}{0.88}
		\scriptsize
		\begin{tabular}{@{}>{\raggedleft\arraybackslash}p{0.035\linewidth}cccc@{}}
			& \begin{tabular}[c]{@{}c@{}}\textbf{sample070}\\\(t=183\)\\\(|\bm u|\)\end{tabular}
			& \begin{tabular}[c]{@{}c@{}}\textbf{sample070}\\\(t=183\)\\\(p\)\end{tabular}
			& \begin{tabular}[c]{@{}c@{}}\textbf{sample014}\\\(t=194\)\\\(|\bm u|\)\end{tabular}
			& \begin{tabular}[c]{@{}c@{}}\textbf{sample014}\\\(t=194\)\\\(p\)\end{tabular}\\[-0.1em]
			\textbf{(a)} &
			\includegraphics[width=0.23\linewidth]{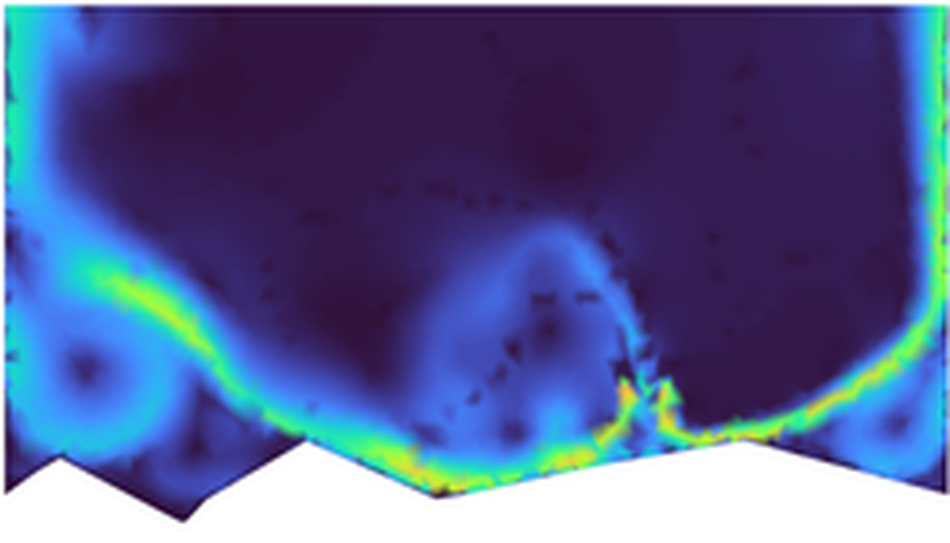} &
			\includegraphics[width=0.23\linewidth]{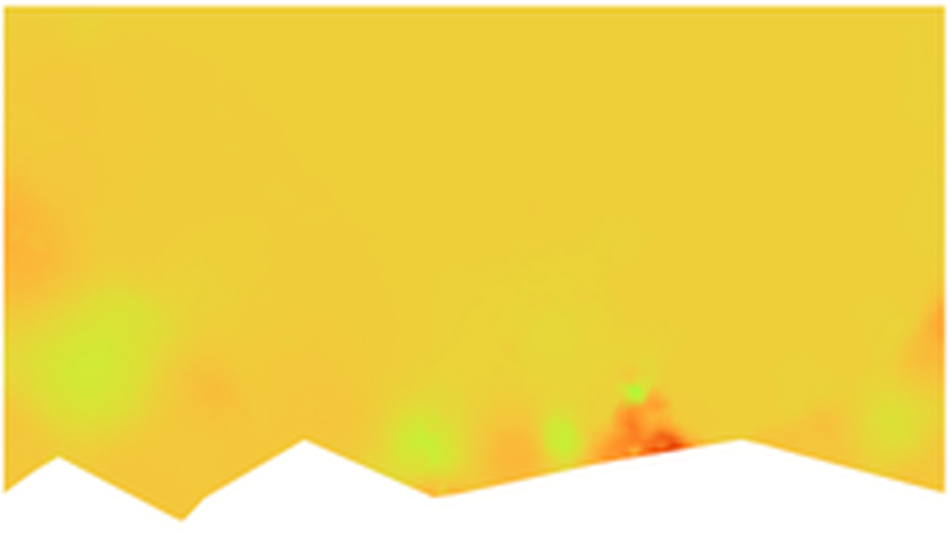} &
			\includegraphics[width=0.23\linewidth]{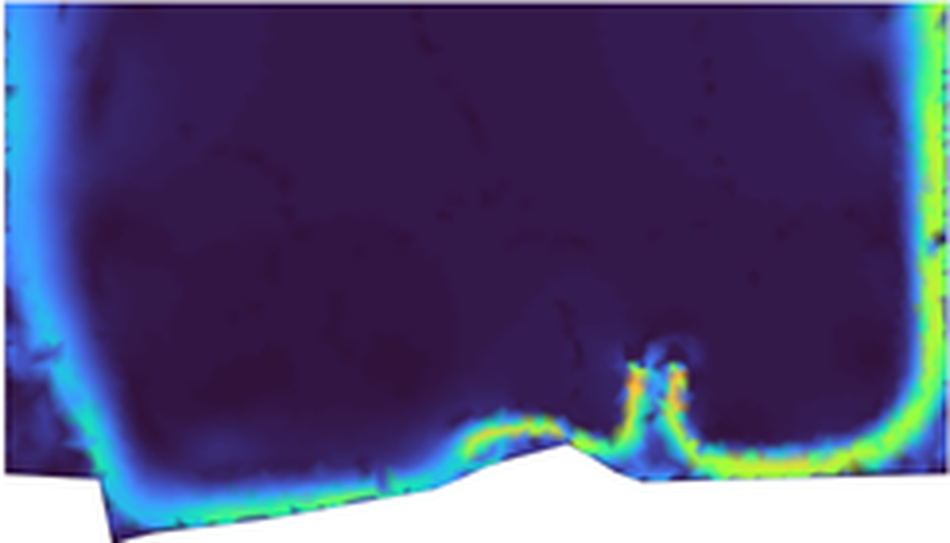} &
			\includegraphics[width=0.23\linewidth]{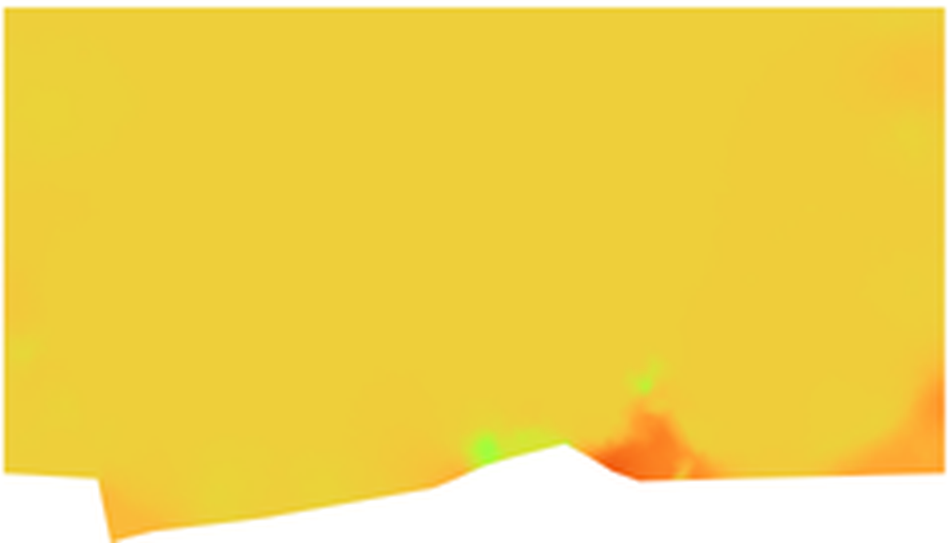}\\[-0.2em]
			\textbf{(b)} &
			\includegraphics[width=0.23\linewidth]{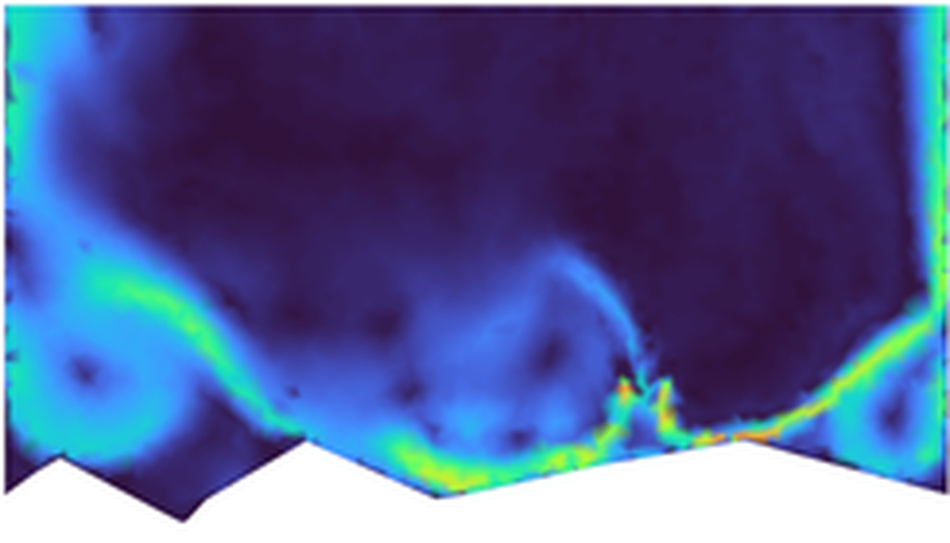} &
			\includegraphics[width=0.23\linewidth]{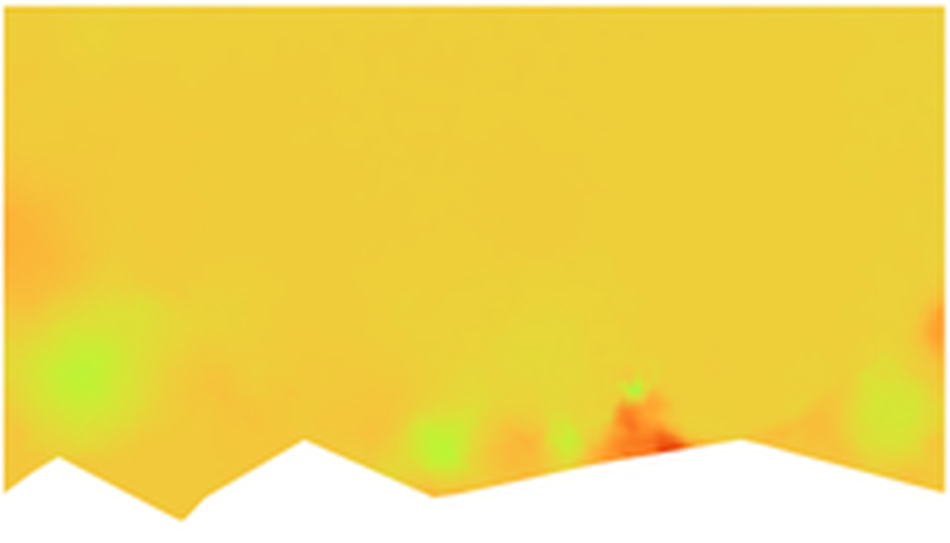} &
			\includegraphics[width=0.23\linewidth]{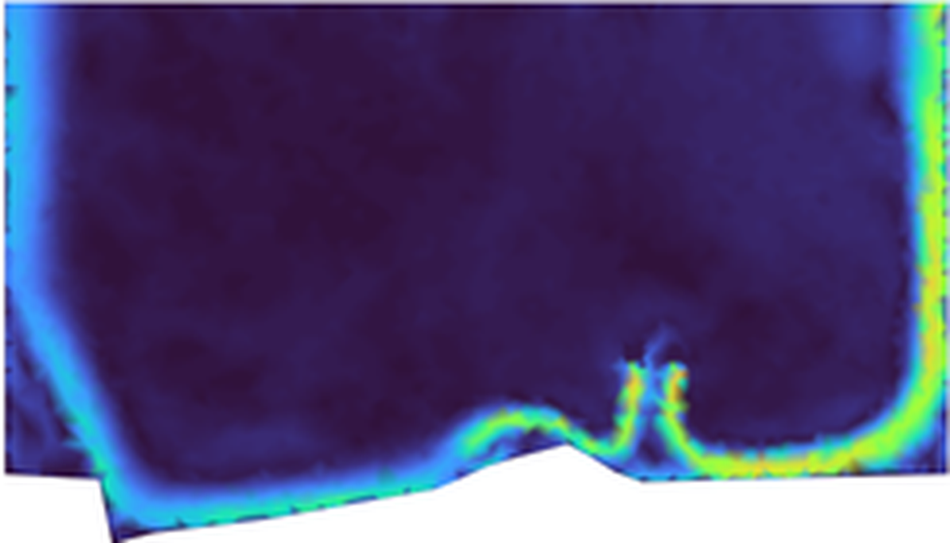} &
			\includegraphics[width=0.23\linewidth]{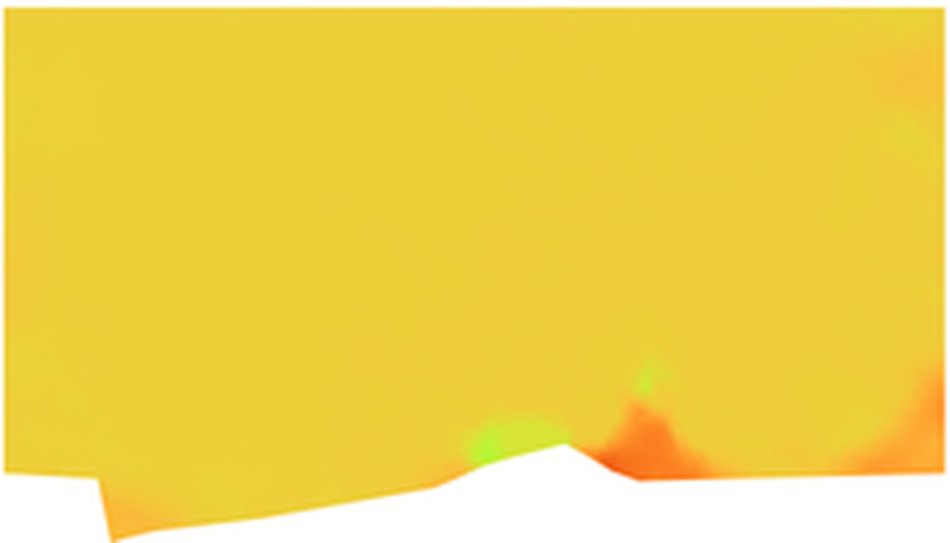}\\[-0.2em]
			\textbf{(c)} &
			\includegraphics[width=0.23\linewidth]{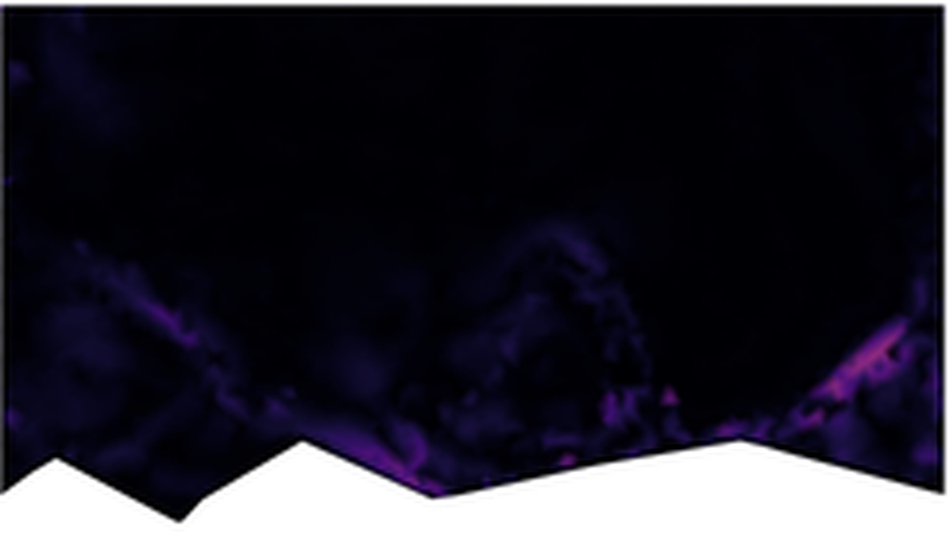} &
			\includegraphics[width=0.23\linewidth]{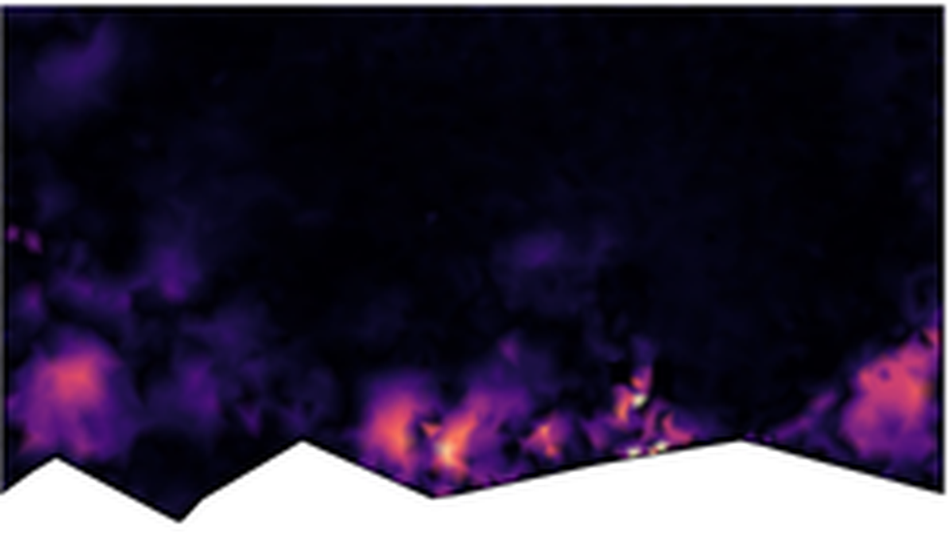} &
			\includegraphics[width=0.23\linewidth]{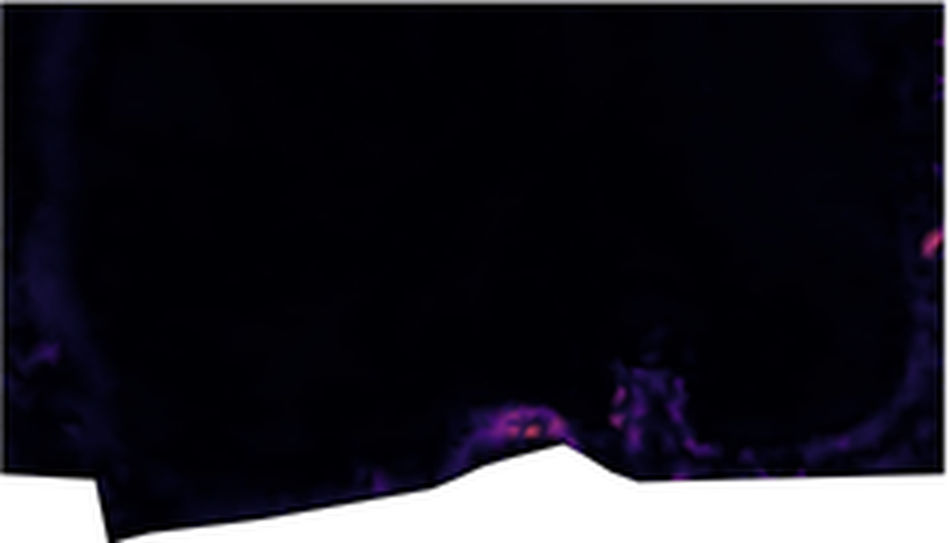} &
			\includegraphics[width=0.23\linewidth]{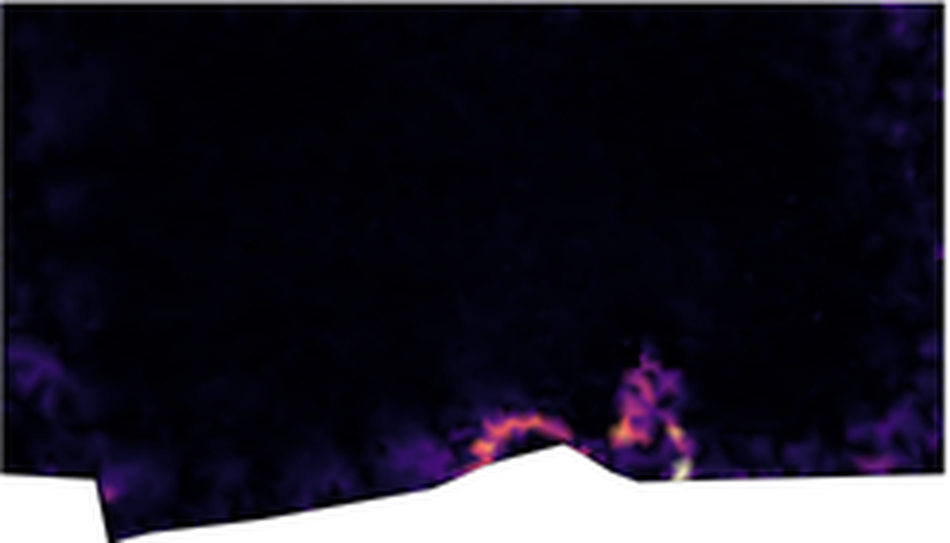}\\[-0.2em]
			\textbf{(d)} &
			\includegraphics[width=0.23\linewidth]{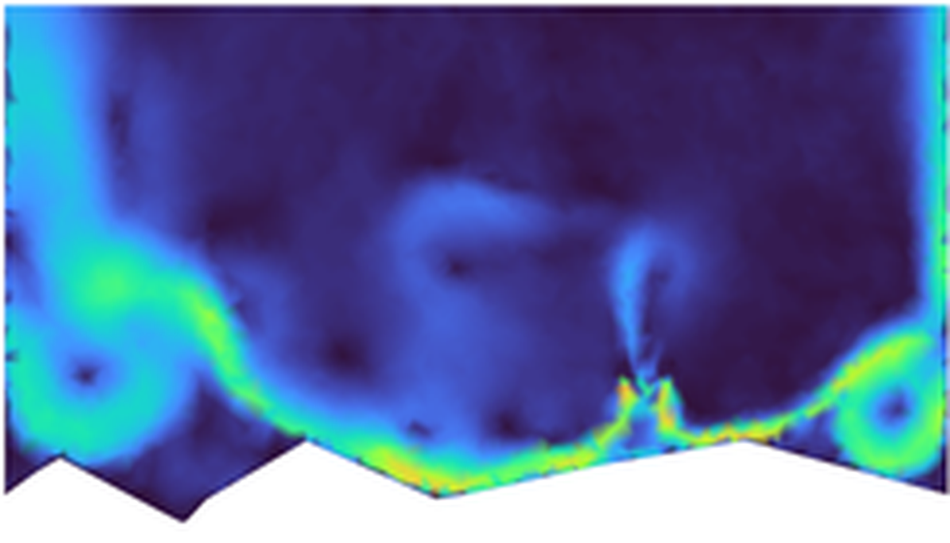} &
			\includegraphics[width=0.23\linewidth]{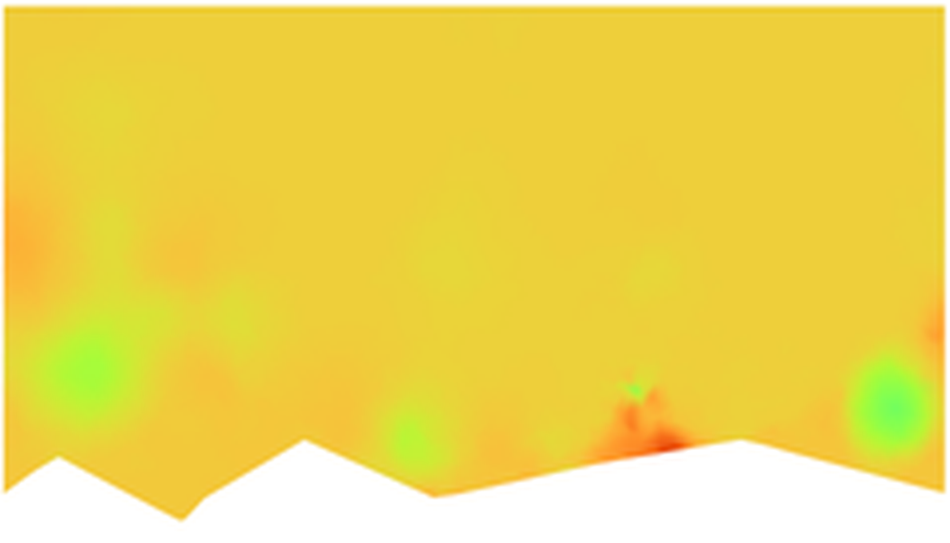} &
			\includegraphics[width=0.23\linewidth]{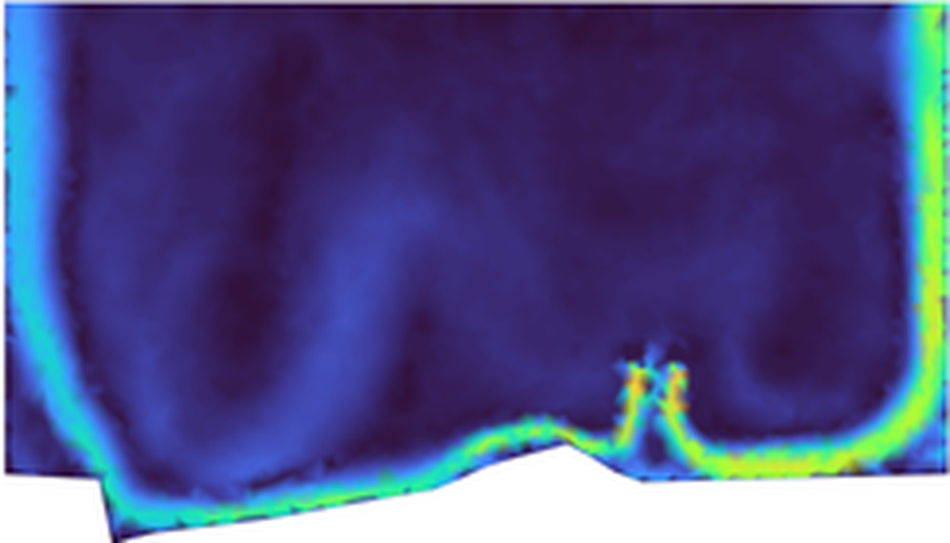} &
			\includegraphics[width=0.23\linewidth]{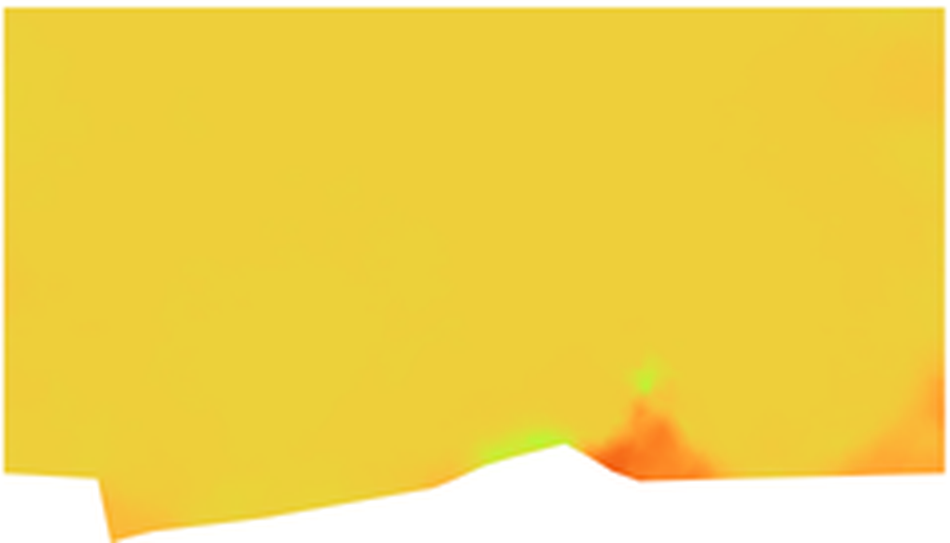}\\[-0.2em]
			\textbf{(e)} &
			\includegraphics[width=0.23\linewidth]{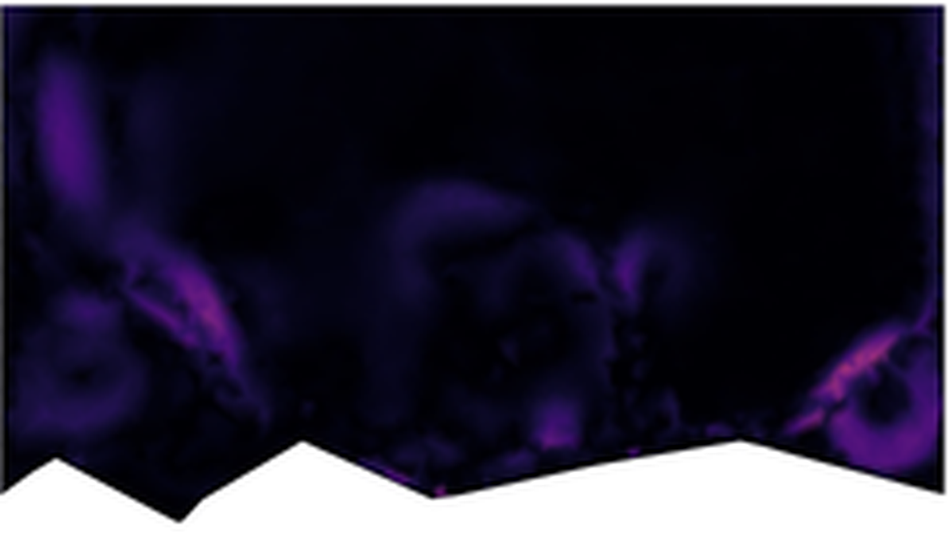} &
			\includegraphics[width=0.23\linewidth]{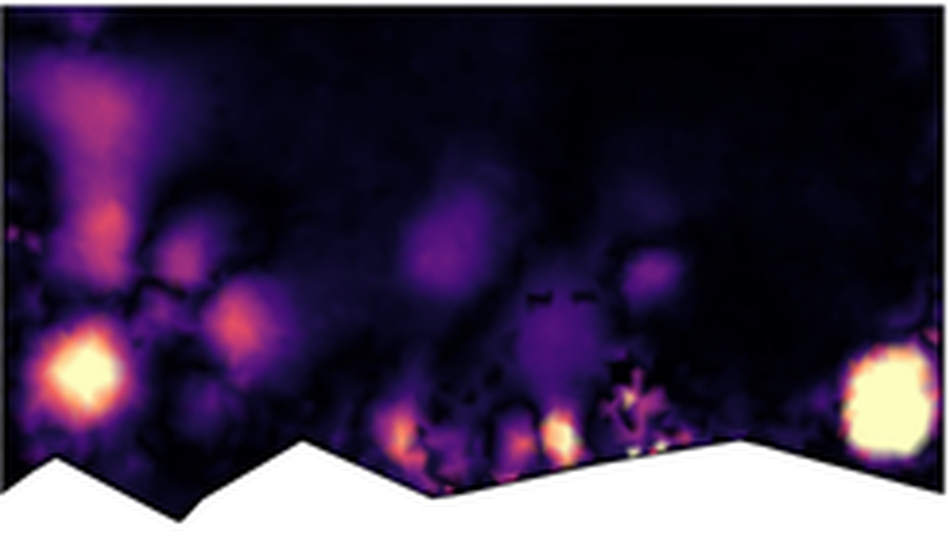} &
			\includegraphics[width=0.23\linewidth]{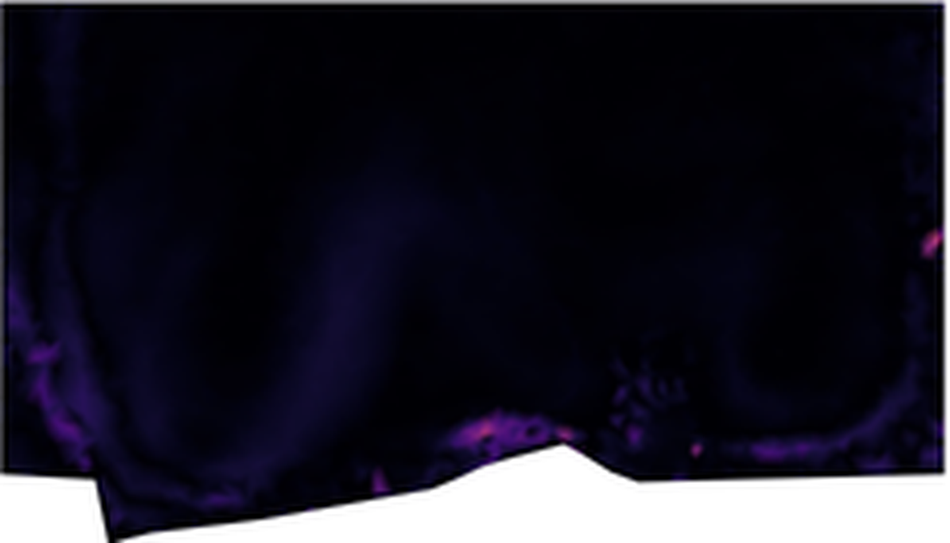} &
			\includegraphics[width=0.23\linewidth]{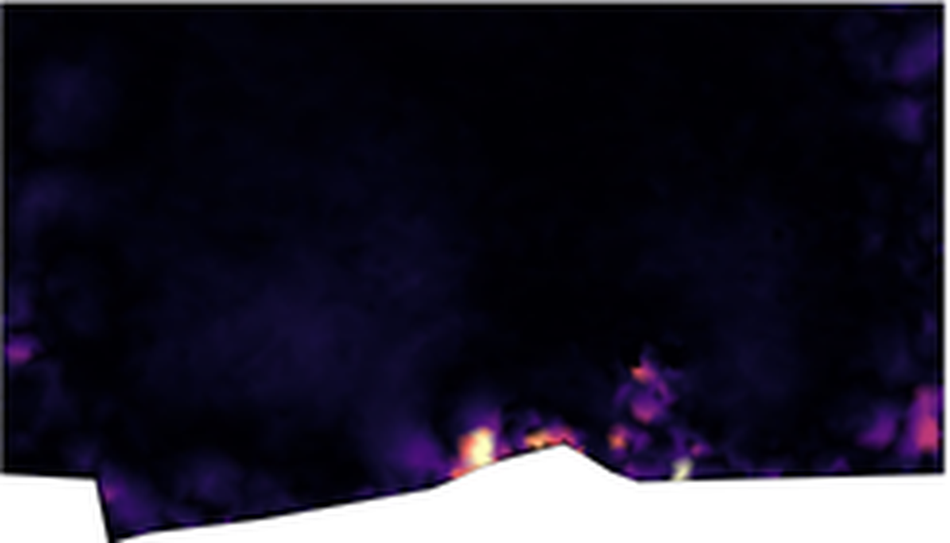}
		\end{tabular}
		\vspace{0.2em}
		\includegraphics[width=0.92\linewidth]{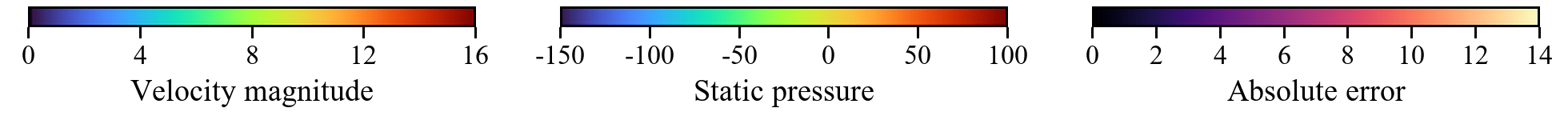}
		\caption{Late-rollout EAGLE visual comparison between Chebyshev BSP and Pushforward training for samples \(70\) and \(14\), included as a field-level comparison with existing rollout-exposure training. Columns are sample--field cases; rows show (a) ground truth, (b) Chebyshev BSP prediction, (c) Chebyshev BSP absolute error, (d) Pushforward prediction, and (e) Pushforward absolute error. The horizontal colorbars report the velocity-magnitude, static-pressure, and absolute-error scales.}
		\label{fig:eagle_high_t_chebyshev_pushforward}
	\end{figure}
	
	The comparison separates three effects. PyNUFFT substantially lowers the exact graph-spectral diagnostic relative to the Base model, and at the final horizon it has the smallest spectral RMSE in Table~\ref{tab:bfs_similar_methods}. However, its pointwise rollout degrades at long horizons: final RMSE is worse than the Base model, indicating that this off-grid Euclidean Fourier representation does not by itself control the autoregressive BFS forecast on the mesh. The standalone Pushforward baseline provides a rollout-exposure reference without graph-spectral alignment and gives a strong RMSE improvement over Base and PyNUFFT, reaching a final RMSE close to GLEAM. The PDF diagnostic in Fig.~\ref{fig:bfs_similar_methods_velocity_pdf} shows why this comparison should not be read from RMSE alone: Pushforward improves temporal robustness, but its velocity-magnitude distribution remains visibly displaced from the ground truth at several late horizons, whereas Chebyshev BSP and GLEAM more closely track the target distribution. GLEAM gives the lowest mean and final RMSE, the lowest mean exact graph-spectral RMSE, and the lowest mean spectral relative \(L_1\), while the late-horizon error maps in Fig.~\ref{fig:bfs_similar_methods_error_maps} show how the remaining spatial error is distributed relative to PyNUFFT, Pushforward, and Chebyshev BSP. The EAGLE fields in Fig.~\ref{fig:eagle_high_t_chebyshev_pushforward} add the corresponding full-field view: rollout exposure reduces some late-time error, while Chebyshev BSP more directly targets scale-resolved field structure. Thus, in this comparison, rollout exposure improves temporal robustness, while graph-intrinsic spectral constraints give a more balanced pointwise, distributional, and scale-resolved behavior across the reported diagnostics.
	
	\section{Additional EAGLE Qualitative Rollout Snapshots}
	\label{app:eagle-qualitative-snapshots}
	Figure~\ref{fig:eagle_appendix_qualitative} provides additional EAGLE sample-50 snapshots using the same visual convention as Fig.~\ref{fig:eagle_qualitative}. We include this sequence as a representative difficult case because the rollout enters the high-turbulence regime noted in the EAGLE benchmark: the initially organized shear layer develops into a more chaotic vortical field, and both autoregressive predictors accumulate visible error. The purpose of the figure is therefore not to claim that Chebyshev BSP removes the failure mode, but to show how the failure changes when graph-spectral supervision is added. With the shared absolute-error color scale, the Base error becomes broader and more diffuse as the rollout advances, especially around the evolving vortical region and lower boundary. Chebyshev BSP does not eliminate these late-time errors, but it keeps more of the coherent velocity structure visible and confines a larger portion of the error to localized shear-layer and boundary regions. The final snapshot remains challenging for both models, so the qualitative takeaway is partial mitigation of a difficult autoregressive case rather than complete recovery.
	
	\begin{figure}[!htbp]
		\centering
		\setlength{\tabcolsep}{1pt}
		\renewcommand{\arraystretch}{0.88}
		\scriptsize
		\begin{tabular}{@{}>{\raggedleft\arraybackslash}p{0.035\linewidth}ccccc@{}}
			& \textbf{\(t=5\)} & \textbf{\(t=50\)} & \textbf{\(t=100\)} & \textbf{\(t=180\)} & \textbf{\(t=260\)}\\[-0.1em]
			\textbf{(a)} &
			\includegraphics[width=0.185\linewidth]{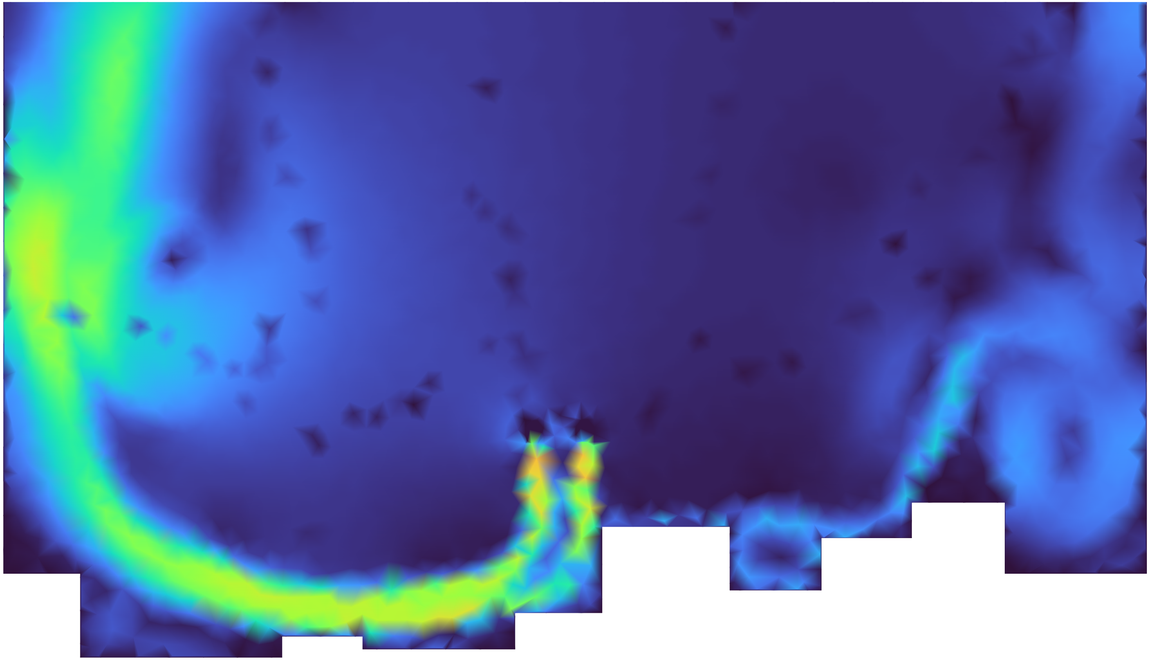} &
			\includegraphics[width=0.185\linewidth]{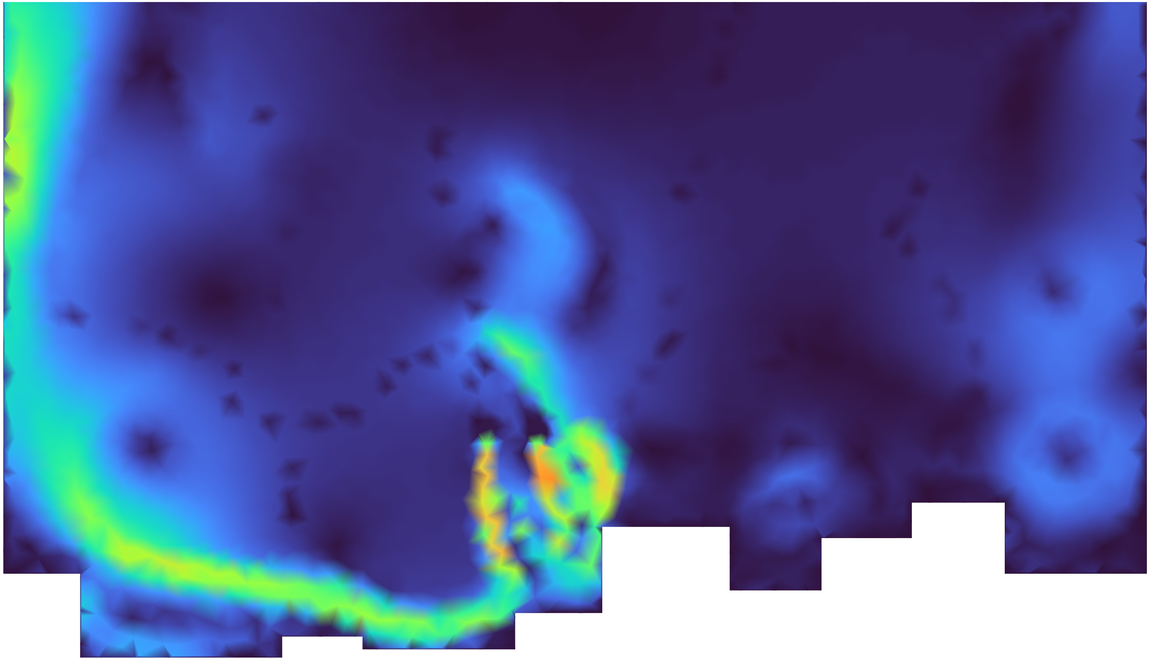} &
			\includegraphics[width=0.185\linewidth]{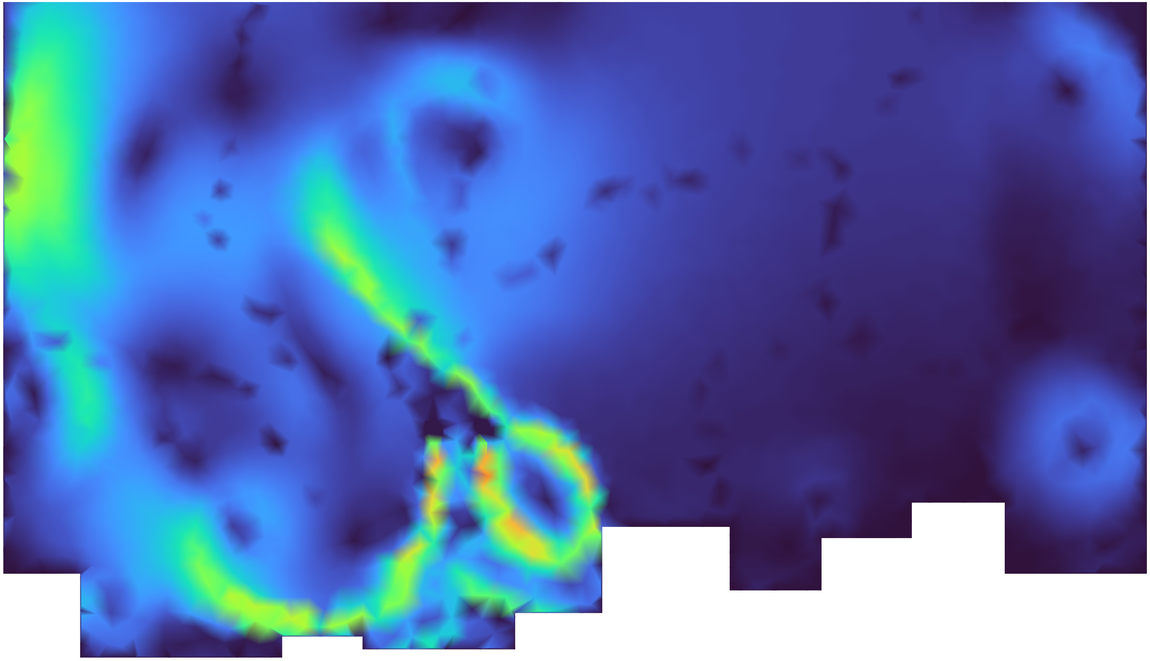} &
			\includegraphics[width=0.185\linewidth]{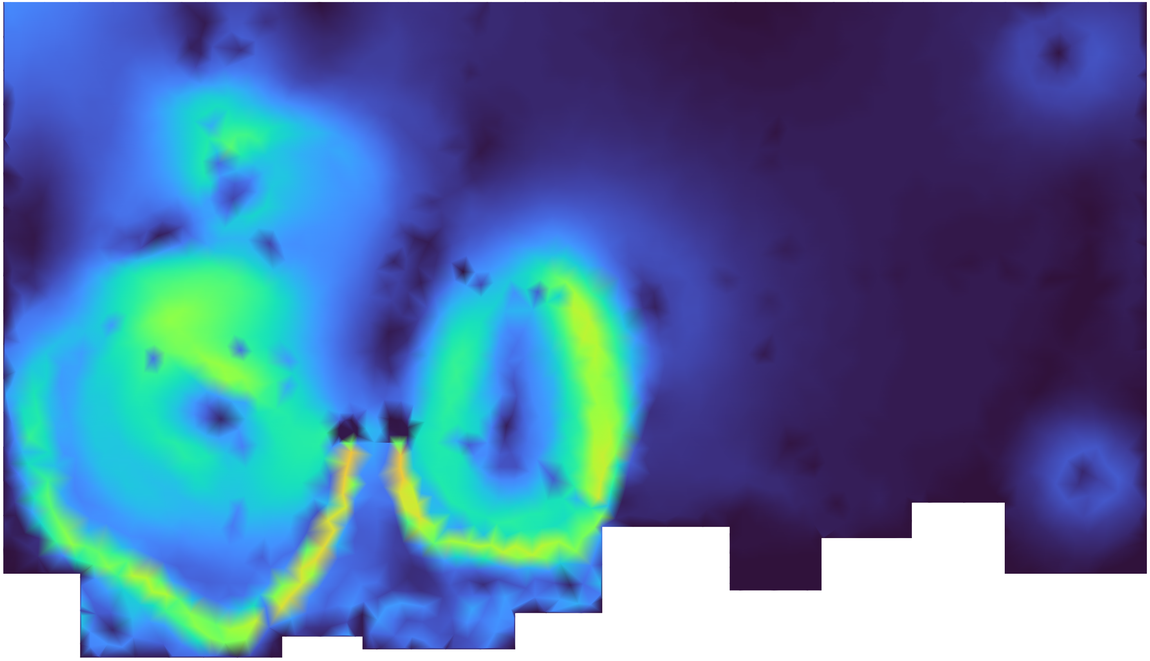} &
			\includegraphics[width=0.185\linewidth]{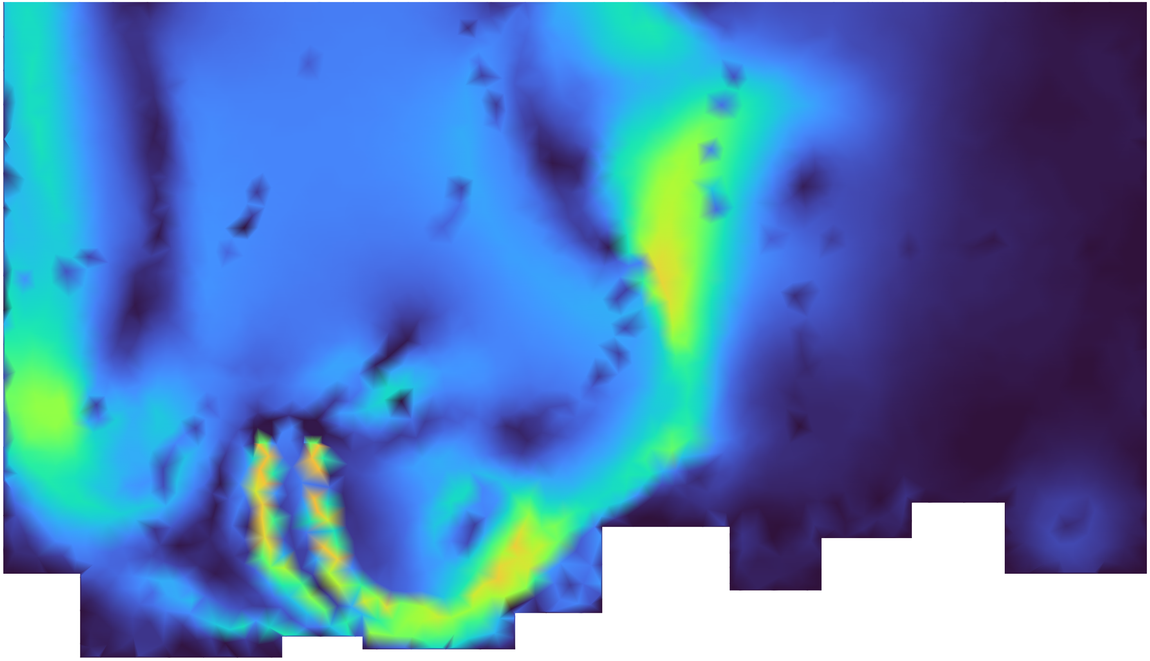}\\[-0.2em]
			\textbf{(b)} &
			\includegraphics[width=0.185\linewidth]{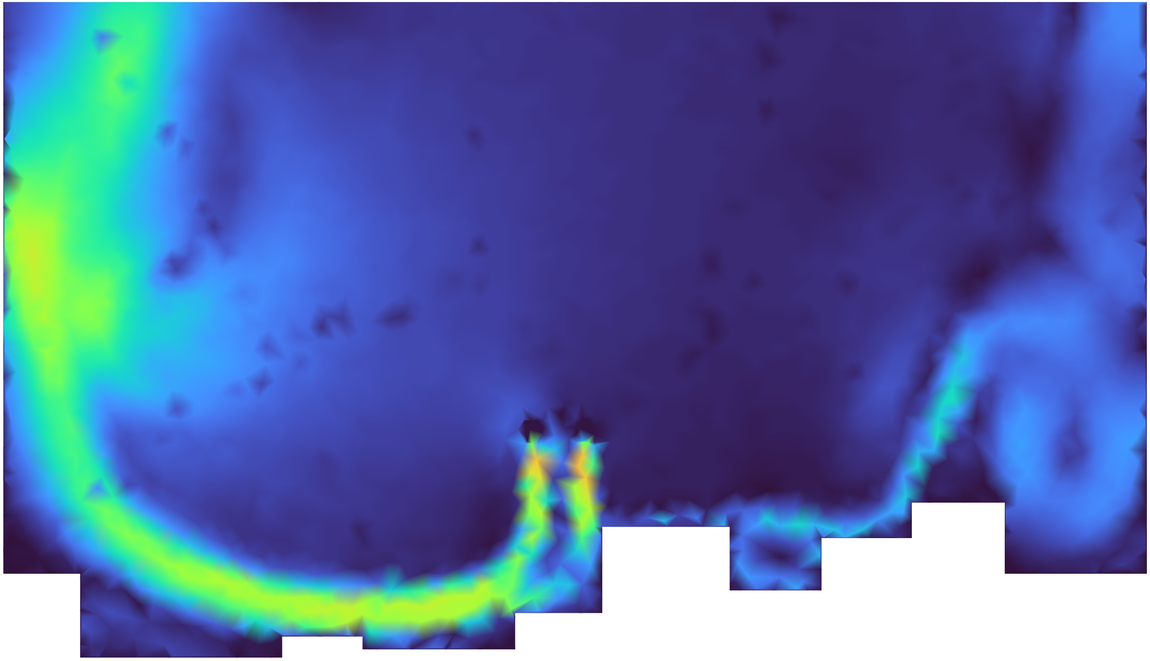} &
			\includegraphics[width=0.185\linewidth]{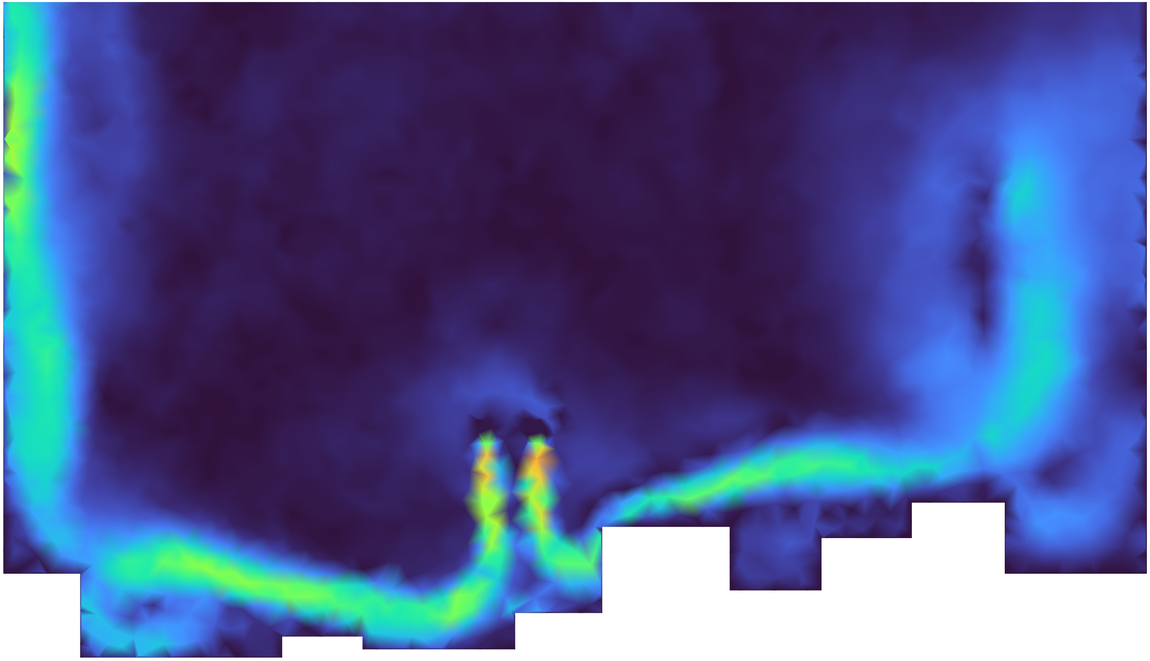} &
			\includegraphics[width=0.185\linewidth]{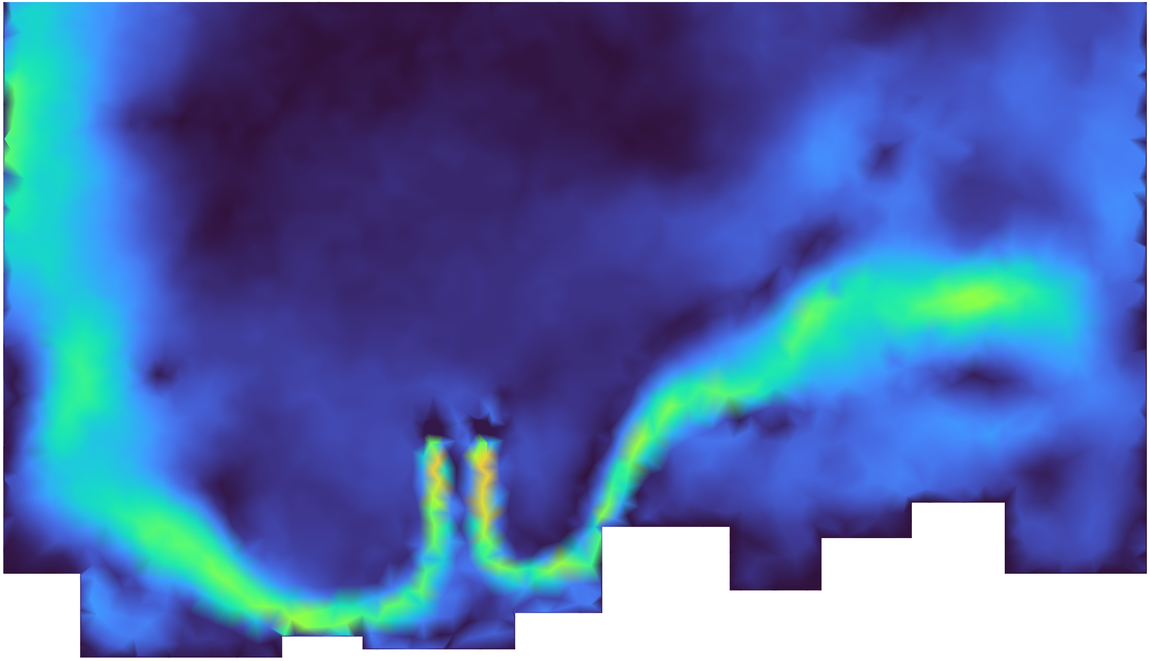} &
			\includegraphics[width=0.185\linewidth]{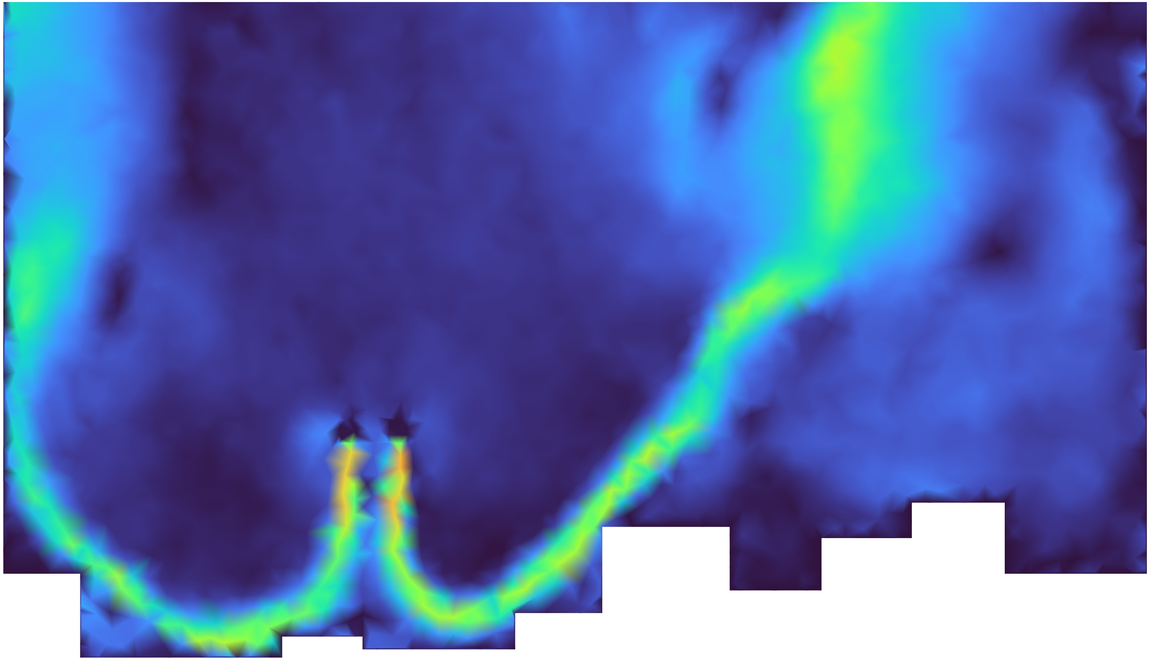} &
			\includegraphics[width=0.185\linewidth]{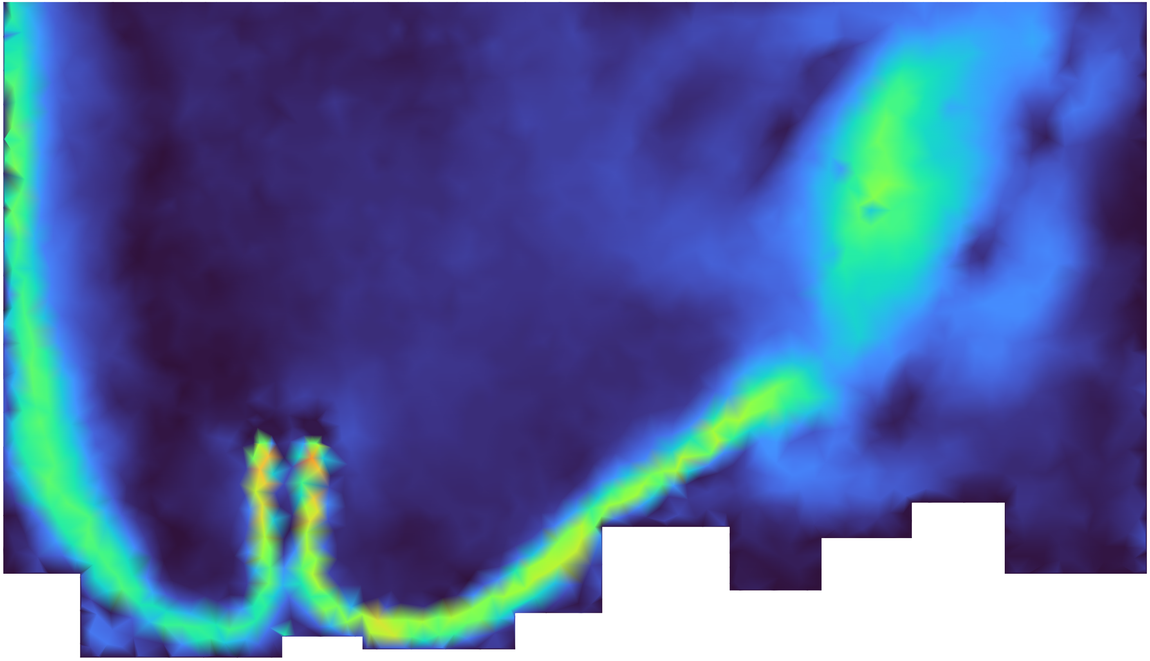}\\[-0.2em]
			\textbf{(c)} &
			\includegraphics[width=0.185\linewidth]{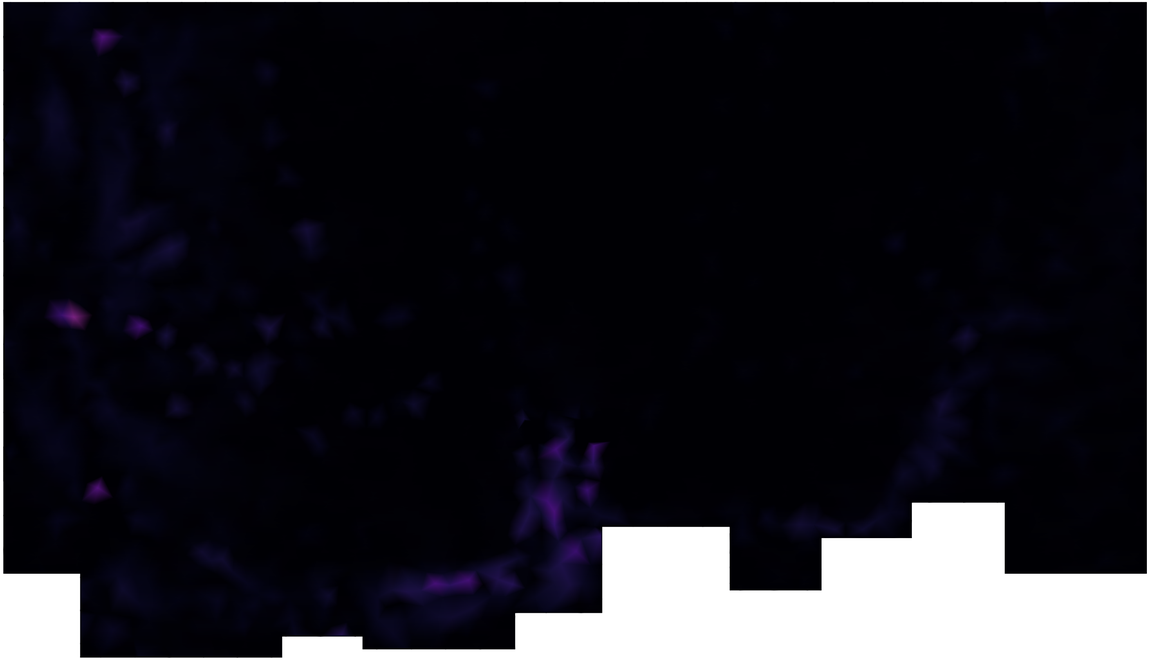} &
			\includegraphics[width=0.185\linewidth]{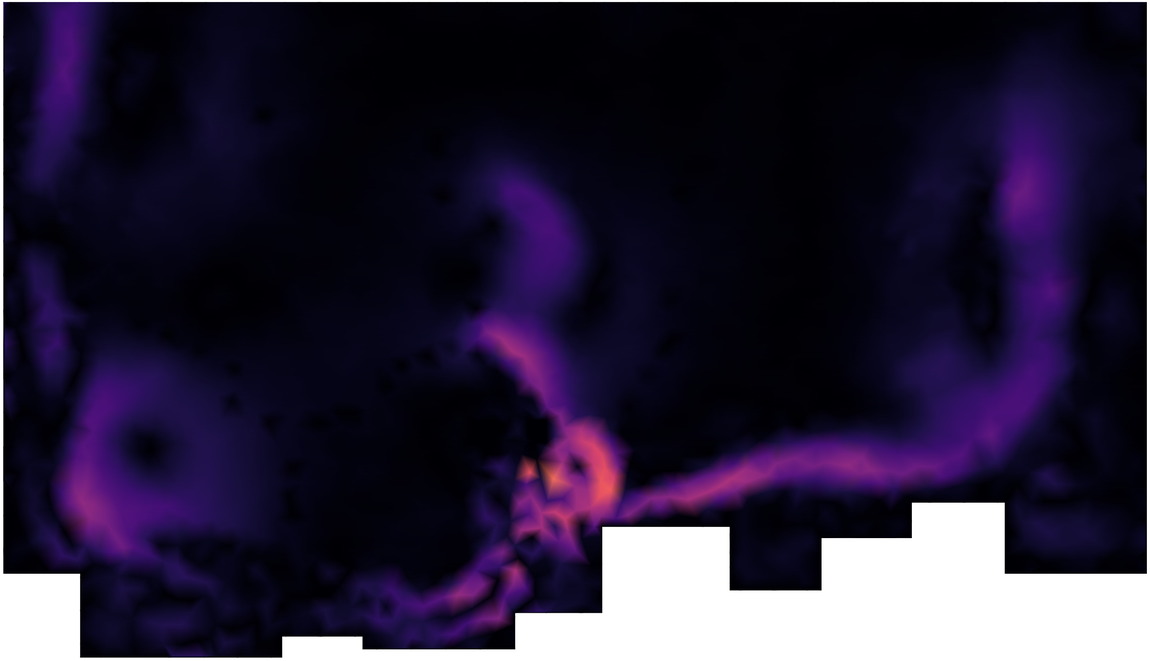} &
			\includegraphics[width=0.185\linewidth]{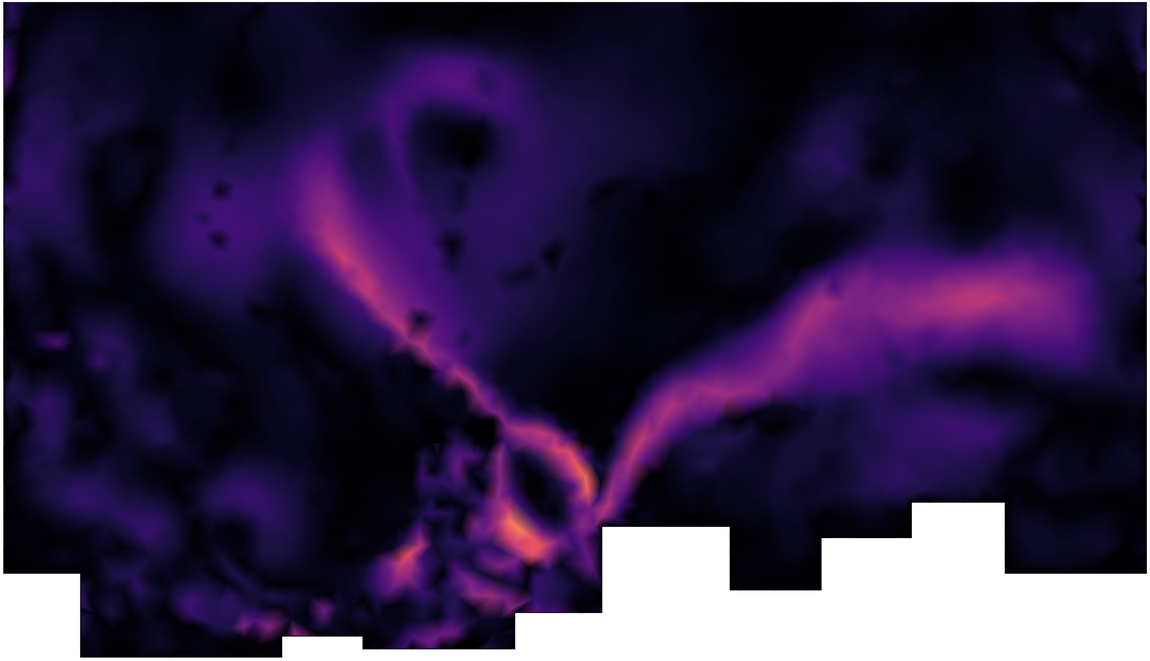} &
			\includegraphics[width=0.185\linewidth]{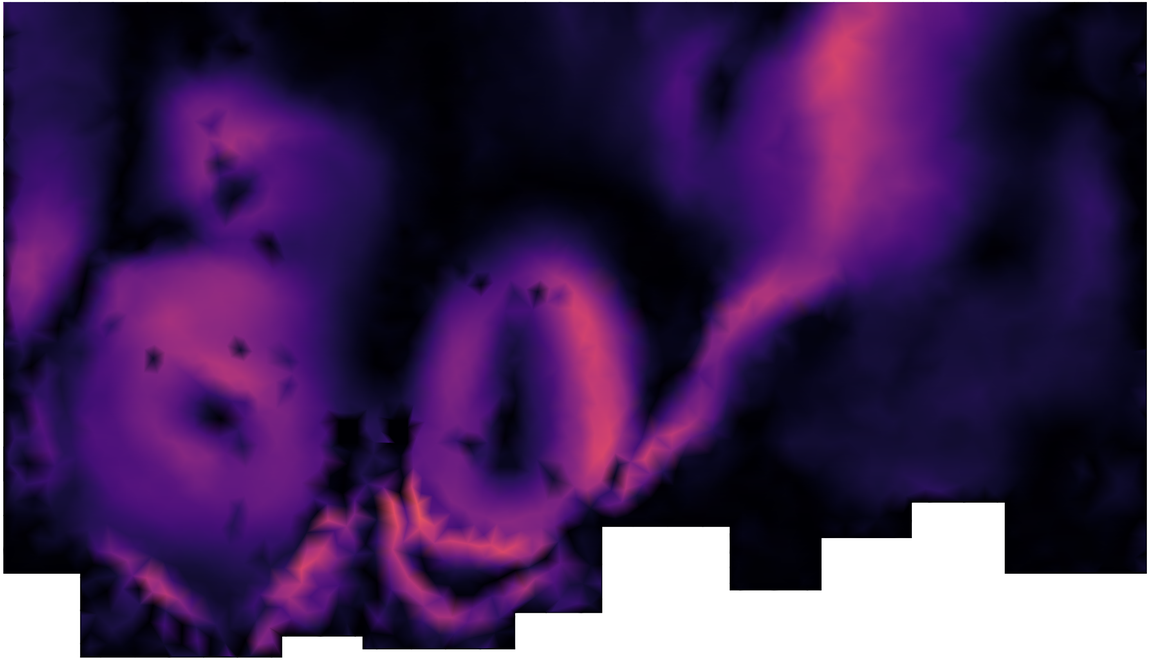} &
			\includegraphics[width=0.185\linewidth]{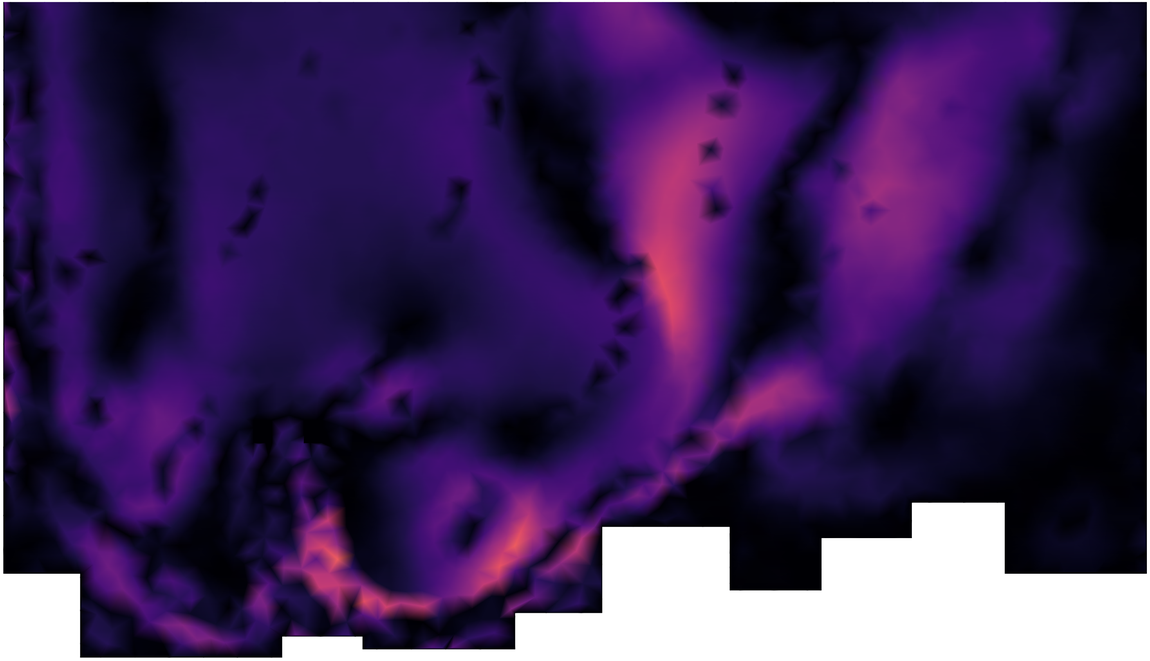}\\[-0.2em]
			\textbf{(d)} &
			\includegraphics[width=0.185\linewidth]{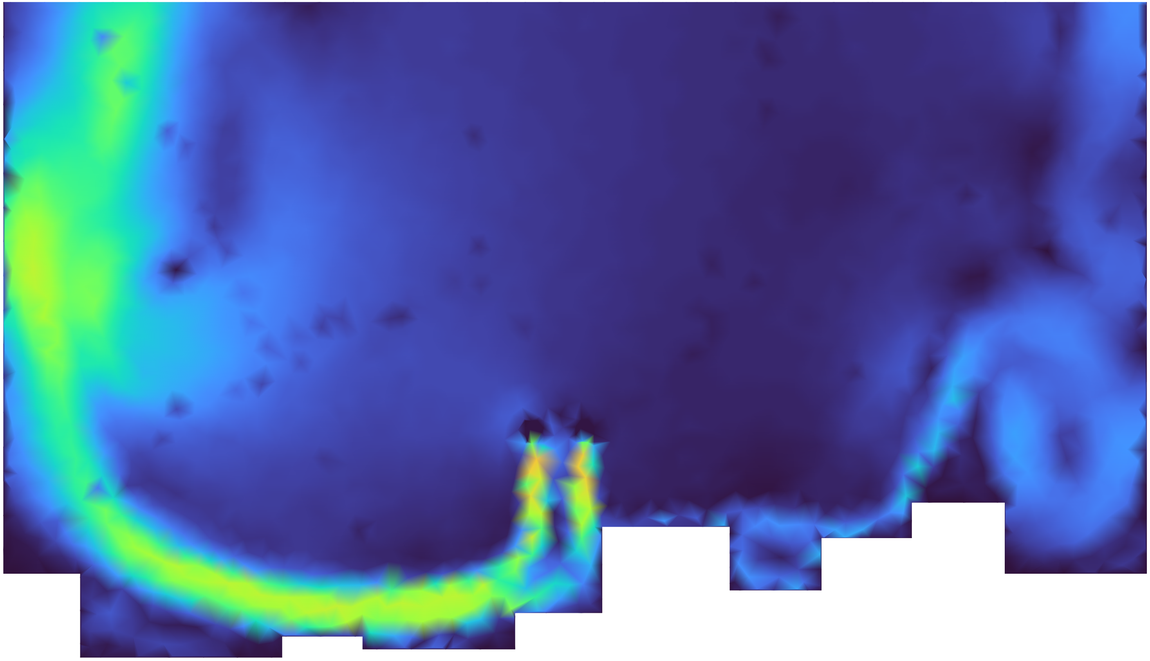} &
			\includegraphics[width=0.185\linewidth]{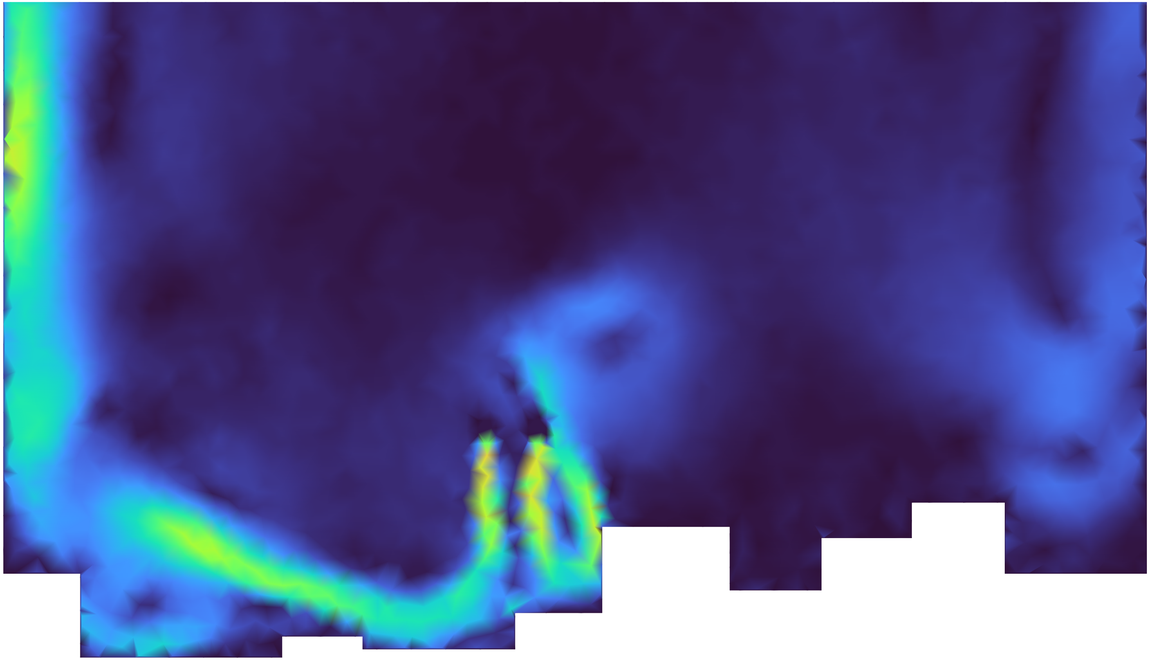} &
			\includegraphics[width=0.185\linewidth]{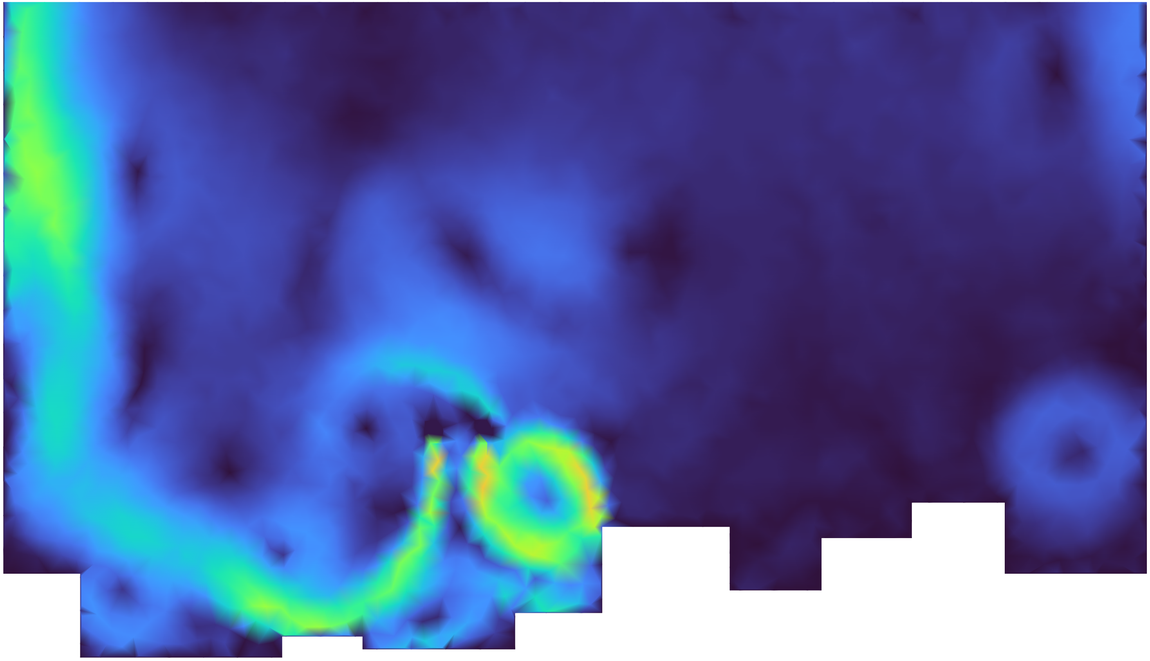} &
			\includegraphics[width=0.185\linewidth]{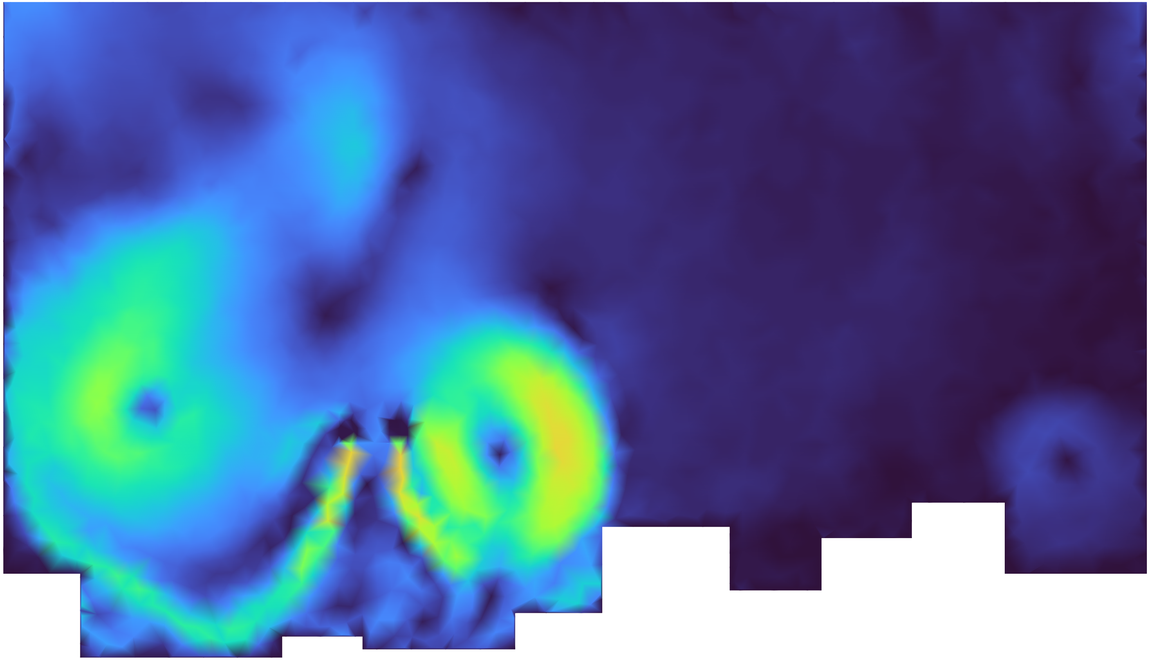} &
			\includegraphics[width=0.185\linewidth]{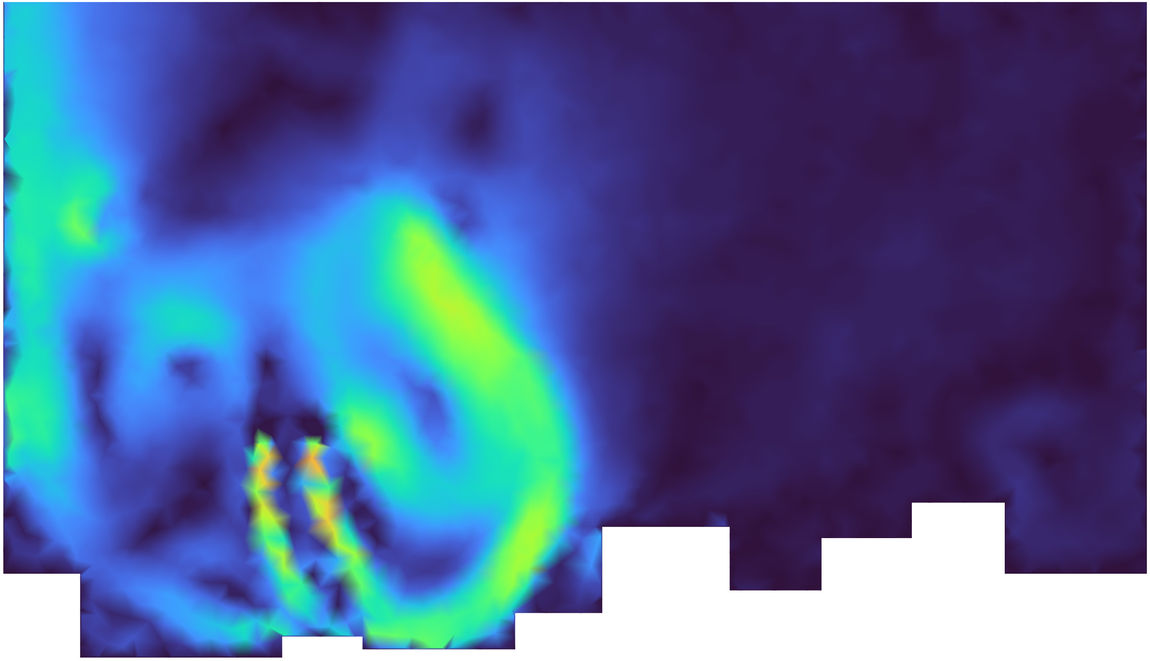}\\[-0.2em]
			\textbf{(e)} &
			\includegraphics[width=0.185\linewidth]{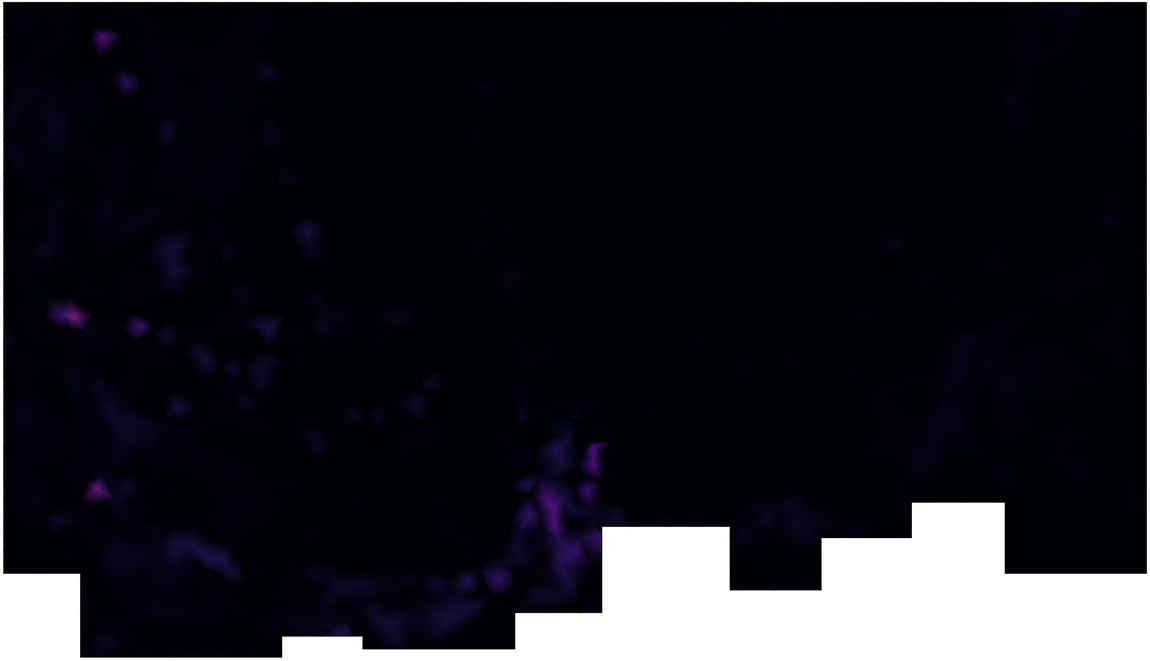} &
			\includegraphics[width=0.185\linewidth]{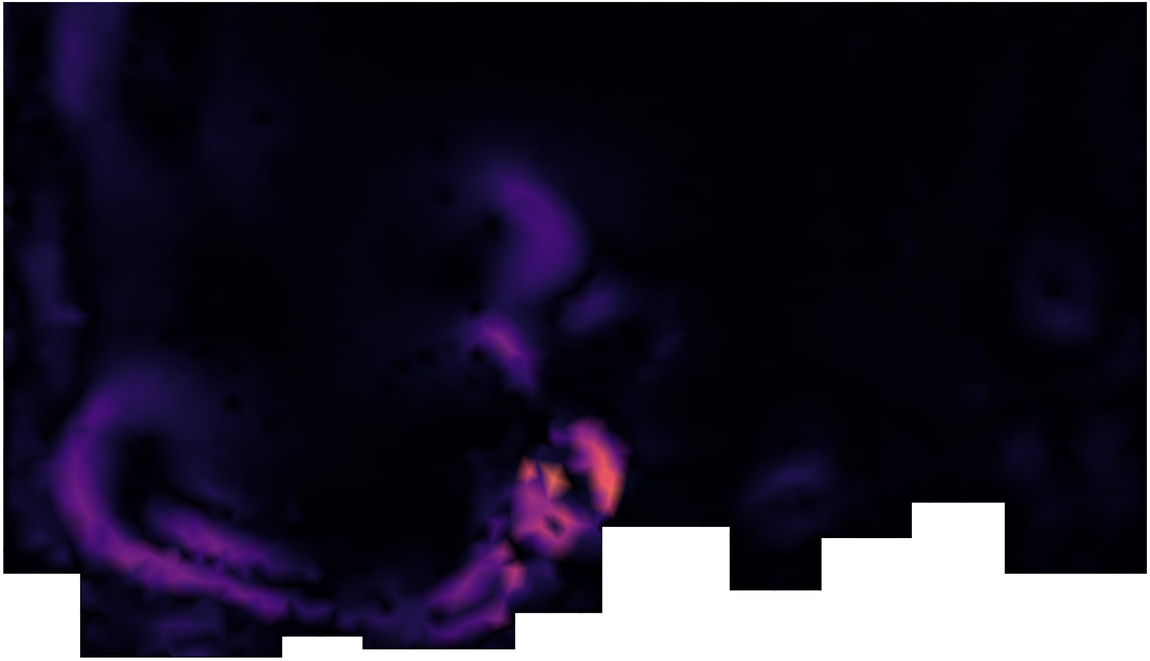} &
			\includegraphics[width=0.185\linewidth]{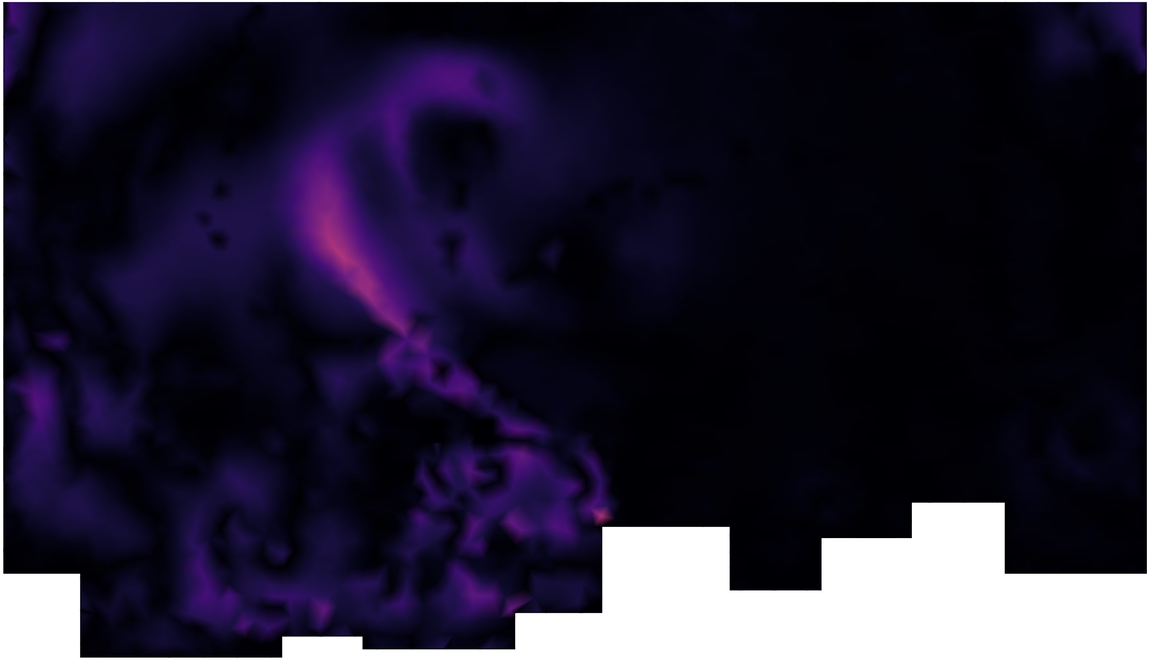} &
			\includegraphics[width=0.185\linewidth]{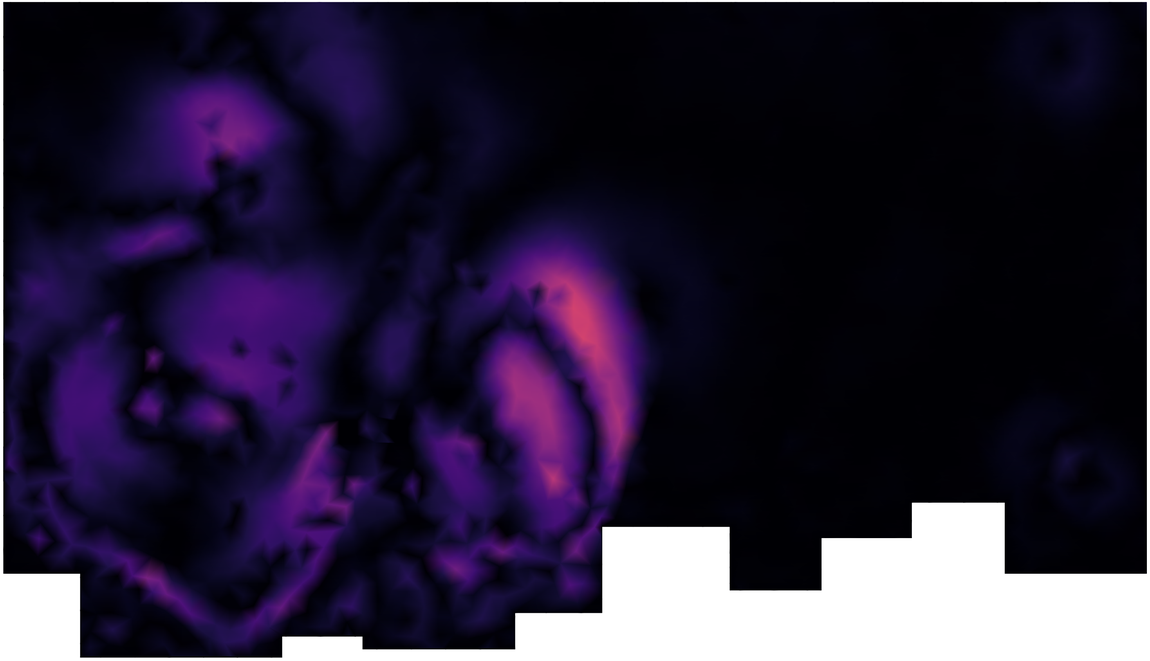} &
			\includegraphics[width=0.185\linewidth]{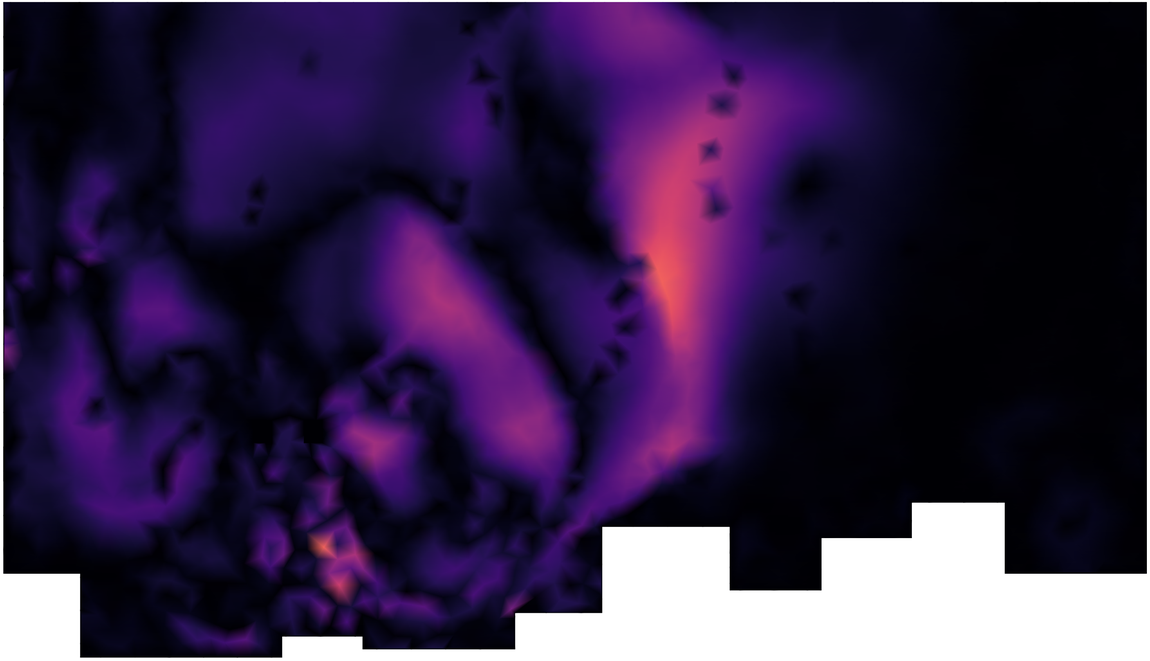}
		\end{tabular}
		\vspace{0.2em}
		
		\includegraphics[width=0.68\linewidth]{figures/eagle_qualitative_horizontal_colorbars.jpg}
		\caption{Additional EAGLE velocity-magnitude snapshots for sample 50 at \(t=5,50,100,180,\) and \(260\). Columns are rollout times; rows show (a) ground truth, (b) Base prediction, (c) Base absolute error, (d) Chebyshev prediction, and (e) Chebyshev absolute error. The horizontal colorbars report field value and absolute error.}
		\label{fig:eagle_appendix_qualitative}
	\end{figure}
	
	\section{Multilevel Graph Band-Power Loss}
	\label{app:multilevel-bsp}
	Many graph neural operators employ a hierarchy of coarsened graphs. If one enforces spectral consistency only on the finest graph, the predicted field may still exhibit inconsistent scale allocation across coarse levels used internally by the model. Let $\ell\in\{1,\dots,S\}$ denote graph hierarchy levels, with $\ell=1$ representing the finest graph. Let
	\begin{equation}
		\bm{u}^{(\ell)},\bm{v}^{(\ell)}
	\end{equation}
	be the predicted and target fields mapped to level $\ell$ via pooling or restriction operators. The levelwise spectral penalty is
	\begin{equation}
		\mathcal{L}_{\mathrm{spec}}^{(\ell)}=\mathcal{L}_{\mathrm{spec}}\paren{\bm{u}^{(\ell)},\bm{v}^{(\ell)}}.
	\end{equation}
	The multi-level loss is then
	\begin{equation}
		\mathcal{L}_{\mathrm{multi}}
		=\sum_{\ell=1}^{S}\alpha_\ell\mathcal{L}_{\mathrm{spec}}^{(\ell)},
		\qquad
		\alpha_\ell\ge 0,
		\qquad
		\sum_{\ell=1}^{S}\alpha_\ell=1.
		\label{eq:multi}
	\end{equation}
	This penalizes scale inconsistency across the operator hierarchy.
	
	\subsection{Why coarse-level supervision is useful}
	A mathematical way to view \eqref{eq:multi} is as a multiscale error seminorm. Let
	\[
	\bm{e}=\widehat{\bm{u}}-\bm{u}
	\]
	denote the fine-graph prediction error. Let \(P^{(\ell)}\) be the restriction or pooling map from the finest graph to level \(\ell\), and let \(S^{(\ell)}\) denote the linearized graph-spectral supervision operator on that level. For example, \(S^{(\ell)}\) may represent a low-rank Laplacian projection, a Chebyshev filter bank, or a local linearization of a band-energy penalty around the current prediction. A final-level spectral loss controls
	\begin{equation}
		\mathcal{L}_{\mathrm{fine}}(\bm{e})
		=
		\norm{S^{(1)}\bm{e}}_F^2,
	\end{equation}
	whereas multilevel supervision controls
	\begin{equation}
		\mathcal{L}_{\mathrm{hier}}(\bm{e})
		=
		\sum_{\ell\in\mathcal{H}}\alpha_\ell
		\norm{S^{(\ell)}P^{(\ell)}\bm{e}}_F^2 .
		\label{eq:hier_error_metric}
	\end{equation}
	Equivalently, the quadratic form induced by \eqref{eq:hier_error_metric} is
	\begin{equation}
		\mathcal{L}_{\mathrm{hier}}(\bm{e})
		=
		\left\langle
		\bm{e},
		Q_{\mathrm{hier}}\bm{e}
		\right\rangle,
		\qquad
		Q_{\mathrm{hier}}
		=
		\sum_{\ell\in\mathcal{H}}
		\alpha_\ell
		\paren{P^{(\ell)}}^\top
		\paren{S^{(\ell)}}^\top
		S^{(\ell)}
		P^{(\ell)} .
		\label{eq:hier_psd_metric}
	\end{equation}
	Since each summand in \(Q_{\mathrm{hier}}\) is positive semidefinite for \(\alpha_\ell\ge 0\), the hierarchical loss adds a nonnegative penalty on any error direction that remains visible after pooling and spectral filtering. In particular, an error component that is small locally but changes the coarse recirculation structure, pressure loading, or large-scale transport pattern satisfies \(P^{(\ell)}\bm{e}\ne 0\) for some coarse level and therefore receives an additional penalty. This is consistent with the classical multigrid view that smooth, low-frequency error components are represented and corrected on coarser grids \citep{brandt1977multigrid,briggs2000multigrid}. By contrast, a finest-level loss alone may mix this coarse error with local high-frequency discrepancies and does not explicitly separate the scale at which the error is represented.
	
	The same observation gives a useful optimization interpretation. Suppose a multilevel backbone maps fine features to a coarse representation \(\bm{h}^{(\ell)}\), processes that representation, and prolongates information back to the fine graph. If supervision is applied only to the final output, then gradients for parameters \(\theta_\ell\) acting at level \(\ell\) must pass through all subsequent prolongation and decoding operations:
	\begin{equation}
		\nabla_{\theta_\ell}\mathcal{L}_{\mathrm{fine}}
		=
		\left(
		\frac{\partial \widehat{\bm{u}}}{\partial \bm{h}^{(\ell)}}
		\frac{\partial \bm{h}^{(\ell)}}{\partial \theta_\ell}
		\right)^\top
		\nabla_{\widehat{\bm{u}}}\mathcal{L}_{\mathrm{fine}} .
		\label{eq:fine_grad_path}
	\end{equation}
	These gradients can be indirect or poorly conditioned for coarse graph modes. Adding the levelwise term in \eqref{eq:hier_error_metric} introduces an additional gradient at the resolution where the coarse error is expressed:
	\begin{equation}
		\nabla_{\theta_\ell}\mathcal{L}_{\mathrm{hier}}
		=
		\nabla_{\theta_\ell}\mathcal{L}_{\mathrm{fine}}
		+
		\alpha_\ell
		\left(
		\frac{\partial \widehat{\bm{u}}^{(\ell)}}{\partial \theta_\ell}
		\right)^\top
		\nabla_{\widehat{\bm{u}}^{(\ell)}}
		\norm{S^{(\ell)}
			\paren{\widehat{\bm{u}}^{(\ell)}-\bm{u}^{(\ell)}}}_F^2 .
		\label{eq:hier_direct_grad}
	\end{equation}
	Thus hierarchical supervision can improve the conditioning of the training signal for large-scale modes by supplying gradients at the graph levels where those modes are represented. This argument is also aligned with the deep-supervision literature, where auxiliary losses at intermediate representations are used to shorten gradient paths and improve gradient propagation \citep{lee2015deeply,elinas2022deep}. This does not constitute a general convergence guarantee, but it explains why coarse-level penalties are well matched to multilevel graph backbones: they regularize the intermediate resolutions through which long-range information is transported, rather than relying only on a final-level correction.
	
	\subsection{Lifted coarse-projector view of GLEAM}
	\label{app:gleam-resistance}
	GLEAM can remain competitive with Chebyshev BSP because its coarse low-rank bases are structured spectral summaries rather than arbitrary compressions. Following the spectral-embedding viewpoint of \citet{wang2020graspel}, the leading nontrivial Laplacian modes define a regularized embedding
	\begin{equation}
		U_r
		=
		\left[
		\frac{\psi_2}{\sqrt{\lambda_2+\tau}},
		\dots,
		\frac{\psi_{r+1}}{\sqrt{\lambda_{r+1}+\tau}}
		\right],
		\qquad \tau>0,
		\label{eq:gleam_appendix_embedding}
	\end{equation}
	for which pairwise graph-embedding distances satisfy
	\begin{equation}
		\norm{U_r^\top(e_i-e_j)}_2^2
		=
		\sum_{k=2}^{r+1}
		\frac{\paren{\psi_k(i)-\psi_k(j)}^2}{\lambda_k+\tau}.
		\label{eq:gleam_appendix_resistance_trunc}
	\end{equation}
	As \(r\) increases and \(\tau\to 0\), \eqref{eq:gleam_appendix_resistance_trunc} approaches the effective-resistance distance
	\begin{equation}
		(e_i-e_j)^\top L^+(e_i-e_j)
		=
		\sum_{k=2}^{N}
		\frac{\paren{\psi_k(i)-\psi_k(j)}^2}{\lambda_k}.
		\label{eq:gleam_appendix_resistance}
	\end{equation}
	Thus the retained low-rank basis encodes dominant large-scale graph geometry and is sensitive to coherent, long-range structures rather than merely being a cheaper coordinate system.
	
	This observation connects GLEAM to Chebyshev BSP through the quadratic forms used to measure band energy. Let \(L_f\) denote the finest graph Laplacian and let \(\Pi_m\) be the exact fine-graph projector onto band \(m\). Chebyshev BSP approximates this band energy by a polynomial filter \(H_m^{(K,Q)}\):
	\begin{equation}
		E_m^{\mathrm{Cheb}}(\bm u)
		=
		\frac{1}{2}\norm{H_m^{(K,Q)}\bm u}_2^2
		=
		\frac{1}{2}\bm u^\top
		\paren{H_m^{(K,Q)}}^\top H_m^{(K,Q)}
		\bm u
		\approx
		\frac{1}{2}\bm u^\top\Pi_m\bm u .
		\label{eq:gleam_cheb_quad_view}
	\end{equation}
	GLEAM instead evaluates spectral energy after mapping the field to graph level \(\ell\). Let \(P^{(\ell)}\) be the pooling map from the finest graph to level \(\ell\), and let \(B_m^{(\ell)}\) collect the retained coarse spectral vectors in band \(m\), optionally with the regularized eigenvalue weighting in \eqref{eq:gleam_appendix_embedding}. Then
	\begin{equation}
		E_m^{\mathrm{GLEAM},(\ell)}(\bm u)
		=
		\frac{1}{2}
		\norm{\paren{B_m^{(\ell)}}^\top P^{(\ell)}\bm u}_2^2
		=
		\frac{1}{2}\bm u^\top
		\widetilde{G}_m^{(\ell)}
		\bm u,
		\label{eq:gleam_quad_energy}
	\end{equation}
	where the lifted coarse spectral metric is
	\begin{equation}
		\widetilde{G}_m^{(\ell)}
		=
		\paren{P^{(\ell)}}^\top
		B_m^{(\ell)}
		\paren{B_m^{(\ell)}}^\top
		P^{(\ell)} .
		\label{eq:gleam_lifted_metric}
	\end{equation}
	Therefore, GLEAM does not ignore the fine field; it measures the fine-field energy against coarse spectral functions lifted through the same hierarchy used by the forecasting backbone.
	
	If the graph hierarchy preserves the relevant low-frequency subspaces, then \(\widetilde{G}_m^{(\ell)}\) approximates the fine-band energy operator on the dynamically important part of the solution space. In particular, if
	\begin{equation}
		\norm{\Pi_m-\widetilde{G}_m^{(\ell)}}_2\le \delta_m,
		\label{eq:gleam_projector_error_assumption}
	\end{equation}
	then for any field \(\bm u\),
	\begin{equation}
		\left|
		\frac{1}{2}\bm u^\top\Pi_m\bm u
		-
		E_m^{\mathrm{GLEAM},(\ell)}(\bm u)
		\right|
		\le
		\frac{\delta_m}{2}\norm{\bm u}_2^2 .
		\label{eq:gleam_energy_bound}
	\end{equation}
	For a prediction--target pair \((\widehat{\bm u},\bm u)\), the corresponding band-energy mismatch error is bounded by
	\begin{equation}
		\begin{aligned}
			&
			\left|
			\left[
			\frac{1}{2}\widehat{\bm u}^{\top}\Pi_m\widehat{\bm u}
			-\frac{1}{2}\bm u^{\top}\Pi_m\bm u
			\right]
			-
			\left[
			E_m^{\mathrm{GLEAM},(\ell)}(\widehat{\bm u})
			-E_m^{\mathrm{GLEAM},(\ell)}(\bm u)
			\right]
			\right|  \\
			&\hspace{3em}
			\le
			\frac{\delta_m}{2}
			\left(
			\norm{\widehat{\bm u}}_2^2+\norm{\bm u}_2^2
			\right).
		\end{aligned}
		\label{eq:gleam_mismatch_bound}
	\end{equation}
	Thus, when the hierarchy preserves the low- and mid-frequency eigenspaces that dominate rollout error, GLEAM provides an approximation to part of the band-energy information targeted by fine-level graph BSP.
	
	This explains why GLEAM can provide performance comparable to Chebyshev BSP in practice. Chebyshev BSP approximates fine-level graph-frequency projectors on the supervised graph. GLEAM sacrifices some fine-band resolution, but it supervises the coarse spectral variables through which multilevel graph backbones propagate long-range information. The Fiedler-guided pairwise contrast term further constrains how field contrasts are arranged across spectrally separated graph regions, helping preserve coherent structures such as recirculation zones, shear layers, and pressure-loading patterns. For flows whose autoregressive errors are dominated by coherent low- and mid-frequency drift, the approximation error introduced by GLEAM's retained-spectrum binning can therefore be comparable to the polynomial approximation error in Chebyshev BSP, while the supervision is better aligned with the multilevel architecture.
	
	\subsection{Why corrector quality matters in GLEAM}
	\label{app:gleam-corrector-quality}
	The lifted-projector view also explains why a coarse-to-fine corrector can help GLEAM. Let \(U_\star^{(\ell)}\) denote the exact retained embedding that would be obtained by solving the level-\(\ell\) Laplacian eigenproblem, and let \(\widetilde U^{(\ell)}\) denote the embedding obtained by prolongating from a coarser graph. The prolongated embedding can be written as
	\begin{equation}
		\widetilde U^{(\ell)}
		=
		U_\star^{(\ell)}+E_U^{(\ell)},
		\label{eq:gleam_corrector_embedding_error}
	\end{equation}
	where \(E_U^{(\ell)}\) is the spectral-transfer error introduced by interpolation, smoothing, and the mismatch between coarse and fine graph geometry. A corrector of the form used in \eqref{eq:general_corrector} produces
	\begin{equation}
		\widetilde U_+^{(\ell)}
		=
		\widetilde U^{(\ell)}+\Gamma R,
		\qquad
		R=\mathrm{NbrMean}(\widetilde U^{(\ell)})-\widetilde U^{(\ell)} .
		\label{eq:gleam_corrected_embedding}
	\end{equation}
	Here \(\Gamma\) denotes the rowwise correction operator induced by the coefficients \(\Gamma_i\) in \eqref{eq:general_corrector}. Dropping the level superscript for readability, the corrected embedding error is \(E_U^{+}=E_U+\Gamma R\). In the scalar case \(\Gamma=\gamma I\), the correction reduces the embedding error whenever
	\begin{equation}
		\norm{E_U+\gamma R}_F^2 < \norm{E_U}_F^2,
		\qquad
		\text{equivalently}\qquad
		2\gamma\langle E_U,R\rangle_F+\gamma^2\norm{R}_F^2<0 .
		\label{eq:gleam_corrector_descent}
	\end{equation}
	Therefore, if the residual direction has a component toward the true correction, \(\langle E_U,R\rangle_F<0\), there exists a positive correction strength that improves the transferred basis. If the residual is uninformative, a learned scalar coefficient can shrink toward zero and recover the no-corrector case.
	
	The quality of this corrected basis affects the GLEAM measurements. Let \(B_{m,\star}^{(\ell)}\) denote the exact retained basis for band \(m\) at level \(\ell\), and let \(\widetilde B_{m,+}^{(\ell)}\) be the corresponding corrected basis. The error in the lifted band-energy operator can be bounded by the subspace discrepancy
	\begin{equation}
		\Delta_{m,+}^{(\ell)}
		=
		\widetilde B_{m,+}^{(\ell)}
		\paren{\widetilde B_{m,+}^{(\ell)}}^\top
		-
		B_{m,\star}^{(\ell)}
		\paren{B_{m,\star}^{(\ell)}}^\top .
		\label{eq:gleam_corrector_projector_error}
	\end{equation}
	For any level field \(\bm u^{(\ell)}\),
	\begin{equation}
		\left|
		\frac{1}{2}\norm{\paren{\widetilde B_{m,+}^{(\ell)}}^\top\bm u^{(\ell)}}_2^2
		-
		\frac{1}{2}\norm{\paren{B_{m,\star}^{(\ell)}}^\top\bm u^{(\ell)}}_2^2
		\right|
		\le
		\frac{1}{2}
		\norm{\Delta_{m,+}^{(\ell)}}_2
		\norm{\bm u^{(\ell)}}_2^2 .
		\label{eq:gleam_corrector_energy_bound}
	\end{equation}
	Improving the corrector therefore reduces the bias in the retained band-energy term whenever it reduces the projector error \(\norm{\Delta_{m,+}^{(\ell)}}_2\). The same basis quality also affects the Fiedler-pair term, because pair weights depend on retained-embedding separations such as \(\norm{U_p-U_q}_2^2\). A better corrected embedding gives more reliable graph-separated pair weights and makes the pairwise contrast term act on the intended long-range structures. In this sense, the corrector does not add new spectral information beyond GLEAM's retained subspace; instead, it improves how consistently the low-rank spectral coordinates are transferred across the graph hierarchy.

	\section{Computational Complexity of Spectral Variants}
	\label{app:complexity}
	\label{app:computational-considerations}
	\subsection{Full Complexity Details}
	Let \(G_\ell=(V_\ell,\mathcal{E}_\ell)\) denote the graph at hierarchy level \(\ell\), with \(N_\ell=|V_\ell|\) nodes and \(|\mathcal{E}_\ell|\) edges. We use \(\ell=1\) for the finest graph, so \(G_1=G\), \(N_1=N\), and \(|\mathcal{E}_1|=|\mathcal{E}|\). We count one sparse Laplacian application to a \(C\)-channel field as \(\mathcal{O}(C|\mathcal{E}_\ell|)\). Constants associated with batching, GPU kernels, and feature dimensions of the forecasting backbone are omitted because the comparison here concerns only the auxiliary graph-spectral losses.
	
	\paragraph{Exact Graph BSP.}
	For a single graph, exact Graph BSP requires the eigendecomposition
	\[
	L=\Phi\Lambda\Phi^\top.
	\]
	With a dense eigensolver, the setup cost is \(\mathcal{O}(N^3)\) and storage of the complete eigenbasis is \(\mathcal{O}(N^2)\). Once \(\Phi\) is available, projecting a \(C\)-channel field into graph Fourier coordinates costs
	\[
	\mathcal{O}(N^2C),
	\]
	and band aggregation costs \(\mathcal{O}(NC)\). If exact BSP were applied independently to every level of a hierarchy, the corresponding cost would scale as
	\begin{equation}
		\sum_{\ell=1}^{S}
		\mathcal{O}\!\left(N_\ell^3 + N_\ell^2 C\right),
		\label{eq:appendix_exact_cost}
	\end{equation}
	ignoring the storage cost \(\sum_{\ell=1}^{S}\mathcal{O}(N_\ell^2)\). This is why exact Graph BSP is mainly used as a reference formulation or on fixed graphs where the eigenbasis can be computed once and reused.
	
	\paragraph{Chebyshev BSP.}
	Chebyshev BSP avoids eigendecomposition by replacing \(\Pi_m\) with the polynomial filter \(H_m^{(K,Q)}\) in \eqref{eq:cheb}. Computing the coefficients in \eqref{eq:cheb_coeffs} costs
	\[
	\mathcal{O}(MKQ),
	\]
	which is a one-time setup cost for fixed \(M\), \(K\), \(Q\), and window choice. The quadrature count \(Q\) does not multiply the per-step sparse graph filtering cost after the coefficients have been formed.
	
	A direct implementation that evaluates each of the \(M\) filters independently requires \(K\) sparse Laplacian applications per band and therefore costs
	\begin{equation}
		\mathcal{O}(MKC|\mathcal{E}|)
		\label{eq:appendix_cheb_independent_cost}
	\end{equation}
	per loss evaluation on the finest graph. If the Chebyshev recurrence vectors \(T_j(\tilde L)X\), \(j=0,\dots,K\), are shared across bands, the sparse-multiply part is computed once, giving
	\begin{equation}
		\mathcal{O}(KC|\mathcal{E}| + MKCN),
		\label{eq:appendix_cheb_shared_cost}
	\end{equation}
	where the second term accounts for combining the stored recurrence responses with the band coefficients. In practice, the dominant term depends on graph sparsity, implementation details, and whether recurrence responses are stored or streamed.
	
	For full hierarchical Chebyshev supervision, the same operation is applied on each supervised graph level. If the hierarchy has \(S\) levels and the same \(M\), \(K\), and \(C\) are used at each level, evaluating all bands independently costs
	\begin{equation}
		\sum_{\ell=1}^{S}
		\mathcal{O}\!\left(MKC|\mathcal{E}_\ell|\right).
		\label{eq:appendix_cheb_hier_independent_cost}
	\end{equation}
	If the Chebyshev recurrence responses are shared across bands separately at each level, the corresponding hierarchy-wide cost is
	\begin{equation}
		\sum_{\ell=1}^{S}
		\mathcal{O}\!\left(KC|\mathcal{E}_\ell| + MKCN_\ell\right).
		\label{eq:appendix_cheb_hier_shared_cost}
	\end{equation}
	The coefficient setup remains \(\mathcal{O}(MKQ)\) when the same normalized-Laplacian spectral windows are reused across levels. If each level uses its own window family or level-specific spectral endpoint, the setup cost becomes \(\sum_{\ell=1}^{S}\mathcal{O}(MKQ)\), but this is still a one-time coefficient cost rather than a per-rollout filtering cost. These expressions show why a fully hierarchical Chebyshev BSP loss is more expensive than a single final-level Chebyshev penalty: it avoids eigendecomposition, but repeats degree-\(K\) graph filtering over every supervised graph in the hierarchy.
	
	\paragraph{GLEAM low-rank hierarchy.}
	GLEAM uses a retained spectral embedding of rank \(r\) on selected hierarchy levels. Let \(\mathcal{H}\subseteq\{1,\dots,S\}\) denote the levels on which an eigensolve or cached embedding is used. In the implementation used for the BFS experiments, the low-rank basis is computed with a sparse partial symmetric eigensolver, using an ARPACK/Lanczos \texttt{eigsh}-type solve for only the smallest \(k_\ell=r_\ell+1\) Laplacian eigenpairs rather than a full dense eigendecomposition. The extra eigenpair accounts for the trivial constant mode, which is discarded before forming the nontrivial embedding. Let \(\operatorname{nnz}(L_\ell)\) denote the number of nonzero entries in the level-\(\ell\) Laplacian, with \(\operatorname{nnz}(L_\ell)\approx |\mathcal{E}_\ell|+N_\ell\) for a sparse graph Laplacian. Writing \(n_{\mathrm{cv},\ell}\) for the Lanczos search subspace size and \(q_{\mathrm{eig},\ell}\) for the number of sparse matrix-vector iterations, the approximate setup cost at level \(\ell\) is
	\begin{equation}
		\mathcal{O}\!\left(
		q_{\mathrm{eig},\ell}\operatorname{nnz}(L_\ell)
		+q_{\mathrm{eig},\ell}N_\ell n_{\mathrm{cv},\ell}
		+N_\ell n_{\mathrm{cv},\ell}k_\ell
		\right),
		\label{eq:appendix_gleam_setup_level}
	\end{equation}
	where the first term comes from sparse Laplacian products, the second from Lanczos-basis orthogonalization, and the last from extracting the requested \(k_\ell\)-dimensional invariant subspace. Since \(k_\ell\ll N_\ell\) and \(n_{\mathrm{cv},\ell}\) is usually chosen as a small multiple of \(k_\ell\), this replaces the dense \(\mathcal{O}(N_\ell^3)\) eigensolve and \(\mathcal{O}(N_\ell^2)\) eigenbasis storage by a sparse low-rank setup with memory
	\begin{equation}
		\mathcal{O}\!\left(\operatorname{nnz}(L_\ell)+N_\ell n_{\mathrm{cv},\ell}\right).
		\label{eq:appendix_gleam_sparse_memory}
	\end{equation}
	Across selected levels, the setup cost is
	\begin{equation}
		\sum_{\ell\in\mathcal{H}}
		\mathcal{O}\!\left(
		q_{\mathrm{eig},\ell}\operatorname{nnz}(L_\ell)
		+q_{\mathrm{eig},\ell}N_\ell n_{\mathrm{cv},\ell}
		+N_\ell n_{\mathrm{cv},\ell}k_\ell
		\right).
		\label{eq:appendix_gleam_setup}
	\end{equation}
	When the graph hierarchy is fixed, this setup can be cached and amortized across training. When a coarsest-level embedding is prolongated to finer levels, the sparse eigensolve term is paid only on the selected coarse or anchor levels, while prolongation and smoothing require approximately \(\mathcal{O}(N_\ell r)\) work per transferred level. If a dense fallback is used instead of the sparse partial eigensolver, the corresponding setup reverts to \(\mathcal{O}(N_\ell^3)\) time and \(\mathcal{O}(N_\ell^2)\) memory at that level.
	
	Once the embeddings are available, projecting a \(C\)-channel field onto a rank-\(r\) embedding at level \(\ell\) costs
	\begin{equation}
		\mathcal{O}(N_\ell r C).
		\label{eq:appendix_gleam_projection}
	\end{equation}
	Retained-band energy aggregation costs \(\mathcal{O}(rC)\), which is lower order when \(r\ll N_\ell\). If \(P_\ell\) Fiedler-guided pairs are sampled at level \(\ell\), the pairwise contrast term costs
	\begin{equation}
		\mathcal{O}(P_\ell C).
		\label{eq:appendix_gleam_pair_cost}
	\end{equation}
	The per-step GLEAM cost over supervised levels \(\mathcal{H}_{\mathrm{sup}}\) is therefore
	\begin{equation}
		\sum_{\ell\in\mathcal{H}_{\mathrm{sup}}}
		\mathcal{O}\!\left(N_\ell r C + rC + P_\ell C\right).
		\label{eq:appendix_gleam_online}
	\end{equation}
	For fixed small \(r\) and modest \(P_\ell\), this is lower cost than applying full-band Chebyshev filters at every level.
	
	\paragraph{Practical implication.}
	Neither approximation dominates across all regimes. Full hierarchical Chebyshev BSP applies eigenfree band-power matching across graph levels, but its cost grows with repeated sparse filtering over supervised levels and bands. GLEAM is preferable when multilevel supervision is needed at lower online cost, replacing repeated full-band filtering with cached low-rank projections and sampled pairwise geometry terms.
	
	\PrintCredit
	
	\section*{Data availability}
	
	Data will be made available on request.
	
	\section*{Code availability}
	
	The code implementation is available at \href{https://github.com/ISCLPurdue/Graph_BSP.git}{github.com/ISCLPurdue/Graph\_BSP}.
	
	\begin{coi}
		\section*{Declaration of competing interest}
		The authors declare that they have no known competing financial
		interests or personal relationships that could have appeared to
		influence the work reported in this paper.
	\end{coi}
	
	\begin{ack}
		\section*{Acknowledgements}
		We acknowledge support from ARO cooperative agreement W911NF-25-2-0183 and an ARO ECP award from the Program `Modeling of Complex Systems' (PM - Dr. Rob Martin). This research used resources of the Argonne Leadership Computing Facility, which is a U.S. Department of Energy Office of Science User Facility operated under Contract DE-AC02-06CH11357 and Purdue Rosen Center for Advanced Computing (RCAC) resources. We also acknowledge the helpful guidance of  Dibyajyoti Chakraborty and Hojin Kim for the Backwards Facing Step experiments.
	\end{ack}

	\bibliographystyle{plainnat}
	\bibliography{gleam}
	
\end{document}